\documentclass{article} 
\usepackage[margin=1.25in]{geometry}

\usepackage{authblk,mymath}
\usepackage{times}
\usepackage{comment}
\usepackage{tikz,wrapfig}
\usepackage{booktabs,multirow,tabularx,array,caption}
\usepackage[round]{natbib}
\usepackage{tensor}

\newcommand{\R}{\mathbb{R}}

\DeclareMathOperator{\spn}{span}

\let\vec\relax\DeclareMathOperator{\vec}{Vec}

\DeclareMathOperator{\ran}{ran}

\theoremstyle{remark}

\newcommand{\eps}{\epsilon}
\newcommand{\lam}{\lambda}

\title{Principal Component Regression Dominates all Monotone Spectral Filters for Linear Regression}
\author[1]{Juno Kim}
\author[1]{Hengyu Fu}
\author[1,2]{Peter Bartlett}
\author[1]{Jason D. Lee}
\author[1]{Jingfeng Wu}
\affil[1]{University of California, Berkeley}
\affil[2]{Google DeepMind}

\date{\today}
\ifdefined\usebigfont

\usepackage{times}
\usepackage[fontsize=13pt]{scrextend}
\makeatletter
\@ifpackageloaded{geometry}{\AtBeginDocument{\newgeometry{letterpaper,left=1.56in,right=1.56in,top=1.71in,bottom=1.77in}}}{\usepackage[letterpaper,left=1.56in,right=1.56in,top=1.71in,bottom=1.77in]{geometry}}
\AtBeginDocument{\newgeometry{letterpaper,left=1.56in,right=1.56in,top=1.71in,bottom=1.77in}}
\usepackage{hyperref} 
\else
\fi

\begin{document}

\maketitle

\begin{abstract}
We compare the instance-wise, finite-sample risks of \emph{monotone spectral filters} for linear regression, a broad class of estimators including principal component regression (PCR), gradient descent (GD), and ridge regression. We show that PCR \emph{dominates} all monotone spectral filters: compared to any such filter, the risk of optimally tuned PCR is no bigger by a constant factor for all problems. Furthermore, the dominance is \emph{strong} if the filter is separated from step functions (e.g., GD and ridge): there exist problem instances for which the risk of PCR is smaller by a polynomial factor in sample size dependence. Our comparison results show that PCR is optimal and thus admissible among monotone filters, significantly extending \citet{wu2026risk}'s result that GD strongly dominates ridge. From a technical perspective, we establish new upper and lower bounds for general spectral filters, which are instance-wise sharp when specialized to ridge or GD, recovering or improving the best-known bounds.
\end{abstract}



\section{Introduction}

\begin{table}[t]
    \centering
    \captionsetup{skip=5pt}
    \caption{\textbf{Dominance results for linear regression.} For a problem class~$\Pbb$, we say algorithm~$\mathcal A$ \emph{dominates}~$\mathcal B$, or $\mathcal A \preceq_{\Pbb} \mathcal B$, if its (excess) risk is never more than a constant multiple of~$\mathcal B$'s on every problem instance in~$\Pbb$. We say~$\mathcal A$ \emph{strongly} dominates~$\mathcal B$, or $\mathcal A \prec_{\Pbb} \mathcal B$, if additionally its risk is polynomially smaller (w.r.t. sample size) for some instances. We establish that for all well-specified linear regression problems $\Lbb$~\eqref{eq:lbb} with bounded signal-to-noise ratio, PCR strongly dominates GD; previous work showed that GD strongly dominates ridge in the same class \citep{wu2026risk}. Thus, GD and ridge are both inadmissible.\\
    For all linear regression problems with Gaussian design $\Gbb$~\eqref{eq:gbb}, a subset of~$\Lbb$, we prove that PCR dominates all monotone spectral filters, a wide class of regularization-based methods including GD and ridge (\Cref{def:filter}). Thus, PCR is optimal and hence admissible among all such filters. We further prove that PCR strongly dominates all monotone filters that are uniformly separated from step functions (e.g., \Cref{lem:non-step-examples}).}
    \label{tab:known-dominance-results}
    \renewcommand{\arraystretch}{1.4}
    {
    \renewcommand{\tabularxcolumn}[1]{m{#1}}
\begin{tabularx}{0.83\linewidth}{
    @{}
    >{\raggedright\arraybackslash}m{0.25\linewidth}
    >{\raggedright\arraybackslash}X
    >{\raggedright\arraybackslash}m{0.16\linewidth}
    @{}
}
        \toprule
        \multirow{2}{=}{well-specified linear regression problems~$(\Lbb)$}
        & $\mathrm{GD}\prec_{\Lbb}\mathrm{ridge}$
        & \citet{wu2026risk} \\
        \cmidrule(lr){2-3}
        & $\mathrm{PCR}\prec_{\Lbb}\mathrm{GD}$
        & \Cref{thm:pcr-gd} \\
        \midrule
        Gaussian linear regression problems~$(\Gbb)$
        &
        $\mathrm{PCR}\preceq_{\Gbb}$ all monotone filters\vspace{3pt}\newline
        $\mathrm{PCR}\prec_{\Gbb}$ all non-step monotone filters
        & \Cref{thm:general-comparison}\vspace{3pt}\newline \Cref{thm:pcr-sd} \\
        \bottomrule
    \end{tabularx}
    }
\end{table}

In statistical decision theory, one estimator \emph{dominates} another if the risk of the former is always no larger than that of the latter, and \emph{strongly dominates} another if additionally it is sometimes strictly better \citep{wald1950statistical,berger1985statistical}. 
A seminal example is that the James--Stein estimator strongly dominates \emph{ordinary least squares} (OLS) for estimating the mean of Gaussian distributions in three or higher dimensions \citep{james1961estimation}, and hence the latter is inadmissible. Another classical example is Rao--Blackwellization, which always yields an estimator that dominates the original estimator \citep{rao1945information,blackwell1947conditional}.
Note that this comparison is required to hold over all problem instances of interest, providing a much stronger performance guarantee than worst-case performance.

The notion of ``dominance'' has recently been adapted to statistical learning contexts. In linear regression with fixed design, \citet{dhillon2013risk} showed that \emph{principal component regression} (PCR) strongly dominates ridge regression in terms of rate: with comparable regularization (the number of selected features for PCR and the $\ell_2$~penalty for ridge), the excess risk of PCR is always no larger than a constant times that of ridge, while it can be arbitrarily smaller even if ridge is tuned optimally. Similarly, early-stopped \emph{gradient descent} (GD) also strongly dominates ridge for linear regression in fixed design (see, e.g., \citet{ali2019continuous} and \citet[Appendix A]{wu2026risk}).
For linear regression in the more challenging random design setting, a recent work by \citet{wu2026risk} showed that GD strongly dominates ridge, while it is incomparable with online stochastic gradient descent (SGD).\footnote{We use the term ``strong dominance'' for what \citet{wu2026risk} defined as ``dominance.''} 
Such instance-wise risk comparisons offer a powerful perspective for evaluating estimators beyond their worst-case performance \citep{wainwright2019high}: estimators that attain the same minimax rates can now be compared according to their instance-wise rates, under the preorder defined by dominance.



\paragraph{Contributions.}
In this work, we evaluate and compare the instance-wise finite-sample risks of \emph{monotone spectral filters} in linear regression, a broad class of estimators including PCR, GD, and ridge regression (\Cref{def:filter}). 
We show that for linear regression with Gaussian random design and bounded signal-to-noise ratio, \textbf{PCR dominates all monotone spectral filters} (\Cref{thm:general-comparison}) in the same sense as described above (see \Cref{def:dominance}). Moreover, \textbf{PCR strongly dominates all monotone filters that are separated from step functions} (\Cref{thm:pcr-sd}), which includes both GD and ridge. 
Therefore, among monotone filters, PCR is optimal and hence admissible, while GD is inadmissible.
In comparison, previous work only showed that GD strongly dominates ridge \citep{wu2026risk}, i.e., ridge is inadmissible; and for linear regression with fixed design, PCR and GD both dominate ridge \citep{dhillon2013risk,ali2019continuous}.

In addition, we make the following contributions:
\begin{enumerate}

\item We provide \textbf{instance-wise tight upper and lower risk bounds for GD} (\Cref{thm:gd:finite-snr}). These bounds match, up to a constant factor, for all linear regression problems with bounded signal-to-noise ratio and all GD hyperparameters (stepsize and stopping time), and hold under weaker distributional assumptions \citep[\Cref{assum:lbb}, as used by][]{wu2026risk}. 
Thus, we fully characterize GD's implicit regularization and the impact of hyperparameters in linear regression. 
Previously, instance-wise tight bounds were only known for ridge regression \citep{tsigler2023benign} and online SGD \citep{wu2022last,zou2023benign}.

\item We provide \textbf{novel upper and lower risk bounds for PCR} (\Cref{thm:pcr-risk}) that, to our knowledge, improve on the best known results; see \Cref{sec:related} for discussion.
Our bounds further establish that PCR strongly dominates GD in the more general setting of \emph{well-specified} linear regression problems (\Cref{assum:lbb}), matching the setting of \citet{wu2026risk}. 

\item We introduce new techniques for analyzing general spectral filters. Our approach extends classical leave-one-out ideas in ridge analysis by controlling noncommutative matrix perturbations using tools from the theory of Schur multipliers and matrix divided differences.
While these tools were developed in other areas of mathematics \citep[see, e.g.,][]{aleksandrov2016operator}, we demonstrate their utility in the context of statistical learning (see \Cref{sec:general,sec:master-ub}).

\end{enumerate}

The rest of the paper is structured as follows. The setting and main dominance results are presented in \Cref{sec:main}. The bounds for GD and PCR are presented in \Cref{sec:gd,sec:pcr}; these in turn are proved using the general methods developed in \Cref{sec:general}. An overview of related works is given in \Cref{sec:related}. All missing proofs can be found in the appendix.

\section{Main Results}\label{sec:main}

\subsection{Preliminaries}

\paragraph{Linear regression.}
Let $\Hbb$ be a separable Hilbert space of either finite or countably infinite dimension.
Let $\xB \in \Hbb$ and $y \in \Rbb$ be a pair of covariates and response, with population distribution $\mu(\xB, y)$. In linear regression, we seek to minimize the population risk, defined as
\begin{align*}
\risk(\wB) := \Ebb(\xB^\top \wB -y)^2,\quad \wB\in \Hbb,
\end{align*}
where the expectation is over~$\mu(\xB, y)$. Denote the optimal parameter as
\begin{align*}
\wB^* \in \arg\min\risk(\cdot).
\end{align*}
If the optimal parameter is not unique, let~$\wB^*$ be the one with minimum $\ell_2$-norm. The \emph{excess risk} is defined as
\begin{equation*}%
    \excessRisk(\wB):= \risk(\wB) - \risk(\wB^*) = \|\wB-\wB^*\|_{\SigmaB}^2,\quad \wB \in \Hbb.
\end{equation*}
We refer to a linear regression problem by its population probability measure~$\mu(\xB, y)$. 
When necessary, we also write $\excessRisk_\mu$ to emphasize the dependence on the population distribution. Let~$n\ge 1$ be the sample size and let $(\xB_i, y_i)_{i=1}^n$ be~$n$ independent copies of~$(\xB, y)$. We also write 
\begin{align*}
\XB:= \begin{bmatrix}
    \xB_1^\top \\
    \vdots \\ 
    \xB_n^\top 
\end{bmatrix} \in %
\Hbb^n, \quad 
\yB := \begin{bmatrix}
    y_1 \\ 
    \vdots \\
    y_n
\end{bmatrix}\in\Rbb^n .
\end{align*}
The population and empirical covariance of the covariates are denoted as
\begin{align*}
\SigmaB := \Ebb[\xB\xB^\top] \in \Hbb^{\otimes 2}, \qquad \hat\SigmaB := \frac1n \XB^\top\XB \in \Hbb^{\otimes 2}.
\end{align*}
We assume $\tr(\SigmaB)<\infty$ in order for the problem to be learnable. Let the eigendecomposition of~$\SigmaB$ be
\begin{align*}
\SigmaB = \sum_{i\ge 1} \lambda_i\uB_i\uB_i^\top,\quad \lambda_1\ge \lambda_2\ge \dots,
\end{align*}
where $(\lambda_i, \uB_i)_{i\ge 1}$ are the eigenvalues, in non-increasing order, and their corresponding eigenvectors. For an index $k$, allowed to be zero or infinity, we define 
\begin{equation*}
    \SigmaB_{0:k} := \sum_{i\le k} \lambda_i \uB_i\uB_i^\top,\quad 
    \SigmaB_{k:\infty} := \sum_{i>k}\lambda_i\uB_i\uB_i^\top,
\end{equation*}
both of which are positive semidefinite (PSD) matrices in $\Hbb^{\otimes 2}$. We also denote
\begin{align}\label{eq:def-alpha}
\alpha_k &:= \bigg( \frac{\sum_{i>k}\lambda_i}{n} \bigg)^2 + \frac{\sum_{i>k}\lambda_i^2}{n}.
\end{align}

\paragraph{Algorithms.} Principal component regression (PCR) refers to OLS applied to the features selected by principal component analysis. Formally, let $(\hat\lambda_i, \hat\uB_i)_{i\ge 1}$ be the eigendecomposition of~$\hat\SigmaB$, then PCR is defined as 
\begin{equation}\label{eq:pcr}\tag{PCR}
    \hat\wB^{\pcr}_\rho := \frac{1}{n}\Bigg(\sum_{i: \hat\lambda_i > \rho} \hat\lambda_i \hat\uB_i \hat\uB_i^\top \Bigg)^{-1}\XB^\top \yB,
\end{equation}
where the matrix inverse is understood as the Moore-Penrose pseudoinverse throughout the paper. 
Here, the spectral threshold~$\rho\ge 0$ is a hyperparameter controlling the number of features used by PCR.

Gradient descent (GD) refers to the $t$-th GD iterate with fixed stepsize~$\eta>0$, i.e., 
\begin{equation}\label{eq:gd}\tag{GD}
 \hat \wB_t^{\gd}:= \wB_t,\quad \text{where} \ \ \wB_0 = 0, \ \  \wB_{s} = \wB_{s-1}- \frac{\eta}{n}\XB^\top (\XB\wB_{s-1} - \yB) \ \ (s\ge 1).
\end{equation}
We assume $\eta\le (c(\lam_1+\tr(\SigmaB)/n))^{-1}$ for a sufficiently large constant $c>1$ throughout the paper, which ensures GD is in the stable regime, and treat the \emph{stopping time}~$t\ge 0$ as the main hyperparameter.

More generally, we consider the following class of estimators.

\begin{definition}[spectral filters]\label{def:filter}
A \emph{spectral filter} is a measurable function $g:\Rbb_{\ge 0}\to \Rbb$ satisfying $g(0)=0$. The corresponding estimator is defined as
\begin{equation}\label{eq:spec}\tag{Spec}
    \hat\wB_g := \frac1n \psi(\hat\SigmaB) \XB^\top \yB, \quad\text{where} \ \ \psi(z) := \begin{cases}
    g(z)/z & z>0, \\ 0 & z=0.
    \end{cases}
\end{equation}
If moreover $0\le g\le 1$,~$g$ is a \emph{shrinkage} filter; if~$g$ is nondecreasing,~it is a \emph{monotone} filter. The class of all monotone shrinkage filters is denoted by $\mathcal G$. 
\end{definition}

Spectral and shrinkage filters, also called spectral regularization methods, have a long history in statistics and inverse problems, see for instance \citep{kneip1994ordered,engl1996regularization,devito2005learning,devito2005spectral,bauer2007regularization,gerfo2008spectral}. Shrinkage filters can be interpreted as OLS with each feature $(\hat\lam_i,\hat\uB_i)$ weighted (shrunk) by a factor of~$g(\hat\lam_i)$. Monotonicity is natural since lower variance directions should intuitively be given less weight; this property has also been studied in the context of ordered linear smoothers \citep{kneip1994ordered,Golubev_2010}. Clearly, any useful monotone filter must be shrinkage, as otherwise it will incur constant risk. The following classical estimators are all examples of monotone shrinkage filters.

\begin{itemize}[itemsep=0pt]
\item OLS: $g(z) = \mathbf{1}\{z>0\}$
\item PCR: $g(z) = \mathbf{1}\{z>\rho\}$ where $\rho\ge 0$
\item Gradient descent: $g(z) = 1-(1-\eta z)^t$ where $t\in\Nbb,\ \eta>0$
\item Ridge regression: $g(z)=z/(z+\lam)$ where $\lam>0$
\item Iterated Tikhonov (iterated ridge): $g(z) = 1-(\lam/(z+\lam))^p$ where $\lam>0,\ p\ge 1$ \citep{riley1955solving}
\item Power-exponential: $g(z)=1-\exp(-(tz)^p)$ where $t>0,\ p\ge 1$; includes gradient flow ($p=1$) and Gaussian type filter ($p=2$) \citep{calvetti1999iterative}
\item Ridge PCR: $g(z) = \min\{1,z/\rho\}$ where $\rho>0$ \citep{carrasco2007linear}
\end{itemize}
On the other hand, some algorithms such as momentum or accelerated GD are spectral filters but generally not monotone or shrinkage; while SGD or adaptive methods such as Adam are not of the form~\eqref{eq:spec}.

\begin{definition}[dominance and admissibility]\label{def:dominance}
Let $\Pbb$ be a set of linear regression problems. Let $\hat\wB_{\rho}$ and $\hat\wB_{\tau}'$ be two classes of estimators constructed on $n$ samples $(\xB_i,y_i)_{i=1}^n$, indexed by hyperparameters $\rho$ and $\tau$, respectively.
\begin{itemize}
\item We say $\hat\wB_{\rho}$ \emph{dominates} $\hat\wB'_{\tau}$ over $\Pbb$, if there are constants $c_0, n_0\ge 1$ such that for every $\mu \in \Pbb$ and $n\ge n_0$,
\begin{align*}
\text{with probability at least $0.99$},\quad 
   \inf_{\rho} \Ebb [ \excessRisk_{\mu}(\hat \wB_{\rho}) | \XB ] \le c_0 \inf_{\tau} \Ebb \excessRisk_{\mu}(\hat \wB'_{\tau}).
\end{align*}
\item We say $\hat\wB_{\rho}$ \emph{strongly dominates} $\hat\wB'_{\tau}$ over $\Pbb$, if in addition to the above, there exists $d_0>0$ such that for every $n \ge n_0$, there exists $\mu_n \in \Pbb$ satisfying
\begin{align*}
\text{with probability at least $0.99$},\quad 
   \inf_{\rho} \Ebb[ \excessRisk_{\mu_n}(\hat \wB_{\rho} ) | \XB ] \le \frac{c_0}{n^{d_0}} \inf_{\tau}\Ebb \excessRisk_{\mu_n}(\hat \wB'_{\tau}).
\end{align*}
\end{itemize}
If another class $\hat\wB_{\rho}$ strongly dominates~$\hat\wB'_{\tau}$ over $\Pbb$, we say that $\hat\wB'_{\tau}$ is \emph{inadmissible} for $\Pbb$; otherwise, it is \emph{admissible} for $\Pbb$.
\end{definition}

Note that this definition requires comparing a ``high probability'' upper bound for $\hat\wB_\rho$ against an ``in expectation'' lower bound for $\hat\wB'_\tau$. This is a mostly technical issue and can be circumvented by taking a Bayesian perspective and assuming a symmetric prior on the optimal parameter~$\wB^*$; see the discussion in \citep{tsigler2023benign,wu2026risk}. 

\subsection{PCR dominates all monotone filters}

In this subsection, we consider the class~$\Gbb_b$ of all linear regression problems with Gaussian random design and bounded signal-to-noise ratio, i.e.,
\begin{equation}\label{eq:gbb}
    \Gbb_{b} := \big\{\mu(\xB, y): \xB\sim \Ncal(0,\SigmaB),\ \Ebb[\yB\, | \, \xB ] = \xB^\top\wB^*,\
     \sigma^2 / c \le \Ebb[(\yB-\xB^\top\wB^*)^2\, | \, \xB ] \le c \sigma^2\ \text{ a.s., } \|\wB^*\|_{\SigmaB}^2 \le b\sigma^2
     \big\}
\end{equation}
for constants $b,c>0$ (we omit the dependence on~$c$ for brevity). Each problem instance is essentially specified by the triple $(\SigmaB, \wB^*, \sigma^2)$.

Our main result shows that PCR with a well-tuned threshold dominates the class~$\mathcal G$ of all monotone spectral filters over $\Gbb_b$; in other words, PCR is instance-wise optimal among~$\mathcal G$ (up to a constant factor), and in particular, it is admissible.

\begin{theorem}[PCR dominates monotone filters]\label{thm:general-comparison}
Let $\hat\wB^{\pcr}_{\rho}$ be given by \Cref{eq:pcr} with regularization $\rho\ge 0$, and $\hat \wB_g$ be given by \Cref{eq:spec} with any monotone filter~$g\in\mathcal G$. Then $\hat\wB^{\pcr}_{\rho}$ dominates $\{\hat \wB_g:g\in\mathcal G\}$ over $\Gbb_b$. Specifically, there exist constants $c_0,n_0\ge 1$ only depending on~$b$ such that for every $g\in\mathcal G$, $\mu\in \Gbb_b$ and $n\ge n_0$,
\begin{align*}
\text{with probability at least $0.99$}, \quad \inf_{\rho\ge 0}\Ebb[\excessRisk_{\mu}(\hat \wB^{\pcr}_{\rho}) | \XB ] \le c_0 \Ebb  \excessRisk_{\mu}(\hat \wB_g).
\end{align*}
\end{theorem}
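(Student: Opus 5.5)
The plan is to sandwich both sides between an instance-wise upper bound for PCR and an instance-wise lower bound for an arbitrary monotone filter, written in the same ``effective dimension'' language. Concretely, I would show, on one hand, that for every index $k$ (with $\rho$ chosen at the appropriate empirical eigenvalue level, roughly $\rho\asymp \lambda_{k+1}+\sum_{i>k}\lambda_i/n$ or $\rho$ set so that $k$ empirical directions are kept), with probability $0.99$ over $\XB$,
\begin{equation*}
\Ebb[\excessRisk(\hat\wB^{\pcr}_\rho)\mid\XB]\lesssim \|\wB^*_{k:\infty}\|_{\SigmaB}^2+\frac{\alpha_k}{\lambda_k^2}\|\wB^*_{0:k}\|_{\SigmaB}^2\wedge(\cdot)+\sigma^2\Big(\frac{k}{n}+\frac{n\sum_{i>k}\lambda_i^2}{(\sum_{i>k}\lambda_i)^2}\wedge\cdots\Big),
\end{equation*}
which is the PCR upper bound of \Cref{thm:pcr-risk} (assumed). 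Using bounded SNR, $\|\wB^*\|^2_{\SigmaB}\le b\sigma^2$, cross terms of the bias can be absorbed into $\sigma^2$-type quantities, so the PCR bound reduces to $\min_k \mathcal{B}_k+\sigma^2\mathcal{V}_k$ with explicit bias and variance proxies.

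On the other hand, for a generic $g\in\mathcal G$ I would prove a matching lower bound $\Ebb\excessRisk(\hat\wB_g)\gtrsim \min_k(\mathcal B_k+\sigma^2\mathcal V_k)$. Decompose into bias $\Ebb\|(I-g(\hat\SigmaB))\wB^*\|_{\SigmaB}^2$ and variance $\sigma^2\Ebb\tr(\psi(\hat\SigmaB)^2\hat\SigmaB\SigmaB)/n$. Since $g$ is monotone with values in $[0,1]$, pick the level $\rho_g$ where $g$ crosses $1/2$ (i.e.\ $\rho_g=\inf\{z:g(z)\ge1/2\}$) and let $k$ be the number of population eigenvalues comparable to/above that level. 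Monotonicity gives $g\ge1/2$ on empirical eigenvalues above $\rho_g$, so the variance is at least a constant times the variance of PCR with threshold $\rho_g$ (each retained empirical direction contributes $\gtrsim g^2/\hat\lambda\cdot(\hat\uB^\top\SigmaB\hat\uB)$), and $1-g\ge1/2$ below $\rho_g$, so the bias is at least a constant times the PCR bias at that threshold, up to the ``leakage'' of $\wB^*_{0:k}$ into small empirical directions. The lower bounds on these PCR-type quantities for Gaussian design would be established via the general filter lower bounds of \Cref{sec:general} (leave-one-out plus Schur multiplier/divided-difference control of $g(\hat\SigmaB)$ versus projections). Taking expectation is fine since the lower bound only needs to hold in expectation.

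Finally, combine: choose for PCR the same index $k$ (threshold $\rho\approx\rho_g$) so that each term of the PCR upper bound is dominated by the corresponding term of the $g$ lower bound, with constants depending only on $b$ and $c$; $n_0$ absorbs concentration requirements. The main obstacle is the lower bound for a \emph{general} monotone filter: $g(\hat\SigmaB)$ does not commute with $\SigmaB$, and the $1/2$-crossing reduction must be made robust when $g$ is very flat (e.g.\ ridge) so that the bias contributed by the intermediate regime where $g\in(\epsilon,1-\epsilon)$ is not lost; I would first try bounding $\|(I-g(\hat\SigmaB))\wB^*\|_{\SigmaB}$ from below by $\|(I-\Pi_{\hat\lambda>\rho_g})\wB^*\|_{\SigmaB}/2$ via an operator-monotonicity-free argument comparing eigenprojections, using Gaussianity to control $\hat\uB_i^\top\SigmaB\hat\uB_j$ overlaps, and falling back on a dyadic decomposition of the spectrum if that fails.
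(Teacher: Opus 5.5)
Your variance step is sound and matches what the paper does. With $\PB=\mathbf 1\{\hat\SigmaB>\rho_g\IB\}$ at the $1/2$-crossing, one has $\PB\preceq 4\GB^2$, and $\PB,\GB,\hat\SigmaB$ commute, so $\variance(\hat\wB^{\pcr}_{\rho_g})\le 4\variance(\hat\wB_g)$ conditionally on $\XB$.

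The gap is the bias. This is the whole content of the theorem, and you leave it as ``the main obstacle.'' Your first attempt, $\|(\IB-\GB)\wB^*\|_{\SigmaB}\ge\frac12\|(\IB-\PB)\wB^*\|_{\SigmaB}$ via comparing eigenprojections, is false as a matrix statement. Take empirical eigenvectors $\hat\uB_1,\hat\uB_2$ with $\hat\lam_1>\rho_g$, $g(\hat\lam_1)=1/2$, $g(\hat\lam_2)=0$, and set $\wB^*=a\hat\uB_1+b\hat\uB_2$ with $ab\ne0$. Let $\SigmaB$ act on the span of $\hat\uB_1,\hat\uB_2$ as $\vB\vB^\top$ with $\vB\perp\frac a2\hat\uB_1+b\hat\uB_2$. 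Then the left side is $0$ while the right side is $|ab|/2$ (up to the normalization of $\vB$), which is positive.

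So any valid comparison must exploit the statistical coupling between $\SigmaB$ and $\hat\SigmaB$. That coupling is genuinely delicate when the tail scale $\lam_{k+1}+n^{-1}\sum_{i>k}\lam_i$ sits inside the transition region of $g$. There, $\SigmaB$ and $\hat\SigmaB$ are not comparable on the directions that $g$ only partially keeps.

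Nor does the general lower bound (\Cref{thm:master-lb}) supply the ``matching'' $\min_k(\mathcal B_k+\sigma^2\mathcal V_k)$ your sandwich needs. Its coefficients $\Gamma_{ij}=(g(0.9\lam_i)-g(1.1\lam_j))_+$ degenerate for step-like $g$ near the threshold; this is the $k^*$ versus $r^*$ mismatch of \Cref{lem:two-index-example}. That is why the paper says the PCR upper bound and the filter lower bound cannot be compared directly, even when $g$ is PCR. Also, the Schur-multiplier machinery is used only for upper bounds, not lower bounds.

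The paper closes this gap with three ingredients your plan lacks.

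\emph{A Cram\'{e}r--Rao inequality.} It proves $\Ebb\GB_{ij}^2\ge\frac{\lam_i\lam_j}{n(\lam_i-\lam_j)^2}(p_i-p_j)^2$, where $p_i=\Ebb\GB_{ii}$, from the score identity for rotations in the $(\eB_i,\eB_j)$-plane under Gaussian design. It pairs this with monotonicity of $(p_i)$, which follows from convexity of $\theta\mapsto\Ebb\tr\Phi(\AB_\theta/n)$. Together these lower-bound $\Ebb\BB_{ii}$ for arbitrary monotone $g$ with no commutation, by quantities like $c\min\{\lam_i,\alpha_k/\lam_i\}$.

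\emph{A gluing inequality.} It shows $\Ebb\excessRisk(\hat\wB_{\sum_k a_kg_k})\ge\sum_k a_k^2\,\Ebb\excessRisk(\hat\wB_{g_k})$, resting on $\Ebb\GB_{k,ij}\GB_{\ell,ij}\ge0$ via a leave-one-out sign-symmetry argument. This is what handles flat filters. Splitting $g$ at the quantiles $q_{1/3},q_{2/3}$ into three pieces makes the middle piece a ramp vanishing on $[0,q_{1/3}/2]$. The same inequality disposes of $q_{1/3}=0$ by comparison with OLS.

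\emph{A case split at the tail scale} $\rho=c_3(\lam_{k+1}+n^{-1}\sum_{i>k}\lam_i)$ with $k=n/c_3$.
\begin{enumerate}
\item If the ramp vanishes well above the tail scale, then $\SigmaB\preceq2\hat\SigmaB+\rho_0\IB$ and $\PB-\GB$ lives on $\{\hat\SigmaB>\rho_0\IB\}$. Hence $\|(\PB-\GB)\eB_i\|_{\SigmaB}^2\le3\|(\IB-\GB)\eB_i\|_{\hat\SigmaB}^2$, in a norm where everything commutes. The Cram\'{e}r--Rao lower bound on $\Ebb\BB_{ii}$ then absorbs the leftover tail term when converting $\Ebb\|(\IB-\GB)\eB_i\|_{\hat\SigmaB}^2$ back to $\lesssim\Ebb\BB_{ii}$.
\item If instead $g\ge1/3$ above $\rho$, the paper compares with the PCR upper bound at threshold $\rho$. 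It uses $\Ebb\variance(\hat\wB_g)\ge\frac{\sigma^2}{n}\sum_ip_i^2$ and bounded SNR to guarantee at least $k/2$ intermediate indices with $p_j\le1/4$, and sums the Cram\'{e}r--Rao bound over them.
\end{enumerate}

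Your dyadic-decomposition fallback does not substitute for any of these, because the noncommutativity obstruction lives within a single scale.
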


\begin{proof}[Proof sketch of \Cref{thm:general-comparison}]
While we obtain risk upper bounds for PCR (\Cref{thm:pcr-risk}) and lower bounds for general monotone filters~$g$ (\Cref{thm:master-lb}), dominance cannot be established by simply comparing these directly, as these bounds can be loose near the spectral threshold even when~$g$ itself is PCR (see \Cref{thm:pcr-risk}). Instead, the proof is divided into two branches. When the scale of the tail of the spectrum is large compared to the transition region of~$g$, we show a stronger lower bound for the bias of~$g$ based on a Cram\'{e}r--Rao inequality, which can be compared against the PCR upper bound. When the tail is small, we instead invoke a `gluing' lemma for linear sums of filters to reduce to the case where~$g$ vanishes on some interval containing the tail. This enables directly comparing the risk of~$g$ against that of PCR with a threshold chosen within the transition region, since the two filters only differ at scales that are much larger than the tail and so exhibit good concentration. 
\end{proof}

Clearly, it is not possible for PCR to \emph{strongly} dominate~$\mathcal G$, since the latter includes PCR itself. Nonetheless, our next example shows that PCR will strongly dominate any subclass which is separated from step functions in the following sense.

\begin{theorem}[PCR strongly dominates non-step monotone filters]\label{thm:pcr-sd}
Fix any $\eps>0$ and $\delta\in(0,1/2)$. The class of \emph{$(\eps,\delta)$-non-step} monotone filters~$\mathcal G_{\eps,\delta}$ is defined as
\begin{align}\label{eq:non-step}
\mathcal G_{\eps,\delta}:= \bigg\{g\in\mathcal G:\ \frac{\inf\{z: g(z)\ge 1-\delta\}}{\sup\{z: g(z)\le \delta\}} \ge 1+\eps \bigg\}.
\end{align}
Then $\hat\wB^{\pcr}_{\rho}$ strongly dominates $\{\hat \wB_g: g\in\mathcal G_{\eps,\delta}\}$ over $\Gbb_b$. Specifically, there exist $c_0,n_0\ge 1$, depending at most polynomially on $\eps^{-1},\delta^{-1}$, and~$b$, for which the following holds: for every $n \ge n_0$, there exists $\mu_n \in \Pbb$ such that
    \begin{align*}\text{with probability at least $0.99$},\quad 
   \inf_{\rho\ge 0} \Ebb[ \excessRisk_{\mu_n}(\hat \wB_{\rho}^{\pcr} ) | \XB ] \le \frac{c_0}{\sqrt{n}} \inf_{g\in\mathcal G_{\eps,\delta}}\Ebb \excessRisk_{\mu_n}(\hat \wB_g).
    \end{align*}
\end{theorem}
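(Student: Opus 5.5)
The dominance part is already \Cref{thm:general-comparison}, since $\mathcal G_{\eps,\delta}\subseteq\mathcal G$. What remains is the separation: for each $n$ we need an instance $\mu_n\in\Gbb_b$ on which PCR with a tuned threshold has risk $O(n^{-1/2})$ times the risk of every $(\eps,\delta)$-non-step filter. I would use a two-level spectrum. Let $\SigmaB$ have $k$ "head" eigenvalues equal to $\lambda_{\rm h}$ and $m$ "tail" eigenvalues equal to $\lambda_{\rm t}$, with $\lambda_{\rm h}/\lambda_{\rm t}\asymp 1+\eps'$ for a small constant multiple of $\eps$. Put $\wB^*$ entirely in the head, with $\|\wB^*\|_{\SigmaB}^2\asymp\sigma^2$, and choose $k\ll n$, for instance $k\asymp\sqrt n$ or smaller. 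Choose $m$ so that the tail has little effective mass, $m\lambda_{\rm t}\ll n\lambda_{\rm t}$ (say $m\asymp\sqrt n$), but enough dimensions that including it costs variance of order $\sigma^2 m/n$.

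For the PCR upper bound, I would invoke \Cref{thm:pcr-risk} with $\rho$ placed between the two levels. In this regime $\hat\SigmaB\approx\SigmaB$ in operator norm up to $O(\sqrt{(k+m)/n})$ relative error. This is the step where I need Gaussian concentration to resolve the gap, so the level gap is taken to be a constant factor, and the concentration error $\sqrt{(k+m)/n}\to0$ is much smaller than $\eps$. With high probability the selected subspace is then exactly the head up to a small angular error. The variance is $\sigma^2 k/n$, and the bias from misalignment is of order $\|\wB^*\|^2_{\SigmaB}(k+m)/n$ times a constant. To make it small, I would take $k$ and $m$ to be $O(1)$, or at least let $(k+m)/n$ be the rate. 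Picking $k=m=\lceil n^{1/2}\rceil$ gives PCR risk of order $\sigma^2 n^{-1/2}$.

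For the lower bound over $g\in\mathcal G_{\eps,\delta}$, let $a=\sup\{z:g(z)\le\delta\}$ and $A=\inf\{z:g(z)\ge1-\delta\}\ge(1+\eps)a$. I would split into two cases depending on where $\lambda_{\rm h}$ sits relative to $A$. If $\lambda_{\rm h}$ is not well above $A$, then $g(\hat\lambda_i)\le1-\delta$ on a constant fraction of the head directions, since the empirical head eigenvalues concentrate around $\lambda_{\rm h}$. The bias is then at least $\delta^2\|\wB^*\|^2_{\SigmaB}\gtrsim\delta^2\sigma^2$, a constant. If $\lambda_{\rm h}\ge A$, the ratio $\lambda_{\rm h}/\lambda_{\rm t}$ being less than $1+\eps$ (after concentration) forces the empirical tail eigenvalues above $a$. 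So $g\ge\delta$ on the tail, which contributes variance at least $\delta^2\sigma^2 m/n$. That is of the same order as PCR and gives no polynomial gap.

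So the construction must be modified so the gap is genuine: let the non-step filter pay a constant. Using two levels only, the bias branch gives a constant but the variance branch does not, so I would instead put a large tail, $m\asymp n$, at level $\lambda_{\rm t}$. Then any $g$ with $g\ge\delta$ on the tail pays variance $\gtrsim\delta^2\sigma^2$. PCR with threshold above the tail avoids this, but the large tail now biases the empirical head eigenvalues through $\alpha_k$. I would control that via \Cref{thm:pcr-risk}'s bounds, which handle tails through $\alpha_k$. The tail scale $m\lambda_{\rm t}/n\asymp\lambda_{\rm t}$ shifts the empirical spectrum only by a constant factor that can be absorbed by a constant gap, still below $1+\eps$ in the appropriate empirical sense. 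Verifying that the empirical head and tail stay separated by a factor below $1+\eps$, yet are resolvable by PCR's threshold with $O(n^{-1/2})$ risk, is the main obstacle. It requires the lower bound of \Cref{thm:master-lb} in its sharp form, using the empirical eigenvalue locations of the Marchenko--Pastur-type tail bulk, and I expect the bulk to have width of order one relative to its location.
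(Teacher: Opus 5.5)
There is a genuine gap: neither construction produces a polynomial separation, and the step you flag as the ``main obstacle'' is not a technicality. It fails. In your two-level instance, with all signal in the head and $\lam_{\mathrm h}/\lam_{\mathrm t}=1+\eps'$, PCR genuinely pays the tail-fluctuation bias $\frac1n\sum_{i>k}\lam_i^2\,\|\wB^*\|^2_{\SigmaB_{0:k}^{-1}}\asymp (m/n)\|\wB^*\|^2_{\SigmaB}$ (cf.\ \Cref{lem:two-index-example}). This has the same order as the non-step filter's variance penalty $\delta^2\sigma^2m/n$, and, as you observed, no choice of $k,m$ changes that. Taking $m\asymp n$ makes things worse. \Cref{thm:master-lb} applies to PCR itself, with $\Gamma_{ij}=1$ whenever $1.1\lam_{\mathrm t}\le\rho<0.9\lam_{\mathrm h}$. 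So PCR's bias is of order $(m/n)(\lam_{\mathrm t}/\lam_{\mathrm h})^2\|\wB^*\|^2_{\SigmaB}$, which is a constant unless $\lam_{\mathrm h}/\lam_{\mathrm t}\gtrsim n^{1/4}$. In that regime the empirical head sits polynomially far above the Marchenko--Pastur bulk. A ramp filter that vanishes on the bulk and equals one on the head then lies in $\mathcal G_{\eps,\delta}$ and coincides with PCR on the empirical spectrum, so there is no separation. A proportional tail also spreads the empirical tail over a constant factor, so it cannot sit within a factor $1+\eps$ of a well-resolved head when $\eps$ is small.

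The missing idea is to keep the head of constant size and put only a small amount of signal on a single \emph{borderline} direction. That direction sits just above a narrow tail of $m\asymp\sqrt n$ directions and carries only a $1/m$ fraction of the signal. The paper takes $\SigmaB_n=\diag(\frac12,(1+\alpha)^2s_n,s_n\IB_{m_n})$ with $m_n=\lfloor\sqrt n\rfloor$, $s_n\asymp1/m_n$ and $(1+\alpha)^4=1+\eps$. It chooses $\wB_n^*$ with $\lam_2w_2^{*2}=1/m_n$ and places the rest of the signal on $\eB_1$.

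On the PCR side, $m_n\ll n$, so the tail mass $\frac1n\sum_{i>2}\lam_i\asymp1/n$ is negligible and $\hat\SigmaB$ is within a factor $1+\alpha/2$ of $\SigmaB$. PCR at $\rho_n=(1+\alpha)s_n$ therefore selects exactly two components. Its variance is $O(1/n)$, and its bias is $\alpha^{-2}\alpha_2\|\wB^*\|^2_{\SigmaB_{0:2}^{-1}}\asymp\alpha^{-2}\cdot\frac{m_ns_n^2}{n}\cdot\frac{1}{m_n\lam_2^2}\asymp1/(\alpha^2n)$.

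On the filter side, every $g\in\mathcal G_{\eps,\delta}$ falls into one of two cases:
\begin{itemize}
\item $g\ge\delta$ on the whole empirical tail, which costs variance $\gtrsim\delta^2m_n/n$;
\item $g((1+\alpha/2)\lam_2)<1-\delta$, since $(1+\alpha/2)^2(1+\alpha)^2<1+\eps$. This costs bias $\gtrsim\alpha^2\delta^2\lam_2w_2^{*2}=\alpha^2\delta^2/m_n$ by \Cref{lem:effbias} with $\tau=1+\alpha/2$.
\end{itemize}
\Cref{lem:effbias} should also replace your eigenvalue-based bias argument, because the bias depends on $\eB_i^\top g(\hat\SigmaB)\eB_i$, not on empirical eigenvalues alone.

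Balancing $m/n$ against $1/m$ gives $m_n=\sqrt n$ and the $n^{-1/2}$ ratio. Your case split on $\sup\{g\le\delta\}$ versus $\inf\{g\ge1-\delta\}$ is the right skeleton. What it lacks is this weighting of the signal, which makes PCR's tail-fluctuation bias $1/n$ rather than $m/n$ while the non-step filter's bias branch stays at $1/m$.
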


\begin{wrapfigure}{r}{0.33\textwidth}
    \centering
    \vspace{-0.8\baselineskip}
    \begin{tikzpicture}[x=0.75cm,y=2.2cm,font=\small]
        \draw[->] (0,0) -- (4.8,0) node[right] {$z$};
        \draw[->] (0,0) -- (0,1.08);
        \def\xa{2.00}
        \def\xb{2.5}
        \def\yd{0.2}      
        \def\yud{0.8}     
        
        \draw[thick]
            plot[smooth] coordinates {
                (0,0)
                (0.55,0.01)
                (1.10,0.05)
                (1.55,0.1)
                (\xa,\yd)
                (2.22,0.40)
                (\xb,\yud)
                (3.1,0.92)
                (3.70,0.96)
                (4.20,0.98)
                (4.55,0.99)
            };

        \draw[dashed] (\xa,0) -- (\xa,\yd);
        \draw[dashed] (\xb,0) -- (\xb,\yud);
        \draw[dashed] (0,\yd) -- (\xa,\yd);
        \draw[dashed] (0,\yud) -- (\xb,\yud);
        \draw[dashed] (0,1) -- (4.65,1);

        \node[below] at (\xa,0) {$1$};
        \node[below] at (\xb,0) {\quad\;$1+\eps$};
        \node[left] at (0,\yd) {$\delta$};
        \node[left] at (0,\yud) {$1-\delta$};
        \node[left] at (0,0) {$0$};
        \node[left] at (0,1) {$1$};
    \end{tikzpicture}
\vspace{-0.2cm}
\captionsetup{labelformat=empty,font=small}
\caption{An $(\eps,\delta)$-non-step monotone filter.}
\label{fig:non-step-filter}
\vspace{-\baselineskip}
\end{wrapfigure}

The condition \Cref{eq:non-step} implies that $g$ must not increase from $\delta$ to $1-\delta$ within a short interval of the form $(z,(1+\eps)z)$, disallowing step filters very similar to PCR. Clearly, any monotone shrinkage filter $g\in\mathcal{G}$ that is not contained in $\mathcal{G}_{\eps,\delta}$ for any $\eps,\delta>0$ must be equal to a PCR filter a.e. Moreover, since \Cref{eq:non-step} is invariant under rescaling, for any $g\in\mathcal{G}_{\eps,\delta}$, the family of filters $\{g^{(\tau)}(z) := g(z/\tau):\tau>0\}$ obtained by varying a scale parameter~$\tau$ will also be contained in $\mathcal{G}_{\eps,\delta}$. In particular, the families given in \Cref{lem:non-step-examples} are all non-step monotone filters, and therefore strongly dominated by PCR.

\begin{example}[non-step filters]\label{lem:non-step-examples}
GD (for any~$t\in\Nbb$), ridge (for any~$\lambda>0$), iterated Tikhonov (for any~$p\ge 1$), and ridge PCR are all contained in $\mathcal G_{1,1/3}$. Power-exponential filters with $p\ge 1$ are contained in $\mathcal G_{1,1/(2^p+1)}$.
\end{example}

\subsection{PCR strongly dominates GD}\label{sec:pcr-gd}

Specializing to GD, we now show that PCR also strongly dominates GD over a significantly more general set of linear regression problems given by the below assumptions. Previously, \citet{wu2026risk} showed that GD strongly dominates ridge in the same problem set.

\begin{assumption}[conditions for risk bounds]\label{assum:lbb}
We assume for constants $c_x > 0$ and $c_y \ge 1$ that:
\begin{assumpenum}
    \item \label{assum:item:x} the entries of $\SigmaB^{-1/2}\xB$ are independent, centered and $c_x^2$-subgaussian;
    \item\label{assum:item:noise} the conditional noise is zero mean, and its variance is bounded from above (for upper bounds) and from below (for lower bounds),
    \begin{align*}
    \Ebb[\yB\, | \, \xB] = \xB^\top \wB^*,\quad  \sigma^2 / c_y \le \Ebb[(\yB-\xB^\top\wB^*)^2\, | \, \xB ] \le c_y\sigma^2 \text{ a.s.};
    \end{align*}
    \item\label{assum:item:symmetry} for lower bounds only, the distribution of each component of $\SigmaB^{-1/2}\xB$ is symmetric, i.e., $\la\eB_i, \SigmaB^{-1/2}\xB\ra \stackrel{d}{=} -\la \eB_i, \SigmaB^{-1/2}\xB\ra$ for all $i$.
\end{assumpenum}
\end{assumption}
We remark that the bound from above in \Cref{assum:item:noise} is only required for our risk upper bounds, while the bound from below and \Cref{assum:item:symmetry} are only required for our lower bounds.

For each signal-to-noise ratio $b>0$, we define the set of \emph{well-specified linear regression problems} as
\begin{equation}\label{eq:lbb}
\Lbb_{b} := \big\{\mu(\xB, y) \ \text{satisfying \Cref{assum:lbb} with} \  \|\wB^*\|_{\SigmaB}^2 \le b\sigma^2 \big\}.
\end{equation}
These assumptions are directly borrowed from \citet{wu2026risk}, and we refer the reader to their paper for discussions on the coverage and limitations of the assumptions. As the main example, Gaussian linear regression problems $\mu\in\Gbb_b$ satisfy \Cref{assum:lbb} with $c_x=1$ and $c_y=c$, hence $\Gbb_b \subset \Lbb_b$.

The following result shows that PCR strongly dominates GD over $\Lbb_b$ for each $b>0$. Thus, we establish that GD is also inadmissible for linear regression over the wider class $\Lbb_b$.

\begin{theorem}[PCR strongly dominates GD]\label{thm:pcr-gd}
Let $\hat\wB^{\pcr}_{\rho}$ be given by \Cref{eq:pcr} with regularization $\rho\ge 0$, and~$\hat \wB^{\gd}_{t}$ be given by \Cref{eq:gd} with any fixed stepsize $\eta\le (c(\lam_1+\tr(\SigmaB)/n))^{-1}$ and stopping time $t\ge 0$. Then $\hat\wB^{\pcr}_{\rho}$ strongly dominates $\hat \wB^{\gd}_{t}$ over $\Lbb_b$. 
Specifically,
\begin{itemize}
    \item for every $\mu\in \Lbb_b$ and stopping time $t \ge 0$, there exists a PCR threshold $\rho \ge 0$ such that
\begin{align*}
\text{with probability at least $0.99$}, \quad     \Ebb[ \excessRisk_{\mu}(\hat \wB^{\pcr}_{\rho}) | \XB ] \le c \Ebb  \excessRisk_{\mu}(\hat \wB^{\gd}_{t}) ,
\end{align*}
where $c\ge 1$ only depends on $c_x,c_y$, and $b$.
\item For every $n\ge n_0$, there exist $\mu_n\in\Lbb_b$ and $\rho_n>0$, depending only on $n,b$, satisfying
\begin{align*}
\text{with probability at least $0.99$},\quad 
\Ebb[ \excessRisk_{\mu_n}(\hat \wB_{\rho_n}^{\pcr} ) | \XB ] \le \frac{c}{\sqrt n} \inf_{t}\Ebb \excessRisk_{\mu_n}(\hat \wB_t^{\gd}),
\end{align*}
where $c\ge 1$ is a constant.
\end{itemize}
\end{theorem}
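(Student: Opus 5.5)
We plan to prove \Cref{thm:pcr-gd} by comparing two instance-wise sharp bounds rather than arguing through general filters: the matching upper and lower bounds for GD (\Cref{thm:gd:finite-snr}) and the PCR upper bound (\Cref{thm:pcr-risk}). Both hold over $\Lbb_b$ under \Cref{assum:lbb}. We expect the GD bound, at stopping time $t$ and stepsize $\eta$, to look like a ridge-type bound with effective regularization $\lambda\asymp 1/(\eta t)$. With $k^*$ the number of population eigenvalues exceeding $\lambda$, up to constants:
\begin{itemize}
\item the \emph{bias} is $\sum_i \lambda_i (w_i^*)^2 \min\{1, (\lambda/\lambda_i)^2\}$, with the tail contributing $\|\wB^*_{k^*:\infty}\|^2_{\SigmaB_{k^*:\infty}}$ plus a term governed by $\alpha_{k^*}$;
\item the \emph{variance} is $\sigma^2\big(k^*/n + \sum_{i>k^*}\lambda_i^2/(n\lambda^2)\big)$.
\end{itemize}
The PCR upper bound should have the same form with a sharper head bias. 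For threshold $\rho$ and $k$ the number of retained directions, we expect bias $\|\wB^*_{k:\infty}\|^2_{\SigmaB_{k:\infty}}$ plus a term involving $\alpha_k$ times a head norm, and variance $\sigma^2\big(k/n + \sum_{i>k}\lambda_i^2/(n\rho^2)\big)$.

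\textbf{Domination.} Given $t$, we set $\rho \asymp \lambda = 1/(\eta t)$, with the constant chosen as \Cref{thm:pcr-risk} requires (e.g.\ $\rho$ a constant multiple of $\lambda$ plus the effective tail level $\sum_{i>k}\lambda_i/n$). We then compare the bounds term by term.
\begin{itemize}
\item \emph{Head variance.} Both give $\sigma^2 k/n$ with $k\asymp k^*$.
\item \emph{Tail variance.} These agree because $\rho\asymp\lambda$.
\item \emph{Head bias.} PCR has essentially no head bias beyond the $\alpha_k$-type term, while GD pays $\lambda^2\sum_{i\le k}(w_i^*)^2/\lambda_i$. The $\alpha_k$ term must be shown to be dominated by GD's lower bound. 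Here we use the bounded SNR $\|\wB^*\|^2_{\SigmaB}\le b\sigma^2$: any PCR bias term of the form (tail scale)$^2\cdot$(head norm) is at most $b\sigma^2\alpha_k/\lambda^2$-ish, which GD's variance lower bound $\sigma^2\sum_{i>k}\lambda_i^2/(n\lambda^2)$ already covers when $\lambda$ exceeds the tail scale $\sum_{i>k}\lambda_i/n$.
\item \emph{Regime $\lambda \lesssim \sum_{i>k}\lambda_i/n$.} GD behaves like interpolation there, and we take $\rho$ at the tail scale instead.
\end{itemize}
The constant $c$ then depends only on $c_x,c_y,b$. The high-probability versus expectation mismatch is handled as in \Cref{def:dominance}: the PCR bound is conditional on $\XB$ with probability $0.99$, and the GD lower bound is in expectation.

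\textbf{Separation.} We construct $\mu_n$ with the following ingredients.
\begin{itemize}
\item \emph{Spectrum.} $d\asymp \sqrt n$ equal head eigenvalues $\lambda_i = 1$, together with a tail that is small and effectively negligible, e.g.\ a finite dimension $d$ with no tail.
\item \emph{Signal.} $\wB^*$ places energy $\asymp b\sigma^2$ spread on a head direction block whose eigenvalue is moderately small, say one eigenvalue $\lambda_{\mathrm{low}}$ much smaller than the rest.
\item \emph{Design goal.} GD must choose between two options. Shrinking that direction (large $\lambda$) costs bias $\asymp\sigma^2$. Fitting it requires $\lambda\lesssim\lambda_{\mathrm{low}}$, which incurs tail-type variance from many other small directions.
\item \emph{Concrete instance.} The natural choice mimics \citet{wu2026risk}: a two-level spectrum with $\sqrt n$ eigenvalues at level $1$ carrying the signal, and about $n$ eigenvalues at level $n^{-1/2}$ (or smaller) carrying none.
\end{itemize}
On this instance, PCR selects exactly the $\sqrt n$ signal directions, giving risk $\sigma^2/\sqrt n$, since eigenvalue estimates concentrate with multiplicative gap. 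GD's polynomial filter $1-(1-\eta z)^t$ cannot simultaneously keep level-$1$ directions near $1$ and suppress the many level-$n^{-1/2}$ directions. The lower bound from \Cref{thm:gd:finite-snr} then gives risk $\gtrsim \sigma^2$ for every $t$, and we tune the levels to produce a $\sqrt n$ gap.

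\textbf{Main obstacle.} The main obstacle is the head-bias comparison in the domination part: showing the PCR term near the threshold (directions with $\lambda_i\approx\rho$) is controlled by GD's lower bound. PCR's bound can be loose there, as noted before \Cref{thm:general-comparison}. We would first try choosing $\rho$ a constant factor below $\lambda$, so that directions PCR fully fits are ones GD already partially fits, and then charge the extra variance to GD's variance of order $\sigma^2 k^*/n$ via $\lambda_i\ge\rho$.
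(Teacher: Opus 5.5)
Your first bullet follows the paper's route (compare the PCR upper bound with the GD lower bound at $\rho\asymp 1/(\eta t)$). The separation bullet, however, has a genuine gap: the instance you commit to does not separate the two methods.

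\textbf{Why your instance fails.} Take $\sqrt n$ signal eigenvalues at $1$ and about $n$ empty eigenvalues at $n^{-1/2}$, and run GD with $\eta t=n^{1/4}$. \Cref{thm:gd:finite-snr} then gives:
\begin{itemize}
\item $k^*=\sqrt n$ and $\tilde\lambda\asymp n^{-1/4}$;
\item $D=\sqrt n+\tilde\lambda^{-2}\sum_{i>k^*}\lambda_i^2\asymp\sqrt n$;
\item optimization error at most $e^{-2n^{1/4}}\|\wB^*\|_{\SigmaB}^2$, tail regularization at most $b\sigma^2/n$, and no tail signal.
\end{itemize}
So GD's risk is of order $\sigma^2/\sqrt n$, the same as PCR's. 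Contrary to your claim, the GD filter keeps the level-$1$ directions at $1-e^{-n^{1/4}}$ while weighting the bulk only by about $n^{-1/4}$. Lowering the bulk level only helps GD, and with no tail at all GD at large $t$ is essentially OLS on the head.

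Your target ratio, $\sigma^2/\sqrt n$ for PCR against $\gtrsim\sigma^2$ for GD, is also not attainable this way. Making GD's variance of order $\sigma^2$ needs $\gtrsim n$ empty directions within a constant factor of the signal level. Then $\sum_{i>k}\lambda_i/n$ is comparable to that level, so PCR's tail-regularization term is also of order $\|\wB^*\|_{\SigmaB}^2$.

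\textbf{The missing idea} is a near-threshold construction with calibrated signal.
\begin{itemize}
\item One signal direction sits only a constant factor above a block of $m_n$ empty directions. The factor must be smaller than the scale range a non-step filter needs to go from $\delta$ to $1-\delta$ (for the GD lower bound, smaller than $c_2$).
\item The block has $O(1)$ total mass.
\item The energy $E$ on the signal direction must be small, not $\asymp\sigma^2$ as in your design-goal bullet. PCR pays about $E\,m_n/n$ on that direction through the $\sum_{i>k^*}\lambda_i^2/n$ part of $\alpha_{k^*}$, plus variance of order $1/n$. GD pays about $\min\{E,m_n/n\}$. The ratio is therefore maximized at $E\asymp m_n/n$ and $m_n\asymp\sqrt n$.
\end{itemize}
This is exactly the paper's instance $\SigmaB_n=\diag(1/2,(1+\alpha)^2s_n,s_n\IB_{m_n})$, with $m_n\approx\sqrt n$, $s_n\asymp n^{-1/2}$ and $\lambda_2\wB_2^{*2}=1/m_n$. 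There PCR at $\rho_n=(1+\alpha)s_n$ selects two directions and has risk $O(1/n)$, while GD has risk $\Omega(n^{-1/2})$ for every $t$.

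The paper obtains the second bullet from \Cref{thm:pcr-sd}, \Cref{lem:non-step-examples} and $\Gbb_b\subset\Lbb_b$. Your direct route through the GD lower bound would also work on this instance. The GD critical index then lies in $\{0,1,2,m_n+2\}$, and each case gives risk $\gtrsim n^{-1/2}$: via the tail term, via the barely-fit optimization error on $\eB_2$, or via variance $\gtrsim m_n/n$.

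\textbf{Repairs to the first bullet.}
\begin{itemize}
\item GD's head bias is the exponentially decaying optimization error plus tail regularization, not the ridge-type $\lambda^2\|\wB^*\|^2_{\SigmaB_{0:k}^{-1}}$. Charge PCR's $\alpha_{k^*}\|\wB^*\|^2_{\SigmaB_{0:k^*}^{-1}}$ directly to the same term in GD's lower bound. Your bounded-SNR-to-variance argument handles only the $\sum_{i>k^*}\lambda_i^2/n$ and $\log(1/\delta)$ pieces, not $(\sum_{i>k^*}\lambda_i/n)^2$.
\item Choosing $\rho=(1.1c_2\eta t)^{-1}$ makes the two critical indices coincide exactly, and PCR's effective dimension is then at most a constant times $D$. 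No separate tail-dominated regime or near-threshold obstacle arises; that looseness matters only for general filters.
\item The cases $k^*>n/c_3$ (where the PCR upper bound is unavailable) or $D>n$ must be handled separately. Use the zero estimator ($\rho\to\infty$), whose risk is at most $b\sigma^2$, against GD's variance lower bound $\gtrsim\sigma^2$.
\end{itemize}
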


The fact that PCR can be polynomially better than even well-tuned gradient descent is quite surprising. Although a polylogarithmic improvement over GD is somewhat expected due to the bias from optimization, the polynomial advantage actually comes from the variance. While PCR has the ability to zero out smaller spectral components that are irrelevant to learning the signal, GD is forced to partially fit all directions, leading to potentially much larger variance. The same intuition is behind the polynomial separation in \Cref{thm:pcr-sd} for general non-step filters. We believe this is an important insight for designing admissible and computationally efficient algorithms for linear regression and broader noisy learning problems.

The proof of \Cref{thm:pcr-gd} requires an instance-wise risk comparison between PCR and GD for all well-specified linear regression problems, and is deferred to \Cref{sec:pcr}. To this end, we present new risk bounds for GD and PCR in the next two sections.

\section{Tight Risk Bounds for Gradient Descent}\label{sec:gd}

In this section, we provide matching upper and lower bounds on the excess risk of GD for all well-specified linear regression problems with bounded signal-to-noise (SNR) ratio, i.e., over~$\Lbb_b$. This provides, for the first time, a complete picture of the interplay between the optimization error, implicit (tail) regularization, and variance of GD. A more general version without the SNR assumption is given in \Cref{thm:gd:bound} in the appendix. Both the upper and lower bounds improve upon the strongest previously known instance-wise bounds given by \citet{wu2026risk}.

To state the bounds, recall that $(\lambda_i)_{i\ge1}$ denote the ordered eigenvalues of~$\SigmaB$, while $\SigmaB_{0:k}, \SigmaB_{k:\infty}$ denote the truncated covariance matrices.

\begin{corollary}[GD risk under bounded SNR]\label{thm:gd:finite-snr}
There exist constants $c_0, \ldots,c_4 >1$ that only depend on $c_x,c_y$ for which the following hold over~$\Lbb_b$. Let $\hat\wB^{\gd}_t$ be given by \Cref{eq:gd} with stepsize $\eta\le(c_4(\lam_1+\tr(\SigmaB)/n))^{-1}$ and steps $t\ge 1$. Let
\begin{align*}
    k^* := \min\left\{k:  \frac{1}{\eta t} + \frac{\sum_{i>k}\lambda_i}{n} \ge c_2 \lambda_{k+1} \right\},\quad \tilde\lambda:= \frac{1}{\eta t} + \frac{\sum_{i>k^*}\lambda_i}{n},\quad  D := k^* + \frac{1}{\tilde\lambda^2} \sum_{i>k^*}\lambda_i^2,
\end{align*}
and additionally assume $k^*\le n/c_3$.
\begin{itemize}
\item \textbf{Upper bound.} With probability at least $1-\exp(-k^*/c_0)$ over sampling of~$\XB$, it holds that
\newcommand{\overstrut}{%
    \vphantom{
        \bigg( \frac{\sum_{i>k^*}\lambda_i}{n}\bigg)^2
        \|\wB^*\|^2_{\SigmaB_{0:k^*}^{-1}}
    }%
}
\newcommand{\understrut}{%
    \vphantom{
        \overbrace{
            \overstrut
            \bigg( \frac{\sum_{i>k^*}\lambda_i}{n}\bigg)^2
            \|\wB^*\|^2_{\SigmaB_{0:k^*}^{-1}}
        }^{\textup{implicit (tail) regularization}}
    }%
}
\begin{align*}
   \frac{1}{c_1}\Ebb[ \excessRisk(\hat \wB^{\gd}_t) | \XB ] \le 
    \underbrace{
        \understrut
        \overbrace{
            \overstrut
            \big\|(\IB- \eta \SigmaB)^t \wB^* \big\|^2_{\SigmaB_{0:k^*}}
        }^{\textup{optimization error}}
        +
        \overbrace{
            \overstrut
            \bigg( \frac{\sum_{i>k^*}\lambda_i}{n}\bigg)^2
            \|\wB^*\|^2_{\SigmaB_{0:k^*}^{-1}}
        }^{\textup{implicit (tail) regularization}}
    }_{\textup{head bias}}
    +
    \underbrace{
        \understrut
        \|\wB^*\|^2_{\SigmaB_{k^*:\infty}}
    }_{\textup{high-dimensional tail}}
    +
    \underbrace{
        \understrut
        (1+b)\sigma^2 \frac{D}{n}
    }_{\textup{variance}}.
\end{align*}
\item \textbf{Lower bound.} In expectation, 
\begin{align*}
    c_1\Ebb \excessRisk(\hat \wB^{\gd}_t) \ge \big\|(\IB- \eta \SigmaB)^t \wB^* \big\|^2_{\SigmaB_{0:k^*}}+ \bigg( \frac{\sum_{i>k^*}\lambda_i}{n}\bigg)^2  \|\wB^*\|^2_{\SigmaB_{0:k^*}^{-1}}+ \|\wB^*\|^2_{\SigmaB_{k^*:\infty}}
    + \sigma^2\frac{D}{n}.
\end{align*}
\end{itemize}
\end{corollary}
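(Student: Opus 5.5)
The corollary is labelled a specialization of the general GD bound (\Cref{thm:gd:bound} in the appendix, stated without the SNR assumption). The plan is therefore to derive it from that general bound by absorbing the extra terms using $\|\wB^*\|_{\SigmaB}^2\le b\sigma^2$. I expect the general bound to have the same four-part structure as here:
\begin{itemize}
\item a head bias, consisting of an optimization error plus an implicit tail-regularization term;
\item a high-dimensional tail $\|\wB^*\|^2_{\SigmaB_{k^*:\infty}}$;
\item a variance $\sigma^2 D/n$;
\item and, in the upper bound only, an extra \emph{signal-induced variance} term.
\end{itemize}
That last term arises because, under \Cref{assum:lbb}, the residual $y-\xB^\top\wB^*$ is not the only source of fluctuation. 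The random design also makes $\XB\wB^*$ contribute a noise-like term of size roughly $\|\wB^*\|_{\SigmaB}^2\, D/n$. Under bounded SNR this is at most $b\sigma^2 D/n$, which produces the factor $(1+b)$ in the variance.

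The proof proceeds in four steps.

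First, I write GD as the spectral filter $g(z)=1-(1-\eta z)^t$. The stability assumption $\eta\le (c_4(\lam_1+\tr(\SigmaB)/n))^{-1}$ makes $g$ a monotone shrinkage filter on the relevant range, since $\hat\lambda_1\lesssim \lam_1+\tr(\SigmaB)/n$ with high probability. The key comparison is
\[
g(z)\asymp \min\{1,\eta t z\}, \qquad 1-g(z)\le e^{-\eta t z}.
\]
So GD behaves like ridge with $\lam=1/(\eta t)$ for the variance and the tail-regularization parts. It differs only in the head bias, through the factor $(1-\eta z)^t$ instead of $\lam/(z+\lam)$.

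Second, I apply the general master upper and lower bounds for spectral filters from \Cref{sec:general}. The effective regularization $\tilde\lambda$ and the index $k^*$ are exactly the self-consistent ridge-type quantities. The condition $k^*\le n/c_3$ guarantees concentration of the head block $\XB_{0:k^*}$. It also gives the tail bound $\hat\SigmaB$-restricted eigenvalues $\asymp \sum_{i>k^*}\lambda_i/n$, which holds with probability $1-e^{-k^*/c_0}$.

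Third, the head bias requires controlling $(\IB-\eta\hat\SigmaB)^t$ against $(\IB-\eta\SigmaB)^t$ on the head. This noncommutative perturbation is where the Schur multiplier and divided difference tools enter. The divided differences of $(1-\eta z)^t$ are bounded by $\eta t$, which matches $1/\tilde\lambda$ scaling.

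Fourth, for the lower bound no SNR is needed: the general lower bound directly gives the four terms with $\sigma^2 D/n$.

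I expect the main obstacle to be verifying that the general theorem's head-bias terms collapse exactly to the displayed optimization error plus the $(\sum_{i>k^*}\lambda_i/n)^2\|\wB^*\|^2_{\SigmaB_{0:k^*}^{-1}}$ term. Any cross terms must be dominated by these up to constants depending on $c_x,c_y$. I would handle this coordinatewise on the head eigenvalues $\lambda_i\ge \tilde\lambda/c_2$. There I would check that
\[
\min\{(1-\eta\lambda_i)^{2t}\lambda_i,\ \text{const}\}
\]
together with the implicit regularization term sandwiches the general expression. The elementary inequality $(1-\eta\lambda)^{t}\ge \tfrac12$ whenever $\eta t\lambda\le \tfrac12$ gives the lower comparison, and $e^{-\eta t\lambda}$ gives the upper one.
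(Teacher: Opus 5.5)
Your top-level plan is exactly the paper's: obtain the corollary from \Cref{thm:gd:bound} and pay for the extra terms with $\|\wB^*\|_{\SigmaB}^2\le b\sigma^2$. However, you never carry out that reduction. The steps you do spell out re-derive the general bound, which is unnecessary here, and they misplace the difficulty.

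There is no head-bias ``collapse'' to verify. \Cref{thm:gd:bound} already contains precisely $\|(\IB-\eta\SigmaB)^t\wB^*\|^2_{\SigmaB_{0:k^*}}$, because $1-g(z)=(1-\eta z)^t$. So the sandwiching with $e^{-\eta t\lambda}$ and $(1-\eta\lambda)^t\ge\frac12$ plays no role. (Nor would a pointwise bound $\eta t$ on $g^{[1]}$ give the $1/\tilde\lambda$ scaling. The master bound needs the Schur norm of the relative kernel $r$ against ridge, which the paper bounds by $33$ via an explicit factorization.)

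What actually has to be absorbed are the two extra head terms
\[
\Big(\frac{\sum_{i>k^*}\lambda_i^2}{n}+\frac{1}{(\eta t)^2}\,\frac{k^*+\log(1/\delta)}{n}\Big)\|\wB^*\|^2_{\SigmaB_{0:k^*}^{-1}}.
\]
The paper sets $\delta=e^{-k^*/c_0}$. This choice is the source of the probability $1-e^{-k^*/c_0}$. It does not come from a tail-eigenvalue bound: that bound holds with probability $1-e^{-n/c_0}$ and follows from the definition of $k^*$, not from $k^*\le n/c_3$.

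With this $\delta$, the facts $1/(\eta t)\le\tilde\lambda$ and $\sum_{i>k^*}\lambda_i^2=\tilde\lambda^2(D-k^*)$ bound the coefficient by $3\tilde\lambda^2D/n$. Your heuristic of a ``signal-induced variance of size $\|\wB^*\|_{\SigmaB}^2D/n$'' silently assumes the remaining step. That step is $\tilde\lambda<c_2\lambda_{k^*}\le c_2\lambda_i$ for all $i\le k^*$, by minimality of $k^*$. It gives $\tilde\lambda^2\|\wB^*\|^2_{\SigmaB_{0:k^*}^{-1}}\le c_2^2\|\wB^*\|^2_{\SigmaB_{0:k^*}}\le c_2^2b\sigma^2$, which is what produces the factor $(1+b)$.

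For the lower bound, \Cref{thm:gd:bound} only gives $\sigma^2\min\{D/n,1\}$, not $\sigma^2D/n$. You need $k^*\le n/c_3$ to show
\[
D\le k^*+\frac{\lambda_{k^*+1}}{\tilde\lambda}\cdot\frac{1}{\tilde\lambda}\sum_{i>k^*}\lambda_i\le k^*+\frac{n}{c_2}=O(n),
\]
so that the truncation can be dropped. The trivial facts $\alpha_{k^*}\ge(\sum_{i>k^*}\lambda_i/n)^2$ and $\|\cdot\|_{\SigmaB}\ge\|\cdot\|_{\SigmaB_{0:k^*}}$ then finish it.

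Moreover, your Step 2 proposes the general master lower bounds from \Cref{sec:general}, i.e. \Cref{thm:master-lb} or \Cref{cor:same-index}. These cannot deliver the statement. They assume Gaussian design, whereas the corollary is claimed over all of $\Lbb_b$, and they only bound the bias, not the variance. The lower bound over $\Lbb_b$ rests on the GD-specific arguments that \Cref{thm:gd:bound} packages:
\begin{itemize}
\item the reduction of $\PsiB^{(t)}$ to ridge;
\item the GD leave-one-out lemma;
\item the ridge variance lower bound.
\end{itemize}
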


The bounds can be interpreted as follows. The bias is decomposed into the essentially low-dimensional ``head'' and high-dimensional ``tail'' parts according to the critical index~$k^*$ \citep{bartlett2020benign,tsigler2023benign}. The signal from the tail cannot be reliably estimated, so it remains in the risk. The head bias is further characterized as the sum of the GD optimization error, which is time-dependent, and the implicit regularization effect of the tail. The latter arises because the tail collectively acts as an additional ridge-like regularizer with strength $n^{-1}\sum_{i>k^*} \lam_i$. This can be improved with \emph{debiasing} or negative~$\ell_2$ regularization \citep{tsigler2023benign}, however this may lead to an increase in the variance. Finally, the variance is obtained in terms of the effective dimension~$D$, which is the sum of the head rank and contribution of tail fluctuations. The proof of the variance bound is given in \citet{wu2026risk} and is due to \citet{bartlett2020benign}.

\paragraph{Comparison to \citet{wu2026risk}.} Both our upper and lower bounds improve upon the bounds for GD previously shown in \citet{wu2026risk}. Their Theorem~3.1 bounds the head bias with the ridge-like bias $\tilde\lam^2 \|\wB^*\|^2_{\SigmaB_{0:k^*}^{-1}}$, which obscures the optimization error by replacing it with the crude estimate
\begin{align*}
(1-\eta\lam_i)^t \le \frac{1}{\eta\lam_i t} \quad\implies\quad \big\|(\IB- \eta \SigmaB)^t \wB^* \big\|^2_{\SigmaB_{0:k^*}} \le \frac{1}{(\eta t)^2} \|\wB^*\|^2_{\SigmaB_{0:k^*}^{-1}}.
\end{align*}
While this suffices to show that GD is no worse than ridge as in their Theorem~3.2, it is loose for components above the effective regularization threshold~$1/(\eta t)$, since GD optimization error decays exponentially in~$t$. Our result makes the separation of optimization and regularization explicit for every instance.

Their Theorem~4.3 gives a more detailed characterization in terms of \emph{effective bias} and \emph{effective variance}, the latter of which is equivalent to the head term involving $\alpha_{k^*}$ in \Cref{thm:gd:bound}. Here, their order-$1$ effective dimension is the tail regularization, while the $D/n$ term is due to the empirical fluctuation of the tail and has been absorbed into the variance in \Cref{thm:gd:finite-snr}. Nonetheless, their result is restricted to moderate stopping times $t\le cn$ and still obtains a suboptimal bound for the optimization error in the effective bias, with a decay factor of $(1-\eta\SigmaB)^{t/2}$ instead of $(1-\eta\SigmaB)^t$.

Their lower bound Theorem~4.1 also uses the suboptimal OLS critical index $\ell^* := \min\{k: n^{-1}\sum_{i>k}\lam_i \ge c_2\lam_{k+1}\}$, resulting in a crude lower bound which is simply the sum of OLS bias and ridge variance \citep{bartlett2020benign,tsigler2023benign}. On the other hand, our lower bound is tight up to constant factors.

\section{Risk Bounds for Principal Component Regression}\label{sec:pcr}

For PCR at spectral threshold~$\rho$, recalling the definition of~$\alpha_k$ from~\eqref{eq:def-alpha}, we obtain the following upper and lower bounds. The proofs essentially follow from the general upper and lower bounds developed in \Cref{sec:general} and are deferred to \Cref{sec:pcr-proof}.

\begin{theorem}[PCR risk]\label{thm:pcr-risk}
Let $\hat\wB_{\rho}^{\pcr}$ be given by \Cref{eq:pcr} with threshold $\rho \ge 0$. Under \Cref{assum:lbb}, there exist constants $c_0, \ldots ,c_4 >1$ that only depend on $c_x,c_y$ for which the following holds. Let
\begin{align*}
    k^*:= \min\left\{k: 1.1\rho + \frac{1}{c_2} \frac{\sum_{i>k}\lambda_i}{n} \ge \lambda_{k+1} \right\}.
\end{align*}
\begin{itemize}
    \item \textbf{Upper bound.} If additionally $k^*\le n/c_3$, then with probability at least $1-\delta-\exp(-n/c_0)$ over sampling of~$\XB$, it holds that
\begin{align*}
  \frac{1}{c_1} \Ebb[ \excessRisk(\hat\wB_\rho^{\pcr}) | \XB]\le \left(\alpha_{k^*} + \frac{\lam_{k^*+1}^2\log(1/\delta)}{n}\right) \|\wB^*\|^2_{\SigmaB_{0:k^*}^{-1}}  + \|\wB^*\|^2_{\SigmaB_{k^*:\infty}} + \sigma^2 \frac{\min\{D, \hat k\}}{n},
\end{align*}
where
\begin{align*}
D := k^* + \bigg(\rho + \frac{\sum_{i>k^*}\lambda_i}{n}\bigg)^{-2} \sum_{i>k^*}\lambda_i^2,\quad
\hat k := \#\{i: \hat\lambda_i > \rho\}.
\end{align*}
Also, with probability at least $1-\exp(-n / c_0)$ it holds that
\begin{align*}
    \hat k \le \#\left\{i: \lambda_i + \frac{\sum_{j>n}\lambda_j}{n} > \frac{\rho}{c_2} \right\}.
\end{align*}

\item \textbf{Lower bound.} 
Suppose additionally that $\xB\sim\Ncal(0,\SigmaB)$. Let
\begin{align*}
r^* := \min\left\{r:\lam_{r+1} + \frac{\sum_{i>r}\lam_i}{n} \le 0.9\rho\right\}, \quad \tilde\rho := \max\left\{1.1\rho, c_3\frac{\sum_{i>k^*}\lam_i}{n} \right\}
\end{align*}
and additionally assume $r^* \le n/c_3$. Then in expectation,
\begin{align*}
c_1 \Ebb \excessRisk(\hat\wB_\rho^{\pcr})
&\ge \alpha_{r^*} \|\wB^*\|_{\SigmaB_{0:k^*}^{-1}}^2 + \|\wB^*\|_{\SigmaB_{r^*:\infty}}^2 + \frac{\sigma^2}{n}
\#\{i:\lam_i>\tilde\rho\}.
\end{align*}
\end{itemize}
\end{theorem}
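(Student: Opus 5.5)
The plan is to decompose the conditional risk into bias and variance. Writing $\yB = \XB\wB^* + \boldsymbol\epsilon$ and $P_\rho := \sum_{i:\hat\lam_i>\rho}\hat\uB_i\hat\uB_i^\top$, we have
\begin{align*}
\hat\wB^{\pcr}_\rho - \wB^* = -(\IB-P_\rho)\wB^* + \tfrac1n \psi(\hat\SigmaB)\XB^\top\boldsymbol\epsilon .
\end{align*}
The conditional expectation given $\XB$ therefore splits into a bias term $\|(\IB-P_\rho)\wB^*\|_{\SigmaB}^2$ and a variance term $\frac{\sigma^2}{n}\tr(\psi(\hat\SigmaB)^2\hat\SigmaB\SigmaB)$ (up to $c_y$). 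Both terms are then instances of the general spectral-filter bounds of \Cref{sec:general} with $g=\mathbf 1\{z>\rho\}$, so the bulk of the proof specializes those bounds.

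\textbf{Upper bound, variance.} I would bound the variance in two ways and take the minimum.
\begin{itemize}
\item First, it is at most $\frac{\sigma^2}{n}\tr(P_\rho\hat\SigmaB^{-1}\SigmaB)$. Since only $\hat k$ directions survive, and on those directions $\SigmaB\preceq c\hat\SigmaB + (\text{tail})$ with high probability, this gives $\lesssim\sigma^2\hat k/n$.
\item Second, since $\psi(z)\le 2/(z+\rho)$ on the support, PCR's variance is dominated by that of ridge with $\lambda\asymp\rho$. The known ridge variance bound (Bartlett et al.; Tsigler--Bartlett) with effective regularization $\rho+\sum_{i>k^*}\lambda_i/n$ then yields $D/n$.
\end{itemize}
For the bound on $\hat k$, I would use Weyl-type / Ostrowski comparison: $\hat\lam_i\le c(\lam_i+\sum_{j>n}\lam_j/n)$ with probability $1-e^{-n/c_0}$, from standard concentration of the head block and the operator-norm bound on the tail Gram matrix.

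\textbf{Upper bound, bias.} I would split $\wB^*$ into head ($\le k^*$) and tail parts. The tail contributes $\|\wB^*\|^2_{\SigmaB_{k^*:\infty}}$ because $\IB-P_\rho$ is a contraction and the head/tail cross-terms are controlled as in the GD analysis. For the head part I would invoke the master upper bound of \Cref{sec:master-ub}, which expresses $(\IB-P_\rho)$ restricted to the head through a leave-one-out / divided-difference argument. Because the step function's divided differences are large only near $\rho$, eigenvalues $\lam_i$, $i\le k^*$, exceeding $1.1\rho$ by the tail scale are separated from the threshold. The Schur-multiplier bound then gives $\|(\IB-P_\rho)\SigmaB_{0:k^*}^{1/2}\cdot\|$ of order $\sqrt{\alpha_{k^*}}/\lam$, plus a concentration fluctuation $\lam_{k^*+1}\sqrt{\log(1/\delta)/n}$. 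The latter is where the $\delta$-dependent term arises.

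\textbf{Lower bound.} I would apply the master lower bound (\Cref{thm:master-lb}) for Gaussian design.
\begin{itemize}
\item Directions $i>r^*$ have $\hat\lam$ essentially below $\rho$, so they are discarded, giving $\|\wB^*\|^2_{\SigmaB_{r^*:\infty}}$.
\item The tail beyond $r^*$ still perturbs the head eigenvectors, producing the $\alpha_{r^*}\|\wB^*\|^2_{\SigmaB_{0:k^*}^{-1}}$ leakage term through a second-order perturbation / Cramér--Rao style computation using rotational invariance of the Gaussian tail.
\item For the variance, each direction with $\lam_i>\tilde\rho$ is retained with constant probability, giving $\sigma^2\#\{i:\lam_i>\tilde\rho\}/n$.
\end{itemize}

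The main obstacle is the head bias upper bound. Because the step filter is discontinuous, naive operator-Lipschitz arguments fail, and one must exploit the gap between $\rho$ and the head eigenvalues via the divided-difference machinery while keeping the $\alpha_{k^*}$ rate rather than a cruder $\tilde\lam^2$.
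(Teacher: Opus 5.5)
Your architecture matches the paper's. You specialize \Cref{thm:master-ub} and \Cref{thm:master-lb} to $g=\mathbf{1}\{z>\rho\}$, bound the variance by a ridge variance and by the rank, and count selected directions from below. Your eigenvalue-comparison argument for the $\hat k$ bound is also fine. But the specialization is the whole content of the paper's proof, and several steps you spell out would fail.

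\textbf{Tail bias (upper bound).} $\IB-P_\rho$ is a contraction in $\ell_2$, not in $\|\cdot\|_{\SigmaB}$. It can move tail mass onto head coordinates, where $\SigmaB$ is large. Even on tail coordinates it only gives $\lam_{k^*+1}\|\wB^*_{>k^*}\|^2$, not $\|\wB^*\|^2_{\SigmaB_{k^*:\infty}}$. The paper takes the tail term from \Cref{thm:master-ub}, which rests on $\|n^{-1/2}\SigmaB^{1/2}\XB^\top(\bar\AB+\lam\IB)^{-1}\|\le 9$ (\Cref{lemma:UB:change-of-geometry}) and $\|r(I,\{0\})\|_{\mathfrak{S}}=O(1)$.

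\textbf{Head bias (upper bound).} ``Divided differences are large only near $\rho$'' is not a Schur-norm bound. The proof needs four concrete facts (with $\eps=0.1$):
\begin{enumerate}
\item the reference ridge parameter is $\lam=c_2(1+\eps)\rho$, not $\rho$, so that the hypothesis $\lam+n^{-1}\sum_{i>k^*}\lam_i\ge c_2\lam_{k^*+1}$ holds;
\item $H=[(1+\eps/2)\rho,\infty)$, which contains $\sigma(\hat\SigmaB_{\le k^*})$ w.h.p.;
\item $\|r(I,H)\|_{\mathfrak{S}}=O(1/\eps)$, via the factorization $1/(g^*(y)-g^*(x))=\sum_{\ell\ge0}g^*(x)^\ell g^*(y)^{-\ell-1}$ on $[0,\rho]\times H$;
\item $\|r(H,H)\|_{\mathfrak{S}}=0$, because $g$ is constant on $H$.
\end{enumerate}
The last fact is the only reason no $\frac{k^*+\log(1/\delta)}{n}\lam^2\|\wB^*\|^2_{\SigmaB_{0:k^*}^{-1}}$ head-concentration term appears. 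That term is not dominated by the stated ones, for example for a noiseless instance with vanishing tail. Separately, $\rho=0$ (OLS) needs its own treatment, since \Cref{thm:master-ub} requires $\lam>0$.

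\textbf{Bias lower bound.} \Cref{thm:master-lb} cannot be applied verbatim at index $r^*$. The quantity $\Gamma_{ij}=(g(0.9\lam_i)-g(1.1\lam_j))_+$ vanishes for head eigenvalues $\lam_i\in(1.1\rho,\rho/0.9]$. The paper reruns the argument with $0.99$ in place of $0.9$, so that $\Gamma_{ij}=1$ and $\Lambda_i\le 2\lam_i$ for $i\le k^*$, $j>r^*$.
\begin{itemize}
\item This yields only the $n^{-1}\sum_{j>r^*}\lam_j^2$ part of $\alpha_{r^*}$.
\item The $(n^{-1}\sum_{j>r^*}\lam_j)^2$ part needs a case split using the OLS-type conditional bound (\Cref{lem:general-filter-diffuse}) together with $n^{-1}\sum_{j>r^*}\lam_j\le 0.9\rho<\lam_i$.
\item The term $\|\wB^*\|^2_{\SigmaB_{r^*:\infty}}$ comes from the residual $\|(\IB-g(1.1\SigmaB))\wB^*\|^2_{\SigmaB}$ (Jensen, \Cref{lem:effbias}). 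It does not come from empirical eigenvalues beyond $r^*$ falling below $\rho$, which is not guaranteed.
\end{itemize}

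\textbf{Variance lower bound.} ``Each direction with $\lam_i>\tilde\rho$ is retained'' does not by itself give $\sigma^2/n$ per direction. The variance is $\frac{\sigma^2}{n}\sum_{\hat\lam_j>\rho}\hat\uB_j^\top\SigmaB\hat\uB_j/\hat\lam_j$. A selected direction whose eigenvalue is inflated by tail mass can have $\hat\uB_j^\top\SigmaB\hat\uB_j\ll\hat\lam_j$.

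The missing ingredient is the relative bound $\hat\SigmaB\preceq\frac32\SigmaB+\frac12\tilde\rho\IB$. It holds w.h.p.\ because $\tr(\SigmaB(\SigmaB+\tilde\rho\IB)^{-1})\le 2n/c_3$, which is exactly why $\tilde\rho$ contains $c_3n^{-1}\sum_{i>k^*}\lam_i$. It gives $\hat\uB_j^\top\SigmaB\hat\uB_j/\hat\lam_j\ge\frac13(2-\tilde\rho/\hat\lam_j)_+$. Combined with $\hat\lam_j\ge 0.99\lam_j$ for $j\le r^*$, this yields the count $\#\{i:\lam_i>\tilde\rho\}$.
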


Notice that the structure of the PCR upper bound is similar to the bounds for GD (\Cref{thm:gd:bound}) or ridge \citep{tsigler2023benign}; indeed, we will soon see that all of these bounds follow from the same general principle. The gap between the bias upper and lower bounds is due to the mismatch between the critical indices~$k^*,r^*$. Ignoring the tail, the defining conditions of the two indices are essentially $\lam_{r^*}>0.9\rho$ and $\lam_{k^*}>1.1\rho$. Both constants can be made arbitrarily close to~$1$ by adjusting the other constants. Nonetheless, the existence of some gap is unavoidable in our approach, as demonstrated by the following example.

\begin{lemma}\label{lem:two-index-example}
Fix $\rho>0$, $\eps\in(0,0.05)$ and set $m_n:=\lfloor \eps^2n / c_3\rfloor$. Consider the noiseless Gaussian instances
\begin{align*}
\SigmaB_n^\pm
:=
\diag\left(2\rho,(1\pm\eps)\rho\IB_{m_n}\right),
\quad
\wB_n^*:=\eB_1, \quad \sigma_n :=0.
\end{align*}
Then $k^*=1$ and $r^*=m_n+1$ in \Cref{thm:pcr-risk} for both instances. Moreover the risk upper bound is $\Theta(\eps^2\rho)$, which is attained by~$\SigmaB_n^-$; while the risk lower bound is zero, which is attained by~$\SigmaB_n^+$.
\end{lemma}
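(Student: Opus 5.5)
The argument has three parts: evaluate the critical indices of \Cref{thm:pcr-risk} directly on the two instances, plug them into the bounds, and then compute the actual PCR risk on each instance. For $\SigmaB_n^\pm$ the spectrum is $\lambda_1=2\rho$ and $\lambda_2=\dots=\lambda_{m_n+1}=(1\pm\eps)\rho$, with all further eigenvalues zero.

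\textbf{Critical indices.} First, the tail quantities. For $k\ge 1$ we have $\sum_{i>k}\lambda_i/n \le m_n(1+\eps)\rho/n \le 2\eps^2\rho/c_3$, which is $\ll \eps\rho$ once $c_3$ is large. For $k=0$ the condition defining $k^*$ reads $1.1\rho+\frac{1}{c_2}\frac{\sum_i\lambda_i}{n}\ge 2\rho$. The left side is at most $1.1\rho+O(\eps^2\rho)$, so this fails. For $k=1$ the condition reads $1.1\rho+(\text{tiny})\ge(1\pm\eps)\rho$, which holds since $\eps<0.05$. Hence $k^*=1$.

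For $r^*$ we need the first $r$ with $\lambda_{r+1}+\sum_{i>r}\lambda_i/n\le 0.9\rho$. For $r\le m_n$ the left side is at least $(1-\eps)\rho>0.9\rho$, so the condition fails. At $r=m_n+1$ the left side is $0$, so it holds. Hence $r^*=m_n+1$, and also $r^*\le n/c_3$.

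\textbf{Bounds.} With $\sigma=0$ the variance terms drop out.

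For the upper bound, the head term is $\alpha_1\|\eB_1\|^2_{\SigmaB_{0:1}^{-1}}=\alpha_1/(2\rho)$, and the tail term $\|\eB_1\|^2_{\SigmaB_{1:\infty}}$ vanishes. We compute
\[
\alpha_1=\Big(\tfrac{m_n(1\pm\eps)\rho}{n}\Big)^2+\tfrac{m_n(1\pm\eps)^2\rho^2}{n}=\Theta\big(\tfrac{\eps^2\rho^2}{c_3}\big).
\]
The $\lambda_2^2\log(1/\delta)/n$ term is $O(\rho^2/n)$, which is lower order. So the upper bound is $\Theta(\eps^2\rho)$, treating $c_3$ as a constant.

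For the lower bound, $\alpha_{r^*}=0$ because the tail beyond $r^*$ is empty, and $\|\eB_1\|_{\SigmaB_{r^*:\infty}}=0$. So the lower bound is zero.

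\textbf{Attainment.} This is the main step. We need to show that the actual risk is $\Theta(\eps^2\rho)$ for $\SigmaB_n^-$ and $0$ for $\SigmaB_n^+$, with high probability and in expectation.

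Since $m_n\le\eps^2 n/c_3$, Gaussian concentration for the $m_n$-dimensional bulk block gives that its empirical eigenvalues lie in $(1\pm\eps)\rho(1\pm O(\eps/\sqrt{c_3}))$. The top empirical eigenvalue is near $2\rho$. So for small enough $\eps/\sqrt{c_3}$ relative to $\eps$, all bulk empirical eigenvalues exceed $\rho$ in the $+$ case and lie below $\rho$ in the $-$ case.

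In the $+$ case PCR keeps all $m_n+1\le n$ directions. This is noiseless OLS on the full support, so $\hat\wB=\wB^*$ almost surely and the risk is zero.

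In the $-$ case PCR keeps only the top empirical eigenvector $\hat\uB_1$. Write $\yB=\XB\eB_1$. Then $\hat\wB=\hat\uB_1\hat\uB_1^\top\eB_1$ is the projection of $\eB_1$ onto $\hat\uB_1$, so the risk equals $\|(\IB-\hat\uB_1\hat\uB_1^\top)\eB_1\|_{\SigmaB}^2\approx (1-\eps)\rho\,\sin^2\angle(\hat\uB_1,\eB_1)$. By Davis--Kahan type perturbation with an eigengap of about $\rho$, the angle satisfies $\sin\angle\asymp\|\hat\SigmaB_{\text{bulk},1}\|/\rho$. The cross-covariance between $\eB_1$ and the bulk has squared norm $\asymp \rho^2 m_n/n\asymp \eps^2\rho^2/c_3$, by a $\chi^2_{m_n}$ concentration argument that works because $m_n\to\infty$. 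Therefore $\sin^2\angle\asymp\eps^2/c_3$ and the risk is $\Theta(\eps^2\rho)$.

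This two-sided second-order eigenvector perturbation estimate, in particular the matching lower bound, is the step I expect to take the most care. I would prove it by writing the top eigenvector equation for $\hat\SigmaB$ in block form and solving it via a resolvent expansion.
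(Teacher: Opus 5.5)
Your proposal is correct and follows essentially the same route as the paper. Both compute $k^*=1$, $r^*=m_n+1$ and $\alpha_1=\Theta(\eps^2\rho^2)$ directly, use relative covariance concentration to show PCR keeps exactly one direction under $\SigmaB_n^-$ and all directions (hence exact noiseless OLS) under $\SigmaB_n^+$, and get the matching $\Theta(\eps^2\rho)$ risk from the block eigenvector equation $\bB=(\lambda\IB-\TB)^{-1}\GB a$ together with $\|\GB\|^2\asymp\rho^2 m_n/n$.

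The one step to tighten is the $+$ case. Apply the relative bound to the full matrix, $\hat\SigmaB_n\succeq(1-\eps/2)\SigmaB_n^+\succ\rho\IB$, as the paper does. Eigenvalue bounds on the bulk block alone do not lower-bound the smallest eigenvalue of $\hat\SigmaB_n$ unless you also control the coupling block $\GB$.
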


We now have all the necessary ingredients to prove \Cref{thm:pcr-gd}.

\begin{proof}[Proof of \Cref{thm:pcr-gd}]
To prove the first claim, we directly compare the PCR upper bound and GD lower bound. Let $t\ge 1$ and let $k^*,\tilde\lambda,D$ be as in \Cref{thm:gd:finite-snr}. If $k^*>n/c_3$ or $D>n$, then $D\ge k^*$ and the variance lower bound (which holds regardless of~$k^*$, see \Cref{thm:gd:bound}) gives
\begin{align*}
\Ebb\excessRisk_\mu(\hat\wB_t^{\gd})\ge c\sigma^2.
\end{align*}
Since the zero estimator (obtained by taking $\rho\to\infty$) achieves risk $\|\wB^*\|_{\SigmaB}^2 \le b\sigma^2$, the claim follows. Hence we may suppose $k^*\le n/c_3$, $D\le n$. Set $\rho:=(1.1c_2\eta t)^{-1}$, so that the critical index in \Cref{thm:pcr-risk} is equal to~$k^*$ and the corresponding effective dimension is at most~$cD$. Then with probability at least $0.99$,
\begin{align*}
  \frac{1}{c_1} \Ebb[ \excessRisk(\hat\wB_\rho^{\pcr}) | \XB]\le \alpha_{k^*} \|\wB^*\|^2_{\SigmaB_{0:k^*}^{-1}}  + \|\wB^*\|^2_{\SigmaB_{k^*:\infty}} + \sigma^2 \frac{D}{n} \le c_1\Ebb\excessRisk(\hat\wB_t^{\gd}).
\end{align*}
The second claim follows from \Cref{thm:pcr-sd} and \Cref{lem:non-step-examples}.
\end{proof}

\paragraph{Comparison to \citet{hucker2023note}.} The closest existing bounds on PCR are the upper bounds in Theorems~1 and~2 of \citet{hucker2023note}. Our upper bound improves upon both results, as we now describe. Since their PCR algorithm is defined by the number of selected components~$k$, we equivalently choose $\rho = \lam_{k+1}$ and set the failure probability as $\delta = e^{-n/c_0}$.

Let $c,C>0$ denote constants. When $n^{-1} \sum_{i>k} \lam_i \le c\lam_{k+1}$, their Theorem~1 upper bounds the bias and variance by $\lam_{k+1}\|\wB^*\|^2$ and $\sigma^2 k/n$, respectively. Noting that $k^*\le k$ by definition, some algebra gives
\begin{align*}
\frac{\sum_{i>k^*}\lam_i}{n} \le \frac{\sum_{i>k}\lam_i + k\lam_{k^*+1}}{n} \le  c\lam_{k+1} + \frac{k}{n} \left(1.1\lam_{k+1} + \frac{1}{c_2} \frac{\sum_{i>k^*}\lam_i}{n}\right) \quad\implies\quad \frac{\sum_{i>k^*}\lam_i}{n}, \ \lam_{k^*+1} \lesssim \lam_{k+1}.
\end{align*}
Using that $n^{-1}\sum_{i>k^*}\lam_i < c_2\lam_{k^*}$, each head component in our upper bound has coefficient bounded as
\begin{align*}
\frac{\alpha_{k^*} + \lam_{k^*+1}^2}{\lam_{k^*}} \le \frac{1}{\lam_{k^*}} \bigg(\frac{\sum_{i>k^*}\lam_i}{n}\bigg)^2 + \frac{\lam_{k^*+1}}{\lam_{k^*}} \frac{\sum_{i>k^*}\lam_i}{n} + \lam_{k^*+1} \lesssim \lam_{k+1},
\end{align*}
and each tail component has coefficient at most $\lam_{k^*+1} \lesssim \lam_{k+1}$. Thus, our bounds for the bias and variance are both tighter in the setting of their Theorem~1.

When $n^{-1} \sum_{i>k} \lam_i \ge C\lam_{k+1}$, their Theorem~2 gives a bias upper bound of $(n^{-1} \sum_{i>j^*} \lam_i) \|\wB^*\|^2$, where $j^* := \min\{ j: n^{-1} \sum_{i>j} \lam_i \ge C\lam_{j+1}\}$. Noting that $j^*\le k$,
\begin{align*}
\frac{\sum_{i>k^*}\lam_i}{n} \le \frac{\sum_{i>j^*}\lam_i}{n} + \frac{j^*}{n} \left(1.1\lam_{k+1} + \frac{1}{c_2} \frac{\sum_{i>k^*}\lam_i}{n}\right) \quad\implies\quad \frac{\sum_{i>k^*}\lam_i}{n}, \ \lam_{k^*+1} \lesssim \frac{\sum_{i>j^*}\lam_i}{n},
\end{align*}
and the same argument shows that our bound for the bias is tighter. We remark that their variance bound is incomparable to ours in the setting of their Theorem~2.

\section{Risk Bounds for General Spectral Filters}\label{sec:general}

In this section, we present general techniques to bound the excess risk of arbitrary spectral filters. For a given filter~$g: \Rbb_{\ge 0}\to\Rbb$, the bias-variance decomposition is (omitting the dependence on~$\mu,\XB$)
\begin{align*}
\Ebb [\excessRisk_{\mu}(\hat \wB_g)|\XB]
&= \underbrace{\vphantom{
\Theta\left(\frac{\sigma^2}{n}\right)} \|(\IB-g(\hat\SigmaB))\wB^*\|_{\SigmaB}^2}_{=: \bias(\hat\wB_g)} + \underbrace{\Theta\left(\frac{\sigma^2}{n}\right) \tr\big(\SigmaB \hat\SigmaB^{-1}g(\hat\SigmaB)^2\big)}_{=:\variance(\hat\wB_g)}.
\end{align*}
The variance is typically straightforward to control. By monotonicity of trace, it holds that
\begin{align*}
g_1 \ge g_2\ge 0 \quad\implies\quad \variance(\hat\wB_{g_1}) \gtrsim \variance(\hat\wB_{g_2}).
\end{align*}
Hence one can obtain variance bounds by directly comparing against ridge filters with suitable $\ell_2$~penalty, for which tight variance bounds are known~\citep{tsigler2023benign}. The main difficulty lies in controlling the bias, due to the noncommutativity of $\SigmaB,\hat\SigmaB$.

In \Cref{sec:general-ub}, we present an upper bound for general spectral filters, building on the theory of Schur multipliers to control noncommutative matrix perturbations. This result is used to obtain our upper bounds for PCR and GD. Complementing this result, in \Cref{sec:general-lb}, we present a useful bias lower bound for monotone filters. Finally, as an application, we derive uniformly tight bounds for iterative Tikhonov regularization.

\subsection{General upper bound: Schur multipliers and matrix divided difference}\label{sec:general-ub}

Before stating our results, we give a brief background on Schur multipliers, norm, and factorization; more details are given in \Cref{sec:master-ub}. For simplicity, we focus on the finite-dimensional case, however, the subsequent discussions can be extended to infinite-dimensional Hilbert space in the usual manner.

\paragraph{Schur multiplier.}
Fix two symmetric matrices $\AB\in\Rbb^{n\times n}$ and $\BB\in\Rbb^{m\times m}$ with eigendecompositions 
\begin{align*}
    \AB := \UB\LambdaB \UB^\top := \sum_{i}\lambda_i \uB_i\uB_i^\top ,\quad 
    \BB := \VB\GammaB \VB^\top := \sum_{j}\gamma_j \vB_j\vB_j^\top.
\end{align*}
A kernel function $k: \Rbb\otimes \Rbb \to \Rbb$ induces a matrix operator from $\Rbb^{n\times m}$ to $\Rbb^{n\times m}$ by
\begin{align*}
    k(\AB,\BB) := \sum_{i, j} k(\lambda_i, \gamma_j) \big(\vB_j\vB_j^\top \big)\otimes \big(\uB_i\uB_i^\top \big).
\end{align*}
This operator is referred to as the \emph{Schur multiplier}, also known as a \emph{double operator integral}.

\paragraph{Divided difference for matrix functions.}
For a scalar function $f:\Rbb\to\Rbb$, its (first) divided difference is a kernel function $f^{[1]}:\Rbb\otimes \Rbb\to \Rbb$ such that
\begin{align*}
f(x) - f(y) = f^{[1]}(x,y) (x-y)\ \  \text{for all $x,y\in\Rbb$}.
\end{align*}
Note that the existence of $f^{[1]}$ does not require $f$ to be differentiable.
Clearly, $f^{[1]}$ is symmetric in its two arguments; its off-diagonal value is given by 
\begin{align*}
    f^{[1]}(x,y) := \frac{f(x) - f(y)}{x-y},\quad x\ne y,
\end{align*}
while its diagonal value can be chosen as is convenient. The Schur multiplier provides a canonical way to extend divided difference for scalar functions to symmetric matrix functions, as made precise by the following proposition.

\begin{proposition}[matrix divided difference]\label{lemma:divided-difference}
The Schur multiplier induced by the kernel function~$f^{[1]}$ of a scalar function~$f:\Rbb\to\Rbb$ gives a (symmetric) matrix divided difference, i.e.,
\begin{align*}
f(\AB) - f(\BB) = f^{[1]}(\AB, \BB) \circ (\AB - \BB)\quad \text{for all symmetric matrices $\AB,\BB$}.
\end{align*}
\end{proposition}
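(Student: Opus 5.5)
The plan is to verify the identity entrywise in a pair of eigenbases. Write $\AB=\sum_i\lambda_i\uB_i\uB_i^\top$ and $\BB=\sum_j\gamma_j\vB_j\vB_j^\top$. By the definition of the Schur multiplier, $k(\AB,\BB)$ acts on a matrix $\MB$ by
\begin{align*}
k(\AB,\BB)\circ\MB=\sum_{i,j}k(\lambda_i,\gamma_j)\,\uB_i\uB_i^\top\MB\vB_j\vB_j^\top .
\end{align*}
Equivalently, $\MB$ is expressed in the basis $\{\uB_i\vB_j^\top\}$, the coefficient $\uB_i^\top\MB\vB_j$ is multiplied by $k(\lambda_i,\gamma_j)$, and the result is reassembled. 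This identification of the tensor-product notation with the sandwich action $\MB\mapsto\uB_i\uB_i^\top\MB\vB_j\vB_j^\top$ is the only point that requires care with conventions. Once it is fixed, the rest is a two-line computation.

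It therefore suffices to check that, for every pair $(i,j)$,
\begin{align*}
\uB_i^\top\big(f(\AB)-f(\BB)\big)\vB_j \;=\; f^{[1]}(\lambda_i,\gamma_j)\,\uB_i^\top(\AB-\BB)\vB_j .
\end{align*}
Indeed, both sides are matrices, and $\{\uB_i\vB_j^\top\}$ is a basis of the matrix space, here $\Rbb^{n\times n}$ since $\AB-\BB$ must make sense. By spectral calculus, $\uB_i^\top f(\AB)=f(\lambda_i)\uB_i^\top$ and $f(\BB)\vB_j=f(\gamma_j)\vB_j$. Hence the left-hand side equals $\big(f(\lambda_i)-f(\gamma_j)\big)\,\uB_i^\top\vB_j$. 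In the same way, $\uB_i^\top(\AB-\BB)\vB_j=(\lambda_i-\gamma_j)\,\uB_i^\top\vB_j$. The defining identity $f(x)-f(y)=f^{[1]}(x,y)(x-y)$, applied with $x=\lambda_i$ and $y=\gamma_j$, gives equality of the two coefficients. This holds also when $\lambda_i=\gamma_j$: there both sides vanish, so the identity is unaffected by the arbitrary choice of the diagonal value of $f^{[1]}$. Summing over $(i,j)$ yields $f(\AB)-f(\BB)=f^{[1]}(\AB,\BB)\circ(\AB-\BB)$.

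The word "symmetric" in the statement refers to $f^{[1]}(x,y)=f^{[1]}(y,x)$. This gives $f^{[1]}(\AB,\BB)\circ\MB=\big(f^{[1]}(\BB,\AB)\circ\MB^\top\big)^\top$, which is immediate from the same entrywise formula. The result is also independent of the chosen eigendecompositions when eigenvalues repeat: grouping the terms by distinct eigenvalues shows the operator depends only on the spectral projections.

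For the infinite-dimensional extension mentioned before the statement, I would repeat the argument with finite-rank truncations, or with spectral measures for the double operator integral. Boundedness of the multiplier is then a genuine issue, but it is not claimed here. No step of the finite-dimensional argument is expected to be a real obstacle.
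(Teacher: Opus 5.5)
Your proof is correct and is essentially the paper's argument. The paper also sandwiches $\AB-\BB$ between the spectral projections $\uB_i\uB_i^\top$ and $\vB_j\vB_j^\top$, uses $\uB_i^\top\AB=\lambda_i\uB_i^\top$ and $\BB\vB_j=\gamma_j\vB_j$ to turn $f^{[1]}(\lambda_i,\gamma_j)(\lambda_i-\gamma_j)$ into $f(\lambda_i)-f(\gamma_j)$, and resums to recover $f(\AB)-f(\BB)$; your coefficient comparison in the basis $\{\uB_i\vB_j^\top\}$ is the same computation.
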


\paragraph{Schur norm and factorization.}
Let $I, J \subset \Rbb$ be two real index sets. 
A scalar kernel $k: I\otimes J \to\Rbb$ can be viewed as an infinite-dimensional generalized matrix $k(I, J):= [k(x,y)]_{x\in I, y\in J}$. For any $I'\subset I$ and $J'\subset J$ such that $|I'|, |J'|<\infty$, 
\begin{align*}
k(I', J'):= [k(x,y)]_{x\in I', y\in J'} \in \Rbb^{|I'|\times |J'|}
\end{align*}
is a finite submatrix of $k(I, J)$, and therefore induces a Schur multiplier via
\begin{align*}
\ZB\mapsto k(I', J') \odot \ZB,\quad \ZB\in\Rbb^{|I'|\times |J'|}.
\end{align*}
Recall that matrix operator norm is denoted by $\|\cdot\|$.
The Schur norm of a matrix, denoted by $\|\cdot\|_{\mathfrak{S}}$, is the induced operator norm of the corresponding Schur multiplier:
\begin{align*}
\| k(I', J') \|_{\mathfrak{S}} := 
\sup_{\|\ZB\|\le 1} \| k(I', J') \odot \ZB\|.
\end{align*}
The Schur norm of a kernel $k:I\otimes J\to \Rbb$, equivalently denoted as $k(I, J)$, is the supremum of the Schur norm of its finite submatrices:
\begin{align*}
\|k(I, J)\|_{\mathfrak{S}} := \sup_{\substack{I' \subset I, J'\subset J,\\ |I'|, |J'|<\infty}} \| k(I', J')\|_{\mathfrak{S}}.
\end{align*}
A \emph{Schur factorization} of a kernel $k(I, J)$ is a Hilbert space $\Hbb$ (with inner product denoted by $\la\cdot, \cdot\ra$ and norm denoted by $\|\cdot\|$)
and two feature maps $\fB : I \to \Hbb$ and $\gB: J \to \Hbb$ for which 
\begin{align*}
k(x,y) = \la \fB(x), \gB(y)\ra \quad \text{for all $x\in I $ and $y\in J$.}
\end{align*}
Specifically, any finite submatrix admits a decomposition as
\begin{align*}
k(I', J') = \FB \GB^\top 
\ \ \text{where}\ \  
\FB := \begin{bmatrix}
\vdots \\ 
    \fB(x)^\top\\
    \vdots 
\end{bmatrix}_{x\in I'} \in \Rbb^{|I'|}\otimes \Hbb,\ \  
\GB := \begin{bmatrix}
    \vdots \\
    \gB(y)^\top \\ 
    \vdots
\end{bmatrix}_{y\in J'} \in \Rbb^{|J'|}\otimes \Hbb.
\end{align*}

\begin{proposition}[\citet{aleksandrov2016operator}, Theorems~2.2.1--2.2.3]\label{prop:kernel-schur}
The Schur norm of a kernel $k: I\otimes J\to \Rbb$ is the smallest $\gamma_2$-factorization norm, i.e.,
\begin{align*}
    \|k(I, J)\|_{\mathfrak{S}} = \inf_{\Hbb} \inf_{\fB, \gB}\bigg\{ \sup_{x\in I}\|\fB(x)\| \cdot \sup_{y\in J}\|\gB(y)\| :\ k(x, y) = \la \fB (x), \gB (y)\ra \bigg\},
\end{align*}
where the infimum is taken over all possible Schur factorizations of $k(I, J)$.
\end{proposition}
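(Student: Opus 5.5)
The plan is to prove the two inequalities separately, first for finite submatrices and then passing to the full index sets $I,J$ by a compactness argument. Throughout, write $\gamma_2(k)$ for the infimum on the right-hand side.

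\emph{Step 1 ($\|k(I,J)\|_{\mathfrak S}\le\gamma_2(k)$).} Fix a factorization $k(x,y)=\la\fB(x),\gB(y)\ra$, finite sets $I',J'$, a matrix $\ZB$ with $\|\ZB\|\le1$, and unit vectors $\aB\in\Rbb^{|I'|}$, $\bB\in\Rbb^{|J'|}$. Expand the inner product in an orthonormal basis $(\eB_l)$ of $\Hbb$, with coordinates $f_l(x):=\la \fB(x),\eB_l\ra$ and $g_l(y):=\la\gB(y),\eB_l\ra$. This gives
\begin{align*}
\aB^\top (k(I',J')\odot \ZB)\bB=\sum_l (\aB\circ f_l)^\top \ZB\,(\bB\circ g_l).
\end{align*}
Bound each summand by $\|\aB\circ f_l\|\,\|\bB\circ g_l\|$ and apply Cauchy--Schwarz in $l$. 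The result is at most
\begin{align*}
\Big(\sum_x a_x^2\|\fB(x)\|^2\Big)^{1/2}\Big(\sum_y b_y^2\|\gB(y)\|^2\Big)^{1/2}\le \sup_x\|\fB(x)\|\sup_y\|\gB(y)\|.
\end{align*}
Taking suprema over $\aB,\bB,\ZB$, then over $I',J'$, and finally the infimum over factorizations proves this direction.

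\emph{Step 2 (finite converse).} For finite $I',J'$, normalize so that $\|K\|_{\mathfrak S}\le 1$ with $K:=k(I',J')$. The goal is to find PSD matrices $\PB,\QB$ with unit diagonals such that the block matrix $\MB:=\begin{bmatrix}\PB&K\\K^\top&\QB\end{bmatrix}$ is PSD. Once $\MB$ is PSD, its Gram factorization $\MB=\WB\WB^\top$ has rows of norm at most $1$, and the off-diagonal block gives $K=\FB\GB^\top$ with all row norms at most $1$.

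I will obtain such $\PB,\QB$ by a Hahn--Banach/convex-duality argument. Suppose the set of admissible $(\PB,\QB)$ is empty. A separating functional then produces a PSD certificate, which I expect to take the form $\begin{bmatrix}\DB_a&-\ZB\\-\ZB^\top&\DB_b\end{bmatrix}\succeq0$ with diagonal weights $\DB_a=\diag(a_x^2)$ and $\DB_b=\diag(b_y^2)$, and with $\la \ZB,K\ra>\frac12(\tr\DB_a+\tr\DB_b)$. Rescaling this certificate should yield a contraction $\ZB'$ and unit vectors $\aB,\bB$ with $\aB^\top(K\odot\ZB')\bB>1$, contradicting $\|K\|_{\mathfrak S}\le1$.

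An equivalent route is to note that Schur multipliers are bimodule maps over the diagonal algebras, so their norm equals their completely bounded norm (Smith's lemma). Haagerup's theorem, via Paulsen's off-diagonal trick and Stinespring dilation, then produces exactly this block matrix $\MB$. I expect this duality step to be the main obstacle, in particular the bookkeeping that turns the separating functional into an honest contraction $\ZB'$.

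\emph{Step 3 (passing to $I,J$).} For each finite pair $(I',J')$, Step~2 provides a PSD kernel on the disjoint union $I'\sqcup J'$ with diagonal bounded by $\|k\|_{\mathfrak S}$ (after rebalancing the scales of $\fB$ and $\gB$) and with off-diagonal block equal to $k$. Extend each such kernel by zero to all of $I\sqcup J$. These kernels lie in a product of compact intervals, so by Tychonoff the net indexed by finite subsets has a cluster point. The cluster point is again positive semidefinite, has the same diagonal bound, and its off-diagonal block equals $k$. The Kolmogorov (RKHS) construction applied to this limiting kernel yields $\Hbb$ and maps $\fB,\gB$ with $\sup_x\|\fB(x)\|\sup_y\|\gB(y)\|\le\|k\|_{\mathfrak S}$. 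This gives the reverse inequality and completes the proof.
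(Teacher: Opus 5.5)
The paper gives no proof of this proposition. It imports the discrete Grothendieck--Haagerup description of Schur multipliers from \citet{aleksandrov2016operator}, so your argument is a self-contained replacement for a citation rather than a different route to the same proof. Your argument is the standard one, and it is correct in outline. Step~1 is complete as written. Step~3 is also sound: Tychonoff applied to the zero-extended Gram kernels on $I\sqcup J$, followed by the Kolmogorov construction, with the disjoint union correctly handling overlapping index sets such as $I=H$.

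Your route also makes explicit what the paper needs from this proposition. It needs a single factorization $\fB,\gB$, defined pointwise on all of $I$ and $J$ and fixed before any data are seen. This is what the decoupling in \Cref{lemma:UB:schur-tail} relies on: the matrix with rows $\gB(\hat\gamma_j)$ must depend on $\XB_{\le k}$ only, which fails if the factorization is chosen after seeing the spectrum of $\bar\AB$. Since only finitely many eigenvalues are ever substituted there, no measurability of $\fB,\gB$ is needed. So your discrete version is exactly enough, while the cited measurable double-operator-integral setting is what genuinely infinite-dimensional operator versions would require.

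There are two things to tighten in Step~2.

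\emph{Strictness of the separation.} Disjointness of the PSD cone from the affine slice $\{M: M_{12}=\KB,\ \diag(M)=\mathbf 1\}$ only yields a non-strict separating functional. That functional must annihilate the free off-diagonal entries of the diagonal blocks, which is why $\DB_a,\DB_b$ come out diagonal. It must also be PSD, so $\ZB=\DB_a^{1/2}\ZB'\DB_b^{1/2}$ with $\|\ZB'\|\le 1$; this part of your bookkeeping is fine. However, you only get $\la \ZB,\KB\ra\ge\frac12(\tr\DB_a+\tr\DB_b)$, which gives $\|\KB\|_{\mathfrak S}\ge 1$ and no contradiction in the boundary case. Two fixes work:
\begin{itemize}
\item Run the argument for $\KB/(1+\eps)$ and pass to the limit. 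The feasible $(\PB,\QB)$ have entries bounded by the diagonal bound, so they form a compact set; alternatively, simply carry the factor $1+\eps$ into Step~3.
\item View the problem as the SDP minimizing the largest diagonal entry and invoke Slater's condition. Taking $\PB=\QB=t\IB$ with $t>\|\KB\|$ is strictly feasible, so strong duality holds with an attained dual optimum, which supplies the strict certificate.
\end{itemize}

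\emph{Real versus complex scalars.} The Smith--Haagerup alternative is formulated over $\mathbb{C}$, whereas the paper's Schur norm is a supremum over real contractions $\ZB$. To use it you would also need that restricting to real $\ZB$ does not lower the norm of a real multiplier. This is true, but it is most easily proved by the real duality argument itself. So keep the direct SDP route, which works verbatim over $\Rbb$.
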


Fix the ridge filter with regularization~$\lam$ as the reference filter  
\begin{align*}
    g^*(z) = \frac{z}{z+\lambda}.
\end{align*}
For a shrinkage filter $g$, define the \emph{relative kernel}~$r: \Rbb_{\ge 0}\otimes \Rbb_{\ge 0} \to \Rbb$ by
\begin{align*}
r(x,y) := \frac{g(x) - g(y)}{g^*(x) - g^*(y)}, \quad x\ne y.
\end{align*}
The relative kernel can be seen as the divided difference of~$g$ under the metric induced by~$g^*$. The diagonal entries of $r$ do not affect the proof but could affect its Schur norm; they can be chosen at discretion for convenience. We use the convention $r(0,0)=0$.


With this background in place, we are now in a position to state our main upper bound for general spectral filters. Recall that $\hat\SigmaB := \frac1n \XB^\top\XB$ is the empirical covariance and let $\hat\SigmaB_{\le k} := \frac{1}{n}\XB_{\le k}^\top \XB_{\le k}$ denote the empirical covariance of the first~$k$ coordinates.

\begin{theorem}
\label{thm:master-ub}
Under \Cref{assum:lbb}, there exist constants $c_0,\ldots,c_4$ that depend only on $c_x,c_y$ for which the following holds.
Let~$g:\Rbb_{\ge 0}\to\Rbb$ be a spectral filter and fix $\lambda>0$.
Let $k$ be an index satisfying
\begin{align*}
    k\le \frac{n}{c_3},
    \quad
    \lambda+\frac{\sum_{i>k}\lam_i}n\ge c_2\lam_{k+1}.
\end{align*}
Let~$r$ be the relative kernel associated with~$g$, and let $H,I\subseteq\R_{\ge 0}$ be fixed measurable sets such that the following Schur norms are finite:
\begin{align}\label{eq:schur-norm}
\|r(I, H)\|_{\mathfrak{S}},\, \|r(H, H)\|_{\mathfrak{S}},\, \|r(I, \{0\})\|_{\mathfrak{S}} <\infty.
\end{align}
Then with probability at least $1-\delta-\exp(-n/c_0)$, if
\begin{align}\label{eq:spec-cond}
\sigma(\hat\SigmaB)\subseteq I \quad\text{and}\quad \sigma(\SigmaB_{0:k}) \cup \sigma(\hat\SigmaB_{\le k}) \subseteq H\cup\{0\},
\end{align}
it holds that
\begin{align*}
\frac{1}{c_1} \bias(\hat\wB_g)
&\le 
    \left( \big(1+\|r(I, H)\|_{\mathfrak{S}}^2 + \|r(I, \{0\})\|_{\mathfrak{S}}^2\big) \left(\alpha_k + \frac{\lam_{k+1}^2 \log(1/\delta)}{n}\right) + \|r(H, H)\|_{\mathfrak{S}}^2  \frac{k+\log(1/\delta)}n\lam^2\right) \|\wB^*\|_{\SigmaB_{0:k}^{-1}}^2 \\
    &\qquad +\big\|(\IB-g(\SigmaB))\wB^*\big\|_{\SigmaB_{0:k}}^2 + \Big(1+\|r(I, \{0\})\|_{\mathfrak{S}}^2\Big) \|\wB^*\|_{\SigmaB_{k:\infty}}^2.
\end{align*}
Moreover, for the ridge estimator $\hat\wB_\lam^{\ridge}$ corresponding to~$g^*$,
\begin{align*}
\frac{1}{c_1} \variance(\hat\wB_g) \le \|r(I, \{0\})\|_{\mathfrak{S}}^2  \cdot\min\left\{\variance(\hat\wB_\lam^{\ridge}), \frac{\sigma^2}{n} \rank (g(\hat\SigmaB))\right\}.
\end{align*}
\end{theorem}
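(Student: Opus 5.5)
The strategy is to reduce the bias of a general filter $g$ to the bias of ridge with penalty $\lam$, whose instance-wise behaviour is known from \citet{tsigler2023benign}. The reduction goes through the divided-difference identity of \Cref{lemma:divided-difference}, applied in the geometry induced by $g^*$. The bias vector is $(\IB-g(\hat\SigmaB))\wB^*$. I split $\wB^*=\wB^*_{0:k}+\wB^*_{k:\infty}$ into head and tail. I would also compare $g(\hat\SigmaB)$ with $g(\SigmaB)$ on the head, using the leave-one-out-type decomposition $\hat\SigmaB=\hat\SigmaB_{\le k}+(\text{cross and tail terms})$. Writing
\[
(\IB-g(\hat\SigmaB))\wB^*_{0:k}=(\IB-g(\SigmaB))\wB^*_{0:k}+\big(g(\SigmaB_{0:k})-g(\hat\SigmaB_{\le k})\big)\wB^*_{0:k}+\big(g(\hat\SigmaB_{\le k})-g(\hat\SigmaB)\big)\wB^*_{0:k},
\]
the first term produces $\|(\IB-g(\SigmaB))\wB^*\|^2_{\SigmaB_{0:k}}$ directly.

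For each of the two remaining differences, I would use a ratio trick. Since $r(x,y)\,(g^*(x)-g^*(y))=g(x)-g(y)$, the Schur multiplier identity gives
\[
g(\AB)-g(\BB)=r(\AB,\BB)\circ\big(g^*(\AB)-g^*(\BB)\big),
\]
provided $\sigma(\AB),\sigma(\BB)$ lie in the index sets; this is exactly what the spectral condition \eqref{eq:spec-cond} ensures. The operator norm of the left side is then at most $\|r\|_{\mathfrak S}$ times that of the ridge difference.

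To handle the weighted $\SigmaB$-norm, I would sandwich. For example, I would write $\SigmaB^{1/2}(g(\hat\SigmaB)-g(\hat\SigmaB_{\le k}))\SigmaB_{0:k}^{1/2}$ in a form where the outer factors commute with the respective matrices. A clean way is to use the Schur factorization of \Cref{prop:kernel-schur}: writing $r=\la\fB,\gB\ra$ gives $r(\AB,\BB)\circ\ZB=\sum_\ell f_\ell(\AB)\ZB g_\ell(\BB)$, so the Schur bound passes through any left and right weights. The pairs $(I,H)$, $(H,H)$ and $(I,\{0\})$ arise as follows.
\begin{itemize}
\item $(\hat\SigmaB,\hat\SigmaB_{\le k})$ gives $(I,H)$.
\item $(\SigmaB_{0:k},\hat\SigmaB_{\le k})$ gives $(H,H)$.
\item The kernel directions of $\hat\SigmaB_{\le k}$, together with the tail term, give $(I,\{0\})$.
\end{itemize}

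The ridge differences are then controlled by the known ridge analysis. The term $g^*(\hat\SigmaB)-g^*(\hat\SigmaB_{\le k})$ applied to the head is bounded via the Woodbury/leave-one-out argument of \citet{tsigler2023benign}, giving $(\alpha_k+\lam_{k+1}^2\log(1/\delta)/n)\|\wB^*\|^2_{\SigmaB_{0:k}^{-1}}$. This uses $\lam+\sum_{i>k}\lam_i/n\ge c_2\lam_{k+1}$, $k\le n/c_3$, and concentration of the tail Gram matrix with probability $1-\delta-e^{-n/c_0}$. The term $g^*(\SigmaB_{0:k})-g^*(\hat\SigmaB_{\le k})$ equals $\lam(\hat\SigmaB_{\le k}+\lam)^{-1}(\hat\SigmaB_{\le k}-\SigmaB_{0:k})(\SigmaB_{0:k}+\lam)^{-1}$. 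Matrix concentration of the $k$-dimensional head covariance, $\|\SigmaB_{0:k}^{-1/2}(\hat\SigmaB_{\le k}-\SigmaB_{0:k})\SigmaB_{0:k}^{-1/2}\|\lesssim\sqrt{(k+\log(1/\delta))/n}$, then yields the $\lam^2(k+\log(1/\delta))/n$ term. The tail part $(\IB-g(\hat\SigmaB))\wB^*_{k:\infty}$ is at most $\|\wB^*\|^2_{\SigmaB_{k:\infty}}$ times a constant, plus an $r(I,\{0\})$ correction, since $g(\hat\SigmaB)\wB^*_{k:\infty}$ is $r(\hat\SigmaB,0)\circ g^*(\hat\SigmaB)$ applied to it and the ridge version is controlled.

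For the variance, I would write $\psi(\hat\SigmaB)=g(\hat\SigmaB)\hat\SigmaB^{-1}$ with $g(z)=r(z,0)g^*(z)$, so that $g(\hat\SigmaB)=r(\hat\SigmaB,0)\,g^*(\hat\SigmaB)$ commute. Since $|r(x,0)|\le\|r(I,\{0\})\|_{\mathfrak S}$, trace monotonicity gives $\tr(\SigmaB\hat\SigmaB^{-1}g(\hat\SigmaB)^2)\le\|r(I,\{0\})\|^2_{\mathfrak S}\tr(\SigmaB\hat\SigmaB^{-1}g^*(\hat\SigmaB)^2)$. The rank bound follows by restricting to the range of $g(\hat\SigmaB)$ and using $\frac1n\tr(\SigmaB\hat\SigmaB^{-1}P)\lesssim\rank P/n$ in expectation, or more precisely the $\sigma^2\hat k/n$ argument via the noise projection.

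The main obstacle I expect is the noncommutative sandwiching. Turning a Schur-norm bound in operator norm into a bound on $\|\SigmaB^{1/2}(\cdot)\wB^*\|$ with the correct $\SigmaB_{0:k}^{-1}$ weighting requires the factorization to commute with the appropriate resolvent weights. I would first try inserting $(\hat\SigmaB_{\le k}+\lam)$ factors on the right, so that the ridge difference becomes $\lam\,(\hat\SigmaB+\lam)^{-1}\Delta(\hat\SigmaB_{\le k}+\lam)^{-1}$-type expressions whose weighted norms are already bounded.
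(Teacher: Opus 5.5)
\textbf{There is a genuine gap.} Your ratio identity $g(\AB)-g(\BB)=r(\AB,\BB)\circ(g^*(\AB)-g^*(\BB))$ is correct. The failure is in the transfer from ridge to $g$ that you build on it. The inequality $\|r(\AB,\BB)\circ\ZB\|\le\|r\|_{\mathfrak S}\|\ZB\|$ controls operator norms only. It says nothing about $\|(r(\AB,\BB)\circ\ZB)\uB\|$ for a particular vector $\uB$.

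The coefficient $\alpha_k+\lam_{k+1}^2\log(1/\delta)/n$, however, is a fixed-vector rate. The relevant perturbation is the tail Gram matrix $\bar\AB_{>k}=\frac1n\XB_{>k}\XB_{>k}^\top$. The bound $\|\bar\AB_{>k}\uB\|^2\lesssim\tilde\alpha_k\|\uB\|^2$ holds only for $\uB$ independent of $\XB_{>k}$, whereas with high probability $\|\bar\AB_{>k}\|\gtrsim\lam_{k+1}$. Take a flat tail of $m\ll n$ eigenvalues equal to $\lam_{k+1}$, which is allowed when $\lam\ge c_2\lam_{k+1}$. Then $\alpha_k\ll\lam_{k+1}^2$. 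Routing the leave-tail-out term through an operator-norm bound on the ridge difference therefore gives a head coefficient of order $\lam_{k+1}^2+(\sum_{i>k}\lam_i/n)^2$. That is essentially the ridge rate, and it loses exactly the refinement the theorem asserts, on which \Cref{thm:pcr-sd} depends.

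Nor can you apply the fixed-vector bound inside the multiplier, because the eigenbasis of $\bar\AB$ that defines it is correlated with $\bar\AB_{>k}$. The missing idea is the decoupling step in \Cref{lemma:UB:schur-tail}:
\begin{itemize}
\item Fix a data-independent factorization $k(x,y)=\la\fB(x),\gB(y)\ra$ and expand $(k(\bar\AB,\bar\AB_{\le k})\circ\bar\AB_{>k})\uB^*$ in the two eigenbases.
\item Use Cauchy--Schwarz to absorb the left features, which may depend on the tail, through $\sup_x\|\fB(x)\|$ alone.
\item What remains is $\|\bar\AB_{>k}\VB\diag(\vB^*)\GB\|_F^2$. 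Here $\VB\diag(\vB^*)\GB$ depends on $\XB_{\le k}$ only, so \eqref{eq:lp} applies column by column and sums to $\sup_y\|\gB(y)\|^2\|\uB^*\|^2$.
\end{itemize}
Your remark that the factorization passes through left and right weights points in this direction. But it is the Cauchy--Schwarz plus independence step that actually produces $\alpha_k$.

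\textbf{Secondary gaps.} The decoupling also forces the perturbation to be taken in sample space, not in feature space as in your three-term split. The difference $\hat\SigmaB-\hat\SigmaB_{\le k}$ contains the cross blocks $\frac1n\XB_{>k}^\top\XB_{\le k}$, which depend on the head. By contrast, $\bar\AB-\bar\AB_{\le k}=\bar\AB_{>k}$ is independent of $\bar\AB_{\le k}$ and of $\uB^*=\sqrt n\XB_{\le k}(\XB_{\le k}^\top\XB_{\le k})^{-1}\wB^*_{\le k}$.

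Your sandwiching obstacle sits on the left, not the right. $\SigmaB^{1/2}$ commutes with no function of $\hat\SigmaB$, so inserting $(\hat\SigmaB_{\le k}+\lam)$ factors on the right does not help. The paper handles this as follows:
\begin{itemize}
\item It splits your third term into a leave-tail-out part $\frac1n\XB^\top(\psi(\bar\AB)-\psi(\bar\AB_{\le k}))\XB_{\le k}\wB^*_{\le k}$ and a spillover part $\frac1n\XB_{>k}^\top\psi(\bar\AB_{\le k})\XB_{\le k}\wB^*_{\le k}$.
\item For the leave-tail-out part, \Cref{lemma:UB:change-of-geometry} gives $\|\frac1{\sqrt n}\SigmaB^{1/2}\XB^\top(\bar\AB+\lam\IB)^{-1}\|\le 9$. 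This trades $\SigmaB^{1/2}$ for a left factor $\bar\AB+\lam\IB$. That factor is absorbed into the kernel $s(x,y)=(x+\lam)y\psi^{[1]}(x,y)$, with $\|s(I,H)\|_{\mathfrak S}\le\|r(I,H)\|_{\mathfrak S}+\|r(I,\{0\})\|_{\mathfrak S}$.
\item The spillover part is a direct fixed-vector tail bound applied to $g(\bar\AB_{\le k})\uB^*$.
\end{itemize}

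The same change-of-geometry lemma is needed in two more places. It makes your tail term rigorous, since there $r(\hat\SigmaB,0)$ sits between $\SigmaB^{1/2}$ and $g^*(\hat\SigmaB)$. It also gives the rank variance bound, which must hold conditionally on $\XB$, not in expectation. Your claim $\tr(\SigmaB\hat\SigmaB^{-1}P)\lesssim\rank P$ is false in general (e.g.\ OLS with $d=n$). The factor $\|r(I,\{0\})\|_{\mathfrak S}^2$ enters precisely through $|g|\le\|r(I,\{0\})\|_{\mathfrak S}\,g^*$.

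\textbf{What works.} Your head-concentration step via $r(H,H)$ is fine. There $\SigmaB_{0:k}^{1/2}$ and $\hat\SigmaB_{\le k}^{1/2}$ are functions of the left and right matrices respectively, so an operator-norm bound suffices. It is equivalent to the paper's use of $h(x,y)=\frac{xy}{\lam}g^{[1]}(x,y)$.
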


The Schur norms~\eqref{eq:schur-norm} are invariant under
the simultaneous rescaling
$g(z)\mapsto g(z/\tau)$,
$\lambda\mapsto\tau\lambda$, $I\mapsto\tau I$, and
$H\mapsto\tau H$, where $\tau>0$. Hence, it suffices to evaluate one representative filter for any class obtained by tuning a scale parameter, e.g., the stepsize~$\eta$ for GD or regularization strength~$\lam$ for (iterated) ridge. Our upper bounds for GD and PCR are obtained directly from the examples below, demonstrating the utility of our approach.

\begin{example}\label{lemma:schur:examples}
The norms~\eqref{eq:schur-norm} for ridge, GD and PCR are all bounded as follows.
\begin{itemize}
\item Ridge $g(z) = z/(z+\lambda)$: setting $I=H=\Rbb_{\ge 0}$, we have $\|r(I,I)\|_{\Sfrak} = 1$. In particular, this holds when specializing to OLS by taking $\lam\to 0$.
\item GD $g(z) = 1-(1-\eta z)^t$: setting $\lambda=(\eta t)^{-1}$ and $I=H=[0,1/\eta]$, we have $\|r(I,I)\|_{\Sfrak} \le 33$.
\item PCR $g(z) = \mathbf 1\{z>\rho\}$: setting $\lam=\rho$ and $I = \Rbb_{\ge 0}$, $H = [1.1\rho, \infty)$, we have
    \begin{align*}
        \|r(I,H )\|_{\Sfrak}\le 42, 
        \quad \|r(H, H)\|_{\Sfrak} =0,\quad \|r(I,\{0\})\|_{\Sfrak}\le 2 .
    \end{align*}
\end{itemize}
\end{example}

For sufficiently regular filters, the computation of the norms~\eqref{eq:schur-norm} can be further reduced to certain H\"{o}lder smoothness conditions. Writing~$\|\cdot\|_{C^{1,\beta}}$ for the usual~$C^{1,\beta}$ H\"{o}lder norm on~$\Rbb_{\ge 0}$, we obtain the following corollary.

\begin{corollary}
\label{thm:holder}
There exist constants $c_0,\ldots,c_4>0$, depending only on $c_x$, for which the following holds. Let $g:\R_{\ge 0}\to\R$ be a spectral filter and fix $\lam>0$ and $\beta\in (0,1]$. Suppose the following quantities are finite: 
\begin{align*}
C_\beta &:= \max_{j\in\{0,1,2\}}\frac{1}{\beta}\|z\mapsto z^j(1-g(\lam z))\|_{C^{1,\beta}},\quad
C_\psi := \sup_{z\ge 0}(z+\lam)|\psi(z)|.
\end{align*}
Let
\begin{align*}
k^* := \min\left\{k: \lam + \frac{\sum_{i>k}\lambda_i}{n} \ge c_2 \lambda_{k+1} \right\}.
\end{align*}
Then under \Cref{assum:lbb}, if additionally $k^*\le n/c_3$, then with probability at least $1-\delta-\exp(-n/c_0)$,
\begin{align*}
\frac{1}{c_1} \bias(\hat\wB_g)
    &\le \max\{C_\beta,C_\psi,1\}^2 \left(\alpha_{k^*} + \frac{k^* +\log(1/\delta)}n\lam^2\right) \|\wB^*\|_{\SigmaB_{0:k^*}^{-1}}^2
    + \max\{C_\psi,1\}^2 \|\wB^*\|_{\SigmaB_{k^*:\infty}}^2 \\&\qquad + \big\|(\IB-g(\SigmaB))\wB^*\big\|_{\SigmaB_{0:k^*}}^2.
\end{align*}
\end{corollary}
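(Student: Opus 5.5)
We apply \Cref{thm:master-ub} with $k=k^*$, $I=\Rbb_{\ge0}$ and $H=\Rbb_{\ge0}$. Then \eqref{eq:spec-cond} holds trivially, and the two conditions on $k$ are exactly the definition of $k^*$ together with the assumption $k^*\le n/c_3$. The whole task is therefore to bound the Schur norms
\[
\|r(\Rbb_{\ge0},\Rbb_{\ge0})\|_{\Sfrak}\lesssim\max\{C_\beta,C_\psi,1\},\qquad \|r(\Rbb_{\ge0},\{0\})\|_{\Sfrak}\lesssim\max\{C_\psi,1\}.
\]
Substituting these into the bias bound of \Cref{thm:master-ub} gives the claim. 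The factor $1+\|r(I,H)\|^2+\|r(I,\{0\})\|^2$ becomes $\max\{C_\beta,C_\psi,1\}^2$, and $\|r(H,H)\|^2\lam^2(k^*+\log(1/\delta))/n$ has the same prefactor. The term $\lam_{k^*+1}^2\log(1/\delta)/n$ is absorbed because $\lam_{k^*+1}\le c_2^{-1}(\lam+n^{-1}\sum_{i>k^*}\lam_i)$, so it is at most $(\alpha_{k^*}+\lam^2/n)\log(1/\delta)$. By the rescaling invariance noted after \Cref{thm:master-ub}, we may take $\lam=1$ throughout, so $g^*(z)=z/(z+1)$.

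\textbf{The column at zero.} Since $g(0)=g^*(0)=0$, for $x>0$ we have
\[
r(x,0)=\frac{g(x)}{g^*(x)}=(x+1)\psi(x).
\]
This is bounded by $C_\psi$ in absolute value. A Schur multiplier of rank one, $k(x,y)=a(x)b(y)$, has Schur norm $\sup|a|\sup|b|$. The set $\{0\}$ is a single column, so $\|r(I,\{0\})\|_{\Sfrak}\le\max\{C_\psi,1\}$.

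\textbf{The full kernel.} Substitute $u=g^*(x)=x/(x+1)\in[0,1)$, so that $x=u/(1-u)$. Then $r(x,y)=h^{[1]}(u,v)$, where $h(u):=g(u/(1-u))$ and $v=g^*(y)$. So $r$ is the ordinary divided difference of $h$ on $[0,1)$, and we need a bound on $\|h^{[1]}\|_{\Sfrak}$ on $[0,1)^2$. The classical criterion of \citet{aleksandrov2016operator} (Peller-type) states that if $h$ is $C^{1,\beta}$ on an interval, then $h^{[1]}$ is a Schur multiplier with norm $\lesssim \|h\|_{C^{1,\beta}}/\beta$. The proof uses the factorization $h^{[1]}(u,v)=\int_0^1 h'(su+(1-s)v)\,ds$ together with a Fourier or Besov decomposition; the $1/\beta$ comes from summing dyadic pieces. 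So it remains to control $\|h\|_{C^{1,\beta}([0,1))}$ by the $C_\beta$ quantities, which are stated in the original variable $z$.

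This change of variables is the main obstacle. Near $u\to1$, i.e.\ $z\to\infty$, we have $h'(u)=g'(z)(1+z)^2$, and its Hölder modulus involves $z^2 g''$-type behaviour. This is why $C_\beta$ contains $z^j(1-g(z))$ for $j\le2$: writing $1-h(u)=(1-g(z))$ and expressing $(1+z)^2=\sum_j\binom2j z^j$, the $C^{1,\beta}$ norms of $z^j(1-g(z))$ control the derivative and Hölder seminorm of $h$ after the smooth reparametrization on bounded $z$-ranges and the inversion $z=1/(1-u)-1$ near $u=1$.

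I would first split $[0,1)$ into $[0,1/2]$, where the map $u\mapsto z$ is bi-Lipschitz and $\|h\|_{C^{1,\beta}}\lesssim\|1-g\|_{C^{1,\beta}}$, and $[1/2,1)$. On the second piece, set $w=1-u=1/(1+z)$. Then $1-h=(1-g)(1/w-1)$, and the three weighted functions give the needed $C^{1,\beta}$ control after checking the chain rule with the Jacobian $w^{-2}$. Gluing the two pieces costs only a constant, since Schur norms are subadditive over a two-block partition up to a factor, and the off-diagonal blocks are handled by the separated-points bound $|h^{[1]}|\lesssim\sup|h|$ together with a rank-type factorization. This yields $\|r\|_{\Sfrak}\lesssim\max\{C_\beta,C_\psi,1\}$ and completes the argument.
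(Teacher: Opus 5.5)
There is a genuine gap at the step where you pass to $h(u)=g(u/(1-u))$ and propose to bound $\|h\|_{C^{1,\beta}([0,1))}$ by the $C_\beta$ quantities. The identity $r(x,y)=h^{[1]}(u,v)$ is correct, and so is the invariance of the Schur norm under reparametrization. The regularity estimate, however, is false. The Jacobian $dz/du=(1+z)^2$ is unbounded as $u\to 1$, so Hölder control in $z$ does not transfer to Hölder control in $u$.

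For a concrete counterexample, take $\lam=1$ and $g(z)=1-\cos z/(1+z)^2$. Then $g(0)=0$ and $C_\psi\le 6$. Each $q_j(z)=z^j\cos z/(1+z)^2$, $j\le 2$, is bounded with bounded Lipschitz derivative, so $C_1<\infty$. Yet
\[
h'(u)=g'(z)(1+z)^2=\sin z+\frac{2\cos z}{1+z},\qquad z=\frac{u}{1-u}.
\]
The points with $z=2\pi m+\pi/2$ and $z'=2\pi m+3\pi/2$ satisfy $|u-u'|=O(m^{-2})$ but $|h'(u)-h'(u')|=2$. So $h'$ is not even uniformly continuous on $[0,1)$. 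Hence $h\notin C^{1,\beta}([0,1))$ for any $\beta$, nor $h\in B^1_{\infty,1}$, whose elements have uniformly continuous derivatives. No splitting into $[0,1/2]\cup[1/2,1)$ and no chain-rule bookkeeping with the Jacobian $w^{-2}$ can produce the bound. The Schur norm of $r$ is still finite for this $g$, so the obstruction lies in your route, not in the statement.

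Separately, your gluing step is not sound as written. The two blocks share the point $1/2$, so $|h^{[1]}|\lesssim\sup|h|$ fails near $u=v=1/2$. In any case, an entrywise bound does not control a Schur norm.

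The missing idea is to stay in the original variable and absorb the factor $(1+x)(1+y)$ algebraically. With $q=1-g$ and $\lam=1$, one has $r(x,y)=-q^{[1]}(x,y)(1+x)(1+y)$. The identities $q^{[1]}(x,y)(x+y)=2q_1^{[1]}(x,y)-q_0(x)-q_0(y)$ and $q^{[1]}(x,y)\,xy=q_2^{[1]}(x,y)-q_1(x)-q_1(y)$ then give
\[
r(x,y)=-q_0^{[1]}(x,y)-2q_1^{[1]}(x,y)-q_2^{[1]}(x,y)+q_0(x)+q_0(y)+q_1(x)+q_1(y).
\]
Now \Cref{thm:besov} applies to each $q_j$ directly on $\Rbb_{\ge0}$, giving $\|q_j^{[1]}\|_{\Sfrak}\lesssim\|q_j\|_{C^{1,\beta}}/\beta\le C_\beta$. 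The one-variable terms have Schur norm at most $\|q_j\|_\infty\le C_\beta$ by \Cref{lemma:schur:factorization-bound}. This is exactly why $C_\beta$ carries the weights $z^j$ for $j\le 2$; it is the paper's argument.

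The rest of your plan matches the paper: $k=k^*$, $I=H=\Rbb_{\ge0}$, the column at zero bounded by $C_\psi$, and rescaling to $\lam=1$. One correction in the absorption step: $\lam_{k^*+1}^2\lesssim\lam^2+\alpha_{k^*}$ gives $(\lam^2+\alpha_{k^*})\log(1/\delta)/n$, not $(\alpha_{k^*}+\lam^2/n)\log(1/\delta)$. To absorb the $\alpha_{k^*}$ part you also need to assume, without loss of generality, that $\log(1/\delta)\le n$.
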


\paragraph{Proof roadmap.}
We provide a high-level overview of how the theory of Schur multipliers is applied. Denote the full and truncated Gram matrices by $\AB := \XB\XB^\top$ and $\AB_{\le k} := \XB_{\le k}\XB_{\le k}^\top$. The main challenge in the proof is to study perturbations in the form of a \emph{leave-tail-out} difference,
\begin{align}\label{eq:ff}
f(\AB) - f(\AB_{\le k}),
\end{align}
which is difficult as $\AB$ and $\AB_{\le k}$ generally do not commute. 
However, the introduced machinery makes this possible:
\begin{itemize}
\item The matrix divided difference (\Cref{lemma:divided-difference}) converts~\eqref{eq:ff} to studying the action of the Schur multiplier $k(\AB, \AB_{\le k})$ on the tail matrix $\AB_{>k} := \AB-\AB_{\le k}$ for $k= f^{[1]}$.
\item The Schur multiplier $k(\AB, \AB_{\le k})$ is equivariant to a standard Schur multiplier in the rotated space,
\begin{align*}
\ZB \mapsto k(I, J)\odot \ZB,
\end{align*}
where~$I$ and~$J$ consist of the spectra of $\AB$ and $\AB_{\le k}$, respectively.
So studying the action of $k(\AB, \AB_{\le k})$ on~$\AB_{>k}$ reduces to controlling the scalar kernel $k(I, J)$, with one complication that the rotation correlates with $\AB_{>k}$.
\item This final issue is addressed by Schur factorization, which allows us to fix a factorization of the whole kernel $k(\cdot,\cdot)$ independent of $\AB,\AB_{\le k}$ first, then apply the factorization to $k(I,J)$ by taking appropriate submatrices indexed by eigenvalues.
This operation decouples the correlation.
\end{itemize}
A similar argument is used to bound head perturbations involving $\SigmaB_{0:k}$ and $\hat\SigmaB_{\le k}$. This two-scale analysis allows us to cover even discontinuous filters such as PCR, and ultimately obtain sharp instance-wise bounds.

\subsection{General lower bound and applications}\label{sec:general-lb}

As discussed in \Cref{sec:pcr}, the PCR lower bound is not always tight due to the presence of two distinct critical indices. However, we can obtain a sharper `single index' lower bound for~$\hat\wB_g$ as long as the filter~$g$ does not saturate too quickly; the statement is presented below. Both results are a corollary of a more general lower bound for monotone filters, which we present in \Cref{thm:master-lb} in the appendix.

\begin{corollary}\label{cor:same-index}
Under \Cref{assum:lbb} with Gaussian design $\xB\sim\Ncal(0,\SigmaB)$, there exist constants $c_0,\ldots,c_3>0$ such that the following holds. Let $g:\R_{\ge 0}\to[0,1]$ be a monotone shrinkage filter, fix $\lam>0$ and let sample size $n\ge c_0$. Suppose that
\begin{align*}
C_g := \inf_{z\ge 0}\max\left\{1-g(1.1z), g(0.9z)-g(\lam)\right\}
\end{align*}
is positive. Let
\begin{align*}
k^* := \min\left\{k:\lam + \frac{\sum_{i>k}\lambda_i}{n} \ge c_2 \lambda_{k+1} \right\}.
\end{align*}
If additionally $k^* \le n/c_3$, then
\begin{align*}
c_1\Ebb\bias(\hat\wB_g)
&\ge C_g^2 \alpha_{k^*} \|\wB^*\|_{\SigmaB_{0:k^*}^{-1}}^2 + (1-g(\lam))^2 \|\wB^*\|_{\SigmaB_{k^*:\infty}}^2
+ \big\|(\IB-g(1.1\SigmaB))\wB^*\big\|_{\SigmaB}^2.
\end{align*}
If $g$ is concave, the last term may be replaced by $\big\|(\IB-g(\SigmaB))\wB^*\big\|_{\SigmaB}^2$.
\end{corollary}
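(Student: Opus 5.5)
The corollary is a specialization of the general lower bound for monotone filters (\Cref{thm:master-lb} in the appendix). I take that theorem to lower bound the expected bias by three kinds of terms. The first is a head term, controlled by how far $g$ is from saturating on the empirical spectrum. The second is a tail term, weighted by $(1-g)^2$ evaluated at the scale where the tail eigenvalues concentrate, roughly $\lam+\frac1n\sum_{i>k^*}\lam_i$. The third is a population term $\|(\IB-g(\cdot))\wB^*\|_{\SigmaB}^2$, with $g$ evaluated at a slightly perturbed population spectrum. The plan is to treat these three terms separately and match each to a term of the corollary.

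\textbf{Tail term.} Under Gaussian design, the tail block $\XB_{>k^*}$ produces empirical eigenvalues of order at most $\lam_{k^*+1}+\frac1n\sum_{i>k^*}\lam_i \lesssim \lam+\frac1n\sum_{i>k^*}\lam_i$, by the definition of $k^*$. The energy of $\wB^*$ in the tail directions is therefore seen by $g$ only at small arguments. Because $g$ is nondecreasing, $1-g$ is bounded below there by $1-g(\lam)$, after absorbing constants into the choice of $c_2$ so that the relevant scale is at most $\lam$. This gives $(1-g(\lam))^2\|\wB^*\|_{\SigmaB_{k^*:\infty}}^2$.

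\textbf{Population term.} With probability bounded below, the head empirical eigenvalues lie in $[0.9\lam_i,1.1\lam_i]$. This follows from standard concentration for $k^*\le n/c_3$, together with the fact that the tail contributes at most $\frac1n\sum_{i>k^*}\lam_i\lesssim\lam_{k^*}$. By monotonicity, $1-g(\hat\lam)\ge 1-g(1.1\lam_i)$, which gives the term $\|(\IB-g(1.1\SigmaB))\wB^*\|^2$. For concave $g$, I would instead use a Jensen-type argument: the empirical Rayleigh quotients average to $\lam_i$, and convexity of $1-g$ gives $\Ebb[1-g(\hat\SigmaB)]\ge 1-g(\SigmaB)$ in the relevant quadratic form. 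This removes the $1.1$ factor.

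\textbf{Head term with $\alpha_{k^*}$.} This is the main obstacle. The factor $\alpha_{k^*}$ measures the leakage of head signal caused by tail fluctuations, and it should be lower bounded through the Cram\'er--Rao-type argument behind \Cref{thm:master-lb}. The quantity that enters is a divided difference of $g$ across a perturbation of relative size about $\pm10\%$ around each head eigenvalue. The role of $C_g$ is to handle the two cases pointwise at $z$. In the first case, $1-g(1.1z)\ge C_g$: then $g$ is not yet saturated, and the head bias is directly bounded below by $C_g$ times the ridge-like leakage. In the second case, $g(0.9z)-g(\lam)\ge C_g$: then $g$ varies by at least $C_g$ between the tail scale $\lam$ and the head scale, so tail fluctuations of order $\sqrt{\alpha_{k^*}}$ propagate into the head coordinates with coefficient at least $C_g$. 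In both cases I would reduce to the corresponding monotone-filter statement in \Cref{thm:master-lb} with the choices $k=k^*$, head interval $[0.9z,1.1z]$ and tail threshold $\lam$, which yields $C_g^2\alpha_{k^*}\|\wB^*\|_{\SigmaB_{0:k^*}^{-1}}^2$. Finally I would sum the three contributions, since each is a lower bound on the same bias up to constants.
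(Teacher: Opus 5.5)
Your overall structure matches the paper: specialize \Cref{thm:master-lb} at $k=k^*$, then split on which alternative in $C_g$ holds at $z=\lam_i$. However, the argument has a genuine gap. Two steps silently assume $1.1\lam_{k^*+1}\le\lam$: the tail paragraph (``absorbing constants into the choice of $c_2$ so that the relevant scale is at most $\lam$'') and the second head case (``$g$ varies by at least $C_g$ between the tail scale $\lam$ and the head scale''). The definition of $k^*$ only gives $c_2\lam_{k^*+1}\le\lam+\frac1n\sum_{i>k^*}\lam_i$. When $\frac1n\sum_{i>k^*}\lam_i\gg\lam$, no choice of constants rescues this assumption.

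A concrete counterexample is ridge, $g(z)=z/(z+\lam)$, for which $C_g\approx 0.23$ independently of $\lam$. Take $\lam$ tiny and a flat tail of $N\ge c_2 n$ eigenvalues $\epsilon\gg\lam$. Then $g(1.1\lam_j)\approx 1$ for $j>k^*$, so $\Gamma_{ij}=(g(0.9\lam_i)-g(1.1\lam_j))_+\approx 0$, and $1-g\approx 0$ at the tail scale. Yet both the tail bias and the head term $(\frac1n\sum_{j>k^*}\lam_j)^2\|\wB^*\|^2_{\SigmaB_{0:k^*}^{-1}}$ are present. Tail indices enter the $\Gamma$-weighted sum of \Cref{thm:master-lb} only through $\frac1n\sum_{j>k^*}\lam_j^2\Gamma_{ij}^2$. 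So that sum can never produce the squared-trace component of $\alpha_{k^*}$; that component is not a ``fluctuation'' effect.

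The missing ingredient is the regime split the paper performs. It uses the filter-independent conditional bounds of \Cref{thm:master-lb}, which rest only on $\GB\preceq\mathbf 1\{\hat\SigmaB>0\}$.

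\emph{Head term.} If $\frac1n\sum_{j>k^*}\lam_j^2\le\frac1{c_5}(\frac1n\sum_{j>k^*}\lam_j)^2$, then \Cref{lem:general-filter-diffuse} gives $\lam_i\min\{1,\frac{\sum_{j>k^*}\lam_j}{n\lam_i}\}^2\gtrsim\alpha_{k^*}/\lam_i$ for $i\le k^*$, with no $C_g$ at all. This uses $\frac1n\sum_{j>k^*}\lam_j\le c_2\lam_{k^*}$, by minimality of $k^*$. Otherwise $\frac1n\sum_{j>k^*}\lam_j<c_5\lam_{k^*+1}$, which together with the definition of $k^*$ and $c_2\gg c_5$ forces $\lam\ge 1.1\lam_{k^*+1}$. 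Only then does your dichotomy go through:
either $\Gamma_{ij}\ge g(0.9\lam_i)-g(\lam)\ge C_g$ for all $j>k^*$, together with $\frac1n\sum_{j>k^*}\lam_j^2\gtrsim\alpha_{k^*}$; or the residual term gives $C_g^2\lam_i\gtrsim C_g^2\alpha_{k^*}/\lam_i$, because $\alpha_{k^*}\lesssim\lam_i^2$ for $i\le k^*$. The latter, not a ``ridge-like leakage'', is the mechanism in your first case.

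\emph{Tail term.} If $\lam_{k^*+1}\le 2\lam/c_2$, monotonicity in the residual term gives $1-g(1.1\lam_i)\ge 1-g(\lam)$ for $i>k^*$. Otherwise $\frac1n\sum_{i>k^*}\lam_i\ge\frac{c_2}{2}\lam_{k^*+1}$, and the first conditional bound gives the full $\|\wB^*\|^2_{\SigmaB_{k^*:\infty}}$. The reason is that with high probability $\GB_{ii}\le\lam_i\zB_i^\top\AB_{>k^*}^{-1}\zB_i\le\frac12$, i.e., each tail direction has at most half its mass in $\ran\hat\SigmaB$. It is not that $g$ is evaluated only at small arguments.

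\emph{Residual term.} Your derivation from head empirical eigenvalues lying in $[0.9\lam_i,1.1\lam_i]$ does not work, for three reasons. First, $\eB_i$ is not an empirical eigenvector. Second, the tail shifts the spectrum by up to order $c_2\lam_{k^*}$. Third, the term also runs over $i>k^*$. Simply cite \Cref{thm:master-lb}, whose residual comes from the operator inequality $\IB-g(\hat\SigmaB)\succeq(1-g(y))(\IB-\hat\SigmaB/y)$ with $y=1.1\lam_i$, plus Jensen (\Cref{lem:effbias}). Your concave-case argument is fine.
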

Unlike \Cref{thm:master-ub} which only requires \Cref{assum:lbb}, the lower bounds require both Gaussian design and monotone shrinkage~$g$. Note that the tail coefficient $(1-g(\lam))^2$ is always at least as large as~$C_g^2$. Comparing \Cref{thm:master-ub} and \Cref{cor:same-index}, we see that whenever the Schur norms \eqref{eq:schur-norm} (or $C_\beta,C_\psi$ in \Cref{thm:holder}) are $\bigO(1)$ and $C_g = \Omega(1)$ for the same threshold~$\lam$ with $g(\lam)<1$, the upper and lower bias bounds match up to the following differences.
\begin{itemize}
\item The $(k^*/n)\lam^2$ factor in the head term; this is due to head concentration, and can typically be absorbed into variance under bounded SNR (see the proof of \Cref{thm:tikhonov}).
\item The~$1.1$ spectral shift in the residual term; this constant can be made arbitrarily close to~$1$, and disappears when~$g$ is concave.
\end{itemize}
For GD and ridge ($r=1$), both conditions are satisfied and we recover the known tight bounds (\Cref{thm:gd:finite-snr} and \citet{wu2026risk}, Proposition~2.1), albeit under the more restrictive Gaussian design. We can also obtain analogous tight bounds for gradient flow, as well as bounds for power-exponential filters such as Gaussian type filters \citep{calvetti1999iterative,calvetti2002lanczos}; we omit these for brevity.

\medskip
We conclude our study with an application.

\paragraph{Application: iterated Tikhonov.} The iterated Tikhonov (iterated ridge) estimator~$\hat\wB_{\lam,p}^{\itik}$ \citep{riley1955solving,king1979approximation} is given by the filter
\begin{align*}
g_{\lam,p}(z):=1-\left(\frac{\lam}{z+\lam}\right)^p,
\quad \lam>0,\quad p\ge 1.
\end{align*}
When~$p$ is an integer, it is equivalently obtained by the regularized iterative update
\begin{align*}
\hat\wB_{\lam,p}^{\itik} := \wB_p,\quad \text{where} \ \  \wB_{s} = 
\begin{dcases}
      \arg\min_{\wB} \frac1n \|\yB-\XB\wB\|^2 + \lam\|\wB - \wB_{s-1}\|^2 & s\ge 1, \\
        0 & s=0.
 \end{dcases}
\end{align*}

We prove matching upper and lower risk bounds for iterated Tikhonov, generalizing the known bounds for ridge \citep{wu2026risk}. In particular, the bounds are uniform with respect to both~$\lam$ and~$p$.

\begin{corollary}[Iterated Tikhonov risk under bounded SNR]\label{thm:tikhonov}
There exist constants $c_0,\ldots,c_3>1$ that depend only on
$c_x,c_y$ for which the following hold over $\Lbb_b$. Let $\hat\wB_{\lam,p}^{\itik}$ be given by the filter~$g_{\lam,p}$ with $\lam>0$, $p\ge 1$. Let
\begin{align*}
k^*
&:=\min\left\{
k:\frac{\lam}{p}+\frac{\sum_{i>k}\lam_i}{n}
\ge c_2\lam_{k+1}
\right\},\quad
\tilde\lam
:=\frac{\lam}{p}+\frac{\sum_{i>k^*}\lam_i}{n},
\quad
D:=k^*+\frac{1}{\tilde\lam^2} \sum_{i>k^*}\lam_i^2,
\end{align*}
and additionally assume $n\ge c_0$ and $k^*\le n/c_3$.
\begin{itemize}
\item \textbf{Upper bound.} With probability at least $1-\exp(-k^*/c_0)$ over sampling of~$\XB$, it holds that
\begin{align*}
\frac1{c_1}\Ebb[\excessRisk(\hat\wB_{\lam,p}^{\itik})|\XB]
&\le
\sum_{i\le k^*}\lam_i
\left(\frac{\lam}{\lam_i+\lam}\right)^{2p}\wB_i^{*2}
+\left(\frac{\sum_{i>k^*}\lam_i}{n}\right)^2
\|\wB^*\|_{\SigmaB_{0:k^*}^{-1}}^2 +\|\wB^*\|_{\SigmaB_{k^*:\infty}}^2
+(1+b)\sigma^2\frac{D}{n}.
\end{align*}

\item \textbf{Lower bound.} If additionally $\xB\sim\Ncal(0,\SigmaB)$, then in expectation,
\begin{align*}
c_1\Ebb\excessRisk(\hat\wB_{\lam,p}^{\itik})
&\ge
\sum_{i\le k^*}\lam_i
\left(\frac{\lam}{\lam_i+\lam}\right)^{2p}\wB_i^{*2}
+\left(\frac{\sum_{i>k^*}\lam_i}{n}\right)^2
\|\wB^*\|_{\SigmaB_{0:k^*}^{-1}}^2 +\|\wB^*\|_{\SigmaB_{k^*:\infty}}^2
+\sigma^2\frac{D}{n}.
\end{align*}
\end{itemize}
\end{corollary}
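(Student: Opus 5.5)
The plan is to derive both bounds from the general results of \Cref{sec:general}, namely \Cref{thm:master-ub} (or \Cref{thm:holder}) for the upper bound and \Cref{cor:same-index} for the lower bound, applied to the filter $g_{\lam,p}$ at effective scale $\lam/p$. The first step is to rescale. Since the Schur norms and the constants $C_\beta,C_\psi,C_g$ are invariant under $g(z)\mapsto g(z/\tau)$, it is enough to study $h_p(z):=g_{p,p}(z)=1-(1+z/p)^{-p}$ with reference ridge parameter $1$. We need bounds uniform in $p\ge1$. The key point is that as $p\to\infty$, $h_p(z)\to 1-e^{-z}$ (gradient flow), and for every $p$ it interpolates between ridge ($p=1$) and gradient flow. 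So $1-h_p(z)=(1+z/p)^{-p}$ lies between $e^{-z}$ and $(1+z)^{-1}$, and it decays at least like $(1+z)^{-1}$ once $z\gtrsim1$.

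\textbf{Upper bound.} I will check the hypotheses of \Cref{thm:holder} with $\lam$ replaced by $\lam/p$. For $C_\psi$, we have $\psi(z)=h_p(z)/z\le\min\{1,1/z\}$ because $h_p(z)\le\min\{z,1\}$, so $(z+1)\psi(z)\le2$. For $C_\beta$ with $\beta=1$, I need uniform $C^{1,1}$ bounds on $z^j(1+z/p)^{-p}$ for $j\in\{0,1,2\}$. For $j\le1$ these are immediate. For $j=2$ with $p=1$, the function $z^2/(1+z)$ is not bounded, so $C^{1,1}$ fails globally. In that case I would fall back on \Cref{thm:master-ub} with $I=H=\Rbb_{\ge0}$ and show directly that the relative kernel has $O(1)$ Schur norm. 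To do this, write $r(x,y)=\frac{(1+y)^{-p'}-(1+x)^{-p'}}{(1+y)^{-1}-(1+x)^{-1}}$ after substituting $u=z/p$ (the rescaling is harmless). Then expand $a^p-b^p$ with $a=(1+y)^{-1}$, $b=(1+x)^{-1}\in(0,1]$. For integer $p$ this gives $\sum_{m=0}^{p-1}a^m b^{p-1-m}/p$ times a normalization. Each summand factorizes, but the sum of the factor norms is of order $p$, so I would instead use an integral representation $\frac{a^p-b^p}{a-b}=p\int_0^1 (ta+(1-t)b)^{p-1}dt$ together with a Schur-multiplier bound for such kernels. I expect this uniform-in-$p$ Schur norm bound to be the main obstacle. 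My fallback is the Hölder route, which needs only $j\le 1$ after noticing that the proof of \Cref{thm:holder} uses $j=2$ only on a bounded range.

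With these bounds, \Cref{thm:holder} bounds the bias by
\[
\sum_{i\le k^*}\lam_i(1-g_{\lam,p}(\lam_i))^2\wB_i^{*2}+\big(\alpha_{k^*}+\tfrac{k^*+\log(1/\delta)}{n}(\lam/p)^2\big)\|\wB^*\|^2_{\SigmaB_{0:k^*}^{-1}}+\|\wB^*\|^2_{\SigmaB_{k^*:\infty}}.
\]
I then split $\alpha_{k^*}$ into the squared tail-trace term, which appears in the statement, and $n^{-1}\sum_{i>k^*}\lam_i^2$. Using $\lam_i\gtrsim\tilde\lam$ for $i\le k^*$ gives $\|\wB^*\|^2_{\SigmaB_{0:k^*}^{-1}}\lesssim\|\wB^*\|_{\SigmaB}^2/\tilde\lam\cdot\tilde\lam^{-1}$, and hence $\tilde\lam^2\|\wB^*\|^2_{\SigmaB_{0:k^*}^{-1}}\cdot\frac{D}{n}\lesssim b\sigma^2 D/n$. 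The same argument absorbs the head-concentration term (taking $\delta=e^{-k^*}$) into $(1+b)\sigma^2D/n$. For the variance I use the second part of \Cref{thm:master-ub}, which compares with ridge at parameter $\lam/p$, together with the known ridge variance bound $\sigma^2D/n$ \citep{tsigler2023benign}.

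\textbf{Lower bound.} I will apply \Cref{cor:same-index} with parameter $\lam/p$. The filter $g_{\lam,p}$ is concave, so the residual term is exactly the first sum in the statement, restricted to $i\le k^*$. Next, $1-g(\lam/p)=(1+1/p)^{-p}\ge e^{-1}$, which handles the tail term. For $C_g$: if $z\lesssim\lam/p$ then $1-g(1.1z)\ge c$; otherwise $g(0.9z)-g(\lam/p)\ge c$ because $1-g$ decreases by a constant factor between $\lam/p$ and larger scales, uniformly in $p$ by the comparison with $e^{-z}$. This gives $C_g\ge c$. Finally, $\alpha_{k^*}$ dominates the tail-trace term, and the variance lower bound $\sigma^2D/n$ follows from monotonicity of the variance in $g$ together with $g_{\lam,p}\ge c\,z/(z+\lam/p)$ and the ridge variance lower bound.
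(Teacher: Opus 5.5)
Your lower bound matches the paper's argument: apply \Cref{cor:same-index} at scale $\lam/p$, use concavity and $(1+1/p)^{-p}\ge e^{-1}$, bound $C_g$ via $e^{-z}\le(1+z/p)^{-p}\le(1+z)^{-1}$, and use $g_{\lam,p}\ge z/(z+\lam/p)$ for the variance. Your absorption of the extra head terms into $(1+b)\sigma^2D/n$ is also fine.

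The gap is in the upper bias bound. You never show that the relative kernel of $g_{\lam,p}$ with respect to ridge at $\lam/p$ has Schur norm $\bigO(1)$ uniformly in $p$, and neither of your two routes delivers this.

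\emph{The Hölder route.} It fails for every $p\in[1,2)$, not only $p=1$. After rescaling, $z^2(1+z/p)^{-p}\sim p^pz^{2-p}$ is unbounded, so $C_\beta=\infty$. Your fallback rests on a false reading of the proof of \Cref{thm:holder}. There $I=H=\Rbb_{\ge0}$, and $q_2^{[1]}$, with $q_2(z)=z^2(1-g(\lam z))$, enters over the whole half-line. Shrinking $I$ to the high-probability spectral range $[0,c(\lam_1+\tr(\SigmaB)/n)]$ does not help. In rescaled units that interval has length of order $p(\lam_1+\tr(\SigmaB)/n)/\lam$, and the $C^{1,1}$ norm of $q_2$ on it is unbounded, so the constant would depend on the instance rather than only on $c_x,c_y$.

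\emph{The direct route.} It is set up against the wrong reference. Your displayed kernel $\frac{(1+y)^{-p'}-(1+x)^{-p'}}{(1+y)^{-1}-(1+x)^{-1}}$ (with $p'=p$) is the relative kernel against ridge at $\lam$, not at $\lam/p$: you applied the substitution $u=z/p$ to the filter but not to the reference. With $u=\lam/(x+\lam)$ and $v=\lam/(y+\lam)$, the correct kernel is $r_p(u,v)=\frac{(p-(p-1)u)(p-(p-1)v)}{p}\cdot\frac{u^p-v^p}{u-v}$. Its row and column factors range up to $p$, so changing the reference is not harmless uniformly in $p$. Indeed, your kernel alone tends to $p$ as $x,y\to0$, so its Schur norm is at least of order $p$.

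This is exactly the step you leave as ``the main obstacle,'' and it is the paper's key idea:
\begin{itemize}
\item For integer $p$, $r_p$ equals the GD kernel of \Cref{lemma:schur:examples} evaluated at $(1-u,1-v)$ with $t=p$, times factors $\frac{p-(p-1)u}{p+1-pu}\le1$. The Abel-summation bound $33$ therefore transfers.
\item For non-integer $p=mq$, the paper factors through $r_m(u^q,v^q)$ and $(z\mapsto z^q)^{[1]}$. The latter has Schur norm at most $2$ on $[0,1]$, by the integral representation of $z^{q-1}$.
\end{itemize}

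Your plan can be repaired more cheaply. The Hölder route does give $C^{1,1}$ bounds uniform over $p\ge2$, since $(1+z/p)^p\ge1+z+z^2/4$. For $p\in[1,2)$ the factors in $r_p$ are at most $2$, so you only need $\|(z\mapsto z^p)^{[1]}\|_{\mathfrak S}=\bigO(1)$ on $[0,1]$. That bound still has to be proved, not just announced.
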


\section{Related Works}\label{sec:related}

Our analyses build upon the foundational works of \citet{bartlett2020benign,tsigler2023benign} on benign overfitting of OLS and ridge, as well as analyses of spectral regularization \citep[e.g.,][]{engl1996regularization,devito2005learning,bauer2007regularization}; we refer the reader to these works for more classical references. Below, we discuss works most relevant to our setting.

\paragraph{Dominance in linear regression.} For linear regression with fixed design, \citet{dhillon2013risk} showed that the risk of PCR with spectral threshold~$\lam$ is at most~$4$ times that of ridge regression with the same regularization strength~$\lam$, while it can be arbitrarily smaller. Thus, PCR strongly dominates ridge in our ratewise sense. Similarly, \citet{ali2019continuous} showed that the risk of early-stopped gradient flow at time $t=1/\lam$ is at most~$1.69$ times that of ridge, assuming an isotropic prior on the optimal parameter~$\wB^*$. In the random design setting, under mild regularity assumptions (\Cref{assum:lbb}), \citet{wu2026risk} proved that early-stopped GD with fixed stepsize at time $\eta t=1/\lam$ strongly dominates ridge. On the other hand, GD is incomparable to online SGD, while GD strongly dominates the latter when restricted to problems with fast and continuously decaying spectra, such as power law decay. This was shown by proving upper bounds for the excess risk of GD and comparing with the known essentially tight bounds for ridge \citep{tsigler2023benign} and SGD \citep{wu2022power}.

\paragraph{Risk bounds for GD.} The previous strongest known instance-wise bounds on the risk of GD were given in \citet{wu2026risk}. Their upper bound sharply characterizes the tail regularization, tail and variance terms, mirroring the risk analysis of ridge \citep{tsigler2023benign}. Compared to \Cref{thm:gd:bound}, only the optimization error term is missing (Theorem~3.1) or suboptimal (Theorem~4.3); see \Cref{sec:gd} for a detailed comparison. A similar upper bound was previously obtained in \citet{zou2022risk} by additionally assuming an isotropic prior on~$\wB^*$. Other works obtained bounds with coarser signal dependence \citep{raskutti2014early,kuzborskij2021role,xu2023towards}, or analyzed GD under various source and capacity conditions \citep{yao2007early,lin2017optimal,dicker2017kernel,blanchard2018optimal,lin2025improved}; see also Section 5 of \citet{wu2026risk}. Risk bounds for analytic spectral filters assuming power law decay, including GD and iterated ridge, were given in \citet{li2026generalization}.

\paragraph{Risk bounds for PCR.} The work most relevant to ours is \citet{hucker2023note}. \citet{hucker2023note} explain the implicit regularization effect of PCR by a comparison to an ``oracle'' PCR, which replaces the empirical principal components with the population versions, and obtain instance-wise upper bounds for the PCR risk in the same setting as \Cref{thm:pcr-risk}. However, their bounds are looser compared to ours, and they do not provide lower bounds. See Section~\ref{sec:pcr} for a detailed comparison. Risk bounds for a debiased variant of PCR were also recently given in \citet{zhao2026defloored}, however they require very strong assumptions on the signal and spectrum at the selected index~$k$, such as zero tail signal, bounded head condition number, and eigengap and small tail mass conditions; see the conditions of their Theorem~4.4. In contrast, our analysis applies to every instance ($\SigmaB,\wB^*$).

\citet{bing2021prediction} obtained coarser risk bounds in terms of $\|\SigmaB\|,\|\wB^*\|$ and studied data-adaptive threshold selection. \citet{teresa2022} proved upper and lower bounds for PCR in terms of the largest, smallest and threshold eigenvalues using general concentration arguments, and provided tighter bounds under spectral gap assumptions. \citet{lu2013regularization,dicker2017kernel,blanchard2018optimal} studied PCR under source and capacity conditions and established minimax rates, see also \citet{hall2007methodology,brunel2016nonasymptotic} for the functional perspective. Asymptotic analyses based on random matrix theory techniques were given in e.g., \citet{pmlr-v235-gedon24a,green2025high} for PCR and \citet{advani2020high} for gradient flow; see the works cited in their introduction for related approaches. Nonetheless, these results do not yield the fine-grained component-wise sample complexity bounds required to study dominance.

\paragraph{Comparison with \citet{li2026risk}.} In a concurrent work, \citet{li2026risk} studied the prediction risk of a class of spectral estimators and the corresponding Gaussian sequence estimators. They established conditions under which the two are asymptotically equivalent, that is,
\begin{align*}
\Ebb\excessRisk(\hat\wB_g) \quad \overset{?}{=\joinrel=} \quad (1+o_{\Pbb}(1)) \left(\big\|(\IB-g(\SigmaB))\wB^*\big\|_{\SigmaB}^2 + \frac{\sigma^2}{n} \sum_i g(\lam_i)^2\right).
\end{align*}
In particular, their Appendix~C utilizes Schur multipliers (double operator integrals) on Schatten classes to control matrix differences, which is closely related to our Schur multiplier approach developed in \Cref{sec:general-ub}. However, they apply this technique to directly control differences of $\SigmaB,\hat\SigmaB$, which additionally requires various restrictive conditions on the signal, spectrum, sample size and filter (their Assumption~3) to conclude even a constant-order version of the above equivalence. They also require the filter to satisfy a rescaled Lipschitz property, which includes GD and ridge but excludes PCR. 

In contrast, we apply this technique to $\AB,\AB_{\le k}$ to analyze the `leave-tail-out' perturbation in sample space, as well as to $\SigmaB_{\le k},\hat\SigmaB_{\le k}$ to separately control the head perturbation. In the process, we develop a decoupling argument via Schur factorization to handle the dependence between the perturbation and full Gram matrix. This allows us to study even discontinuous filters such as PCR by isolating the head, and to retain a precise characterization of finite-sample effects such as tail regularization for all problem instances.

We remark that contour integration can also be used to control noncommutative perturbations, which has been used for instance by \citet{li2026generalization} to study spectral regularization methods under power-law decay. However, this requires the filter to be analytic.

\section{Conclusion}

In this paper, we establish that suitably tuned PCR dominates all monotone filters, and therefore is instance-wise optimal for linear regression with Gaussian random design. In particular, GD is inadmissible, and PCR can achieve polynomially smaller risk than GD. We also prove tight risk bounds for GD, characterizing the effects of optimization error, implicit bias, and variance. These results demonstrate that instance-wise comparisons reveal substantial differences between methods that worst-case analyses do not capture. From an algorithmic perspective, this separation also highlights the variance benefit of discarding components with small eigenvalues, and suggests investigating whether similar control of weak directions can improve the statistical performance of algorithms for wider noisy estimation problems. Limitations of our work include the assumption of Gaussian design in the dominance results and general lower bounds; these may be removable with more careful analysis, as they are not required for the GD lower bound. Also, the variance bounds for PCR can likely be improved.

\section*{Acknowledgements}

JK, HF and JDL acknowledge support of NSF IIS 2107304, NSF CCF 2212262, NSF CAREER Award 2540142, NSF 2546544, NSF CCF 2019844 and ONR N00014-24-1-2639. This material is based upon work supported by the U.S. National Science Foundation under Cooperative Agreement No. 2433450.

\paragraph{Disclosure of AI usage.}
Parts of key technical ingredients were obtained from iterative conversations with GPT~5.5 and~5.6 Pro, including a proof of the PCR upper bound, an analytic version of the general upper bound, and a leave-one-out lemma used in the lower bound. 
AI was not used in the writing of the final draft. 
The authors rederived and verified
all proofs and take full responsibility for the content of this paper.

\bibliography{ref}

@article{aleksandrov2016operator,
  author  = {Aleksandrov, A. B. and Peller, V. V.},
  title   = {{Operator Lipschitz functions}},
  journal = {Russian Mathematical Surveys},
  volume  = {71},
  number  = {4},
  pages   = {605--702},
  year    = {2016}
}

@inproceedings{ali2019continuous,
  author    = {Ali, Alnur and Kolter, J. Zico and Tibshirani, Ryan J.},
  title     = {{A continuous-time view of early stopping for least squares regression}},
  booktitle = {Proceedings of the Twenty-Second International Conference on Artificial Intelligence and Statistics},
  editor    = {Chaudhuri, Kamalika and Sugiyama, Masashi},
  series    = {Proceedings of Machine Learning Research},
  volume    = {89},
  pages     = {1370--1378},
  publisher = {PMLR},
  year      = {2019}
}

@article{blackwell1947conditional,
  title={Conditional expectation and unbiased sequential estimation},
  author={Blackwell, David},
  journal={The Annals of Mathematical Statistics},
  pages={105--110},
  year={1947},
  publisher={JSTOR}
}

@article{rao1945information,
  title={Information and the accuracy attainable in the estimation of statistical parameters},
  author={Rao, C Radhakrishna and others},
  journal={Bull. Calcutta Math. Soc},
  volume={37},
  number={3},
  pages={81--91},
  year={1945}
}

@book{wainwright2019high,
  title={High-dimensional statistics: A non-asymptotic viewpoint},
  author={Wainwright, Martin J},
  volume={48},
  year={2019},
  publisher={Cambridge university press}
}

@article{bartlett2020benign,
  author  = {Bartlett, Peter L. and Long, Philip M. and Lugosi, G{\'a}bor and Tsigler, Alexander},
  title   = {{Benign overfitting in linear regression}},
  journal = {Proceedings of the National Academy of Sciences},
  volume  = {117},
  number  = {48},
  pages   = {30063--30070},
  year    = {2020}
}

@article{bauer2007regularization,
  author  = {Bauer, Frank and Pereverzev, Sergei and Rosasco, Lorenzo},
  title   = {{On regularization algorithms in learning theory}},
  journal = {Journal of Complexity},
  volume  = {23},
  number  = {1},
  pages   = {52--72},
  year    = {2007}
}

@book{berger1985statistical,
  author    = {Berger, James O.},
  title     = {{Statistical decision theory and Bayesian analysis}},
  edition   = {Second},
  series    = {Springer Series in Statistics},
  publisher = {Springer-Verlag},
  address   = {New York},
  year      = {1985}
}

@article{blanchard2018optimal,
  author  = {Blanchard, Gilles and M{\"u}cke, Nicole},
  title   = {{Optimal rates for regularization of statistical inverse learning problems}},
  journal = {Foundations of Computational Mathematics},
  volume  = {18},
  number  = {4},
  pages   = {971--1013},
  year    = {2018}
}

@article{calvetti1999iterative,
  author  = {Calvetti, Daniela and Reichel, Lothar and Zhang, Qin},
  title   = {{Iterative exponential filtering for large discrete ill-posed problems}},
  journal = {Numerische Mathematik},
  volume  = {83},
  number  = {4},
  pages   = {535--556},
  year    = {1999}
}

@article{calvetti2002lanczos,
  author  = {Calvetti, Daniela and Reichel, Lothar},
  title   = {{Lanczos-based exponential filtering for discrete ill-posed problems}},
  journal = {Numerical Algorithms},
  volume  = {29},
  number  = {1--3},
  pages   = {45--65},
  year    = {2002}
}

@incollection{carrasco2007linear,
  author    = {Carrasco, Marine and Florens, Jean-Pierre and Renault, Eric},
  title     = {{Linear inverse problems in structural econometrics: estimation based on spectral decomposition and regularization}},
  booktitle = {Handbook of Econometrics},
  editor    = {Heckman, James J. and Leamer, Edward E.},
  volume    = {6B},
  chapter   = {77},
  pages     = {5633--5751},
  publisher = {Elsevier},
  year      = {2007}
}

@article{devito2005learning,
  author  = {De Vito, Ernesto and Rosasco, Lorenzo and Caponnetto, Andrea and De Giovannini, Umberto and Odone, Francesca},
  title   = {{Learning from examples as an inverse problem}},
  journal = {Journal of Machine Learning Research},
  volume  = {6},
  number  = {30},
  pages   = {883--904},
  year    = {2005}
}

@techreport{devito2005spectral,
  author      = {De Vito, Ernesto and Rosasco, Lorenzo and Verri, Alessandro},
  title       = {{Spectral methods for regularization in learning theory}},
  institution = {Dipartimento di Informatica e Scienze dell'Informazione (DISI), Universit{\`a} di Genova},
  number      = {DISI-TR-05-18},
  address     = {Genova, Italy},
  year        = {2005}
}

@article{dhillon2013risk,
  author  = {Dhillon, Paramveer S. and Foster, Dean P. and Kakade, Sham M. and Ungar, Lyle H.},
  title   = {{A risk comparison of ordinary least squares vs ridge regression}},
  journal = {Journal of Machine Learning Research},
  volume  = {14},
  number  = {46},
  pages   = {1505--1511},
  year    = {2013}
}

@article{dicker2017kernel,
  author  = {Dicker, Lee H. and Foster, Dean P. and Hsu, Daniel},
  title   = {{Kernel ridge vs. principal component regression: minimax bounds and the qualification of regularization operators}},
  journal = {Electronic Journal of Statistics},
  volume  = {11},
  number  = {1},
  pages   = {1022--1047},
  year    = {2017}
}

@book{engl1996regularization,
  author    = {Engl, Heinz W. and Hanke, Martin and Neubauer, Andreas},
  title     = {{Regularization of inverse problems}},
  series    = {Mathematics and Its Applications},
  volume    = {375},
  publisher = {Kluwer Academic Publishers},
  address   = {Dordrecht},
  year      = {1996}
}

@article{Golubev_2010,
  author  = {Golubev, Yuri},
  title   = {{On universal oracle inequalities related to high-dimensional linear models}},
  journal = {The Annals of Statistics},
  volume  = {38},
  number  = {5},
  pages   = {2751--2780},
  year    = {2010}
}

@article{green2025high,
  author  = {Green, Alden and Romanov, Elad},
  title   = {{The high-dimensional asymptotics of principal component regression}},
  journal = {The Annals of Statistics},
  volume  = {53},
  number  = {4},
  pages   = {1697--1727},
  year    = {2025}
}

@article{hucker2023note,
  author  = {Hucker, Laura and Wahl, Martin},
  title   = {{A note on the prediction error of principal component regression in high dimensions}},
  journal = {Theory of Probability and Mathematical Statistics},
  volume  = {109},
  pages   = {37--53},
  year    = {2023}
}

@inproceedings{james1961estimation,
  author    = {James, W. and Stein, Charles},
  title     = {{Estimation with quadratic loss}},
  booktitle = {Proceedings of the Fourth Berkeley Symposium on Mathematical Statistics and Probability},
  editor    = {Neyman, Jerzy},
  volume    = {1},
  pages     = {361--379},
  publisher = {University of California Press},
  year      = {1961}
}

@article{king1979approximation,
  author  = {King, J. Thomas and Chillingworth, D.},
  title   = {{Approximation of generalized inverses by iterated regularization}},
  journal = {Numerical Functional Analysis and Optimization},
  volume  = {1},
  number  = {5},
  pages   = {499--513},
  year    = {1979}
}

@article{kneip1994ordered,
  author  = {Kneip, Alois},
  title   = {{Ordered linear smoothers}},
  journal = {The Annals of Statistics},
  volume  = {22},
  number  = {2},
  pages   = {835--866},
  year    = {1994}
}

@article{koltchinskii2017concentration,
  author  = {Koltchinskii, Vladimir and Lounici, Karim},
  title   = {{Concentration inequalities and moment bounds for sample covariance operators}},
  journal = {Bernoulli},
  volume  = {23},
  number  = {1},
  pages   = {110--133},
  year    = {2017}
}

@misc{li2026generalization,
  author       = {Li, Yicheng and Gan, Weiye and Shi, Zuoqiang and Lin, Qian},
  title        = {{Generalization error curves for analytic spectral algorithms under power-law decay}},
  howpublished = {arXiv preprint arXiv:2401.01599v4},
  year         = {2026}
}

@article{teresa2022,
  author  = {Huang, Ningyuan (Teresa) and Hogg, David W. and Villar, Soledad},
  title   = {{Dimensionality reduction, regularization, and generalization in overparameterized regressions}},
  journal = {SIAM Journal on Mathematics of Data Science},
  volume  = {4},
  number  = {1},
  pages   = {126--152},
  year    = {2022}
}

@book{triebel1983theory,
  author    = {Triebel, Hans},
  title     = {{Theory of function spaces}},
  series    = {Monographs in Mathematics},
  volume    = {78},
  publisher = {Birkh{\"a}user},
  address   = {Basel},
  year      = {1983}
}

@article{tsigler2023benign,
  author  = {Tsigler, Alexander and Bartlett, Peter L.},
  title   = {{Benign overfitting in ridge regression}},
  journal = {Journal of Machine Learning Research},
  volume  = {24},
  number  = {123},
  pages   = {1--76},
  year    = {2023}
}

@book{Vershynin2026high,
  author    = {Vershynin, Roman},
  title     = {{High-dimensional probability: an introduction with applications in data science}},
  edition   = {Second},
  series    = {Cambridge Series in Statistical and Probabilistic Mathematics},
  publisher = {Cambridge University Press},
  address   = {Cambridge},
  year      = {2026}
}

@book{wald1950statistical,
  author    = {Wald, Abraham},
  title     = {{Statistical decision functions}},
  publisher = {John Wiley \& Sons},
  address   = {New York},
  year      = {1950}
}

@inproceedings{wu2022last,
  author    = {Wu, Jingfeng and Zou, Difan and Braverman, Vladimir and Gu, Quanquan and Kakade, Sham M.},
  title     = {{Last iterate risk bounds of SGD with decaying stepsize for overparameterized linear regression}},
  booktitle = {Proceedings of the 39th International Conference on Machine Learning},
  year      = {2022}
}

@inproceedings{wu2022power,
  author    = {Wu, Jingfeng and Zou, Difan and Braverman, Vladimir
               and Gu, Quanquan and Kakade, Sham M.},
  title     = {{The power and limitation of pretraining-finetuning for linear regression under covariate shift}},
  booktitle = {Advances in Neural Information Processing Systems},
  year      = {2022}
}

@inproceedings{wu2026risk,
  title = {{Risk comparisons in linear regression: implicit regularization dominates explicit regularization}},
  author =       {Wu, Jingfeng and Bartlett, Peter L. and Kakade, Sham M. and Lee, Jason D. and Yu, Bin},
  booktitle = 	 {Proceedings of Thirty Ninth Conference on Learning Theory},
  year = 	 {2026}
}

@article{zou2023benign,
  author  = {Zou, Difan and Wu, Jingfeng and Braverman, Vladimir and Gu, Quanquan and Kakade, Sham M.},
  title   = {{Benign overfitting of constant-stepsize SGD for linear regression}},
  journal = {Journal of Machine Learning Research},
  volume  = {24},
  number  = {326},
  pages   = {1--58},
  year    = {2023}
}

@inproceedings{xu2023towards,
  author    = {Xu, Jing and Teng, Jiaye and Yuan, Yang and Yao, Andrew Chi-Chih},
  title     = {{Towards data-algorithm dependent generalization: a case study on overparameterized linear regression}},
  booktitle = {Advances in Neural Information Processing Systems},
  year      = {2023}
}

@inproceedings{kuzborskij2021role,
  author    = {Kuzborskij, Ilja and Szepesv{\'a}ri, Csaba and Rivasplata, Omar
               and Rannen-Triki, Amal and Pascanu, Razvan},
  title     = {{On the role of optimization in double descent: a least squares study}},
  booktitle = {Advances in Neural Information Processing Systems},
  year      = {2021}
}

@article{raskutti2014early,
  author  = {Raskutti, Garvesh and Wainwright, Martin J. and Yu, Bin},
  title   = {{Early stopping and non-parametric regression: an optimal data-dependent stopping rule}},
  journal = {Journal of Machine Learning Research},
  volume  = {15},
  number  = {11},
  pages   = {335--366},
  year    = {2014}
}

@inproceedings{zou2022risk,
  author    = {Zou, Difan and Wu, Jingfeng and Braverman, Vladimir
               and Gu, Quanquan and Kakade, Sham M.},
  title     = {{Risk bounds of multi-pass SGD for least squares in the interpolation regime}},
  booktitle = {Advances in Neural Information Processing Systems},
  year      = {2022}
}

@inproceedings{lin2025improved,
  author    = {Lin, Licong and Wu, Jingfeng and Bartlett, Peter L.},
  title     = {{Improved scaling laws in linear regression via data reuse}},
  booktitle = {Advances in Neural Information Processing Systems},
  year      = {2025}
}

@article{yao2007early,
  author  = {Yao, Yuan and Rosasco, Lorenzo and Caponnetto, Andrea},
  title   = {{On early stopping in gradient descent learning}},
  journal = {Constructive Approximation},
  volume  = {26},
  number  = {2},
  pages   = {289--315},
  year    = {2007}
}

@article{lin2017optimal,
  author  = {Lin, Junhong and Rosasco, Lorenzo},
  title   = {{Optimal rates for multi-pass stochastic gradient methods}},
  journal = {Journal of Machine Learning Research},
  volume  = {18},
  number  = {97},
  pages   = {1--47},
  year    = {2017}
}

@article{bing2021prediction,
  author  = {Bing, Xin and Bunea, Florentina and Strimas-Mackey, Seth
             and Wegkamp, Marten},
  title   = {{Prediction under latent factor regression: adaptive PCR, interpolating predictors and beyond}},
  journal = {Journal of Machine Learning Research},
  volume  = {22},
  number  = {177},
  pages   = {1--50},
  year    = {2021}
}

@article{zhao2026defloored,
  author       = {Zhao, Peng},
  title        = {{De-floored principal component regression: when rank selection alone is insufficient for prediction}},
  journal = {arXiv preprint arXiv:2607.16638},
  year         = {2026}
}

@book{lu2013regularization,
  author    = {Lu, Shuai and Pereverzev, Sergei V.},
  title     = {{Regularization theory for ill-posed problems: selected topics}},
  series    = {Inverse and Ill-Posed Problems Series},
  volume    = {58},
  publisher = {De Gruyter},
  address   = {Berlin and Boston},
  year      = {2013}
}

@article{brunel2016nonasymptotic,
  author  = {Brunel, {\'E}lodie and Mas, Andr{\'e} and Roche, Angelina},
  title   = {{Non-asymptotic adaptive prediction in functional linear models}},
  journal = {Journal of Multivariate Analysis},
  volume  = {143},
  pages   = {208--232},
  year    = {2016}
}

@article{hall2007methodology,
  author  = {Hall, Peter and Horowitz, Joel L.},
  title   = {{Methodology and convergence rates for functional linear regression}},
  journal = {The Annals of Statistics},
  volume  = {35},
  number  = {1},
  pages   = {70--91},
  year    = {2007}
}

@inproceedings{pmlr-v235-gedon24a,
  author    = {Gedon, Daniel and Ribeiro, Antonio H. and Sch{\"o}n, Thomas B.},
  title     = {{No double descent in principal component regression: a high-dimensional analysis}},
  booktitle = {Proceedings of the 41st International Conference on Machine Learning},
  year      = {2024}
}

@article{advani2020high,
  author  = {Advani, Madhu S. and Saxe, Andrew M. and Sompolinsky, Haim},
  title   = {{High-dimensional dynamics of generalization error in neural networks}},
  journal = {Neural Networks},
  volume  = {132},
  pages   = {428--446},
  year    = {2020}
}

@article{riley1955solving,
  author  = {Riley, James D.},
  title   = {{Solving systems of linear equations with a positive definite, symmetric, but possibly ill-conditioned matrix}},
  journal = {Mathematical Tables and Other Aids to Computation},
  volume  = {9},
  number  = {51},
  pages   = {96--101},
  year    = {1955}
}

@article{gerfo2008spectral,
author = {Gerfo, L. and Rosasco, Lorenzo and Odone, Francesca and De Vito, Ernesto and Verri, Alessandro},
year = {2008},
month = {07},
pages = {1873--1897},
title = {{Spectral algorithms for supervised learning}},
volume = {20},
journal = {Neural Computation}
}

@article{li2026risk,
      title={{Risk equivalence between RKHS regression and sequence models for Lipschitz spectral algorithms}}, 
      author={Yicheng Li and Yuqian Cheng and Zhuo Chen and Qian Lin},
      year={2026},
      journal = {arXiv preprint arXiv:2609.08817} 
}

\appendix

\clearpage
\section{Notation and basic concentration lemmas}\label{sec:proof:preliminaries}
Without loss of generality, assume $\SigmaB$ is diagonal throughout the appendices. For an index $k$, we define the following notation, following or adapted from the convention from \citet{wu2026risk,tsigler2023benign,zou2023benign}:
\begin{align*}
    \SigmaB = \begin{bmatrix}
        \SigmaB_{\le k} & \\
        & \SigmaB_{>k}
    \end{bmatrix},\quad 
    \wB^* = \begin{bmatrix}
        \wB^*_{\le k} \\
        \wB^*_{>k}
    \end{bmatrix},\quad 
    \XB = \begin{bmatrix}
        \XB_{\le k} & \XB_{>k}
    \end{bmatrix}, \\
    \AB := \XB\XB^\top,\quad 
    \AB_{\le k} := \XB_{\le k}\XB_{\le k}^\top, \quad \AB_{>k} := \XB_{>k}\XB_{>k}^\top.
\end{align*}
The empirical covariance and its head block are defined as 
\begin{align*}
    \hat \SigmaB := \frac{1}{n}\XB^\top \XB,\quad \hat\SigmaB_{\le k} := \frac{1}{n}\XB_{\le k}^\top \XB_{\le k}.
\end{align*}
Additionally, we use the notation
\begin{align}\label{eq:tilde-alpha}
\tilde\alpha_k &:= \bigg( \frac{\sum_{i>k}\lambda_i}{n} \bigg)^2 + \frac{\sum_{i>k}\lambda_i^2}{n} + \frac{\lam_{k+1}^2\log(1/\delta)}{n}.
\end{align}

\begin{lemma}[typical events]\label{lemma:basic:concentration}
Under \Cref{assum:item:x}, there are constants $c_0, c_1, c_3, c_4>1$ that depend on $c_x$ for which the following holds.
With probability at least $1-\exp(-n/c_0)$, we have
\begin{gather*}
 \frac{1}{2} \SigmaB_{\le k} \preceq \hat\SigmaB_{\le k} \preceq 2 \SigmaB_{\le k} \quad \text{for $k\le n/c_3$},\\
    \big\| \XB_{>k} \wB^*_{>k}\big\|^2 \le c_1 n \big\|\wB^*_{>k}\big\|^2_{\SigmaB_{>k}}, \\
\frac{2}{3}\sum_{i>k}\lambda_i  - c_1 n\lambda_{k+1} \le\lambda_{\min}(\AB_{>k}) \le \lambda_{\max}(\AB_{>k}) \le \frac{4}{3}\sum_{i>k}\lambda_i  + c_1 n\lambda_{k+1},\\
\|\hat\SigmaB\|\le   \frac{c_4}{2}\bigg(\lambda_1 + \frac{\tr(\SigmaB)}{n}\bigg).
\end{gather*}
In particular, the first claim can be tightened to: with probability at least $1-\delta$,
\begin{align*}
   \big\|\SigmaB_{\le k}^{-1/2}(\hat\SigmaB_{\le k}-\SigmaB_{\le k})\SigmaB^{-1/2}_{\le k} \big\| \le c_1 \left(\sqrt{\frac{k+\log(1/\delta)}{n}} + \frac{k+\log(1/\delta)}{n}\right).
\end{align*}
\end{lemma}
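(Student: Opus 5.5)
We prove the four claims separately with standard subgaussian tools, working in the eigenbasis of $\SigmaB$ (diagonal WLOG). Write $\XB = \ZB\SigmaB^{1/2}$, where $\ZB$ has independent, centered, $c_x^2$-subgaussian entries with unit variance. The probability budget $\exp(-n/c_0)$ is shared by a union bound over the events below. For the claims stated for all $k$, it suffices to handle each $k$ and union-bound over $k\le n$. For $k>n$, the tail claims follow from the $k=n$ case by monotonicity in $k$, with adjusted constants; the lower bound on $\lambda_{\min}$ is trivial once it is negative.

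\textbf{Head covariance.} The matrix $\SigmaB_{\le k}^{-1/2}\hat\SigmaB_{\le k}\SigmaB_{\le k}^{-1/2} = \frac1n \ZB_{\le k}^\top\ZB_{\le k}$ is the sample covariance of $n$ isotropic subgaussian vectors in $\Rbb^k$. The standard net argument (e.g.\ Vershynin, Thm.~4.6.1) gives, with probability $1-\delta$,
\begin{align*}
\Big\|\tfrac1n\ZB_{\le k}^\top\ZB_{\le k}-\IB\Big\|\le c_1\Big(\sqrt{\tfrac{k+\log(1/\delta)}n}+\tfrac{k+\log(1/\delta)}n\Big).
\end{align*}
This is the tightened claim. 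Taking $\delta=e^{-n/c_0}$ and $k\le n/c_3$ makes the right-hand side at most $1/2$, which yields $\tfrac12\SigmaB_{\le k}\preceq\hat\SigmaB_{\le k}\preceq 2\SigmaB_{\le k}$.

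\textbf{Tail signal.} The entries of $\XB_{>k}\wB^*_{>k}$ are i.i.d.\ and subgaussian with variance $v:=\|\wB^*_{>k}\|^2_{\SigmaB_{>k}}$, because they are linear combinations of independent subgaussian coordinates. Bernstein's inequality for the sum of squares of subexponential variables gives $\|\XB_{>k}\wB^*_{>k}\|^2\le c_1 n v$ with probability $1-e^{-n}$.

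\textbf{Tail Gram matrix.} We have $\AB_{>k}=\sum_{i>k}\lambda_i \zB_i\zB_i^\top$, where $\zB_i\in\Rbb^n$ has independent coordinates. For a fixed unit vector $\vB\in\Rbb^n$, the quadratic form $\vB^\top\AB_{>k}\vB=\sum_{i>k}\lambda_i\langle\zB_i,\vB\rangle^2$ is a weighted sum of independent squared subgaussians with mean $\sum_{i>k}\lambda_i$. Bernstein's inequality bounds its deviation by $c\big(\sqrt{t\sum_{i>k}\lambda_i^2}+t\lambda_{k+1}\big)$. Take $t\asymp n$ and union-bound over a $1/4$-net of the sphere of size $9^n$. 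Then use $\sqrt{n\sum_{i>k}\lambda_i^2}\le\sqrt{n\lambda_{k+1}\sum_{i>k}\lambda_i}\le \tfrac13\sum_{i>k}\lambda_i+c n\lambda_{k+1}$ (AM--GM). This gives the stated two-sided bound on the net, and the usual net-to-sphere step extends it to the whole sphere. The main obstacle is exactly this step. The net approximation error for a quadratic form has to be absorbed while keeping the leading constants $2/3$ and $4/3$. We handle it as in Bartlett et al.\ (2020, Lemma~10): first bound $\|\AB_{>k}\|$ crudely, then write $\vB^\top\AB\vB-\vB_0^\top\AB\vB_0$ in terms of $\|\AB\|\,\|\vB-\vB_0\|$. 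If needed, we use a finer net, whose $\log$-cardinality is still $O(n)$, and shrink the slack constants accordingly.

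\textbf{Operator norm of $\hat\SigmaB$.} Since $\|\hat\SigmaB\|=\|\AB\|/n$, the upper bound above with $k=0$ gives $\|\AB\|\le\frac43\tr(\SigmaB)+c_1n\lambda_1$. Dividing by $n$ yields the claim with $c_4$ chosen large enough.
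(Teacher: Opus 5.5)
Your fixed-$k$ argument is correct, and $k$ fixed and deterministic is how the lemma is used throughout the paper. It is a self-contained version of what the paper does by citation. The paper takes the first two claims from \citet[Lemma B.4]{wu2026risk} and the tail Gram bound from \citet{bartlett2020benign}. It takes both the bound on $\|\hat\SigmaB\|$ and the refined head bound from the Koltchinskii--Lounici inequality $\|\hat\SigmaB-\SigmaB\|\lesssim\|\SigmaB\|(\sqrt{r/n}+r/n+1)$, applied to $\xB$ and to the whitened head respectively.

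Your route differs in two places. You get $\|\hat\SigmaB\|$ as the $k=0$ case of the tail Gram bound, via $\|\hat\SigmaB\|=\lambda_{\max}(\AB)/n$. You get the refined head bound from the isotropic subgaussian covariance bound in dimension $k$. Both are valid, and they keep everything inside one Bernstein-plus-nets toolkit with explicit $c_x$-dependence; the paper's version is shorter.

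The obstacle you flag in the tail Gram step disappears if you run the net argument on the centered matrix $\AB_{>k}-s\IB$, where $s:=\sum_{i>k}\lambda_i$. For a symmetric $M$ and a $1/4$-net $\mathcal N$ of the sphere, $\|M\|\le 2\max_{\vB\in\mathcal N}|\vB^\top M\vB|$. So the factor $2$ only multiplies the Bernstein deviation, which at $t\asymp n$ is at most $\frac16 s+cn\lambda_{k+1}$ after AM--GM. This gives the constants $2/3$ and $4/3$ directly, with no finer net and no preliminary crude bound on $\|\AB_{>k}\|$.

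Drop the reduction of $k>n$ to $k=n$ ``by monotonicity''; it is false.
\begin{itemize}
\item The comparison $\lambda_{\max}(\AB_{>k})\le\lambda_{\max}(\AB_{>n})$ only gives a bound involving $n\lambda_{n+1}$, which can vastly exceed $s+n\lambda_{k+1}$.
\item $\lambda_{\min}(\AB_{>k})$ decreases in $k$, so lower bounds do not transfer upward in $k$. Also, $\frac23 s-c_1n\lambda_{k+1}$ need not be negative for $k>n$.
\item $\|\XB_{>k}\wB^*_{>k}\|^2$ has no monotonicity in $k$ at all.
\end{itemize}
The reduction is also unnecessary, since none of your fixed-$k$ arguments uses $k\le n$.

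Uniformity over all $k$ is in fact impossible for the second claim. Take Gaussian design and $\lambda_i\wB_i^{*2}=4^{-i}$ with $\wB_i^*\ge0$. Then the normalized vectors $\XB_{>k}\wB^*_{>k}/\|\wB^*_{>k}\|_{\SigmaB_{>k}}=\sqrt3\sum_{m\ge1}2^{-m}\zB_{k+m}$ form a stationary ergodic sequence of $\Ncal(0,\IB_n)$ vectors. Their squared norms are therefore unbounded in $k$ almost surely, so the per-$k$ reading is the only correct one.
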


\begin{proof}[Proof of \Cref{lemma:basic:concentration}]
These are applications of standard matrix concentration bounds \citep[see, e.g.,][]{Vershynin2026high}: the first two claims appear in \citet[Lemma B.4]{wu2026risk} and the third claim appears in \citet[Lemma B.9]{bartlett2020benign}. 
For the last two claims, \citet{koltchinskii2017concentration} showed that with probability at least $1-\exp(-n/c_0)$, we have 
\begin{align*}
    \|\hat\SigmaB - \SigmaB\|\le c \|\SigmaB\|\bigg( \sqrt{\frac{\tr(\SigmaB)}{\|\SigmaB\|n}} + \frac{\tr(\SigmaB)}{\|\SigmaB\| n } + 1 \bigg) \le c' \bigg(\|\SigmaB\| + \frac{\tr(\SigmaB)}{ n }\bigg)
\end{align*}
for some constants $c_0, c, c'\ge 1$;
then the fourth claim follows, and the last claim follows by applying the above to $\SigmaB^{-1/2}\xB$.
\end{proof}

\begin{lemma}[tail regularization]\label{lemma:basic:tail-regularization}
Under \Cref{assum:item:x}, there are constants $c_0, c_2 >1$ that only depend on $c_x^2$ for which the following holds. 
For each $\lambda\ge 0$ and $k$ such that 
\begin{align*}
\lambda + \frac{ \sum_{i>k}\lambda_i}{n} \ge c_2 \lambda_{k+1},
\end{align*}
with probability at least $1-\exp(- n / c_0)$ it holds that
\begin{align*}
\frac{1}{2}\bigg(\lambda + \frac{\sum_{i>k}\lambda_i}{n}\bigg) \IB \preceq \frac{1}{n}\XB_{>k}\XB_{>k}^\top + \lam \IB \preceq 2 \bigg(\lambda + \frac{\sum_{i>k}\lambda_i}{n}\bigg) \IB.
\end{align*}
\end{lemma}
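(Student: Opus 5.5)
We need to prove the tail regularization lemma: for $\lambda\ge 0$ and $k$ with $\lambda + n^{-1}\sum_{i>k}\lambda_i \ge c_2\lambda_{k+1}$, with high probability the matrix $\frac1n \XB_{>k}\XB_{>k}^\top + \lambda\IB$ is sandwiched between $\frac12$ and $2$ times $(\lambda + n^{-1}\sum_{i>k}\lambda_i)\IB$.

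The plan is to invoke the third claim of \Cref{lemma:basic:concentration}. With probability at least $1-\exp(-n/c_0)$ it gives
\begin{align*}
\frac23\sum_{i>k}\lambda_i - c_1 n\lambda_{k+1} \le \lambda_{\min}(\AB_{>k}) \le \lambda_{\max}(\AB_{>k}) \le \frac43\sum_{i>k}\lambda_i + c_1 n\lambda_{k+1}.
\end{align*}
This bound holds for every $k$, with constants depending only on $c_x$. Dividing by $n$ and adding $\lambda$, and writing $s := n^{-1}\sum_{i>k}\lambda_i$ and $L := \lambda + s$, the eigenvalues of $\frac1n\AB_{>k}+\lambda\IB$ lie in
\begin{align*}
\big[\lambda + \tfrac23 s - c_1\lambda_{k+1},\ \lambda + \tfrac43 s + c_1\lambda_{k+1}\big].
\end{align*}

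Next we use the hypothesis $\lambda_{k+1}\le L/c_2$. For the upper end,
\begin{align*}
\lambda + \tfrac43 s + c_1\lambda_{k+1} \le \tfrac43 L + \tfrac{c_1}{c_2}L \le 2L
\end{align*}
as soon as $c_2\ge 2c_1$. For the lower end,
\begin{align*}
\lambda + \tfrac23 s - c_1\lambda_{k+1} \ge \tfrac23 L - \tfrac{c_1}{c_2}L \ge \tfrac12 L
\end{align*}
as soon as $c_2\ge 6c_1$. Here we used $\lambda\ge \frac23\lambda$ and $\lambda\le\frac43\lambda$, which is valid because $\lambda\ge 0$. So we choose $c_2 := 6c_1$, which depends only on $c_x$.

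There is no genuine obstacle, since the whole argument reduces to the two-sided eigenvalue bound on $\AB_{>k}$. The one point needing care is that this bound must hold for the given fixed $k$ without any restriction on $k$ relative to $n$. The proof in \citet[Lemma B.9]{bartlett2020benign} indeed does not require $k\le n/c_3$, so the result applies directly.
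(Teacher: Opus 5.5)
Your proof is correct and takes essentially the paper's route. The paper simply cites the tail-eigenvalue lemmas of Bartlett et al.\ and Tsigler--Bartlett and remarks that enlarging $c_2$ makes the constant factors $2$ and $1/2$; you carry this out explicitly by combining the two-sided eigenvalue bound on $\AB_{>k}$ (the third claim of \Cref{lemma:basic:concentration}, which holds for every $k$) with the hypothesis $\lambda_{k+1}\le L/c_2$ and the choice $c_2=6c_1$.
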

\begin{proof}[Proof of \Cref{lemma:basic:tail-regularization}]
This is \citet[Lemma 9]{bartlett2020benign} or \citet[Lemma 3]{tsigler2023benign}; from their proof, it is clear that, by choosing a large enough $c_2$, the constant factors in the upper and lower bounds can be made $2$ and $1/2$, respectively.
\end{proof}

\begin{lemma}[tail deviation]\label{lemma:basic:tail}
Under \Cref{assum:item:x}, there are constants $c_0, c_1 >1$ that only depend on $c_x$ for which the following holds.
Let $\uB$ be a fixed unit vector of an appropriate dimension.
\begin{itemize}
    \item with probability at least $1 - \delta$ for any $\delta>0$, it holds that
\begin{gather*}
    \big\|\SigmaB_{>k}^{1/2} \XB_{>k}^\top \uB\big\| \le c_1 \sqrt{ \lambda_{k+1}^2\log(1/\delta) + \sum_{i>k} \lambda_i^2}.
\end{gather*}
\item The $L^p$-norm of $\AB_{>k} \uB$ is bounded by
\begin{align}\label{eq:lp}
    \|\AB_{>k} \uB \|_{L^p} := \big(\Ebb\|\AB_{>k}\uB\|^p\big)^{1/p}  \le c_1 \Bigg( p \lambda_{k+1} + \sum_{i>k}\lambda_i + \sqrt{ \max\{n, p\} \bigg(p\lambda_{k+1}^2 + \sum_{i>k}\lambda_i^2 \bigg)}\Bigg),\quad p\ge 4.
\end{align}
\end{itemize}
\end{lemma}
\begin{proof}[Proof of \Cref{lemma:basic:tail}]
The first claim is a standard application of the Bernstein inequality \citep[see, e.g.,][proof of Lemma B.9]{bartlett2020benign}; the second claim is from \citet[Lemma E.2]{wu2026risk}.
\end{proof}

\section{GD Risk Bounds}\label{sec:proof:gd}

In this section, we prove the following upper and lower bounds on GD risk under \Cref{assum:lbb}. The bounds match except for the additional concentration coefficient in the tail regularization term, which is bounded as $\bigO((\eta t)^{-2})$. This term is included in the ``effective variance'' characterized in \citet{wu2026risk}, and can be absorbed in the variance when SNR is bounded (\Cref{thm:gd:finite-snr}).

\begin{theorem}[GD risk]\label{thm:gd:bound}
Under \Cref{assum:lbb}, there exist constants $c_0, \ldots,c_4 >1$ that only depend on $c_x,c_y$ for which the following holds. Let $\hat\wB^{\gd}_t$ be given by \Cref{eq:gd} with stepsize $\eta\le(c_4(\lam_1+\tr(\SigmaB)/n))^{-1}$ and steps $t\ge 1$. Let
\begin{align*}
    k^* := \min\left\{k:  \frac{1}{\eta t} + \frac{\sum_{i>k}\lambda_i}{n} \ge c_2 \lambda_{k+1} \right\},\quad \tilde\lambda:= \frac{1}{\eta t} + \frac{\sum_{i>k^*}\lambda_i}{n},\quad  D := k^* + \frac{1}{\tilde\lambda^2} \sum_{i>k^*}\lambda_i^2.
\end{align*}
\begin{itemize}
\item \textbf{Upper bound.}
If additionally $k^* \le n/c_3$, 
then with probability at least $1-\delta-\exp(-n/c_0)$ over sampling of~$\XB$, it holds that
\begin{align*}
&\frac{1}{c_1} \Ebb[ \excessRisk(\hat\wB^{\gd}_t) | \XB ]\le \big\|(\IB- \eta \SigmaB)^t \wB^*\big\|^2_{\SigmaB_{0:k^*}} + \left( \alpha_{k^*} + \frac{1}{(\eta t)^2}\frac{k^* + \log(1/\delta)}{n} \right) \|\wB^*\|^2_{\SigmaB_{0:k^*}^{-1}}  + \|\wB^*\|^2_{\SigmaB_{k^*:\infty}} + \sigma^2 \frac{D}{n}.
\end{align*}

\item \textbf{Lower bound.}
In expectation,
\begin{align*}
   c_1  \Ebb \excessRisk(\hat\wB^{\gd}_t) \ge \big\|(\IB- \eta \SigmaB)^t \wB^*\big\|^2_{\SigmaB_{0:k^*}} + \alpha_{k^*} \|\wB^*\|^2_{\SigmaB_{0:k^*}^{-1}}  + \|\wB^*\|^2_{\SigmaB_{k^*:\infty}} + \sigma^2 \min\bigg\{\frac{D }{n}\, , 1 \bigg\}.
\end{align*}
\end{itemize}
\end{theorem}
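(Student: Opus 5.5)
The plan is to prove the upper bound as a direct specialization of \Cref{thm:master-ub} with the Schur norm estimates for GD from \Cref{lemma:schur:examples}, and the lower bound from the general lower bound for monotone filters (\Cref{cor:same-index} and its parent \Cref{thm:master-lb}). The lower bound needs some care because \Cref{thm:gd:bound} only assumes \Cref{assum:lbb} and not Gaussian design, so for it I would combine the variance lower bound of \citet{wu2026risk} with a direct leave-one-out bias argument.

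\textbf{Upper bound.} Take the reference ridge parameter $\lambda=(\eta t)^{-1}$ and $I=H=[0,1/\eta]$. Then \Cref{lemma:schur:examples} gives $\|r(I,I)\|_{\Sfrak}\le 33$, which controls all three norms in \eqref{eq:schur-norm}. The spectral condition \eqref{eq:spec-cond} holds on the typical events of \Cref{lemma:basic:concentration}, for three reasons:
\begin{itemize}
\item $\|\hat\SigmaB\|\le \tfrac{c_4}{2}(\lambda_1+\tr(\SigmaB)/n)\le 1/\eta$ by the stepsize assumption;
\item $\hat\SigmaB_{\le k}\preceq 2\SigmaB_{\le k}$;
\item $\lambda_1\le 1/\eta$.
\end{itemize}
The defining inequality of $k^*$ is exactly the condition on $k$ in \Cref{thm:master-ub} with this $\lambda$. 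The bias bound of \Cref{thm:master-ub} then reads
\[
\big(\alpha_{k^*}+\lambda_{k^*+1}^2\log(1/\delta)/n+(\eta t)^{-2}(k^*+\log(1/\delta))/n\big)\|\wB^*\|^2_{\SigmaB_{0:k^*}^{-1}}+\|(\IB-\eta\SigmaB)^t\wB^*\|^2_{\SigmaB_{0:k^*}}+\|\wB^*\|^2_{\SigmaB_{k^*:\infty}},
\]
since $1-g(z)=(1-\eta z)^t$. The extra term $\lambda_{k^*+1}^2\log(1/\delta)/n$ is absorbed into $(\eta t)^{-2}\log(1/\delta)/n+\alpha_{k^*}$, because $c_2\lambda_{k^*+1}\le \tilde\lambda$ and $\tilde\lambda^2\lesssim(\eta t)^{-2}+\alpha_{k^*}$. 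For the variance, \Cref{thm:master-ub} bounds it by a constant times the ridge variance at $\lambda=(\eta t)^{-1}$. The known ridge variance bound \citep{tsigler2023benign} gives $\sigma^2 D/n$ with the same $k^*,\tilde\lambda$.

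\textbf{Lower bound.} I would split the GD error into its four terms and lower bound each separately, using $\Ebb\excessRisk\ge\max$ of the pieces. The variance term $\sigma^2\min\{D/n,1\}$ I would take from \citet{wu2026risk}, or derive by comparing with ridge: the filter $1-(1-\eta z)^t\ge c\,z/(z+(\eta t)^{-1})$ on $[0,1/\eta]$, and trace monotonicity applies. The optimization term follows from a Jensen-type argument on the head coordinates. Conditioning on the tail and using symmetry (\Cref{assum:item:symmetry}), the expected head bias in coordinate $i$ is at least $\lambda_i(1-\Ebb[g(\hat\SigmaB)]_{ii})^2w_i^{*2}$, and one needs $\Ebb g(\hat\SigmaB)_{ii}\lesssim g(c\lambda_i)$ in a suitable sense. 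This is where concavity of $z\mapsto1-(1-\eta z)^t$ on $[0,1/\eta]$ lets the $1.1$ shift disappear, as in the concave case of \Cref{cor:same-index}. The tail term $\|\wB^*\|^2_{\SigmaB_{k^*:\infty}}$ and the tail-regularization term $\alpha_{k^*}\|\wB^*\|^2_{\SigmaB_{0:k^*}^{-1}}$ come from the leave-tail-out structure. In sample space, the tail Gram matrix acts as an effective ridge $\tilde\lambda$ by \Cref{lemma:basic:tail-regularization}. The $\alpha_{k^*}$ factor arises as the second moment of $\AB_{>k}\uB$, via the matching lower bound to \eqref{eq:lp}.

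The main obstacle is the bias lower bound outside Gaussian design. There I would follow the sign-symmetrization argument of \citet{wu2026risk}, Theorem~4.1, replacing their resolvent identity with the divided-difference representation (\Cref{lemma:divided-difference}) of $(\IB-\eta\hat\SigmaB)^t$. Using $k^*$ instead of their OLS index $\ell^*$ requires checking that the implicit ridge $\tilde\lambda$ rather than $n^{-1}\sum_{i>\ell^*}\lambda_i$ appears. Failing that, I would fall back on the Gaussian corollary and note that symmetric subgaussian design suffices for the leave-one-out steps.
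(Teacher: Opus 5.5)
Your upper bound is the paper's argument. You apply \Cref{thm:master-ub} with $\lambda=(\eta t)^{-1}$ and $I=H=[0,1/\eta]$, use the GD Schur bound of \Cref{lemma:schur:examples}, absorb $\lambda_{k^*+1}^2\log(1/\delta)/n$ via $c_2\lambda_{k^*+1}\le\tilde\lambda$ (assuming $\log(1/\delta)\le n$), and invoke the ridge variance bound. Your variance and tail-term lower bounds also match the paper: the ridge comparison for $\PsiB^{(t)}$, and $\|\PsiB^{(t)}\|\le 4/(n\tilde\lambda)$, which forces $\lambda_j\zB_j^\top\PsiB^{(t)}\zB_j\le 1/2$ for $j>k^*$.

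The genuine gap is the lower bound on the head term $\alpha_{k^*}\|\wB^*\|^2_{\SigmaB_{0:k^*}^{-1}}$. This is the real content of the theorem, since it uses the index $k^*$ rather than the OLS index $\ell^*$.
\begin{itemize}
\item A lower bound on $\Ebb\|\AB_{>k}\uB\|^2$ does not lower bound the bias. The tail perturbation passes through $\psi(\AB)$, which could cancel it, and ruling that out is exactly what a leave-one-out argument must do.
\item Your primary route leaves this step as ``requires checking.''
\item Your fallback does not prove the stated theorem, which assumes only \Cref{assum:lbb}. \Cref{cor:same-index} and \Cref{thm:master-lb} require Gaussian design and a monotone shrinkage filter on all of $\Rbb_{\ge 0}$. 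GD's polynomial filter is neither monotone nor $[0,1]$-valued beyond $1/\eta$.
\item More importantly, their leave-one-out steps (\Cref{lem:loo}, \Cref{lem:general-filter-loo}) average over sign flips of $\zB_j$ in the eigenbasis of $\AB_{-j}$ and over the reflection $2\PB-\IB$. These data-dependent orthogonal maps preserve the law of $\zB_j$ only by rotation invariance, which fails for independent symmetric subgaussian coordinates (e.g.\ Rademacher). So your claim that symmetric subgaussian design suffices for those steps is false. The paper itself notes that its general lower bound recovers GD only under Gaussian design.
\end{itemize}

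The paper closes this gap with GD-specific arguments.
\begin{itemize}
\item The $(\sum_{i>k^*}\lambda_i/n)^2$ part uses $0\preceq\PsiB^{(t)}\preceq\AB^{-1}$ to bound GD's effective bias below by that of OLS. It then applies the OLS bound of Tsigler--Bartlett at index $k^*$ under the diffuseness condition $\sum_{i>k^*}\lambda_i/n\ge c_5(\sum_{i>k^*}\lambda_i^2/n)^{1/2}$ (\Cref{lemma:gd:lb:effbias:ols}). When that condition fails, this part is dominated by $\sum_{i>k^*}\lambda_i^2/n$.
\item The $\sum_{i>k^*}\lambda_i^2/n$ part comes from the effective variance via \Cref{lemma:gd:lb:leave-one-out}, which gives $\Ebb_{\zB_j}\PsiB^{(t)}\zB_j\zB_j^\top\PsiB^{(t)}\succeq\frac{1}{16}(\PsiB^{(t)}_{-j})^2$ for $j>k^*$. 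The proof telescopes $\PsiB^{(t)}_{-j}-\PsiB^{(t)}$ along the iterates. It uses $\zB_j^\top(\IB-\gamma\AB)^r\zB_j\le\zB_j^\top(\IB-\gamma\AB_{-j})^r\zB_j$ and $\Ebb\,\zB_j\zB_j^\top(\IB-\gamma\AB_{-j})^r\zB_j\zB_j^\top\preceq 16c_x^4\tr\big((\IB-\gamma\AB_{-j})^r\big)\IB$. None of these steps needs rotation invariance.
\item One then reduces $\zB_i^\top(\PsiB^{(t)}_{-j})^2\zB_i$ to ridge and trades the loss against $\Ebb\effBias$ (\Cref{lemma:gd:lb:effvar}).
\end{itemize}

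A smaller point concerns the optimization term. Your concavity/Jensen step is the paper's, but it holds only on $\{\gamma\|\AB\|\le 1\}$. Discarding the complement as a high-probability failure costs an additive $e^{-n/c_0}$-type term. That term swamps $(1-\eta\lambda_i)^{2t}$ when $t\gg n$, and without an SNR assumption the other terms can be negligible (e.g.\ $\SigmaB=\IB_d$ with $d\le n/c_3$ and small $\sigma$). The paper instead uses independence of $\AB_{-i}$ and $\zB_i$. On $\{\gamma\lambda_i\|\zB_i\|^2>1/2\}$ the integrand is at most $2^{-2t}$, a constant factor below $(3/4)^{2t}\le(1-\eta\lambda_i)^{2t}$.
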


\begin{proof}[Proof of \Cref{thm:gd:finite-snr}]
For the upper bound, we use \Cref{thm:gd:bound} with $\delta= \exp(-k^*/c_0)$. Then 
\begin{align*}
\frac{\sum_{i>k^*}\lambda_i^2}{n} + \frac{1}{(\eta t)^2} \frac{k^*+\log(1/\delta)}{n} \le \frac{\sum_{i>k^*}\lambda_i^2}{n} + \frac{1}{(\eta t)^2} \frac{2k^*}{n} \le 3\tilde\lam^2\frac{D}{n}.
\end{align*}
Since $\tilde\lambda < c_2 \lambda_{k^*}$ by definition and $\|\wB^*\|^2_{\SigmaB} \le b\sigma^2$, we must have
\begin{align*}
\left(\frac{\sum_{i>k^*} \lam_i^2}{n} + \frac{1}{(\eta t)^2} \frac{k^*+\log(1/\delta)}{n} \right) \|\wB^*\|^2_{\SigmaB_{0:k^*}^{-1}} \le 3c_2^2 \|\wB^*\|^2_{\SigmaB_{0:k^*}} \frac{D}{n} \le 3c_2^2 b\sigma^2 \frac{D}{n}.
\end{align*}
Bringing this into \Cref{thm:gd:bound} and rescaling constants give the desired bound.

For the lower bound, it suffices to note that when $k^*\le n/c_3$,
\begin{align*}
D \le k^* + \frac{\lam_{k^*+1}}{\tilde\lam} \cdot \frac{1}{\tilde\lam} \sum_{i>k^*} \lam_i \le k^* + \frac{n}{c_2} = \bigO(n),
\end{align*}
thus the min thresholding in the variance can be removed.
\end{proof}

\begin{proof}[Proof of \Cref{thm:gd:bound}, upper bound]
The upper bound follows from an application of \Cref{thm:master-ub} to the GD example in \Cref{lemma:schur:examples}. By \Cref{lemma:basic:concentration}, it holds that $\|\hat\SigmaB_{\le k}\| \le \|\hat\SigmaB\| \le 1/\eta$, so \eqref{eq:spec-cond} holds with probability at least $1-\exp(-n/c_0)$. Also, $c_2\lambda_{k^*+1}\le \tilde\lam$ and we may assume $\log(1/\delta)\le n$, so that
\begin{align*}
\alpha_{k^*}+\frac{\lambda_{k^*+1}^2\log(1/\delta)}n
\le c\left( \alpha_{k^*} +\frac{1}{(\eta t)^2}\frac{\log(1/\delta)}n \right).
\end{align*}
Bringing this into \Cref{thm:master-ub} and combining with the ridge variance for $\lam=(\eta t)^{-1}$ \citep{tsigler2023benign} proves the upper bound.
\end{proof}

The rest of the section is devoted to proving the lower bound.

\subsection{GD risk decomposition}

Recall that the GD estimator defined in \Cref{eq:gd} is
\begin{align*}
    \hat\wB_t^{\gd} = \XB^\top \PsiB^{(t)} \yB,\quad \PsiB^{(t)} := \AB^{-1} \big(\IB - (\IB- \gamma \AB)^t\big),\quad \gamma:=\frac{\eta}{n}.
\end{align*}
Assume
\begin{equation}\label{eq:gd:stepsize}
    \eta \le \frac{1}{c_4( \lam_1 + \tr(\SigmaB)/n)}.
\end{equation}
Then under \Cref{assum:item:noise}, standard linear algebra \citep[see, e.g.][Appendix C]{wu2026risk} gives 
\begin{align*}
    \Ebb[\excessRisk(\hat\wB^{\gd}_t) | \XB]\ge \la \BB, \wB^{*\otimes 2}\ra + \frac{\sigma^2}{c_y} \la \CB, \SigmaB\ra,
\end{align*}
where
\begin{equation*}
\BB:= \big(\IB - \XB^\top\PsiB^{(t)} \XB\big) \SigmaB \big(\IB - \XB^\top \PsiB^{(t)} \XB\big),\quad 
\CB:= \XB^\top \big(\PsiB^{(t)}\big)^2 \XB.
\end{equation*}
Notice that \Cref{assum:item:symmetry} suggests that $\Ebb \BB_{ij} = 0$ for $i\ne j$ \citep[see, e.g.][proof of Lemma C.2]{wu2026risk}, which implies 
\begin{align*}
\Ebb \la \BB, \wB^{*\otimes 2}\ra = \Ebb \sum_i\BB_{ii}\wB^{*2}_i.
\end{align*}
Following the convention of \citet{wu2026risk,tsigler2023benign}, we write
\begin{align*}
\SigmaB\eB_i = \lambda_i \eB_i, \quad \XB \eB_i  = \lambda_i^{1/2}\zB_i,\quad \zB_i \in \Rbb^n,
\end{align*}
where entries of $\zB_i$'s are independent, mean zero, unit variance, and $c_x^2$-subgaussian by \Cref{assum:item:x}.
Then the diagonal entries of $\BB$ decompose as \citep{wu2026risk,tsigler2023benign},
\begin{align*}
    \BB_{ii} &= \eB_i^\top \BB\eB_i \\ 
    &= \eB_i^\top \big(\IB - \XB^\top \PsiB^{(t)} \XB\big)\SigmaB \big(\IB - \XB^\top \PsiB^{(t)} \XB\big) \eB_i \\ 
    &= \lambda_i - 2\lambda_i^{2}\zB_i^\top \PsiB^{(t)}\zB_i + \lambda_i \zB_i^\top \PsiB^{(t)} \XB\SigmaB\XB^\top \PsiB^{(t)} \zB_i \\ 
    &= \lambda_i - 2\lambda_i^{2}\zB_i^\top \PsiB^{(t)}\zB_i + \lambda_i \zB_i^\top \PsiB^{(t)} \bigg(\sum_{j} \lambda_j^2 \zB_j \zB_j^\top \bigg) \PsiB^{(t)} \zB_i \\ 
    &= \lambda_i - 2\lambda_i^{2}\zB_i^\top \PsiB^{(t)}\zB_i + \lambda_i^3 (\zB_i^\top \PsiB^{(t)} \zB_i)^2 + \lambda_i \zB_i^\top\PsiB^{(t)} \bigg(\sum_{j\neq i} \lambda_j^2 \zB_j \zB_j^\top \bigg) \PsiB^{(t)} \zB_i\\
    &= \lambda_i \big( 1- \lambda_i \zB_i^\top\PsiB^{(t)}\zB_i \big)^2  + \lambda_i \zB_i^\top\PsiB^{(t)} \bigg(\sum_{j\neq i} \lambda_j^2 \zB_j \zB_j^\top \bigg) \PsiB^{(t)} \zB_i.
\end{align*}
This leads to the following risk decomposition,
\begin{equation*}
    \Ebb \excessRisk(\hat\wB^{\gd}_t) 
    \ge  \Ebb \underbrace{\sum_i\lambda_i \wB_i^{*2} \big( 1- \lambda_i \zB_i^\top\PsiB^{(t)}\zB_i \big)^2}_{\effBias} + 
    \Ebb \underbrace{\sum_i \lambda_i\wB_i^{*2}\zB_i^\top\PsiB^{(t)} \bigg( \sum_{j\ne i} \lambda_j^2 \zB_j\zB_j^\top \bigg) \PsiB^{(t)} \zB_i}_{\effVar} + \frac{\sigma^2}{c_y} \Ebb \underbrace{\la  \CB, \SigmaB\ra}_{\variance}.
\end{equation*}
It remains to bound effective bias, effective variance, and variance errors. The last is done in \citet[Lemma C.1]{wu2026risk}, which we will restate for completeness. The main effort of this part is to provide new tight lower bounds on the effective bias and effective variance errors.

Throughout this part, let
\begin{equation}\label{eq:gd:lb:k-star}
    k^* := \min\left\{k: \frac{1}{\eta t}+ \frac{\sum_{i>k}\lambda_i}{n} \ge  c_2 \lambda_{k+1}\right\},\quad \tilde\lambda := \frac{1}{\eta t}+ \frac{\sum_{i>k^*}\lambda_i}{n} .
\end{equation}

\begin{lemma}[reduction to ridge]\label{lemma:gd:reduction-ridge}
The event of \Cref{lemma:basic:concentration} and stepsize condition \Cref{eq:gd:stepsize} imply that
\begin{align*}
\frac{1}{2} \bigg(\AB + \frac{n}{\eta t} \bigg)^{-1}\preceq    \PsiB^{(t)} \preceq 2 \bigg(\AB + \frac{n}{\eta t} \bigg)^{-1}.
\end{align*}
\end{lemma}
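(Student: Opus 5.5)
The plan is to compare the two scalar spectral functions $\psi_t(z) := (1-(1-\gamma z)^t)/z$ and $\phi(z) := (z + n/(\eta t))^{-1}$ on the spectrum of $\AB$. Both $\PsiB^{(t)} = \psi_t(\AB)$ and $(\AB + n/(\eta t))^{-1} = \phi(\AB)$ are functions of the same symmetric matrix $\AB$, so they commute and share eigenvectors. The Loewner inequalities therefore reduce to the scalar inequalities
\begin{align*}
\tfrac12 \phi(z) \le \psi_t(z) \le 2\phi(z)
\end{align*}
for every eigenvalue $z$ of $\AB$. Here $\psi_t(0) = \lim_{z\to0}\psi_t(z) = \gamma t = \eta t/n$, which is consistent with the convention for $\PsiB^{(t)}$ on the kernel of $\AB$. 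The only role of the concentration event and the stepsize condition is to guarantee that every eigenvalue $z$ of $\AB$ satisfies $\gamma z \le 1/2$. Indeed, by \Cref{lemma:basic:concentration}, $\|\AB\| = n\|\hat\SigmaB\| \le (c_4 n/2)(\lambda_1 + \tr(\SigmaB)/n)$, so $\gamma\|\AB\| = \eta\|\hat\SigmaB\| \le 1/2$ by \eqref{eq:gd:stepsize}.

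Next, substitute $x := \gamma z \in [0,1/2]$. Then $\psi_t(z) = \gamma\,(1-(1-x)^t)/x$ and $\phi(z) = \gamma/(x + 1/t)$, since $n/(\eta t) = 1/(\gamma t)$. The claim is therefore equivalent to
\begin{align*}
\tfrac12 \le h(x) := \frac{(1-(1-x)^t)(x+1/t)}{x} \le 2, \qquad x\in[0,1/2],\ t\ge1,
\end{align*}
with $h(0) = 1$ by continuity.

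For the upper bound, use $1-(1-x)^t \le \min\{tx, 1\}$. If $x \le 1/t$, this gives $h \le t(x+1/t) \le 2$. If $x > 1/t$, it gives $h \le (x + 1/t)/x \le 2$.

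For the lower bound, consider two regimes. If $tx \le 1$, then $(1-x)^t \le e^{-tx} \le 1 - tx + (tx)^2/2$, so $1-(1-x)^t \ge tx(1 - tx/2) \ge tx/2$. Hence $h \ge (t/2)(x+1/t) \ge 1/2$. If $tx > 1$, then $(1-x)^t \le e^{-tx} < e^{-1}$, so $1-(1-x)^t \ge 1-e^{-1}$. Hence $h \ge (1-e^{-1})(1 + 1/(tx)) \ge 1-e^{-1} > 1/2$. Neither regime needs $x \le 1/2$ beyond ensuring $1-x \in [0,1)$; that condition is used only to keep $(1-x)^t$ nonnegative, so there is no oscillation.

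The only real point of care, and thus the main obstacle, is the first step: confirming that the constants in \eqref{eq:gd:stepsize} and in \Cref{lemma:basic:concentration} match, that is, that $\eta\|\hat\SigmaB\| \le 1/2$ with the same $c_4$. This holds by the way $c_4$ was chosen in the concentration lemma. The remainder is the elementary scalar calculus above.
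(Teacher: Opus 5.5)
Your argument is correct and matches what the paper relies on; the paper simply cites \citet{wu2026risk}, Lemma~B.2. You use the concentration event and \eqref{eq:gd:stepsize} to get $\gamma\|\AB\|\le 1/2$, then compare the two commuting spectral functions of $\AB$ through the scalar ratio $h(x)$. Your case analysis establishing $\tfrac12\le h\le 2$ is sound.

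One correction concerns $\ker\AB$. The paper uses the Moore--Penrose inverse throughout, so $\PsiB^{(t)}=\AB^{-1}(\IB-(\IB-\gamma\AB)^t)$ vanishes on $\ker\AB$ rather than equaling $\gamma t$ there. Your value $\psi_t(0)=\gamma t$ belongs to the polynomial form $\gamma\sum_{s=0}^{t-1}(\IB-\gamma\AB)^s$, not to the paper's definition. With the paper's definition, the lower inequality fails on $\ker\AB$ whenever $\AB$ is singular (e.g.\ $\rank\SigmaB<n$). It must therefore be read on $\ran\AB$, exactly as the paper remarks, while the upper inequality holds on the whole space.
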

\begin{proof}[Proof of \Cref{lemma:gd:reduction-ridge}]
    This is \citet[Lemma B.2]{wu2026risk}. The lower inequality is understood on $\ran\AB$.
\end{proof}

\subsection{Variance error}
The following variance error bound is a restatement of \citep[Lemma C.1]{wu2026risk}, which is ultimately a reduction to the variance error of ridge \citep{bartlett2020benign,tsigler2023benign}.

\begin{lemma}[variance error]\label[lemma]{lemma:gd:lower-bound:variance}
Under \Cref{assum:item:x}, there exist $c_0, c_1, c_2>1$ that only depend on $c_x$ for which the following holds. 
For all $\eta$ satisfying \Cref{eq:gd:stepsize}, with probability at least $1-\exp(-n/c_0)$,
we have 
\begin{align*}
 \la \CB,\, \SigmaB\ra  \ge \frac{1}{c_1} \min\bigg\{\frac{k^* + {1}/{\tilde\lambda^2}\sum_{i>k^*}\lambda_i^2}{n},\, 1\bigg\},
\end{align*}
where $k^*$ and $\tilde\lambda$ are defined in \Cref{eq:gd:lb:k-star}.
\end{lemma}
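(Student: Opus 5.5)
The plan is to reduce the GD variance to the ridge variance with regularization $\lambda=(\eta t)^{-1}$ and then invoke the known ridge variance lower bound of \citet{bartlett2020benign,tsigler2023benign}.

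\emph{Step 1: reduction to ridge.} We work on the event of \Cref{lemma:basic:concentration}, which holds with probability at least $1-\exp(-n/c_0)$. Together with the stepsize condition \Cref{eq:gd:stepsize}, \Cref{lemma:gd:reduction-ridge} gives $\PsiB^{(t)}\succeq \frac12(\AB+\frac{n}{\eta t})^{-1}$ on $\ran\AB$. Since $\PsiB^{(t)}$ and $\AB$ commute (both are functions of $\AB$), so do $\PsiB^{(t)}$ and the ridge resolvent $\RB:=(\AB+\frac{n}{\eta t})^{-1}$. Hence $(\PsiB^{(t)})^2\succeq \frac14 \RB^2$ on $\ran\AB$. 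Conjugating by $\XB$, and noting that $\XB\,\cdot\,\XB^\top$ only sees $\ran\AB$, we get
\begin{align*}
\CB=\XB^\top(\PsiB^{(t)})^2\XB\succeq \tfrac14\XB^\top\RB^2\XB .
\end{align*}
Since $\SigmaB\succeq0$, this yields $\la\CB,\SigmaB\ra\ge\frac14\tr(\SigmaB\XB^\top\RB^2\XB)$. The right-hand side is, up to the factor $n^2$ normalization, exactly the variance functional of ridge regression with $\lambda=(\eta t)^{-1}$.

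\emph{Step 2: ridge variance lower bound.} We apply the lower bound of \citet[Lemma 3 and the variance lower bound]{tsigler2023benign}, equivalently \citet{bartlett2020benign}, with the ridge parameter $n\lambda=n/(\eta t)$. This gives, with probability $1-\exp(-n/c_0)$,
\begin{align*}
\tr(\SigmaB\XB^\top\RB^2\XB)\gtrsim \min\Big\{1,\ \frac{k^*}{n}+\frac{n\sum_{i>k^*}\lambda_i^2}{(n\tilde\lambda)^2}\Big\},
\end{align*}
where $k^*$ is the critical index for which $\lambda+\frac1n\sum_{i>k}\lambda_i\ge c_2\lambda_{k+1}$. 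This matches \Cref{eq:gd:lb:k-star} exactly, and it uses $\tilde\lambda$ as the effective regularization. Concretely, the head directions $i\le k^*$ contribute $\gtrsim 1/n$ each via a leave-one-out (Sherman--Morrison) argument: $\lambda_i^2\zB_i^\top\RB_{-i}^2\zB_i/(1+\lambda_i\zB_i^\top\RB_{-i}\zB_i)^2$, with $\RB_{-i}$ controlled by \Cref{lemma:basic:tail-regularization}. The tail directions contribute $\lambda_i^2\|\zB_i\|^2/(n\tilde\lambda)^2\approx n\lambda_i^2/(n\tilde\lambda)^2$. The $\min\{\cdot,1\}$ appears because, when $k^*$ is large, the head count saturates at order $n$.

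\emph{Main obstacle.} The only delicate point is the operator comparison in Step 1. The lower inequality in \Cref{lemma:gd:reduction-ridge} holds only on $\ran\AB$, and squaring preserves order only because the two operators commute; both points need to be checked. Also, \Cref{lemma:gd:reduction-ridge} requires $\gamma\|\AB\|=\eta\|\hat\SigmaB\|\le1/2$, which follows from the last claim of \Cref{lemma:basic:concentration} and \Cref{eq:gd:stepsize}. Everything else is a citation of the ridge variance lower bound, with constants absorbed into $c_1$.
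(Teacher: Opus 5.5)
Your proposal is correct and follows essentially the same route as the paper, which proves this lemma in one line by combining \Cref{lemma:gd:reduction-ridge} with the ridge variance lower bound of \citet[Theorem 2]{tsigler2023benign}. Your Step 1 usefully spells out the details the paper leaves implicit: the lower comparison holds only on $\ran\AB$, $\XB\vB$ always lies in $\ran\AB$, and squaring preserves the order because $\PsiB^{(t)}$ and $\RB$ commute.
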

\begin{proof}[Proof of \Cref{lemma:gd:lower-bound:variance}]
This is by \Cref{lemma:gd:reduction-ridge} and \citet[Theorem 2]{tsigler2023benign}.
\end{proof}

\subsection{Effective bias error}

\begin{lemma}[effective bias, tail bound]\label{lemma:gd:lb:effbias:tail}
Under \Cref{assum:item:x}, for some constant $c_0> 1$ that only depends on $c_x$, with probability at least $1-\exp(-n/c_0)$ it holds that 
\begin{align*}
  \effBias \ge\frac{1}{8} \| \wB^*_{>k^*} \|^2_{\SigmaB_{>k^*}}.
\end{align*}
\end{lemma}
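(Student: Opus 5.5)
The plan is to bound each tail summand of $\effBias$ from below, i.e.\ to show that for every $i>k^*$ we have $\lambda_i\zB_i^\top\PsiB^{(t)}\zB_i\le 1-1/\sqrt8$ on a common high-probability event. Then $(1-\lambda_i\zB_i^\top\PsiB^{(t)}\zB_i)^2\ge 1/8$, and summing $\lambda_i\wB_i^{*2}$ over $i>k^*$ gives the claim. Since every summand of $\effBias$ is nonnegative, dropping the head terms $i\le k^*$ costs nothing.

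\textbf{Step 1 (ridge comparison).} On the event of \Cref{lemma:basic:concentration}, under the stepsize condition \eqref{eq:gd:stepsize}, \Cref{lemma:gd:reduction-ridge} gives $\PsiB^{(t)}\preceq 2(\AB+n/(\eta t))^{-1}$. Hence it suffices to control $\lambda_i\zB_i^\top(\AB+n\lambda')^{-1}\zB_i$ with $\lambda'=1/(\eta t)$.

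\textbf{Step 2 (uniform bound for all tail indices).} I will avoid leave-one-out arguments and use a crude bound instead. Since $\AB\succeq \AB_{>k^*}$,
\begin{align*}
\lambda_i\zB_i^\top(\AB+n\lambda')^{-1}\zB_i \le \lambda_i\zB_i^\top\big(\AB_{>k^*}+n\lambda'\big)^{-1}\zB_i .
\end{align*}
By \Cref{lemma:basic:tail-regularization}, applied with $k=k^*$ (admissible by the definition of $k^*$), we have $\AB_{>k^*}+n\lambda'\succeq \tfrac n2\tilde\lambda\IB$ with probability $1-e^{-n/c_0}$. This gives the bound $2\lambda_i\|\zB_i\|^2/(n\tilde\lambda)$.

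\textbf{Step 3 (the main obstacle).} I need $\|\zB_i\|^2\lesssim n$ simultaneously for all $i>k^*$, which is infinitely many indices, so a union bound fails. I will instead use the identity $\lambda_i\|\zB_i\|^2=\|\XB\eB_i\|^2=\eB_i^\top\XB^\top\XB\eB_i$. This quantity is at most $\eB_i^\top\XB_{>k^*}^\top\XB_{>k^*}\eB_i\le\|\AB_{>k^*}\|$, but that bound is too lossy: it only yields a ratio bounded by a constant $c$, which is not less than $1$.

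A better route is to keep the tail regularizer explicit:
\begin{align*}
\lambda_i\zB_i^\top(\AB_{>k^*}+n\lambda')^{-1}\zB_i=\eB_i^\top\XB_{>k^*}^\top(\XB_{>k^*}\XB_{>k^*}^\top+n\lambda')^{-1}\XB_{>k^*}\eB_i .
\end{align*}
This is a diagonal entry of a ridge hat-type matrix, so it is always $<1$ but could be close to $1$. To get a constant gap, I use $\lambda_i\le\lambda_{k^*+1}\le\tilde\lambda/c_2$ together with the fact that $\|\zB_i\|^2\le 2n$ uniformly in $i$ with high probability.

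For that uniform bound, I union bound over $i$ with failure probabilities $e^{-n/c}$ only for $i\le n^2$. For the remaining indices, I use the Bernstein bound $\lambda_i\|\zB_i\|^2\le n\lambda_i+c(\sqrt{n}\,\lambda_i\sqrt{\log}+\dots)$, which is needed only in aggregate. Failing that, I will fall back on the fact that all but finitely many terms have $\lambda_i\|\zB_i\|^2\le \|\XB_{>m}\|^2$, which is small for large $m$.

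With $\|\zB_i\|^2\le 2n$, Step 2 gives the bound $4\lambda_i/\tilde\lambda\le 4/c_2$. Combining with the factor $2$ from Step 1, choosing $c_2$ large makes $\lambda_i\zB_i^\top\PsiB^{(t)}\zB_i\le 8/c_2<1-1/\sqrt8$, which concludes the proof.
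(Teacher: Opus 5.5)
Your Steps 1--2 match the paper's argument: the ridge comparison $\PsiB^{(t)}\preceq 2(\AB+n/(\eta t))^{-1}$, dropping the head via $\AB\succeq\AB_{>k^*}$, and tail regularization at $k=k^*$. Step 3, however, has a genuine gap: you never obtain a bound that holds simultaneously for all $i>k^*$.

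The fact you rely on, that $\|\zB_i\|^2\le 2n$ uniformly in $i$ with high probability, is false in general. For Gaussian design with infinitely many positive eigenvalues, the events $\{\|\zB_i\|^2>2n\}$ are independent and each has the same positive probability, so by Borel--Cantelli infinitely many occur almost surely. The union bound over $i\le n^2$ is fine, but the remaining indices are left to an unspecified ``aggregate'' Bernstein step.

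The fallback $\lambda_i\|\zB_i\|^2\le\|\XB_{>m}\|^2$ does not give a $1-e^{-n/c_0}$ guarantee either. Making $\|\AB_{>m}\|\le\epsilon n\tilde\lambda$ requires $\sum_{j>m}\lambda_j+n\lambda_{m+1}\lesssim\epsilon n\tilde\lambda$. For a flat tail (say $\lambda_j=1/N$ for $j\le N$ with $N\gg e^n$ and $1/(\eta t)$ negligible), this forces $m\approx N$. That leaves vastly more than $e^n$ indices $k^*<i\le m$, which a union bound with per-index failure $e^{-cn}$ cannot absorb. This is the same lossiness you already noticed for $\|\AB_{>k^*}\|$.

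The missing idea is that no uniform bound is needed, because you only have to lower bound a nonnegative weighted sum. The paper fixes a single $j>k^*$. On an event of probability $1-e^{-n/c_0}$ (your Steps 1--2 plus $\|\zB_j\|^2\le 2n$ for that one $j$), it shows $\lambda_j\zB_j^\top\PsiB^{(t)}\zB_j\le 8/c_2\le 1/2$. It then invokes \citet[Lemma 15]{bartlett2020benign}: if nonnegative $\eta_j$ satisfy $\eta_j\ge t_j$ each with probability $1-\delta$, then $\sum_j\eta_j\ge\frac12\sum_j t_j$ with probability $1-2\delta$. This follows from Markov's inequality applied to $\sum_j t_j\mathbf 1\{\eta_j<t_j\}$, whose mean is at most $\delta\sum_j t_j$. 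Taking $t_j=\lambda_j\wB_j^{*2}/4$ gives exactly the constant $\frac12\cdot\frac14=\frac18$.

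Alternatively, you can repair your uniform route with graded thresholds. Since $(i-k^*)\lambda_i\le\sum_{j>k^*}\lambda_j\le n\tilde\lambda$, it suffices to require $\|\zB_i\|^2\le n\tilde\lambda/(8\lambda_i)$. This threshold is at least $\max\{c_2 n,\,i-k^*\}/8\ge 2n$ when $c_2\ge 16$. The subexponential tail of $\|\zB_i\|^2$ then gives failure probabilities $\exp(-c\max\{c_2 n,\,i-k^*\})$, which sum to at most $e^{-n/c_0}$ after adjusting constants. The bound $\lambda_i\le n\tilde\lambda/(i-k^*)$ that makes this sum converge is exactly the ingredient your Step 3 never uses.
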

\begin{proof}[Proof of \Cref{lemma:gd:lb:effbias:tail}]
Fix an index $j>k^*$. By the definition of $k^*$ and $\tilde\lambda$, we have 
\begin{align*}
    \tilde\lambda \ge c_2 \lambda_j.
\end{align*}
Without loss of generality, we assume $c_2 \ge 16$.
By Hoeffding's inequality, \Cref{lemma:gd:reduction-ridge,lemma:basic:tail-regularization}, 
with probability at least $1-\exp(-n/c_0)$ for some constant $c_0>1$, we have
\begin{align*}
    \|\zB_j\|^2\le 2 n,\quad \|\PsiB^{(t)}\|\le 2 \bigg\|\bigg(\AB_{>k^*}+\frac{n}{\eta t}\bigg)^{-1} \bigg\| \le \frac{4}{n\tilde\lambda}.
\end{align*}
Under this event, we have
\begin{align*}
    \lambda_j \zB_j^\top \PsiB^{(t)}\zB_j 
    \le \lambda_j \|\PsiB^{(t)}\| \cdot \|\zB_j\|^2  
    \le \lambda_j \frac{8}{\tilde\lambda}  
 \le \frac{8}{c_2} \le \frac{1}{2}
\end{align*}
since $c_2\ge 16$.
Thus, for each $j>k^*$, with probability at least $1-\exp(-n/c_0)$ it holds that 
\begin{align*}
\big(1- \lambda_j \zB_j^\top \PsiB^{(t)} \zB_j \big)^2 \ge \frac{1}{4}.
\end{align*}
By \citet[Lemma 15]{bartlett2020benign}, with probability at least $1-2\exp(-n/c_0)$ it holds that 
\begin{align*}
 \sum_{j>k^*} \lambda_j \wB_j^{*2} \big( 1- \lambda_j \zB_j^\top \PsiB^{(t)}\zB_j \big)^2 \ge \frac{1}{2} \sum_{j>k^*} \lambda_j\wB_j^{*2} \cdot\frac{1}{4} 
    = \frac{1}{8} \| \wB^*_{>k^*} \|^2_{\SigmaB_{>k^*}}.
\end{align*}
The left-hand side is a lower bound on $\effBias$, so we complete the proof by rescaling the constant.
\end{proof}

\begin{lemma}[effective bias, exponential bound]\label{lemma:gd:lb:effbias:exp} 
Under \Cref{assum:item:x}, there exist constants $c_0,c_1>1$ that only depend on $c_x$ for which the following hold: 
\begin{itemize}
    \item
    with probability at least $1-\exp(-n/c_0)$,
    \begin{align*}
        \effBias \ge \frac{1}{2} \big\|(\IB- 1.1 \eta\SigmaB)^t \wB^*\big\|^2_{\SigmaB};
    \end{align*}
    \item in expectation,
    \begin{align*}
    \Ebb\effBias\ge\frac{5}{18} \big\|(\IB-\eta\SigmaB)^t\wB^*\big\|_{\SigmaB}^2.
    \end{align*}
\end{itemize}
\end{lemma}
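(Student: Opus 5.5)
The plan is to get an exact closed form for each coordinate's bias factor and then apply a spectral Jensen inequality. Since $\PsiB^{(t)}=\AB^{-1}(\IB-(\IB-\gamma\AB)^t)$ with $\gamma=\eta/n$, we have the identity
\begin{align*}
\IB-\XB^\top\PsiB^{(t)}\XB=(\IB-\gamma\XB^\top\XB)^t=(\IB-\eta\hat\SigmaB)^t=:\MB^t .
\end{align*}
Taking the $(i,i)$ entry and using $\XB\eB_i=\lambda_i^{1/2}\zB_i$ gives
\begin{align*}
1-\lambda_i\zB_i^\top\PsiB^{(t)}\zB_i=\eB_i^\top\MB^t\eB_i .
\end{align*}
So each coordinate of $\effBias$ equals $\lambda_i\wB_i^{*2}(\eB_i^\top\MB^t\eB_i)^2$.

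First we make $\MB$ a contraction. On the event of \Cref{lemma:basic:concentration} we have $\|\hat\SigmaB\|\le \frac{c_4}{2}(\lambda_1+\tr(\SigmaB)/n)$. The stepsize condition \eqref{eq:gd:stepsize} then gives $\eta\|\hat\SigmaB\|\le 1/2$, so $0\preceq\MB\preceq\IB$. The map $x\mapsto x^t$ is convex on $[0,1]$, so the spectral Jensen inequality applies. Concretely, write $\eB_i^\top\MB^t\eB_i=\sum_j \mu_j^t\la\eB_i,\vB_j\ra^2$, where $(\mu_j,\vB_j)$ is the eigendecomposition of $\MB$; this is a probability average of $\mu_j^t$. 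Hence
\begin{align*}
\eB_i^\top\MB^t\eB_i\ \ge\ (\eB_i^\top\MB\eB_i)^t=\Big(1-\eta\lambda_i\frac{\|\zB_i\|^2}{n}\Big)^t\ \ge 0 .
\end{align*}
Squaring gives the per-coordinate bound $(1-\lambda_i\zB_i^\top\PsiB^{(t)}\zB_i)^2\ge(1-\eta\lambda_i\|\zB_i\|^2/n)^{2t}$. The base is nonnegative on this event.

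\textbf{High-probability claim.} For each fixed $i$, Bernstein's inequality for the subgaussian entries of $\zB_i$ gives $\|\zB_i\|^2\le 1.1n$ with probability at least $1-\exp(-n/c_0)$. On that event the $i$-th summand is at least $\lambda_i\wB_i^{*2}(1-1.1\eta\lambda_i)^{2t}$. There may be infinitely many coordinates, so a union bound is not available. Instead, as in \Cref{lemma:gd:lb:effbias:tail}, we apply \citet[Lemma 15]{bartlett2020benign}: it says that if each nonnegative weighted summand exceeds its target with probability $1-e^{-n/c_0}$, then the weighted sum exceeds half the total target with probability $1-2e^{-n/c_0}$. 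This yields $\effBias\ge\frac12\|(\IB-1.1\eta\SigmaB)^t\wB^*\|_{\SigmaB}^2$ after rescaling $c_0$.

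\textbf{Expectation claim.} On the concentration event, the function $x\mapsto x^{2t}$ is convex on all of $\Rbb$. Applying Jensen over $\zB_i$ with $\Ebb\|\zB_i\|^2/n=1$ gives
\begin{align*}
\Ebb\big[(1-\eta\lambda_i\|\zB_i\|^2/n)^{2t}\big]\ge(1-\eta\lambda_i)^{2t}.
\end{align*}
The complication is that the Jensen step through $\MB$ needs the event $\eta\|\hat\SigmaB\|\le 1$. I would handle this with the bound $\Ebb[Y]\ge\Ebb[Y\mathbf 1_E]$, where $Y\ge0$ is the summand. This requires lower-bounding $\Ebb[(1-\eta\lambda_i\|\zB_i\|^2/n)^{2t}\mathbf 1_E]$. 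To do so, I would intersect $E$ with the event $\|\zB_i\|^2\le 1.1n$ and use $(1-1.1\eta\lambda_i)^{2t}$. Comparing this with $(1-\eta\lambda_i)^{2t}$ is where a constant such as $5/18$ would be lost.

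A cleaner route is to argue the following. When $\eta\lambda_i t$ is large, the target $(1-\eta\lambda_i)^{2t}$ is tiny and is dominated by what the event already yields. When $\eta\lambda_i t$ is small, $(1-1.1x)^{2t}\ge c(1-x)^{2t}$ for a constant $c$. This case split, and losing only a constant uniformly in $t$, is the main obstacle. I would first try the observation that $\eta\lambda_i\le 1/c_4$, which makes the ratio $(1-1.1x)/(1-x)$ bounded below by a constant close to $1$. I expect, however, that a dependence on $t$ forces this split or a direct moment computation on the Jensen side.
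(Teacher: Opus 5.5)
Your argument for the first bullet is the paper's argument. It uses the identity $1-\lambda_i\zB_i^\top\PsiB^{(t)}\zB_i=\eB_i^\top(\IB-\eta\hat\SigmaB)^t\eB_i$, spectral Jensen on the event $\eta\|\hat\SigmaB\|\le 1/2$, the per-coordinate event $\|\zB_i\|^2\le 1.1n$, and \citet[Lemma 15]{bartlett2020benign} in place of a union bound.

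The expectation bound, however, is not established, and both routes you sketch fail. Replacing $\|\zB_i\|^2$ by $1.1n$ only gives $(1-1.1\eta\lambda_i)^{2t}$. Since $\frac{1-1.1x}{1-x}\le 1-0.1x$,
\[
\frac{(1-1.1x)^{2t}}{(1-x)^{2t}}\le e^{-0.2xt},
\]
which tends to zero as $\eta\lambda_i t\to\infty$, however small $\eta\lambda_i\le 1/c_4$ is. So this comparison costs a $t$-dependent factor, not a constant like $5/18$.

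Your case split breaks exactly in the regime you call easy. When $\eta\lambda_i t$ is large, the event-based bound $(1-1.1\eta\lambda_i)^{2t}$ is exponentially \emph{smaller} than the target $(1-\eta\lambda_i)^{2t}$. The target being tiny does not help, because the claim is multiplicative.

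Likewise, you could apply Jensen to $\Ebb(1-x)^{2t}$ unconditionally and subtract the contribution of $E^c$. Without further structure, that leaves an additive error you can at best control at the scale $\Pr(E^c)\le e^{-n/c_0}$, which is not small relative to $(1-\eta\lambda_i)^{2t}$ once $t\gg n$. The real obstruction is that your good event depends on $\zB_i$. You therefore cannot keep it and still apply Jensen in $\zB_i$ at only constant cost.

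The missing idea is a leave-one-out decoupling. Write $\AB=\AB_{-i}+\lambda_i\zB_i\zB_i^\top$ with $\AB_{-i}:=\sum_{j\ne i}\lambda_j\zB_j\zB_j^\top$, and set $x:=\gamma\lambda_i\|\zB_i\|^2$.

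On $\{\gamma\|\AB_{-i}\|\le 1/2\}\cap\{x\le 1/2\}$ you have $\gamma\|\AB\|\le 1$, so your Jensen step yields $(1-\lambda_i\zB_i^\top\PsiB^{(t)}\zB_i)^2\ge(1-x)^{2t}$. The first event is independent of $\zB_i$ and has probability at least $1/2$, so it costs only a multiplicative factor:
\[
\Ebb\big(1-\lambda_i\zB_i^\top\PsiB^{(t)}\zB_i\big)^2\ge\frac12\,\Ebb\big[\mathbf 1\{x\le 1/2\}(1-x)^{2t}\big]\ge\frac12\Big(\Ebb(1-x)_+^{2t}-2^{-2t}\Big).
\]
Here the truncation costs at most $2^{-2t}$, because $(1-x)^{2t}\le 2^{-2t}$ for $x\in(1/2,1]$.

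Now $y\mapsto(1-y)_+^{2t}$ is convex and $\Ebb x=\eta\lambda_i$, so Jensen gives $\Ebb(1-x)_+^{2t}\ge(1-\eta\lambda_i)^{2t}$. The truncation error is absorbed because $\eta\lambda_i\le 1/4$ implies $(1-\eta\lambda_i)^{2t}\ge(3/4)^{2t}\ge\frac{9}{4}\cdot 2^{-2t}$ for all $t\ge 1$. This gives exactly $\frac12\big(1-\frac49\big)=\frac{5}{18}$, uniformly in $t$. The event is paid for multiplicatively, and the leftover error decays like $4^{-t}$, strictly faster than the target.
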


\begin{proof}[Proof of \Cref{lemma:gd:lb:effbias:exp}]
Notice that
\begin{align*}
    \IB - \XB^\top \PsiB^{(t)}\XB
    &= \IB - \XB^\top \AB^{-1}\big( \IB-(\IB-\gamma\AB)^t\big)\XB
    && \explain{definition of $\PsiB^{(t)}$} \\
    &= \IB - \XB^\top \AB^{-1}\XB
    \big( \IB-(\IB-\gamma\XB^\top\XB)^t\big)
    && \explain{$\AB = \XB\XB^\top$}\\
    &= (\IB-\gamma\XB^\top\XB)^t, \\
  \implies\quad   1- \lambda_i\zB_i^\top \PsiB^{(t)} \zB_i
    &= \eB_i^\top
    \big( \IB - \XB^\top \PsiB^{(t)}\XB\big)\eB_i
    && \explain{$\XB\eB_i=\lambda_i^{1/2}\zB_i$} \\
    &= \eB_i^\top (\IB - \gamma \XB^\top\XB)^t \eB_i.
\end{align*}
For each $i$, let the leave-one-out matrix be
\begin{align*}
\AB_{-i} := \sum_{j\ne i} \lambda_j \zB_j \zB_j^\top.
\end{align*}
The stepsize condition \Cref{eq:gd:stepsize} and  \Cref{lemma:basic:concentration} show that
\begin{align}
\text{with probability at least $1-\exp(-n/c_0)$:}\quad 
& \gamma \|\AB\|\le \frac{1}{2}, \label{eq:gd:lb:stable-regime} \\ 
\implies\quad &  \gamma\|\AB_{-i}\|,\ \gamma\lam_i\|\zB_i\|^2\le \frac12. && \explain{$\AB =\lam_i\zB_i\zB_i^\top + \AB_{-i}$}. \notag 
\end{align}
On this event, we have
\begin{align*}
    1- \lambda_i\zB_i^\top \PsiB^{(t)} \zB_i
    &= \eB_i^\top (\IB - \gamma \XB^\top\XB)^t \eB_i\\
    &\ge \big(1 - \gamma \eB_i^\top \XB^\top \XB \eB_i\big)^t
    && \explain{Jensen's inequality} \\
    &= \big(1 - \lambda_i\gamma\|\zB_i\|^2\big)^t
    && \explain{$\XB\eB_i=\lambda_i^{1/2}\zB_i$} \\
    &\ge 0.
\end{align*}

\emph{High probability lower bound.}~
Fix an index $i$.
By \Cref{assum:item:x} and Hoeffding's inequality, with probability at least $1-\exp(-n/c_0)$ for sufficiently large $c_0$, it holds that $\|\zB_i\|^2 \le 1.1 n$. On the intersection of this event with \Cref{eq:gd:lb:stable-regime}, we have 
\begin{align*}
    1- \lambda_i\zB_i^\top \PsiB^{(t)} \zB_i
    \ge \big(1 - \lambda_i\gamma \|\zB_i\|^2\big)^t 
    \ge (1-1.1\eta \lambda_i)^t \ge 0. && \explain{$\gamma = \eta / n,\ \eta \le 1/(2\lambda_1)$}
\end{align*}
Similarly to the proof of \Cref{lemma:gd:lb:effbias:tail}, by
\citet[Lemma 15]{bartlett2020benign}, we have with probability at least
$1-4\exp(-n/c_0)$,
\begin{align*}
    \effBias
    := \sum_i\lambda_i \wB_i^{*2}
    \big(1-\lambda_i\zB_i^\top\PsiB^{(t)}\zB_i\big)^2 
    \ge \frac{1}{2}\sum_i\lambda_i\wB_i^{*2}
    (1-1.1\eta\lambda_i)^{2t} 
    = \frac{1}{2}
    \big\|(\IB-1.1\eta\SigmaB)^t\wB^*\big\|_{\SigmaB}^2,
\end{align*}
which gives the lower bound after rescaling the constants.

\emph{Expectation lower bound.}~
Since $\AB_{-i}$ and $\zB_i$ are independent, it follows that
\begin{align*}
\Ebb \big(1- \lam_i\zB_i^\top \PsiB^{(t)} \zB_i\big)^2 &\ge \Pr\left(\gamma\|\AB_{-i}\| \le \frac12\right) \Ebb\left(\ind{\gamma\lam_i\|\zB_i\|^2 \le \frac12} (1-\gamma\lam_i\|\zB_i\|^2)^{2t}\right) \\
&\ge \frac12 \left(\Ebb (1-\gamma\lam_i\|\zB_i\|^2)_+^{2t} - 2^{-2t}\right) && \explain{by \Cref{eq:gd:lb:stable-regime}} \\
&\ge \frac12 \left((1-\Ebb\gamma \lam_i\|\zB_i\|^2)_+^{2t}- 2^{-2t}\right) && \explain{Jensen's inequality} \\ 
&= \frac12 \left((1-\eta \lam_i)^{2t}- 2^{-2t}\right).
\end{align*}
By increasing~$c_4$, we may assume
\begin{align*}
\eta\lam_i \le \frac{\lam_i}{c_4(\lam_1 + \tr(\SigmaB)/n)} \le \frac14 \quad\implies\quad & (1-\eta \lam_i)^{2t} \ge \left(\frac34\right)^{2t} > \frac94 \cdot 2^{-2t} \\
\quad\implies\quad & \Ebb \big(1- \lam_i\zB_i^\top \PsiB^{(t)} \zB_i\big)^2 \ge \frac{5}{18} (1-\eta \lam_i)^{2t}.
\end{align*}
We thus have
\begin{align*}
    \Ebb\effBias
     = \Ebb \sum_i\lambda_i \wB_i^{*2}
    \big(1-\lambda_i\zB_i^\top\PsiB^{(t)}\zB_i\big)^2 
    \ge \frac{5}{18} \sum_i\lambda_i (1-\eta\lam_i)^{2t} \wB_i^{*2} 
    = \frac{5}{18} \big\|(\IB-\eta\SigmaB)^t\wB^*\big\|_{\SigmaB}^2,
\end{align*}
which completes our proof.
\end{proof}

\begin{lemma}[effective bias, OLS-type lower bound]\label{lemma:gd:lb:effbias:ols}
Under \Cref{assum:item:x}, there exist constants $c_0, c_1, c_5 > 1$ that depend only on $c_x$ such that the following holds.
If $k^*$ defined in \Cref{eq:gd:lb:k-star} satisfies
    \begin{align*}
\frac{\sum_{i>k^*}\lambda_i}{n}  \ge c_5 \sqrt{\frac{\sum_{i>k^*}\lambda_i^2}{n}},
    \end{align*}
    then it holds with probability $1-\exp(-n/c_0)$ that
    \begin{align*}
        \effBias \ge \frac{1}{c_1} \bigg(\frac{\sum_{i>k^*}\lambda_i}{n} \bigg)^2\|\wB^*_{\le k^*}\|^2_{\SigmaB_{\le k^*}^{-1}}.
    \end{align*}
\end{lemma}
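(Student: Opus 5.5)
We follow the classical OLS/ridge argument of \citet{bartlett2020benign,tsigler2023benign}, adapted to $\PsiB^{(t)}$ through \Cref{lemma:gd:reduction-ridge}. For each head index $i\le k^*$ we write $\AB=\lam_i\zB_i\zB_i^\top+\AB_{-i}$ and show that $\lam_i\zB_i^\top\PsiB^{(t)}\zB_i$ is bounded away from $1$ by a gap of order $(n^{-1}\sum_{j>k^*}\lam_j)/\lam_i$. Concretely, the aim is
\begin{align*}
1-\lam_i\zB_i^\top\PsiB^{(t)}\zB_i \gtrsim \frac{\sum_{j>k^*}\lam_j/n}{\lam_i},
\end{align*}
with high probability, uniformly over $i\le k^*$. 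Squaring and summing then gives the claim, since $\sum_{i\le k^*}\lam_i\wB_i^{*2}\lam_i^{-2}\big(n^{-1}\sum_{j>k^*}\lam_j\big)^2=\big(n^{-1}\sum_{j>k^*}\lam_j\big)^2\|\wB^*_{\le k^*}\|^2_{\SigmaB_{\le k^*}^{-1}}$.

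\emph{Step 1 (exact identity).} From the proof of \Cref{lemma:gd:lb:effbias:exp}, $1-\lam_i\zB_i^\top\PsiB^{(t)}\zB_i=\eB_i^\top(\IB-\gamma\XB^\top\XB)^t\eB_i$. This quantity is nonnegative on the stable event \eqref{eq:gd:lb:stable-regime}, so it suffices to lower bound it. Rather than working with GD directly, I would compare with ridge. For ridge, Sherman--Morrison gives
\begin{align*}
1-\lam_i\zB_i^\top(\AB+\mu\IB)^{-1}\zB_i=\big(1+\lam_i\zB_i^\top(\AB_{-i}+\mu\IB)^{-1}\zB_i\big)^{-1}.
\end{align*}
The operator sandwich of \Cref{lemma:gd:reduction-ridge} does not transfer directly to $1-(\cdot)$, so I would instead use a scalar spectral argument. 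Write $1-\lam_i\zB_i^\top\PsiB^{(t)}\zB_i=\eB_i^\top h(\XB^\top\XB)\eB_i$ with $h(z)=(1-\gamma z)^t$. On $[0,1/(2\gamma)]$ we have $h(z)\ge c\,\mu/(z+\mu)$ for $\mu=n/(\eta t)$, because $(1-x)^t\ge e^{-2tx}\ge c/(1+tx)$ fails only when $tx$ is large, where the bound is handled by monotonicity of both sides. Since $\eB_i^\top f(\XB^\top\XB)\eB_i$ is monotone in $f$, this gives
\begin{align*}
1-\lam_i\zB_i^\top\PsiB^{(t)}\zB_i\ \ge\ c\,\big(1-\lam_i\zB_i^\top(\AB+\mu\IB)^{-1}\zB_i\big).
\end{align*}
The same comparison is also a consequence of the Schur-multiplier example for GD in \Cref{lemma:schur:examples}, but the scalar route is simpler.

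\emph{Step 2 (ridge head bound).} It remains to bound $\lam_i\zB_i^\top(\AB_{-i}+\mu\IB)^{-1}\zB_i\lesssim \lam_i n/(\sum_{j>k^*}\lam_j)$ from above. Adding $\mu$ only helps, so we may drop it and bound $\zB_i^\top\AB_{-i}^{-1}\zB_i$. This is exactly where the hypothesis $\sum_{j>k^*}\lam_j/n\ge c_5\sqrt{\sum_{j>k^*}\lam_j^2/n}$ enters. Here I would follow \citet[Lemma~11 / Theorem~4]{bartlett2020benign}, \citet{tsigler2023benign}, as follows. Project $\zB_i$ off the span of the other head vectors $\{\zB_j\}_{j\le k^*,j\ne i}$; this span has dimension $k^*\le n/c_3$. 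The tail matrix $\AB_{>k^*}$ has all eigenvalues comparable to $\sum_{j>k^*}\lam_j$ by \Cref{lemma:basic:concentration}, and the hypothesis guarantees that the $c_1n\lam_{k^*+1}$ correction is dominated. Indeed $\lam_{k^*+1}\le\sqrt{\sum_{j>k^*}\lam_j^2}\le \sum_{j>k^*}\lam_j/(c_5\sqrt n)$, so $n\lam_{k^*+1}\le \sqrt n\sum_{j>k^*}\lam_j/c_5$. This only yields $\lam_{\min}(\AB_{>k^*})\gtrsim\sum_{j>k^*}\lam_j$ if we use the sharper Bernstein deviation $\sqrt{n\sum_{j>k^*}\lam_j^2}$ instead of $n\lam_{k^*+1}$, so I would redo that concentration with the $\ell_2$ tail. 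Then $\zB_i^\top\AB_{-i}^{-1}\zB_i\le \|\zB_i\|^2/\lam_{\min}(\AB_{>k^*})\lesssim n/\sum_{j>k^*}\lam_j$. Since $i\le k^*$ gives $\lam_i\gtrsim\sum_{j>k^*}\lam_j/n$ (up to $\tilde\lam$), we get $1+\lam_i n/\sum_{j>k^*}\lam_j\lesssim\lam_in/\sum_{j>k^*}\lam_j$. Hence the gap from Step~1 is at least of order $\sum_{j>k^*}\lam_j/(n\lam_i)$.

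\emph{Step 3 (union over $i$).} Each event holds with probability $1-e^{-n/c_0}$. The tail eigenvalue event is shared across $i$, and the norm bound $\|\zB_i\|^2\le 2n$ is needed per $i$. A union bound over all $i$ is not possible when $k^*$ is large, so as in \Cref{lemma:gd:lb:effbias:tail} I would apply \citet[Lemma~15]{bartlett2020benign} to the weighted sum. I expect the main obstacle to be the tail lower-eigenvalue bound under only the $\ell_2$ hypothesis, that is, ensuring that the deviation term is $\sqrt{n\sum\lam_j^2}$ rather than $n\lam_{k^*+1}$. If needed, I would fall back on the Hanson--Wright-based bound of \citet[Lemma~3]{tsigler2023benign} for $\AB_{>k^*}$.
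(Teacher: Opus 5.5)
Your Step~1 rests on a false comparison. With $x=\gamma z$ and $\mu=n/(\eta t)$, the claimed inequality reads $(1-x)^t\ge c/(1+tx)$ on $[0,1/2]$. But $(1-x)^t\le e^{-tx}$, so the inequality fails once $tx\gtrsim\log(1/c)$: exponential decay cannot be bounded below by polynomial decay, and ``monotonicity of both sides'' does not rescue a lower bound. In fact the pointwise comparison goes the other way, $(1-x)^t\le e^{-tx}\le 1/(1+tx)$, so the GD residual filter lies \emph{below} the ridge residual filter.

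Consequently the matrix inequality $1-\lambda_i\zB_i^\top\PsiB^{(t)}\zB_i\ge c\,(1-\lambda_i\zB_i^\top(\AB+\mu\IB)^{-1}\zB_i)$ is unavailable, and it does not follow from \Cref{lemma:schur:examples}, whose Schur-norm bounds only feed upper bounds. It is false in general. If it held, GD would have a ridge-type head bias lower bound of order $\tilde\lambda^2\|\wB^*_{\le k^*}\|^2_{\SigmaB_{\le k^*}^{-1}}$. That contradicts the upper bound of \Cref{thm:gd:bound}, for instance with a single head direction satisfying $\eta t\lambda_1\gg1$ and a negligible tail, where the optimization error decays like $(1-\eta\lambda_1)^{2t}$. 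This is exactly the looseness of the ridge-like bias that the paper points out.

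The repair is immediate, and it is precisely the paper's proof. Your Step~2 discards $\mu$ anyway, so you only need the comparison with OLS, which is trivial. On the stable event \eqref{eq:gd:lb:stable-regime}, $h(z)=(1-\gamma z)^t\ge0$ on the spectrum and $h(0)=1$, so $h\ge\mathbf 1\{z=0\}$ there; equivalently $0\preceq\PsiB^{(t)}\preceq\AB^{-1}$. Hence
$1-\lambda_i\zB_i^\top\PsiB^{(t)}\zB_i\ge1-\lambda_i\zB_i^\top\AB^{-1}\zB_i=(1+\lambda_i\zB_i^\top\AB_{-i}^{-1}\zB_i)^{-1}\ge0$
by Sherman--Morrison, which is valid once $\AB_{-i}\succeq\AB_{>k^*}$ is invertible. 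So $\effBias$ dominates the OLS effective bias.

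From there your Steps~2--3 match the paper:
\begin{itemize}
\item the one-sided small-ball bound $\AB_{>k^*}\succeq\frac12\sum_{j>k^*}\lambda_j\IB$ under the $c_5$ hypothesis (you correctly note that the $n\lambda_{k^*+1}$ deviation in \Cref{lemma:basic:concentration} is too weak here);
\item the norm bound $\|\zB_i\|^2\le2n$;
\item minimality of $k^*$, which gives $\sum_{j>k^*}\lambda_j/n<c_2\lambda_i$ for $i\le k^*$;
\item aggregation over $i$ as in \citet[Lemma~8]{tsigler2023benign} or \citet[Lemma~15]{bartlett2020benign}.
\end{itemize}
The projection off the other head vectors is not needed, and neither is the condition $k^*\le n/c_3$ (which this lemma does not assume), since $\AB_{-i}\succeq\AB_{>k^*}$ already suffices.
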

\begin{proof}[Proof of \Cref{lemma:gd:lb:effbias:ols}]
By the stepsize condition \Cref{eq:gd:stepsize} and \Cref{lemma:basic:concentration}, we have 
$0\preceq \PsiB^{(t)} \preceq \AB^{-1}$ with probability at least $1-\exp(-n/c_0)$.
Under this event, we have
\begin{align*}
    \effBias :=  \sum_i\lambda_i \wB_i^{*2} \big( 1- \lambda_i \zB_i^\top\PsiB^{(t)}\zB_i \big)^2 
    \ge  \sum_i\lambda_i \wB_i^{*2} \big( 1- \lambda_i \zB_i^\top\AB^{-1}\zB_i \big)^2,
\end{align*}
the right-hand side of which is exactly the effective bias error of OLS \citep[this idea has been used in, e.g.,][Lemma C.2]{wu2026risk}.
Furthermore, the assumption on $k^*$ and the standard small-ball concentration bound imply that for some constants $c_0,c_5> 1$, with probability at least $1-\exp(-n/c_0)$,
\begin{align*}
& \sum_{i>k^*}\lambda_i \zB_i\zB_i^\top \succeq \bigg( \sum_{i>k^*}\lambda_i - \frac{c_5}{2} \sqrt{n\sum_{i>k^*}\lambda_i^2} \bigg)\IB, && \explain{small-ball concentration} \\
\implies\quad &  \sum_{i>k^*}\lambda_i \zB_i\zB_i^\top \succeq \frac{1}{2} \sum_{i>k^*}\lambda_i \IB. && \explain{assumption on $k^*$}
\end{align*}
This enables the lower bound analysis for the effective bias of OLS by  \citet{tsigler2023benign} (see the proof of their Lemma 8 with $\lambda=0$), which states, with rescaling of constants, that with probability at least $1-\exp(-n/c_0)$,
\begin{align*}
\sum_i\lambda_i \wB_i^{*2} \big( 1- \lambda_i \zB_i^\top\AB^{-1}\zB_i \big)^2 \ge \frac{1}{c_1} \bigg(\frac{\sum_{i>k^*}\lambda_i}{n} \bigg)^2\|\wB^*_{\le k^*}\|^2_{\SigmaB_{\le k^*}^{-1}}.
    \end{align*}
We complete the proof by chaining the two inequalities with a union bound and rescaling the constant.
\end{proof}

\subsection{Effective variance error}
Obtaining a high-probability lower bound for effective variance error is hard for general spectra.
However, it is possible to obtain a lower bound on the expectation, as shown below.

\begin{lemma}[GD leave-one-out]\label{lemma:gd:lb:leave-one-out}
Under \Cref{assum:item:x}, there exists a constant $c_0 > 1$ that depends only on $c_x$ such that the following holds.
For a fixed index~$j$ with $\lam_j>0$, let 
\begin{align*}
\AB_{-j} := \sum_{i\ne j} \lambda_i \zB_i \zB_i^\top,\quad \PsiB_{-j}^{(t)}:= \gamma\sum_{s=0}^{t-1}(\IB - \gamma\AB_{-j})^{s}. 
\end{align*}
If $j$ satisfies
\begin{align*}
    \gamma\|\AB_{-j}\|\le \frac12, \quad 16c_x^4 \lambda_j \tr \big(\PsiB_{-j}^{(t)} \big) \le \frac{1}{2},
\end{align*}
then for any $n\ge c_0$, in expectation over $\zB_j$, we have 
\begin{align*}
\Ebb_{\zB_j}\PsiB^{(t)} \zB_j \zB_j^\top \PsiB^{(t)}  \succeq \frac{1}{16}  \big(\PsiB^{(t)}_{-j}\big)^2.
\end{align*}
\end{lemma}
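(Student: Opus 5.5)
The plan is to test the claimed PSD inequality against an arbitrary fixed vector $\vB$ and show
\begin{align*}
\Ebb_{\zB_j}\big(\vB^\top\PsiB^{(t)}\zB_j\big)^2 \ge \tfrac1{16}\,\big\|\PsiB_{-j}^{(t)}\vB\big\|^2 .
\end{align*}
Throughout, $\AB_{-j}$ is held fixed; this is legitimate because it is independent of $\zB_j$.

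\textbf{Splitting the quantity.} Write $\PsiB^{(t)}=\PsiB_{-j}^{(t)}+\DeltaB$ and set $a:=\vB^\top\PsiB_{-j}^{(t)}\zB_j$ and $b:=\vB^\top\DeltaB\zB_j$. The elementary inequality $(a+b)^2\ge \tfrac12 a^2-b^2$ reduces the claim to two facts.
\begin{itemize}
\item First, $\Ebb a^2=\|\PsiB_{-j}^{(t)}\vB\|^2$. This holds because $\PsiB_{-j}^{(t)}$ does not depend on $\zB_j$, and the entries of $\zB_j$ are independent, centered and of unit variance.
\item Second, $\Ebb b^2\le \tfrac{7}{16}\|\PsiB_{-j}^{(t)}\vB\|^2$. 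This is the perturbation bound.
\end{itemize}

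\textbf{Expressing the perturbation.} Let $\MB:=\IB-\gamma\AB_{-j}$ and $\NB:=\IB-\gamma\AB=\MB-\gamma\lambda_j\zB_j\zB_j^\top$. The telescoping identity (the discrete divided difference for $x\mapsto x^s$) gives
\begin{align*}
\NB^s-\MB^s=-\gamma\lambda_j\sum_{a+b=s-1}\MB^a\zB_j\zB_j^\top\NB^b .
\end{align*}
I would order the factors so that the $\MB$-powers, which are independent of $\zB_j$, act toward $\vB$. Summing over $s<t$ then yields
\begin{align*}
\vB^\top\DeltaB\zB_j=-\gamma\lambda_j\sum_{a}\big(\vB^\top\MB^a\zB_j\big)\,\beta_a,
\qquad
\beta_a:=\gamma\sum_{b\le t-2-a}\zB_j^\top\NB^b\zB_j .
\end{align*}
The stable regime ($\gamma\|\AB\|\le1/2$, hence $\gamma\|\AB_{-j}\|\le1/2$) makes $\MB$ and $\NB$ PSD contractions. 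As a consequence, each $\beta_a$ is nonnegative and nonincreasing in $a$, and $0\le\beta_a\le\zB_j^\top\PsiB^{(t)}\zB_j$.

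I would then compare $\zB_j^\top\PsiB^{(t)}\zB_j$ with the leave-one-out scalar $\zB_j^\top\PsiB_{-j}^{(t)}\zB_j$. Since $\AB\succeq\AB_{-j}$, a Sherman--Morrison-type argument, or the ridge sandwich of \Cref{lemma:gd:reduction-ridge}, should bound the former by a constant times the latter. The latter has expectation $\tr(\PsiB_{-j}^{(t)})$.

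\textbf{Main obstacle.} The hardest step is controlling $\Ebb b^2$, because $\beta_a$ is correlated with $\vB^\top\MB^a\zB_j$. My first attempt would be Abel summation in $a$, using the monotonicity of $\beta_a$. This gives
\begin{align*}
|b|\le \lambda_j\,\beta_0\cdot\max_{m}\Big|\gamma\sum_{a<m}\vB^\top\MB^a\zB_j\Big| .
\end{align*}
I would then apply Cauchy--Schwarz to decouple the two factors. The factor $\Ebb\beta_0^4$ would be bounded via subgaussian quadratic-form moments (Hanson--Wright) by $c\,c_x^8\tr(\PsiB_{-j}^{(t)})^4$. The maximal partial sum would be bounded in fourth moment by $c\,c_x^4\|\PsiB_{-j}^{(t)}\vB\|^4$. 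For this I would use that the partial sums $\gamma\sum_{a<m}\MB^a$ are all dominated by $\PsiB_{-j}^{(t)}$ in the PSD order and commute with it, plus a maximal inequality or a direct monotone bound.

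Combining these gives $\Ebb b^2\lesssim \big(16c_x^4\lambda_j\tr\PsiB_{-j}^{(t)}\big)^2\|\PsiB_{-j}^{(t)}\vB\|^2$. The hypothesis $16c_x^4\lambda_j\tr(\PsiB_{-j}^{(t)})\le1/2$ makes this at most $\tfrac{7}{16}\|\PsiB_{-j}^{(t)}\vB\|^2$, possibly after adjusting constants and using $n\ge c_0$. If the maximal-partial-sum step turns out to be too lossy, the fallback is to bound $b$ termwise by $\lambda_j\beta_0\,\big|\vB^\top\PsiB_{-j}^{(t)}\zB_j\big|$ plus a remainder, exploiting that $\beta_a$ varies slowly in $a$.
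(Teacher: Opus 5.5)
\textbf{Gap: the stability of $\IB-\gamma\AB$ is assumed, not given.} You write that ``the stable regime ($\gamma\|\AB\|\le 1/2$, hence $\gamma\|\AB_{-j}\|\le 1/2$)'' makes $\mathbf M$ and $\mathbf N$ PSD contractions. The lemma only assumes $\gamma\|\AB_{-j}\|\le \frac12$. Since $\|\AB\|\ge\lambda_j\|\zB_j\|^2$ and you integrate over all of $\zB_j$, the matrix $\mathbf N=\IB-\gamma\AB$ fails to be PSD on an event of positive probability. On that event three of your facts fail: $\beta_a\ge 0$, the monotonicity of $\beta_a$, and $\beta_0\le \zB_j^\top\PsiB^{(t)}_{-j}\zB_j$. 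Worse, $\PsiB^{(t)}$ has eigenvalues of size $|1-\gamma x|^t/x$ with $\gamma x>2$.

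As a result, your target $\Ebb b^2\le \frac{7}{16}\|\PsiB^{(t)}_{-j}\vB\|^2$ is false in general. Take Gaussian $\zB_j$ ($c_x=1$), $\gamma\AB_{-j}=\frac14\IB$ and $\gamma\lambda_j n=\frac1{128}$. Both hypotheses hold for every $t$, and $\|\PsiB^{(t)}_{-j}\vB\|\le 4\gamma\|\vB\|$. Yet on the event $\{256n\le \|\zB_j\|^2\le 352 n\}$, the vector $\zB_j$ is an exact eigenvector of $\gamma\AB$ with eigenvalue in $[2.25,3]$. Hence $|\vB^\top\PsiB^{(t)}\zB_j|\ge \frac{\gamma}{3}\big((5/4)^t-1\big)|\vB^\top\zB_j|$, and $\Ebb b^2\to\infty$ as $t\to\infty$.

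The repair is to truncate before splitting. Set $\mathcal F:=\{\gamma\lambda_j\|\zB_j\|^2\le \frac12\}$ and use $\Ebb(a+b)^2\ge \frac12\Ebb \mathbf{1}_{\mathcal F}a^2-\Ebb\mathbf{1}_{\mathcal F}b^2$. On $\mathcal F$ you have $\gamma\|\AB\|\le 1$, so your structural facts hold. Since $\tr\PsiB^{(t)}_{-j}\ge \gamma n$, the trace hypothesis forces $\gamma\lambda_j n\le 1/(32c_x^4)$. Therefore $\Pr(\mathcal F^c)\le e^{-n/c_0}$, and $\Ebb\mathbf{1}_{\mathcal F^c}a^2\le \sqrt{\Pr(\mathcal F^c)\,\Ebb a^4}$ is a negligible fraction of $\|\PsiB^{(t)}_{-j}\vB\|^2$. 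This is exactly where the hypothesis $n\ge c_0$ enters; your proposal mentions it but never actually uses it.

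\textbf{Secondary issues, and how the paper avoids them.} First, the maximal inequality is not justified as stated. PSD domination $\PsiB^{(m)}_{-j}\preceq \PsiB^{(t)}_{-j}$ controls each fixed $m$, but not $\max_{m<t}|\vB^\top\PsiB^{(m)}_{-j}\zB_j|$. Moreover, for non-Gaussian $\zB_j$ the coordinates in the eigenbasis of $\AB_{-j}$ are not independent, so Doob- or L\'evy-type inequalities do not apply directly. You can avoid the maximum altogether by grouping the telescoped sum by the $\mathbf N$-power instead: $b=-\gamma\lambda_j\sum_{r\le t-2}(\zB_j^\top \mathbf N^r\zB_j)(\vB^\top\PsiB^{(t-1-r)}_{-j}\zB_j)$. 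Then apply Minkowski termwise, using $0\le \zB_j^\top\mathbf N^r\zB_j\le \zB_j^\top\mathbf M^r\zB_j$ on $\mathcal F$. Even so, the resulting $L^4$ moment constants need not close under the specific hypothesis $16c_x^4\lambda_j\tr\PsiB^{(t)}_{-j}\le\frac12$ with conclusion $\frac1{16}$.

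The paper sidesteps second moments of the perturbation entirely. By Cauchy--Schwarz, $\Ebb(\zB_j^\top\PsiB^{(t)}\vB)^2\ge \|\Ebb\mathbf{1}_{\mathcal F}\zB_j\zB_j^\top\PsiB^{(t)}\vB\|^2$. The truncation then costs only the $\zB_j$-independent main term, giving the factor $\frac34$. The perturbation enters linearly, through $\Ebb[\mathbf{1}_{\mathcal F}(\zB_j^\top\mathbf N^r\zB_j)\zB_j\zB_j^\top]\preceq 16c_x^4\tr(\mathbf M^r)\IB$. This yields $\frac34-\frac12=\frac14$, and squaring gives exactly the stated constant $\frac1{16}$.
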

\begin{proof}[Proof of \Cref{lemma:gd:lb:leave-one-out}]
In this proof, we only consider the randomness of $\zB_j$; so
$\Ebb$ refers to expectation over $\zB_j$.
It suffices to show 
\begin{align*}\text{for any fixed vector $\zB$},\quad 
\Ebb \big( \zB_j^\top \PsiB^{(t)} \zB \big)^2 \ge \frac{1}{c_1} \big\| \PsiB^{(t)}_{-j} \zB\big\|^2.
\end{align*}
Let
\begin{align*}
\mathcal F:=\{\gamma\lambda_j\|\zB_j\|^2\le1/2\}, \quad \mB:= \Ebb \mathbf 1_{\mathcal F} \zB_j \zB_j^\top \PsiB^{(t)} \zB,
\end{align*}
then $\Pr(\mathcal F) \ge 1-\exp(-n/c_0)$ by \Cref{assum:item:x} and the stepsize condition \Cref{eq:gd:stepsize}, and under $\Fcal$, we have 
\begin{align*}
    \gamma\|\AB\|\le 1 \quad \text{since}\ \ \AB = \AB_{-j} + \lambda_j \zB_j\zB_j^\top \ \ \text{and}\ \ \gamma\|\AB_{-j}\|\le \frac{1}{2}. 
\end{align*}
Notice that
\begin{align*}
\|\mB\|^2
&= \Ebb \mathbf 1_{\mathcal F}\mB^\top \zB_j \zB_j^\top \PsiB^{(t)} \zB \\ 
&\le \sqrt{\Ebb \mathbf 1_{\mathcal F}(\mB^\top \zB_j )^2 \Ebb \big(\zB_j^\top \PsiB^{(t)} \zB\big)^2} && \explain{Cauchy--Schwarz inequality} \\
&\le \sqrt{\|\mB\|^2 \Ebb \big(\zB_j^\top \PsiB^{(t)} \zB\big)^2}, && \explain{$\oneB_{\Fcal}\le 1$ and $\Ebb \zB_j \zB_j^\top = \IB$} 
\end{align*}
which implies that 
\begin{align*}
    \Ebb \big(\zB_j^\top \PsiB^{(t)} \zB\big)^2 \ge \|\mB\|^2 = \big\| \Ebb \mathbf 1_{\mathcal F}\zB_j \zB_j^\top \PsiB^{(t)} \zB\big\|^2.
\end{align*}
It remains to bound the right-hand side.
Let 
\begin{align*}\QB_{-j}^{(t)}:= \PsiB^{(t)}_{-j} - \PsiB^{(t)}
\quad\implies\quad 
\big\| \Ebb \mathbf 1_{\mathcal F}\zB_j \zB_j^\top \PsiB^{(t)} \zB\big\|
    \ge \big\| \Ebb \mathbf 1_{\mathcal F}\zB_j \zB_j^\top \PsiB^{(t)}_{-j} \zB\big\| - \big\| \Ebb \mathbf 1_{\mathcal F}\zB_j \zB_j^\top \QB_{-j}^{(t)} \zB\big\| .
\end{align*}
For the first term on the right-hand side, we have  
\begin{align*}
    \big\| \Ebb \mathbf 1_{\mathcal F}\zB_j \zB_j^\top \PsiB^{(t)}_{-j} \zB\big\|
    &\ge \big\| \Ebb \zB_j \zB_j^\top \PsiB^{(t)}_{-j} \zB\big\| - \big\| \Ebb \mathbf 1_{\mathcal F^c}\zB_j \zB_j^\top \PsiB^{(t)}_{-j} \zB\big\| \\
    &\ge \left(1 - \big\| \Ebb \mathbf 1_{\mathcal F^c}\zB_j \zB_j^\top \big\|  \right) \big\|  \PsiB^{(t)}_{-j} \zB\big\| && \explain{$\Ebb \zB_j \zB_j^\top = \IB$}\\
    &\ge  \bigg(1 -\sqrt{\Ebb \mathbf 1_{\mathcal F^c} \Ebb \|\zB_j\|^4 }  \bigg) \big\|  \PsiB^{(t)}_{-j} \zB\big\| && \explain{Cauchy--Schwarz inequality} \\
    &\ge  \left(1 -\exp(-n/c_0) 4c_x^2\Ebb\|\zB_j\|^2  \right) \big\|  \PsiB^{(t)}_{-j} \zB\big\| && \explain{subgaussian hypercontractivity} \\
    &=  \left(1 -\exp(-n/c_0) 4c_x^2n  \right) \big\|  \PsiB^{(t)}_{-j} \zB\big\| && \explain{$\Ebb \zB_j \zB_j^\top = \IB$} \\
    &\ge \frac{3}{4} \big\|  \PsiB^{(t)}_{-j} \zB\big\|. && \explain{taking $n$ sufficiently large}
\end{align*}
In sum, we have shown that 
\begin{align*}
\sqrt{\Ebb \big(\zB_j^\top \PsiB^{(t)} \zB\big)^2} &\ge 
\big\| \Ebb \mathbf 1_{\mathcal F}\zB_j \zB_j^\top \PsiB^{(t)} \zB\big\| \\
    &\ge \big\| \Ebb \mathbf 1_{\mathcal F}\zB_j \zB_j^\top \PsiB^{(t)}_{-j} \zB\big\| - \big\| \Ebb \mathbf 1_{\mathcal F}\zB_j \zB_j^\top \QB_{-j}^{(t)} \zB\big\| \\  
    &\ge \frac34 \big\| \PsiB^{(t)}_{-j} \zB\big\| - \big\| \Ebb \mathbf 1_{\mathcal F}\zB_j \zB_j^\top \QB_{-j}^{(t)} \zB\big\|,
\end{align*}
so to complete our proof it suffices to show that 
\begin{align*}
\big\| \Ebb \oneB_{\Fcal}\zB_j \zB_j^\top \QB_{-j}^{(t)} \zB\big\| \le \frac{1}{2}\big\| \PsiB^{(t)}_{-j} \zB\big\|.
\end{align*}
Since $\lambda_j>0$, we have $\zB_j\in\ran\AB$, and telescoping gives
\begin{align*}
\zB_j^\top\PsiB^{(t)} = \gamma \sum_{s=0}^{t-1} \zB_j^\top (\IB-\gamma\AB)^s \quad\implies\quad \zB_j^\top\QB_{-j}^{(t)}
&=\gamma\sum_{r=0}^{t-2}
\zB_j^\top(\IB-\gamma\AB)^r
\lambda_j\zB_j\zB_j^\top\PsiB_{-j}^{(t-1-r)}.
\end{align*}
Under $\Fcal$, we have $\gamma\|\AB\|\le 1$, 
so we have 
\begin{align*}
   \big\| \Ebb \mathbf 1_{\mathcal F}\zB_j \zB_j^\top \QB_{-j}^{(t)} \zB \big\|
    &\le \lambda_j  \gamma \sum_{r=0}^{t-2} \big\| \Ebb \mathbf 1_{\mathcal F}\zB_j \zB_j^\top  (\IB-\gamma\AB)^r \zB_j \zB_j^\top \PsiB^{(t-1-r)}_{-j} \zB\big\| \\
    &\le \lambda_j  \gamma \sum_{r=0}^{t-2} \big\| \Ebb \mathbf 1_{\mathcal F}\zB_j \zB_j^\top  (\IB-\gamma\AB)^r\zB_j \zB_j^\top\big\| \cdot \big\| \PsiB^{(t-1-r)}_{-j} \zB\big\| && \explain{$\PsiB^{(t-1-r)}_{-j} \zB$ is independent of $\zB_j$} \\
     &\le \lambda_j  \gamma \sum_{r=0}^{t-2} \big\| \Ebb \mathbf 1_{\mathcal F}\zB_j \zB_j^\top  (\IB-\gamma\AB)^r  \zB_j \zB_j^\top\big\| \cdot \big\| \PsiB^{(t)}_{-j} \zB\big\|.  && \explain{$\PsiB^{(t-1) 2}_{-j}\preceq \PsiB^{(t) 2}_{-j}$}
\end{align*}
To proceed, notice that under $\Fcal$,
\begin{align*}
&(\IB-\gamma\AB_{-j})^t -  (\IB-\gamma\AB)^t = \gamma\sum_{s=0}^{t-1}(\IB-\gamma\AB)^{s}\lambda_j \zB_j\zB_j^\top (\IB-\gamma\AB_{-j})^{t-1-s} \\
&\implies\quad  
\zB_j^\top  (\IB-\gamma\AB)^r  \zB_j\le \zB_j^\top  (\IB-\gamma\AB_{-j})^r  \zB_j.
\end{align*}
So we have 
\begin{align*}
\Ebb \mathbf 1_{\mathcal F}\zB_j \zB_j^\top  (\IB-\gamma\AB)^r  \zB_j \zB_j^\top
    &\preceq \Ebb \mathbf 1_{\mathcal F} \zB_j \zB_j^\top  (\IB-\gamma\AB_{-j})^r \zB_j \zB_j^\top \\
       &\preceq \Ebb \zB_j \zB_j^\top  (\IB-\gamma\AB_{-j})^r \zB_j \zB_j^\top && \explain{$\mathbf 1_{\mathcal F}\le 1$} \\
    &\preceq 16c_x^4 \tr\left( (\IB-\gamma\AB_{-j})^r \right)\IB. && \explain{subgaussian hypercontractivity}
\end{align*}
Bringing this back, we have 
\begin{align*}
    \big\| \Ebb \mathbf 1_{\mathcal F} \zB_j \zB_j^\top \QB_{-j}^{(t)} \zB \big\|
    &\le  16c_x^4 \lambda_j  \gamma \sum_{r=0}^{t-2} \tr\left( (\IB-\gamma\AB_{-j})^r \right) \cdot \big\| \PsiB^{(t)}_{-j} \zB\big\| \\
    &=  16c_x^4 \lambda_j  \tr \big(\PsiB_{-j}^{(t-1)}\big) \cdot \big\| \PsiB^{(t)}_{-j} \zB\big\| && \explain{$\PsiB_{-j}^{(t-1)}:= \gamma \sum_{r=0}^{t-2}  (\IB-\gamma\AB_{-j})^r $}\\
     &\le 16c_x^4 \lambda_j  \tr \big(\PsiB_{-j}^{(t)}\big) \cdot \big\| \PsiB^{(t)}_{-j} \zB\big\| && \explain{$\PsiB_{-j}^{(t-1)}\preceq \PsiB_{-j}^{(t)}$}.
\end{align*}
The choice of $j$ guarantees that 
\begin{align*}
 16c_x^4 \lambda_j  \tr\big(\PsiB_{-j}^{(t)}\big) \le \frac{1}{2}
 \quad \implies\quad 
 \big\| \Ebb \mathbf 1_{\mathcal F}\zB_j \zB_j^\top \QB_{-j}^{(t)} \zB \big\|\le \frac{1}{2}\big\| \PsiB^{(t)}_{-j} \zB\big\|,
\end{align*}
which completes our proof.
\end{proof}

\begin{lemma}[effective variance]\label{lemma:gd:lb:effvar}
Under \Cref{assum:item:x}, there exist constants $c_0,c_1 > 1$ that depend only on $c_x$ such that the following holds.
For any $n\ge c_0$, it holds in expectation that 
    \begin{align*}
        \Ebb\effBias+\Ebb\effVar \ge \frac{1}{c_1} \frac{\sum_{i>k^*}\lambda_i^2}{n} \|\wB^*_{\le k^*}\|^2_{\SigmaB_{\le k^*}^{-1}}.
    \end{align*}
\end{lemma}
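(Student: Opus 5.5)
We prove the bound coordinate-wise for each head index $i\le k^*$, splitting into two cases according to how strongly GD fits coordinate $i$. Write $a_i := \lambda_i\zB_i^\top\PsiB^{(t)}\zB_i$. The effective bias contribution of $i$ is $\lambda_i\wB_i^{*2}(1-a_i)^2$. The effective variance contribution is
\[
\lambda_i\wB_i^{*2}\,\zB_i^\top\PsiB^{(t)}\AB_{2,-i}\PsiB^{(t)}\zB_i,
\qquad \AB_{2,-i}:=\sum_{j\ne i}\lambda_j^2\zB_j\zB_j^\top .
\]
The target per-coordinate bound is $\lambda_i^{-1}\wB_i^{*2}\,n^{-1}\sum_{j>k^*}\lambda_j^2$. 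It therefore suffices to show, for each $i\le k^*$,
\begin{align*}
\Ebb\Big[(1-a_i)^2+\zB_i^\top\PsiB^{(t)}\AB_{2,-i}\PsiB^{(t)}\zB_i\Big]\gtrsim \frac{\sum_{j>k^*}\lambda_j^2}{n\lambda_i^2}.
\end{align*}
We drop the terms $j\le k^*$ and keep only $\sum_{j>k^*}\lambda_j^2\,\zB_i^\top\PsiB^{(t)}\zB_j\zB_j^\top\PsiB^{(t)}\zB_i$.

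For each tail index $j>k^*$, condition on everything except $\zB_j$ and apply \Cref{lemma:gd:lb:leave-one-out}. This gives
\[
\Ebb_{\zB_j}\,\PsiB^{(t)}\zB_j\zB_j^\top\PsiB^{(t)}\succeq \tfrac1{16}\big(\PsiB^{(t)}_{-j}\big)^2 .
\]
There is a subtlety: $\zB_i$ also sits inside $\PsiB^{(t)}$, but $\zB_i$ is independent of $\zB_j$, so conditioning on $\zB_i$ is legitimate. The hypotheses must be checked. First, $\gamma\|\AB_{-j}\|\le 1/2$ holds on the typical event of \Cref{lemma:basic:concentration}, using the stepsize condition \eqref{eq:gd:stepsize}. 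Second, we need $16c_x^4\lambda_j\tr(\PsiB^{(t)}_{-j})\le 1/2$. Since $\PsiB^{(t)}_{-j}\preceq 2(\AB_{-j}+n/(\eta t))^{-1}$ by \Cref{lemma:gd:reduction-ridge}, its trace is $\lesssim n/(n\tilde\lambda)=1/\tilde\lambda$ up to constants via \Cref{lemma:basic:tail-regularization}. Then $\lambda_j\le\tilde\lambda/c_2$ for $j>k^*$ makes the condition hold once $c_2$ is large. Outside the typical event we simply use nonnegativity.

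Summing over $j>k^*$, the variance part is at least
\[
\tfrac1{16}\sum_{j>k^*}\lambda_j^2\,\|\PsiB^{(t)}_{-j}\zB_i\|^2 .
\]
We then lower bound $\|\PsiB^{(t)}_{-j}\zB_i\|$ by comparing to $\|\PsiB^{(t)}\zB_i\|$. One route is Sherman--Morrison-type monotonicity: $\PsiB_{-j}$ differs from $\PsiB$ by a rank-one perturbation of small relative size $\lambda_j/\tilde\lambda$. An easier route is $\|\PsiB_{-j}\zB_i\|\ge \zB_i^\top\PsiB_{-j}\zB_i/\|\zB_i\|$ together with $\PsiB_{-j}\succeq\PsiB$, which follows from the telescoping identity in the proof of \Cref{lemma:gd:lb:leave-one-out} when $\gamma\|\AB\|\le1$. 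Either way we obtain, with $\|\zB_i\|^2\le 2n$ w.h.p.,
\[
\|\PsiB_{-j}\zB_i\|^2\ge \frac{(\zB_i^\top\PsiB^{(t)}\zB_i)^2}{2n}=\frac{a_i^2}{2n\lambda_i^2}.
\]
Hence the variance part is $\gtrsim a_i^2\sum_{j>k^*}\lambda_j^2/(n\lambda_i^2)$.

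Combining the two parts, $(1-a_i)^2+a_i^2\,\sum_{j>k^*}\lambda_j^2/(n\lambda_i^2)$ is $\gtrsim \min\{1,\sum_{j>k^*}\lambda_j^2/(n\lambda_i^2)\}$, by minimizing over $a_i$. The minimum equals the second argument: for $i\le k^*$ we have $\lambda_i\ge\lambda_{k^*}>\tilde\lambda/c_2$. Moreover $\tilde\lambda\ge n^{-1}\sum_{j>k^*}\lambda_j\ge \sqrt{n^{-1}\sum_{j>k^*}\lambda_j^2}\cdot\sqrt{\cdot}$, or more simply $\sum_{j>k^*}\lambda_j^2/n\le\lambda_{k^*+1}\sum_{j>k^*}\lambda_j/n\le\tilde\lambda^2/c_2$. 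So the ratio is $O(1)$ and the min is the ratio. Taking expectations, using that the bound holds on a probability-$\ge 1/2$ event, and summing over $i\le k^*$ gives the claim.

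The main obstacle is the comparison $\|\PsiB^{(t)}_{-j}\zB_i\|\gtrsim a_i/(\sqrt n\lambda_i)$ uniformly in $j$. This requires verifying $\PsiB^{(t)}_{-j}\succeq \PsiB^{(t)}$ (or a constant-factor version) for the polynomial GD filter. If this fails we fall back to the ridge sandwich of \Cref{lemma:gd:reduction-ridge}, which gives $\PsiB_{-j}\succeq\tfrac12(\AB_{-j}+n/(\eta t))^{-1}\succeq \tfrac14\PsiB$.
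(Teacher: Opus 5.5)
Your proposal follows the paper's proof essentially step for step. You restrict to $i\le k^*<j$ and apply \Cref{lemma:gd:lb:leave-one-out}, checking its hypotheses as the paper does via \Cref{lemma:gd:reduction-ridge} and \Cref{lemma:basic:tail-regularization}. You then pass to $(\zB_i^\top\PsiB^{(t)}_{-j}\zB_i)^2/\|\zB_i\|^2$ by Cauchy--Schwarz, compare with $a_i=\lambda_i\zB_i^\top\PsiB^{(t)}\zB_i$, and use $R:=\sum_{j>k^*}\lambda_j^2/(n\lambda_i^2)\le c_2$. Two steps need repair.

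\emph{The ``easier route'' is invalid.} The inequality $\PsiB^{(t)}_{-j}\succeq\PsiB^{(t)}$ does not follow from the telescoping identity. In the proof of the leave-one-out lemma, that identity gives monotonicity of the quadratic form only along $\zB_j$ itself. There each summand is a product of the nonnegative numbers $\zB_j^\top(\IB-\gamma\AB)^r\zB_j$ and $\zB_j^\top(\IB-\gamma\AB_{-j})^{s}\zB_j$. Along $\zB_i$ the corresponding cross terms have no sign. The Loewner inequality is in fact false in general, since $x\mapsto\sum_{s<t}(1-\gamma x)^s$ is not operator monotone already for $t=3$. Your fallback is the correct step and is the one the paper uses: on $\{\gamma\|\AB\|\le 1/2\}$,
$\PsiB^{(t)}_{-j}\succeq\frac12(\AB_{-j}+\frac{n}{\eta t}\IB)^{-1}\succeq\frac12(\AB+\frac{n}{\eta t}\IB)^{-1}\succeq\frac14\PsiB^{(t)}$.

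\emph{You cannot minimize $(1-a_i)^2+cRa_i^2$ pointwise.} Let $V_i:=\zB_i^\top\PsiB^{(t)}\AB_{2,-i}\PsiB^{(t)}\zB_i$ be your variance part. The leave-one-out lemma bounds it only in conditional expectation over $\zB_j$. That bound holds on the event $E_j:=\mathcal G_{-j}\cap\{\|\zB_i\|^2\le 2n,\ \gamma\|\AB\|\le 1/2\}$, where $\mathcal G_{-j}$ (depending on $\{\zB_\ell\}_{\ell\ne j}$) is the event on which the trace hypothesis for $\PsiB^{(t)}_{-j}$ holds. These events change with $j$, and there may be infinitely many $j>k^*$. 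So what you actually have is
$\Ebb V_i\ge cR\sum_{j>k^*}w_j\,\Ebb[\mathbf 1_{E_j}a_i^2]$ with $w_j:=\lambda_j^2/\sum_{\ell>k^*}\lambda_\ell^2$. This is not a pointwise bound on a single good event.

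The paper handles this by linearizing. The inequality $a_i^2\ge\frac12-(1-a_i)^2$ gives $\Ebb[\mathbf 1_{E_j}a_i^2]\ge\frac14-\Ebb(1-a_i)^2$ for every $j$. Hence $\Ebb\effVar$ is at least a constant times the target minus a constant (depending on $c_2$, via $R\le c_2$) times $\Ebb\effBias$, and the negative term is then absorbed. Alternatively, you can keep your minimization by writing $\Ebb(1-a_i)^2=\sum_j w_j\Ebb(1-a_i)^2$ and minimizing separately on each $E_j$. With either fix your argument is complete.
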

\begin{proof}[Proof of \Cref{lemma:gd:lb:effvar}]
For each $j>k^*$, the definition of $k^*$ implies that 
\begin{align*}
    \lambda_j \le \frac{1}{c_2} \bigg( \frac{1}{\eta t} + \frac{\sum_{i>k^*}\lambda_i}{n} \bigg).
\end{align*}
Additionally, \Cref{lemma:basic:concentration,lemma:gd:reduction-ridge,lemma:basic:tail-regularization} implies that with probability at least $1-\exp(-n/c_0)$,
\begin{align*}
\gamma\|\AB_{-j}\|\le \frac12, \quad \PsiB^{(t)}_{-j} \preceq 2\bigg(\AB_{-j} + \frac{n}{\eta t}\bigg)^{-1}\preceq \frac{4}{n/(\eta t) + \sum_{i>k^*}\lambda_i} \IB \preceq \frac{4}{c_2 n \lambda_j}\IB.
\end{align*}
Then for a sufficiently large $c_2$, 
\begin{align*}
   16c_x^4  \lambda_j  \tr\big(\PsiB^{(t)}_{-j} \big) \le 16c_x^4\lambda_j \cdot \frac{n}{c_2 n \lambda_j} \le \frac{1}{2}.
\end{align*}
Under this event of $(\zB_i)_{i\ne j}$ which we denote as $\mathcal G_{-j}$, \Cref{lemma:gd:lb:leave-one-out} applies, hence 
\begin{align*}
    \Ebb \PsiB^{(t)} \zB_j \zB_j^\top \PsiB^{(t)} \succeq \frac{1}{c_1} \Ebb \mathbf 1_{\mathcal G_{-j}} \big(\PsiB^{(t)}_{-j}\big)^2,\quad j>k^*. 
\end{align*}
Then we have
\begin{align*}
\Ebb \effVar &:= \Ebb \sum_i \lambda_i\wB_i^{*2} \zB_i^\top\PsiB^{(t)} \bigg( \sum_{j\ne i} \lambda_j^2 \zB_j\zB_j^\top \bigg) \PsiB^{(t)} \zB_i  \\
&\ge \Ebb \sum_{i\le k^*} \lambda_i\wB_i^{*2} \zB_i^\top\PsiB^{(t)} \bigg( \sum_{j> k^*} \lambda_j^2 \zB_j\zB_j^\top \bigg) \PsiB^{(t)} \zB_i  \\
    &\ge \frac{1}{c_1} \sum_{i\le k^*} \lambda_i\wB_i^{*2}  \sum_{j>k^*} \lambda_j^2 \Ebb \mathbf 1_{\mathcal G_{-j}} \zB_i^\top \big(\PsiB^{(t)}_{-j}\big)^2 \zB_i.
\end{align*}
The term $\zB_i^\top \big(\PsiB^{(t)}_{-j}\big)^2 \zB_i$ appears in the lower bound of GD variance, which was analyzed by ridge reduction \citep{tsigler2023benign,wu2026risk}.
We use a similar argument.
First, we have
\begin{align*}
    \Ebb \mathbf 1_{\mathcal G_{-j}} \zB_i^\top \big(\PsiB^{(t)}_{-j}\big)^2 \zB_i
    &\ge \Ebb \mathbf 1_{\mathcal G_{-j} } \frac{ \big( \zB_i^\top \PsiB^{(t)}_{-j} \zB_i  \big)^2  }{\|\zB_i\|^2} && \explain{Cauchy--Schwarz inequality} \\
    &\ge \frac{1}{2n} \Ebb \mathbf 1_{\mathcal G_{-j} \cap \{\|\zB_i\|^2\le 2n \} }  \big( \zB_i^\top \PsiB^{(t)}_{-j} \zB_i \big)^2 \\
    &\ge \frac{1}{8n } \Ebb \mathbf 1_{\mathcal G_{-j}\cap\{\|\zB_i\|^2 \le 2n \}}\Bigg(  \zB_i^\top \bigg(\AB_{-j} + \frac{n}{\eta t} \IB \bigg)^{-1} \zB_i \Bigg)^2.  && \explain{\Cref{lemma:gd:reduction-ridge}}
\end{align*}
For $i\le k^*$, write $b_i:=\Ebb(1-\lambda_i\zB_i^\top\PsiB^{(t)}\zB_i)^2$.
On the event $\gamma\|\AB\|\le1/2$, \Cref{lemma:gd:reduction-ridge} gives
\begin{align*}
\zB_i^\top\bigg(\AB_{-j}+\frac{n}{\eta t}\IB\bigg)^{-1}\zB_i
&\ge \zB_i^\top\bigg(\AB+\frac{n}{\eta t}\IB\bigg)^{-1}\zB_i
\ge \frac12\zB_i^\top\PsiB^{(t)}\zB_i.
\end{align*}
Using $x^2\ge 1/2-(1-x)^2$, we obtain, for sufficiently large $n$,
\begin{align*}
\Ebb\mathbf1_{\mathcal G_{-j}}\zB_i^\top\big(\PsiB_{-j}^{(t)}\big)^2\zB_i
&\ge \frac{1}{32n\lambda_i^2}\left(\frac14-b_i\right).
\end{align*}
By the definition of $k^*$, for $i\le k^*$,
\begin{align*}
\frac{\sum_{j>k^*}\lambda_j^2}{n\lambda_i^2}
\le \frac{\tilde\lambda^2}{c_2\lambda_i^2}
\le c_2.
\end{align*}
Substituting these bounds into the earlier lower bound for $\Ebb\effVar$ yields
\begin{align*}
\Ebb\effVar
&\ge \frac{1}{128c_1}\frac{\sum_{j>k^*}\lambda_j^2}{n}
\|\wB^*_{\le k^*}\|_{\SigmaB_{\le k^*}^{-1}}^2
-\frac{c_2}{32c_1}\Ebb\effBias.
\end{align*}
Rescaling constants completes the proof.
\end{proof}

\subsection{Proof of GD lower bound}
\begin{proof}[Proof of \Cref{thm:gd:bound}, lower bound]
Following the notation introduced, the risk decomposes into 
\begin{align*}
    \Ebb \excessRisk(\hat\wB_t^{\gd}) &\ge \Ebb \effBias+\Ebb \effVar + \frac{\sigma^2}{c_y} \Ebb \variance,
\end{align*}
where all the random variables are nonnegative. 
Note that a high-probability lower bound of a nonnegative random variable implies an expectation lower bound with rescaled constant factors. 
We then apply \Cref{lemma:gd:lower-bound:variance,lemma:gd:lb:effbias:exp,lemma:gd:lb:effbias:ols,lemma:gd:lb:effbias:tail,lemma:gd:lb:effvar} in their expectation version.
For bias error, there is a constant $c_1> 1$ such that
\begin{align*}
\Ebb \effBias &\ge \frac{1}{c_1}\max\bigg\{ \big\|(\IB-\eta\SigmaB)^t\wB^*\big\|_{\SigmaB}^2 ,\, \| \wB^*_{>k^*} \|^2_{\SigmaB_{>k^*}} \bigg\}, && \explain{\Cref{lemma:gd:lb:effbias:exp,lemma:gd:lb:effbias:tail}} \\ 
\Ebb \effBias &\ge \frac{1}{c_1} \bigg(\frac{\sum_{i>k^*}\lambda_i}{n} \bigg)^2\|\wB^*_{\le k^*}\|^2_{\SigmaB_{\le k^*}^{-1}} \ \ \text{if} \ \  \frac{\sum_{i>k^*}\lambda_i}{n}  \ge c_5 \sqrt{\frac{\sum_{i>k^*}\lambda_i^2}{n}}. && \explain{\Cref{lemma:gd:lb:effbias:ols}}
\end{align*}
Combining with \Cref{lemma:gd:lb:effvar}, for some sufficiently large constant $c_1'> 1$, we have 
\begin{align*}
&\ \Ebb \effBias + \Ebb \effVar  \\
   &\ge \frac{1}{c_1'} \Bigg( \big\|(\IB-\eta\SigmaB)^t\wB^* \big\|_{\SigmaB}^2 + \| \wB^*_{>k^*} \|^2_{\SigmaB_{>k^*}}+ \bigg(\frac{\sum_{i>k^*}\lambda_i}{n} \bigg)^2\|\wB^*_{\le k^*}\|^2_{\SigmaB_{\le k^*}^{-1}}+ \frac{\sum_{i>k^*}\lambda_i^2}{n} \|\wB^*_{\le k^*}\|^2_{\SigmaB_{\le k^*}^{-1}} \Bigg).
\end{align*}
For variance, there is a constant $c_1> 1$ such that 
\begin{align*}
\Ebb\variance\ge \frac{1}{c_1}\min\bigg\{ \frac{k^*+(1/\tilde\lambda^2)\sum_{i>k^*}\lambda_i^2}{n} ,\, 1\bigg\}.
\end{align*}
Putting them together, we obtain the lower bound in \Cref{thm:gd:bound} with a rescaling of the constants.
\end{proof}

\section{PCR Risk Bounds}\label{sec:pcr-proof}

\subsection{Proof of Theorem \ref{thm:pcr-risk}}

\begin{proof}[Proof of \Cref{thm:pcr-risk}, upper bound]
We show a slightly strengthened upper bound, which introduces a slackness parameter $\eps\in (0,1]$. Let $n\ge c_0\eps^{-2}$ and
\begin{align*}
    k^*:= \min\left\{k: (1+\eps)\rho + \frac{1}{c_2} \frac{\sum_{i>k}\lambda_i}{n} \ge \lambda_{k+1} \right\}.
\end{align*}
We prove the following statement: under \Cref{assum:lbb}, if additionally $k^*\le \eps^2 n/c_3$, then with probability at least $1-\delta-\exp(-\eps^2 n/c_0)$ over the randomness of sampling $\XB$ it holds that
\begin{align*}
  \frac{1}{c_1} \Ebb[ \excessRisk(\hat\wB_\rho^{\pcr}) | \XB]\le \frac{1}{\eps^2} \left(\alpha_{k^*} + \frac{\lam_{k^*+1}^2\log(1/\delta)}{n}\right) \|\wB^*\|^2_{\SigmaB_{0:k^*}^{-1}}  + \|\wB^*\|^2_{\SigmaB_{k^*:\infty}} + \sigma^2 \frac{\min\{D, \hat k\}}{n}.
\end{align*}
First suppose $\rho=0$, then PCR coincides with minimum-norm OLS. Let $k=k^*$ and $s:=n^{-1}\sum_{i>k}\lambda_i$. If $s>0$, then $s\ge c_2\lambda_{k+1}$, so from
\citet[Proposition~2.1]{wu2026risk}, with
probability at least $1-\exp(-n/c_0)$,
\begin{align*}
\frac{1}{c_1}\Ebb[\excessRisk(\hat\wB_0^{\pcr})|\XB]
\le
s^2\|\wB^*\|_{\SigmaB_{0:k}^{-1}}^2
+\|\wB^*\|_{\SigmaB_{k:\infty}}^2
+\sigma^2 \frac{D}{n}.
\end{align*}
Moreover, \Cref{lemma:basic:concentration} gives
$\AB_{>k}\succeq ns\IB/2$, hence $\hat k=n$, while
$D=k+s^{-2}\sum_{i>k}\lambda_i^2
\le n (c_2^{-1}+c_3^{-1})$.
The desired bound follows.

If $s=0$, minimality of $k$ implies $\rank(\SigmaB)=k$. By
\Cref{lemma:basic:concentration}, $D=k=\hat k$ and the OLS bias vanishes, and the claim follows since
\begin{align*}
\Ebb[\excessRisk(\hat\wB_0^{\pcr})|\XB]
\le \frac{c_y\sigma^2}{n}
\tr(\SigmaB_{\le k}\hat\SigmaB_{\le k}^{-1})
\le \frac{2c_y\sigma^2k}{n}.
\end{align*}
Now assume $\rho>0$.
Set $k=k^*$,
$\lambda=c_2(1+\eps)\rho$ and take $I=\Rbb_{\ge 0}$, $H=[(1+\eps/2)\rho, \infty)$ in \Cref{thm:master-ub}. Since $\lambda_k>(1+\eps)\rho$ by definition of~$k$,
\Cref{lemma:basic:concentration} gives
\begin{align*}
\lambda_{\min}(\hat\SigmaB_{\le k})
\ge (1-\eps/4)\lambda_k
> (1+\eps/2)\rho
\end{align*}
with probability $1-\exp(-\eps^2 n/c_0)$ after adjusting constants, hence \eqref{eq:spec-cond} holds. If $k=0$, we take $H=\varnothing$ and omit the head terms. Then the proof of \Cref{lemma:schur:examples} gives
\begin{align*}
\|r(I,H)\|_{\Sfrak} = \bigO(\eps^{-1}),
\quad
\|r(H,H)\|_{\Sfrak}=0,
\quad
\|r(I,\{0\})\|_{\Sfrak} = \bigO(1).
\end{align*}
Here, note that changing the reference parameter from~$\rho$ to~$\lambda$ changes these bounds by at most constant factors by \Cref{lemma:schur:factorization-bound}.
Substituting into \Cref{thm:master-ub} gives the claimed bias bound.

Finally, the variance bound in \Cref{thm:master-ub} and the
ridge variance upper bound \citep{tsigler2023benign} bound the variance as
\begin{align*}
\variance(\hat\wB_\rho^{\pcr}) \le \frac{c\sigma^2}{n}
\min\left\{
k+\frac{\sum_{i>k}\lambda_i^2}
        {(\lambda+n^{-1}\sum_{i>k}\lambda_i)^2},
\,\hat k
\right\}
\le \frac{c\sigma^2}{n}\min\{D,\hat k\},
\end{align*}
where we used $\lambda>\rho$ and
$\rank (g(\hat\SigmaB))=\hat k$.
Combining the bounds gives the result.
\end{proof}

\begin{proof}[Proof of \Cref{thm:pcr-risk}, lower bound]
We apply \Cref{thm:master-lb} at index $r^*$ with the constant $0.9$ in the definition of $\Gamma_{ij}$ adjusted to $0.99$. For $i\le k^*$ and $j>r^*$, it holds that $0.99\lam_i>\rho$ and $1.1\lam_j\le\rho$, so that $\Gamma_{ij}=1$ and $\Lambda_i \le \lam_i + 0.9\rho \le 2\lam_i$. Thus
\begin{align*}
c_1 \Ebb\bias(\hat\wB_\rho^{\pcr})&\ge \sum_{i\le k^*}\frac{\lam_i}{\Lambda_i^2} \left(\frac1n\sum_{j>r^*}\lam_j^2\Gamma_{ij}^2\right) \wB_i^{*2} + \big\|(\IB-g(1.1\SigmaB))\wB^*\big\|_{\SigmaB}^2\\
&\ge \frac{\sum_{i>r^*}\lam_i^2}{4n} \|\wB^*\|_{\SigmaB_{0:k^*}^{-1}}^2 + \|\wB^*\|_{\SigmaB_{r^*:\infty}}^2.
\end{align*}
If moreover
\begin{align*}
\frac{\sum_{i>r^*}\lam_i^2}{n}> \frac{1}{c_2} \left(\frac{\sum_{i>r^*}\lam_i}{n}\right)^2,
\end{align*}
the claim immediately follows. Otherwise, by \Cref{thm:master-lb} and
\begin{align*}
\frac{\sum_{j>r^*}\lam_j}{n} \le 0.9\rho < \lam_i
\end{align*}
for all $i\le k^*$, we have that
\begin{align*}
c_1 \Ebb\bias(\hat\wB_\rho^{\pcr})
\ge \sum_{i\le k^*}\lam_i
\left(\frac{\sum_{j>r^*}\lam_j}{n\lam_i}\right)^2\wB_i^{*2} \ge  \left(\frac{\sum_{i>r^*}\lam_i}{n}\right)^2 \|\wB^*\|_{\SigmaB_{0:k^*}^{-1}}^2.
\end{align*}
In both cases, we have the lower bound
\begin{align*}
c_1 \Ebb\bias(\hat\wB_\rho^{\pcr})
\ge \alpha_{r^*} \|\wB^*\|_{\SigmaB_{0:k^*}^{-1}}^2.
\end{align*}
The final claim on variance error follows from the lemma below, which also holds more generally under \Cref{assum:lbb}.
\end{proof}

\begin{lemma}\label{lem:pcr-variance-lb}
There exist constants $c_0,c_1,c_3>1$, depending only on $c_x,c_y$, such that the following holds. Let~$\hat k, k^*,r^*$ be as in \Cref{thm:pcr-risk}. Suppose $\rho>0$, $r^*\le n/c_3$, and define
\begin{align*}
\tilde\rho
:=
\max\left\{1.1\rho, c_3\frac{\sum_{i>k^*}\lam_i}{n} \right\}.
\end{align*}
Then,
\begin{align*}
\text{with probability at least $1-\exp(-n/c_0)$,} \quad & c_1\variance(\hat\wB_\rho^{\pcr}) \ge
\frac{\sigma^2}{n}
\#\{j:\hat\lam_j>\tilde\rho\}; \\
\text{in expectation,} \quad & c_1\Ebb \variance(\hat\wB_\rho^{\pcr}) \ge
\frac{\sigma^2}{n}
\#\{j:\lam_j>\tilde\rho\}.
\end{align*}
\end{lemma}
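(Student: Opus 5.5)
The plan is to write the PCR variance as a trace over the selected empirical eigenspace and show that each empirical direction with $\hat\lam_j>\tilde\rho$ contributes order one. With $g(z)=\mathbf 1\{z>\rho\}$ and $\hat P$ the projector onto $\spn\{\hat\uB_j:\hat\lam_j>\rho\}$,
\begin{align*}
\variance(\hat\wB_\rho^{\pcr}) \gtrsim \frac{\sigma^2}{c_y n}\tr\big(\SigmaB\hat\SigmaB^{-1}\hat P\big) = \frac{\sigma^2}{c_y n}\sum_{j:\hat\lam_j>\rho}\frac{\hat\uB_j^\top\SigmaB\hat\uB_j}{\hat\lam_j}.
\end{align*}
It therefore suffices to show that, with probability $1-\exp(-n/c_0)$, $\hat\uB_j^\top\SigmaB\hat\uB_j\ge \hat\lam_j/c$ for every $j$ with $\hat\lam_j>\tilde\rho$. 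Counting those indices then gives the high-probability claim.

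To lower bound the Rayleigh quotient, I would split $\XB=[\XB_{\le k}\ \XB_{>k}]$ with $k=k^*$, noting $k^*\le r^*\le n/c_3$. For a unit $\vB$,
\begin{align*}
\vB^\top\hat\SigmaB\vB \le 2\vB_{\le k}^\top\hat\SigmaB_{\le k}\vB_{\le k} + \frac2n\|\XB_{>k}\vB_{>k}\|^2 \le 4\vB^\top\SigmaB_{0:k}\vB + \frac2n\lambda_{\max}(\AB_{>k}),
\end{align*}
using \Cref{lemma:basic:concentration}. The same lemma gives $n^{-1}\lambda_{\max}(\AB_{>k})\le \tfrac43 n^{-1}\sum_{i>k}\lam_i + c_1\lam_{k+1}$. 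By the definition of $k^*$, $\lam_{k+1}\le 1.1\rho + c_2^{-1}n^{-1}\sum_{i>k}\lam_i$. The $n^{-1}\sum_{i>k}\lam_i$ pieces are at most $\tilde\rho/c_3$, so they are small once $c_3$ is large. The term $c_1\cdot 1.1\rho$, however, is only controlled by $\tilde\rho\ge 1.1\rho$ up to the factor $c_1$.

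This is the main obstacle: a constant-factor loss in the tail bound would force $\tilde\rho\gtrsim c_1\rho$, whereas the statement only has $1.1\rho$. To get around it, I would use the sharper tail deviation bound from \Cref{lemma:basic:tail} applied to the fixed directions $\hat\uB_j$ restricted to the tail. Alternatively, I would avoid splitting at $k^*$ and compare against $\hat\SigmaB$ directly. The idea is that for a top empirical eigenvector, $\hat\lam_j=\hat\uB_j^\top\hat\SigmaB\hat\uB_j$, and its $\SigmaB$-energy differs by at most $\|\SigmaB^{-1/2}_{\le k}(\hat\SigmaB_{\le k}-\SigmaB_{\le k})\SigmaB^{-1/2}_{\le k}\|\lesssim\sqrt{k/n}$ in relative terms on the head. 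On the tail part, the quadratic form $n^{-1}\vB^\top\XB_{>k}^\top\XB_{>k}\vB$ is bounded by $\tfrac43 n^{-1}\sum_{i>k}\lam_i+c_1\lam_{k+1}$, and the $c_1\lam_{k+1}$ piece can be refined by splitting further at $r^*$, where $\lam_{r^*+1}\le0.9\rho$.

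Either way, the resulting bound is $\hat\lam_j\le C\hat\uB_j^\top\SigmaB\hat\uB_j + (1-\kappa)\hat\lam_j$ for some $\kappa>0$, which yields the Rayleigh quotient lower bound. The worst case is then absorbed by allowing $c_1$ in the conclusion to depend on the constants.

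For the expectation bound, I would combine the high-probability statement with a lower bound on the empirical eigenvalues. By \Cref{lemma:basic:concentration} and Weyl/Cauchy interlacing, $\hat\lam_j\ge\lambda_j(\hat\SigmaB_{\le k'})\ge \lam_j/2$ for $j\le k'\le n/c_3$. Hence $\#\{j:\hat\lam_j>\tilde\rho/2\}\ge\#\{j:\lam_j>\tilde\rho\}$ with probability $1-\exp(-n/c_0)$. The factor $1/2$ is handled by running the first part with a slightly smaller threshold, which is why the constant $c_3$ appears in $\tilde\rho$. Taking expectations of the nonnegative variance on this event then completes the plan.
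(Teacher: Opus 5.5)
Your reduction to the per-direction bound $\hat\uB_j^\top\SigmaB\hat\uB_j\gtrsim\hat\lam_j$ for $\hat\lam_j>\tilde\rho$ matches the paper. However, your plan never proves this bound, and the routes you sketch do not work.

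\textbf{Why the sketched routes fail.} The factor $1.1$ in $\tilde\rho$ is fixed, and $\tilde\rho=1.1\rho$ whenever $n^{-1}\sum_{i>k^*}\lam_i\ll\rho$. So you need the part of $\hat\lam_j$ not accounted for by $\hat\uB_j^\top\SigmaB\hat\uB_j$ to be at most $(1-\kappa)\tilde\rho$ for an absolute $\kappa>0$. A bound of the form $C\rho$ with $C>1.1$ is useless. It also cannot be ``absorbed into $c_1$'': for directions with $\hat\lam_j\in(1.1\rho,C\rho)$ you would have no lower bound on $\hat\uB_j^\top\SigmaB\hat\uB_j$ at all.
\begin{itemize}
\item Your split bounds that part by $\frac2n\lambda_{\max}(\AB_{>k})\le\frac{8}{3n}\sum_{i>k}\lam_i+2c_1\lam_{k+1}$, where $c_1$ from \Cref{lemma:basic:concentration} is not close to $1$. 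When $k^*=0$ the head is empty and the inequality carries no information.
\item Splitting at $r^*$ instead still leaves $2c_1\lam_{r^*+1}$, which can be as large as $1.8c_1\rho$ and is not below $1.1\rho$.
\item Applying \Cref{lemma:basic:tail} to $\hat\uB_j$ is not allowed: that lemma is for a fixed vector, and $\hat\uB_j$ depends on $\XB_{>k}$.
\end{itemize}

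\textbf{The missing idea.} You need relative concentration at scale $\tilde\rho$ with a constant below one.
\begin{itemize}
\item After whitening by $(\SigmaB+\tilde\rho\IB)^{-1/2}$, the covariance has operator norm at most $1$ and trace $\sum_i\lam_i/(\lam_i+\tilde\rho)\le k^*+\tilde\rho^{-1}\sum_{i>k^*}\lam_i\le 2n/c_3$. This is exactly where both branches of $\tilde\rho$ and $k^*\le r^*\le n/c_3$ are used.
\item The Koltchinskii--Lounici bound then gives deviation at most $1/2$ in that geometry. Equivalently, $\hat\SigmaB\preceq\frac32\SigmaB+\frac12\tilde\rho\IB$ with probability $1-\exp(-n/c_0)$.
\item Hence $\hat\lam_j\le\frac32\hat\uB_j^\top\SigmaB\hat\uB_j+\frac12\tilde\rho$, so $\hat\uB_j^\top\SigmaB\hat\uB_j/\hat\lam_j\ge\frac13(2-\tilde\rho/\hat\lam_j)_+$ for all $j$ simultaneously, with no head/tail split.
\end{itemize}
Your $r^*$-split could be rescued, but only with two ingredients your plan lacks. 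One is a sharp bound $\|\hat\SigmaB_{>r^*}\|\le(1+\epsilon)\lam_{r^*+1}+\epsilon\tilde\rho$. The other is a weighted inequality $(a+b)^2\le(1+\eta)a^2+(1+\eta^{-1})b^2$ that exploits $0.9/1.1<1$.

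\textbf{The expectation step.} It has the same kind of constant problem.
\begin{itemize}
\item From $\hat\lam_j\ge\lam_j/2$ you only get $\hat\lam_j>\tilde\rho/2$, which may lie below $\rho$. Then direction $j$ need not even be selected by PCR.
\item The $c_3$ in $\tilde\rho$ gives no slack when $\tilde\rho=1.1\rho$, so ``running the first part with a smaller threshold'' is not available.
\end{itemize}
Instead, note that $\lam_j>\tilde\rho>0.9\rho$ forces $j\le r^*\le n/c_3$. The tightened part of \Cref{lemma:basic:concentration} plus interlacing then gives $\hat\lam_j\ge0.99\lam_j>0.99\tilde\rho>\rho$ with probability $1-\exp(-n/c_0)$. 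The per-direction bound $\frac13(2-\tilde\rho/\hat\lam_j)_+\ge\frac13\bigl(2-\frac1{0.99}\bigr)$ then still contributes a constant for each such $j$, and taking expectations finishes.
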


\begin{proof}[Proof of \Cref{lem:pcr-variance-lb}]
By the definition of $\tilde\rho$,
\begin{align*}
\tr\left(\SigmaB(\SigmaB+\tilde\rho\IB)^{-1}\right)
&=
\sum_i\frac{\lam_i}{\lam_i+\tilde\rho} \le
k^*+\frac1{\tilde\rho}\sum_{i>k^*}\lam_i
\le
\frac{2n}{c_3}.
\end{align*}
Then after increasing $c_3$, with probability at least $1-\exp(-n/c_0)$, it holds that
\begin{align*}
   \SigmaB \preceq 2 (\hat\SigmaB + \tilde\rho \IB),\quad  \hat\SigmaB \preceq \frac{3}{2} \SigmaB + \frac{1}{2}\tilde\rho \IB.
\end{align*}
This appears in, e.g., \citet[Lemma 5]{hucker2023note}, which is a direct corollary of \citet{koltchinskii2017concentration} applied to the concentration of the empirical covariance matrix for subgaussian random vectors with covariance matrix $(\SigmaB+\tilde\rho\IB )^{-1}\SigmaB$. Therefore, for every pair $(\hat\lam_j,\hat\uB_j)$,
\begin{align*}
\frac{1}{\hat\lam_j} \hat\uB_j^\top\SigmaB\hat\uB_j
\ge
\frac13\left(2-\frac{\tilde\rho}{\hat\lam_j}\right)_+.
\end{align*}
It follows that
\begin{align*}
\variance(\hat\wB_\rho^{\pcr})
&\ge
\frac{\sigma^2}{c_y n}
\sum_{j:\hat\lam_j>\rho}
\frac{1}{\hat\lam_j} \hat\uB_j^\top\SigmaB\hat\uB_j\\
&\ge
\frac{\sigma^2}{3c_y n}
\sum_{j:\hat\lam_j>\rho}
\left(2-\frac{\tilde\rho}{\hat\lam_j}\right)_+\\
&\ge
\frac{\sigma^2}{3c_y n}
\#\{j:\hat\lam_j>\tilde\rho\}.
\end{align*}
Moreover for each~$j$ with $\lam_j > \tilde\rho$, we have $j\le r^* \le n/c_3$. Then with probability $1-\exp(-n/c_0)$ it holds that $\hat\lam_j \ge 0.99\lam_j > 0.99\tilde\rho$ by \Cref{lemma:basic:concentration} after increasing constants if necessary, and the second claim also follows by taking expectations.
\end{proof}

\subsection{Proof of Lemma \ref{lem:two-index-example}}

\begin{proof}[Proof of \Cref{lem:two-index-example}]
To see the thresholded index, note that
\begin{align*}
1.1\rho+\frac{1}{c_2n}\sum_{i>0}\lam_i
&=
1.1\rho
+\frac{2\rho+m_n(1\pm\eps)\rho}{c_2n}
< 2\rho =\lam_1,\\
1.1\rho+\frac{1}{c_2n}\sum_{i>1}\lam_i
&=
1.1\rho+\frac{m_n(1\pm\eps)\rho}{c_2n}
\ge
(1\pm\eps)\rho =\lam_2,
\end{align*}
hence $k^*=1$. Also, for every $r\le m_n$,
\begin{align*}
\lam_{r+1}+\frac1n\sum_{i>r}\lam_i
\ge
(1-\eps)\rho
>
0.9\rho,
\end{align*}
so $r^*$ is vacuously equal to $m_n+1$. Then for both instances, the bias lower bound from \Cref{thm:pcr-risk} vanishes, while the bias upper bound is
\begin{align*}
\alpha_1 \|\wB_n^*\|_{(\SigmaB_n^\pm)_{0:1}^{-1}}^2
&=
\left(\frac{m_n(1\pm\eps)^2\rho^2}{n}
+
\left(\frac{m_n(1\pm\eps)\rho}{n}\right)^2 \right) \frac{1}{2\rho} = \Theta(\eps^2\rho).
\end{align*}
We now derive the actual risk of both instances. First consider $\SigmaB_n^-$. Let
\begin{align*}
\XB_n=\YB_n(\SigmaB_n^-)^{1/2},
\quad
\YB_n=[\zB_1\ \ZB],
\end{align*}
where $\YB_n\in\Rbb^{n\times(m_n+1)}$ has independent standard Gaussian
entries. In the population eigenbasis,
\begin{align*}
\hat\SigmaB_n
=
\begin{bmatrix}
h&\GB^\top\\
\GB&\TB
\end{bmatrix},
\quad
\GB=\frac{\sqrt{2(1-\eps)} \rho}{n}\ZB^\top\zB_1.
\end{align*}
By standard concentration, with probability at least
$1-\exp(-\eps^2n/c_0)$,
\begin{align*}
(1-\eps)\SigmaB_n^-
\preceq
\hat\SigmaB_n
\preceq
(1+\eps)\SigmaB_n^-.
\end{align*}
On this event, $\|\TB\| \le (1-\eps^2)\rho<\rho$ and $h\ge 2(1-\eps)\rho \ge 1.9\rho$, hence exactly one
eigenvalue is selected by PCR. Let~$\lam$ be this eigenvalue and let $(a,\bB)$ be its unit eigenvector,
with $a\ge0$. Then
\begin{align*}
\bB=(\lam\IB-\TB)^{-1}\GB a.
\end{align*}
Since $\hat\SigmaB_n\succeq0$, it holds that
$\GB\GB^\top\preceq h\TB$ and $\lam \ge h \ge 1.9\rho$. Then
\begin{align*}
\frac{\|\bB\|}{a}
\le
\frac{\|\GB\|}{\lam-\rho}
\le
\frac{\sqrt{h\rho}}{h-\rho}< 2 \quad\implies\quad a^2\ge \frac15
\end{align*}
and
\begin{align*}
\|\bB\|^2
=
a^2\GB^\top(\lam\IB-\TB)^{-2}\GB
\ge \frac{\|\GB\|^2}{5\lam^2}.
\end{align*}
Moreover with probability at least $1-\exp(-\eps^2n/c_0)$,
\begin{align*}
\|\GB\|^2 = \frac{2(1-\eps)\rho^2}{n^2} \|\ZB^\top\zB_1\|^2 \ge \frac{2(1-\eps)\rho^2}{n^2} \frac{m_nn}{2} = \Theta(\eps^2\rho^2).
\end{align*}
Hence PCR satisfies
\begin{align*}
\bias(\hat\wB_\rho^{\pcr})
&=
\|(\IB-(a,\bB)(a,\bB)^\top)\eB_1\|_{\SigmaB_n^-}^2\\
&= 2\rho(1-a^2)^2+(1-\eps)\rho a^2\|\bB\|^2\\
&\ge c\rho \frac{\|\GB\|^2}{\lam^2} = \Theta(\eps^2\rho),
\end{align*}
where we also used
$\lam\le 2(1+\eps)\rho$. Thus the upper bound is tight in this case.

On the other hand, under $\SigmaB_n^+$, with probability at least $1-\exp(-\eps^2n/c_0)$,
\begin{align*}
\hat\SigmaB_n
\succeq
\left(1-\frac{\eps}{2}\right)\SigmaB_n^+
\succeq
\left(1-\frac{\eps}{2}\right)(1+\eps)\rho\IB
>
\rho\IB.
\end{align*}
Thus PCR is equivalent to OLS. Since $m_n<n$, $\XB^\top\XB$ has full rank almost surely, so $\bias(\hat\wB_\rho^{\pcr})=0$ and the lower bound is tight in this case.
\end{proof}

\section{Proofs for PCR Dominance}

\paragraph{Additional notation.} Let $g$ be a monotone shrinkage filter. We define
\begin{align*}
\GB:=g(\hat\SigmaB),\quad \BB:=(\IB-\GB)\SigmaB(\IB-\GB),\quad p_i:=\Ebb\eB_i^\top\GB\eB_i,
\end{align*}
and set $\GB_{ij}:=\eB_i^\top\GB\eB_j$, $\BB_{ij}:=\eB_i^\top\BB\eB_j$. Additionally denote
\begin{align*}
\PsiB:=\frac1n\psi(\AB/n),\quad \AB_{-i}:=\sum_{j\ne i}\lam_j\zB_j\zB_j^\top,\quad \VB_{-i}:=\sum_{j\ne i}\lam_j^2\zB_j\zB_j^\top.
\end{align*}
We assume $c=1$ for simplicity; otherwise, all variance upper and lower bounds are scaled by a factor of at most $c^2$ and $1/c^2$, respectively. The bias-variance decomposition can be written as
\begin{align*}
\Ebb [\excessRisk_{\mu}(\hat \wB_g)|\XB] &= \|(\IB - \psi(\hat\SigmaB)\hat\SigmaB)\wB^*\|_{\SigmaB}^2 + \frac{\sigma^2}{n^2} \tr(\XB\psi(\hat\SigmaB)\SigmaB \psi(\hat\SigmaB)\XB^\top) \\
&= \|(\IB-\GB)\wB^*\|_{\SigmaB}^2 + \frac{\sigma^2}{n}\tr(\SigmaB \hat\SigmaB^{-1}\GB^2).
\end{align*}
We thus denote, conditional on $\XB$,
\begin{align*}
\variance(\hat\wB_g) &:= \frac{\sigma^2}{n}\tr(\SigmaB \hat\SigmaB^{-1}\GB^2),\\
\bias(\hat\wB_g) &:= \|(\IB-\GB)\wB^*\|_{\SigmaB}^2.
\end{align*}
By sign symmetry, it holds in expectation that
\begin{align*}
\Ebb\bias(\hat\wB_g)=\sum_i\wB_i^{*2}\Ebb\BB_{ii}.
\end{align*}
We use the following equality multiple times in the proofs, which is obtained by direct expansion:
\begin{align*}
\BB_{ii}
&=\lam_i(1-\GB_{ii})^2+\lam_i\sum_{j\ne i}\lam_j^2(\zB_j^\top\PsiB\zB_i)^2.
\end{align*}
We also define the \emph{effective bias} for general filters as
\begin{align*}
\effBias(\hat\wB_g) := \sum_i \lam_i \wB_i^{*2} (1-\GB_{ii})^2.
\end{align*}
From the above, it is clear that in expectation, $\Ebb\bias(\hat\wB_g) \ge \Ebb\effBias(\hat\wB_g)$.

\paragraph{Fixed design.} For completeness, we first provide a short deterministic proof of PCR dominance in the fixed design setting, following the argument of \citet{dhillon2013risk}.

\begin{lemma}[PCR dominates monotone filters for fixed design]
\label{lem:fixed-design}
Let $\XB$ be fixed and suppose
$\Ebb[\yB\,|\,\XB]=\XB\wB^*$,
$\Ebb[\|\yB-\XB\wB^*\|^2\,|\,\XB]<\infty$. Denote the excess risk by $\excessRisk_{\XB}(\wB)
:=\|\wB-\wB^*\|_{\hat\SigmaB}^2$. For every monotone filter $g\in\mathcal G$, there exists
$\rho\ge0$, depending only on~$\XB,g$, such that
\begin{align*}
\Ebb[\excessRisk_{\XB}(\hat\wB_\rho^{\pcr})|\XB]
\le 4\Ebb[\excessRisk_{\XB}(\hat\wB_g)|\XB].
\end{align*}
\end{lemma}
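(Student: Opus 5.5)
Since $\XB$ is fixed, everything can be written in the eigenbasis of $\hat\SigmaB=\sum_i\hat\lambda_i\hat\uB_i\hat\uB_i^\top$. Let $\boldsymbol\xi:=\yB-\XB\wB^*$ with conditional covariance $\SB:=\Ebb[\boldsymbol\xi\boldsymbol\xi^\top|\XB]$. For any filter $g$ we have $\hat\wB_g=g(\hat\SigmaB)\wB^*+\frac1n\psi(\hat\SigmaB)\XB^\top\boldsymbol\xi$, because $\frac1n\psi(\hat\SigmaB)\XB^\top\XB=g(\hat\SigmaB)$. Since the risk is measured in the $\hat\SigmaB$ norm, which is diagonal in this basis, the excess risk decouples over components:
\begin{align*}
\Ebb[\excessRisk_{\XB}(\hat\wB_g)|\XB]=\sum_{i:\hat\lambda_i>0}\Big(\hat\lambda_i(1-g(\hat\lambda_i))^2\theta_i^2+g(\hat\lambda_i)^2 v_i\Big),
\end{align*}
where $\theta_i:=\langle\hat\uB_i,\wB^*\rangle$ and $v_i:=\frac{1}{n^2\hat\lambda_i}\hat\uB_i^\top\XB^\top\SB\XB\hat\uB_i\ge0$. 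Components with $\hat\lambda_i=0$ contribute nothing, since $g(0)=0$ and $\|\cdot\|_{\hat\SigmaB}$ vanishes there. With $b_i:=\hat\lambda_i\theta_i^2$, the risk of $g$ is therefore $\sum_i\big[(1-g_i)^2b_i+g_i^2v_i\big]$ with $g_i:=g(\hat\lambda_i)\in[0,1]$. PCR at threshold $\rho$ is the same expression with $g_i=\mathbf 1\{\hat\lambda_i>\rho\}$, so its risk is $\sum_{i:\hat\lambda_i\le\rho}b_i+\sum_{i:\hat\lambda_i>\rho}v_i$.

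Following \citet{dhillon2013risk}, I would take $\rho:=\sup\{z:g(z)\le 1/2\}$, or $\rho=0$ if this set is $\{0\}$. Monotonicity of $g$ is used only here. It guarantees that every selected component, $\hat\lambda_i>\rho$, has $g_i\ge 1/2$. It also guarantees that every discarded component, $\hat\lambda_i\le\rho$, has $g_i\le 1/2$. The boundary point $z=\rho$ needs care, since $g(\rho)$ may exceed $1/2$ when $g$ jumps there. I would handle this by including $\rho$ on whichever side matches the value of $g(\rho)$. Both cases are PCR thresholds up to an infinitesimal perturbation, because $\{i:\hat\lambda_i>\rho\}$ changes only at finitely many values, so I can replace $\rho$ by $\rho-\epsilon$ where needed.

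The comparison is then made termwise. For a discarded $i$, $(1-g_i)^2\ge1/4$, so $b_i\le4(1-g_i)^2b_i\le4[(1-g_i)^2b_i+g_i^2v_i]$. For a selected $i$, $g_i^2\ge1/4$, so $v_i\le 4g_i^2v_i$. Summing over $i$ gives the constant $4$.

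The only delicate step is the boundary case at a jump of $g$. Beyond that, I need to verify that the variance formula for PCR indeed uses $\psi=\mathbf 1\{z>\rho\}/z$, so that $v_i$ is the same quantity for both estimators. I expect no other obstacle.
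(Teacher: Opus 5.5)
Your proof is correct and is essentially the paper's argument: the same componentwise bias--variance decomposition in the eigenbasis of $\hat\SigmaB$, followed by the termwise comparison $\mathbf 1\{g(\hat\lambda_i)>1/2\}$ versus $g(\hat\lambda_i)$ from \citet{dhillon2013risk}. The paper instead takes $\rho$ to be the largest empirical eigenvalue with $g(\hat\lambda_i)\le 1/2$ (or $0$ if there is none). That choice avoids your jump perturbation $\rho-\epsilon$, and it also avoids a case you omitted. If $g\le 1/2$ everywhere, then $\sup\{z:g(z)\le 1/2\}=\infty$ is not an admissible threshold, so you should take any $\rho\ge\hat\lambda_1$ there.
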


\begin{proof}[Proof of \Cref{lem:fixed-design}]
Write $\hat\SigmaB=\SigmaB = \sum_i \lam_i \uB_i\uB_i^\top$ and $\yB-\XB\wB^* = \varepsilon$, $\vB_i = \XB\uB_i/\sqrt{n\lam_i}$. For all $g\in\mathcal G$, the bias-variance decomposition gives
\begin{align*}
\Ebb[\excessRisk_{\XB}(\hat\wB_g)|\XB]
&=\sum_{i:\lam_i>0}
\lam_i(1-g(\lam_i))^2
\wB_i^{*2} + \frac{1}{n} \sum_{i} g(\lam_i)^2 \Ebb[(\vB_i^\top\varepsilon)^2 |\XB].
\end{align*}
Let the PCR filter be $g_\rho(z):=\mathbf 1\{z>\rho\}$ and choose
\begin{align*}
\rho:=\max\left(
\{0\}\cup
\{\lam_i:\lam_i>0,\ g(\lam_i)\le1/2\}
\right).
\end{align*}
By monotonicity, it holds for all~$i$ that $g_\rho(\lam_i)
=\mathbf 1\{g(\lam_i)>1/2\}$, and so
\begin{align*}
g_\rho(\lam_i)^2
&\le4g(\lam_i)^2,\quad (1-g_\rho(\lam_i))^2
\le4(1-g(\lam_i))^2.
\end{align*}
Substituting into the above proves the claim.
\end{proof}

\subsection{Useful lemmas}

We first show a spectral result involving the leave-one-out matrix $\AB_{-i}$.

\begin{lemma}\label{lem:loo}
Let~$g$ be a monotone filter and let~$i$ be any index with $\lam_i>0$. Choose orthonormal eigenbases $(\beta_j,\vB_j)_{1\le j\le n}, (\gamma_j,\wB_j)_{1\le j\le n}$ such that
\begin{align*}
\frac{\AB}{n}=\sum_{j=1}^n\beta_j\vB_j\vB_j^\top, \quad \frac{\AB_{-i}}{n}= \sum_{j=1}^n\gamma_j\wB_j\wB_j^\top,
\end{align*}
and define
\begin{align*}
h_{g,j}(\zB_i):= n\frac{\zB_i^\top \PsiB\wB_j}{\zB_i^\top\wB_j} \mathbf{1}\{\zB_i^\top\wB_j\ne0\}.
\end{align*}
Then conditional on $\{\zB_j:j\ne i\}$, it holds that $h_{g,j}(\zB_i)\ge 0$ and $h_{g,j}$ is invariant under sign changes in the basis $(\wB_j)$. Moreover, for all $\lam>0$,
\begin{align*}
h_{g,j}(\zB_i) \ge \frac{(g(\gamma_j) - g(\lam))_+}{\gamma_j} \left(1 - \frac{\lam_i\|\zB_i\|^2}{\lam n}\right),
\end{align*}
where the right-hand side is interpreted as zero if $\gamma_j=0$.
\end{lemma}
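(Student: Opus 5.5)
The plan is to relate $\PsiB$, which is built from $\AB/n$, to the leave-one-out eigenbasis $(\gamma_j,\wB_j)$ through the rank-one identity $\AB=\AB_{-i}+\lam_i\zB_i\zB_i^\top$, and then to use \Cref{lemma:divided-difference}. Sign invariance is immediate: replacing $\wB_j$ by $-\wB_j$ flips the signs of both $\zB_i^\top\PsiB\wB_j$ and $\zB_i^\top\wB_j$, so their ratio is unchanged.

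\textbf{Step 1 (an algebraic identity for $h_{g,j}$).} Write $\GB':=g(\AB/n)=\PsiB\AB$. Since $\AB_{-i}\wB_j=n\gamma_j\wB_j$, we have
\begin{align*}
\gamma_j\,\zB_i^\top\PsiB\wB_j=\tfrac1n\zB_i^\top\GB'\wB_j-\tfrac{\lam_i}{n}(\zB_i^\top\PsiB\zB_i)(\zB_i^\top\wB_j).
\end{align*}
Next we apply \Cref{lemma:divided-difference} to $g(\AB/n)-g(\AB_{-i}/n)=g^{[1]}(\AB/n,\AB_{-i}/n)\circ(\lam_i\zB_i\zB_i^\top/n)$ and sandwich between $\zB_i$ and $\wB_j$. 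This gives
\begin{align*}
\frac{\zB_i^\top\GB'\wB_j}{\zB_i^\top\wB_j}=g(\gamma_j)+\frac{\lam_i}{n}\sum_k g^{[1]}(\beta_k,\gamma_j)(\zB_i^\top\vB_k)^2 .
\end{align*}
We also have $\zB_i^\top\PsiB\zB_i=\frac1n\sum_k\psi(\beta_k)(\zB_i^\top\vB_k)^2$. Combining the three displays, for $\gamma_j>0$,
\begin{align*}
\gamma_j h_{g,j}(\zB_i)=g(\gamma_j)+\frac{\lam_i}{n}\sum_k(\zB_i^\top\vB_k)^2\Big(g^{[1]}(\beta_k,\gamma_j)-\psi(\beta_k)\Big).
\end{align*}

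\textbf{Step 2 (pointwise kernel bound).} For monotone $g$ we have $g^{[1]}\ge0$. We then check that $g^{[1]}(\beta,\gamma)-g(\beta)/\beta\ge -g(\gamma)/\beta$ for all $\beta>0$, treating two cases.
\begin{itemize}
\item If $\beta\ge\gamma$, the left side equals $g(\beta)\gamma/(\beta(\beta-\gamma))-g(\gamma)/(\beta-\gamma)$. Since $g(\beta)\ge g(\gamma)$, this is at least $-g(\gamma)/\beta$.
\item If $\beta<\gamma$, the left side is linear and decreasing in $g(\beta)\in[0,g(\gamma)]$. It is therefore minimized at $g(\beta)=g(\gamma)$, where it equals $-g(\gamma)/\beta$.
\end{itemize}
Plugging this in gives $\gamma_j h_{g,j}\ge g(\gamma_j)\big(1-\lam_i\zB_i^\top\AB^{+}\zB_i\big)$, because $\frac1n\sum_{k:\beta_k>0}(\zB_i^\top\vB_k)^2/\beta_k=\zB_i^\top\AB^+\zB_i$. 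Since $\AB\succeq\lam_i\zB_i\zB_i^\top$, we get $\lam_i\zB_i^\top\AB^+\zB_i\le1$, which yields $h_{g,j}\ge0$ when $\gamma_j>0$.

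\textbf{Step 3 (the $\lam$-bound).} Decompose $g=\tilde g+\min\{g,g(\lam)\}$ with $\tilde g:=(g-g(\lam))_+$. Both pieces are monotone shrinkage filters, and $h_{g,j}$ is linear in $g$. By the nonnegativity from Step 2 applied to the second piece, $h_{g,j}\ge h_{\tilde g,j}$. For $\tilde g$, which vanishes on $[0,\lam]$, the Step 2 computation sharpens in two cases.
\begin{itemize}
\item Whenever $\beta_k<\lam$, we have $\tilde g(\beta_k)=0$ and $\tilde g(\gamma_j)\mathbf 1\{\gamma_j\le\beta_k\}=0$. A direct check of the two cases then shows the bracket is $\ge0$.
\item Otherwise $\beta_k\ge\lam$, and the bracket is $\ge-\tilde g(\gamma_j)/\beta_k\ge-\tilde g(\gamma_j)/\lam$.
\end{itemize}
Hence
\begin{align*}
\gamma_j h_{\tilde g,j}\ge\tilde g(\gamma_j)\Big(1-\frac{\lam_i}{n\lam}\sum_k(\zB_i^\top\vB_k)^2\Big)=(g(\gamma_j)-g(\lam))_+\Big(1-\frac{\lam_i\|\zB_i\|^2}{\lam n}\Big),
\end{align*}
which is the claimed bound.

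\textbf{Main obstacle.} I expect the delicate point to be the degenerate case $\gamma_j=0$, where the identity in Step 1 carries no information. In that case the right-hand side of the claimed bound is zero, so only $h_{g,j}\ge0$ is needed. I would first try the explicit secular-equation representation of the eigenvectors $\vB_k\propto(\beta_k\IB-\AB_{-i}/n)^{+}\zB_i$ restricted to the component along $\wB_j$. This expresses $h_{g,j}$ as a nonnegative combination of the $\psi(\beta_k)$. If that fails, I would fall back on a continuity argument: perturb $\AB_{-i}$ by $\epsilon\IB$ and let $\epsilon\downarrow0$, taking care with possible discontinuities of $g$. One approach is to approximate $g$ monotonically by continuous monotone filters, using that $h_{g,j}$ is linear in $g$.
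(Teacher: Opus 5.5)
Your argument for $\gamma_j>0$ is correct, and it organizes the computation differently from the paper.

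\textbf{Comparison of routes.} The paper takes the $\wB_j$-component of the eigen-equation, $(\beta_k-\gamma_j)\vB_k^\top\wB_j=\frac{\lam_i}{n}(\vB_k^\top\zB_i)(\zB_i^\top\wB_j)$, to obtain
\[
h_{g,j}(\zB_i)=\frac{\lam_i}{n}\sum_{k:\,\beta_k>0,\,\beta_k\ne\gamma_j}\frac{\psi(\beta_k)(\vB_k^\top\zB_i)^2}{\beta_k-\gamma_j},\qquad \frac{\lam_i}{n}\sum_{k:\,\beta_k>0,\,\beta_k\ne\gamma_j}\frac{(\vB_k^\top\zB_i)^2}{\beta_k-\gamma_j}=1.
\]
It then writes $\psi(z)=\int_0^1 z^{-1}\mathbf 1\{g(z)>s\}\,\mathrm{d}s$, so at each level $s$ it only has to check on which side of $\gamma_j$ the retained eigenvalues lie.

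Your Step~1 identity is equivalent, via the normalization, to $\gamma_j$ times the first formula; you derive it instead from \Cref{lemma:divided-difference}. Your scalar inequality $g^{[1]}(\beta,\gamma)-g(\beta)/\beta\ge -g(\gamma)/\beta$ replaces the level-by-level sign counting. Your split $g=(g-g(\lam))_++\min\{g,g(\lam)\}$ is the layer cake cut at the single level $s=g(\lam)$.

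What each route buys: yours isolates a clean pointwise inequality and fits the Schur-multiplier framework. The paper's representation is uniform in $\gamma_j$, which is exactly what you lose at $\gamma_j=0$. One bookkeeping point: when $\zB_i^\top\wB_j\neq0$, terms with $\beta_k=\gamma_j$ drop out because $\vB_k^\top\zB_i=0$, so the case $\beta=\gamma$ in Step~2 never occurs.

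\textbf{The case $\gamma_j=0$.} Your first idea is the right one and takes one line. There $\beta_k\vB_k^\top\wB_j=\frac{\lam_i}{n}(\vB_k^\top\zB_i)(\zB_i^\top\wB_j)$, so
\[
h_{g,j}(\zB_i)=\frac{\lam_i}{n}\sum_{k:\beta_k>0}\frac{\psi(\beta_k)(\zB_i^\top\vB_k)^2}{\beta_k}\ge 0.
\]
No perturbation or continuity argument is needed.

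\textbf{The gap: the invariance claim.} Flipping $\wB_j\mapsto-\wB_j$ only shows that $h_{g,j}$ is well defined. The lemma asserts something stronger, and \Cref{lem:glue} and \Cref{lem:general-filter-loo} rely on it when they average over sign patterns. With $\{\zB_\ell\}_{\ell\ne i}$ fixed, $h_{g,j}$ must satisfy $h_{g,j}(\SB\zB_i)=h_{g,j}(\zB_i)$ for every $\SB=\sum_\ell s_\ell\wB_\ell\wB_\ell^\top$ with $s_\ell\in\{\pm1\}$.

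This is not a relabeling, because $\zB_i$ enters $\AB$ and hence $\PsiB$. It holds because $\SB$ commutes with $\AB_{-i}$. Hence $\AB_{-i}+\lam_i\SB\zB_i\zB_i^\top\SB=\SB\AB\SB$, and $\psi(\AB/n)$ becomes $\SB\psi(\AB/n)\SB$. Therefore
\[
h_{g,j}(\SB\zB_i)=\frac{\zB_i^\top\psi(\AB/n)\SB\wB_j}{\zB_i^\top\SB\wB_j}=\frac{s_j\,\zB_i^\top\psi(\AB/n)\wB_j}{s_j\,\zB_i^\top\wB_j}=h_{g,j}(\zB_i).
\]
Equivalently, your formulas express $h_{g,j}$ only through $\gamma_j$, $\beta_k$ and $(\zB_i^\top\vB_k)^2$. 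These are unchanged under the reflection, since $\vB_k\mapsto\SB\vB_k$.
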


\begin{proof}[Proof of \Cref{lem:loo}]
Note that $\zB_i^\top\wB_j\ne0$ for every~$j$ almost surely; we work under this event. From
\begin{align*}
(\beta_k-\gamma_j)\vB_k^\top \wB_j &= \vB_k^\top \frac{\AB}{n}\wB_j - \vB_k^\top \frac{\AB_{-i}}{n}\wB_j\\
&=\frac{\lam_i}{n}(\vB_k^\top \zB_i)(\zB_i^\top\wB_j),
\end{align*}
we see that $\beta_k=\gamma_j$ implies $\vB_k^\top\zB_i=0$. Since $\AB/n\succeq(\lam_i/n)\zB_i\zB_i^\top$, $\beta_k=0$ also implies $\vB_k^\top\zB_i=0$. Otherwise,
\begin{align*}
\frac{(\zB_i^\top\vB_k)(\vB_k^\top\wB_j)}{\zB_i^\top\wB_j}=\frac{\lam_i}{n}\frac{(\vB_k^\top\zB_i)^2}{\beta_k-\gamma_j},\quad \beta_k\notin\{\gamma_j,0\}.
\end{align*}
Hence summing over $k$ gives the identity
\begin{align*}
\frac{\lam_i}{n}\sum_{k:\beta_k>0,\beta_k\ne\gamma_j}\frac{(\vB_k^\top\zB_i)^2}{\beta_k-\gamma_j} = 1.
\end{align*}
From the layer-cake representation
\begin{align*}
\psi(\AB/n)=\int_0^1(\AB/n)^{-1}\mathbf 1\{g(\AB/n)>s\IB\}\dif s,
\end{align*}
we also have
\begin{align*}
h_{g,j}(\zB_i)&= \int_0^1 h_{g,j}(\zB_i; s) \dif s
\end{align*}
where
\begin{align*}
h_{g,j}(\zB_i; s) &:= \frac{\zB_i^\top(\AB/n)^{-1}\mathbf 1\{g(\AB/n)>s\IB\}\wB_j}{\zB_i^\top\wB_j} \\
&= \frac{\lam_i}{n}\sum_{\substack{k:g(\beta_k)>s\\ \beta_k>0,\beta_k\ne\gamma_j}}\frac{(\vB_k^\top\zB_i)^2}{\beta_k(\beta_k-\gamma_j)}.
\end{align*}
When $\gamma_j=0$, $h_{g,j}(\zB_i;s)\ge 0$, so $h_{g,j}(\zB_i)\ge 0$. Suppose $\gamma_j>0$ and fix $s\in(0,1)$. If $g(\gamma_j)\le s$, then every eigenvalue $\beta_k$ included in the above sum satisfies $g(\beta_k)>s\ge g(\gamma_j)$, so $\beta_k>\gamma_j$ and the sum is nonnegative. If instead $g(\gamma_j)>s$, every omitted $\beta_k$ (except possibly $\gamma_j,0$) satisfies $\beta_k<\gamma_j$ and contributes a negative summand, hence
\begin{align*}
h_{g,j}(\zB_i; s) &\ge \frac{\lam_i}{n}\sum_{\substack{k:\beta_k>0,\beta_k\ne\gamma_j}}\frac{(\vB_k^\top\zB_i)^2}{\beta_k(\beta_k-\gamma_j)} \\
&= \frac{\lam_i}{n}\sum_{\substack{k:\beta_k>0,\beta_k\ne\gamma_j}} \frac1{\gamma_j}\left(\frac1{\beta_k-\gamma_j}-\frac1{\beta_k}\right) (\vB_k^\top\zB_i)^2 \\
&=\frac1{\gamma_j}\big(1-\lam_i \zB_i^\top\AB^{-1}\zB_i\big) \ge 0.
\end{align*}
Therefore integrating over~$s$, it follows that $h_{g,j}(\zB_i)\ge 0$.

Moreover if $g(\lam) <s< g(\gamma_j)$, every included eigenvalue also satisfies $\beta_k > \lam$, hence
\begin{align*}
h_{g,j}(\zB_i; s) &= \frac{\lam_i}{n}\sum_{\substack{k:g(\beta_k)>s\\ \beta_k>0,\beta_k\ne\gamma_j}} \frac1{\gamma_j}\left(\frac1{\beta_k-\gamma_j}-\frac1{\beta_k}\right) (\vB_k^\top\zB_i)^2 \\
&\ge \frac1{\gamma_j} \left(1 - \frac{\lam_i\|\zB_i\|^2}{\lam n}\right).
\end{align*}
It follows that
\begin{align*}
h_{g,j}(\zB_i) \ge \frac{(g(\gamma_j) - g(\lam))_+}{\gamma_j} \left(1 - \frac{\lam_i\|\zB_i\|^2}{\lam n}\right).
\end{align*}
Finally, let $\SB$ be an arbitrary diagonal sign matrix in the eigenbasis $(\wB_j)$. The transformation $\zB_i\mapsto \SB\zB_i$ preserves the law of $\zB_i$ and maps $\AB$ to $\SB\AB\SB$, hence
\begin{align*}
h_{g,j}(\SB\zB_i)= n\frac{(\SB\zB_i)^\top \SB\PsiB\SB \wB_j}{(\SB\zB_i)^\top\wB_j} = h_{g,j}(\zB_i).
\end{align*}
This completes the proof.
\end{proof}

Using this lemma, we prove the following `gluing' inequality for spectral filters.

\begin{lemma}\label{lem:glue}
Suppose $g = \sum_{k=1}^m a_k g_k$ where $g_1,\dots,g_m:\Rbb_{\ge 0}\to [0,1]$ are nondecreasing and $a_k\ge 0$, $\sum_{k=1}^m a_k\le 1$. Then
\begin{align*}
\Ebb \excessRisk_{\mu}(\hat \wB_g) \ge \sum_{k=1}^m a_k^2 \Ebb  \excessRisk_{\mu}(\hat \wB_{g_k}).
\end{align*}
\end{lemma}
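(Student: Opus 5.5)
Write $\GB_k := g_k(\hat\SigmaB)$, so that $\GB = \sum_k a_k\GB_k$. I will bound the variance and bias parts of $\Ebb\excessRisk_\mu(\hat\wB_g)$ separately.

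\emph{Variance.} Since $0\le g_k\le 1$ and $a_k\ge0$, we have pointwise $g(z)^2 = \big(\sum_k a_k g_k(z)\big)^2 \ge \sum_k a_k^2 g_k(z)^2$, because all cross terms are nonnegative. All the matrices $\GB_k$ are functions of $\hat\SigmaB$, so they commute. Hence $\hat\SigmaB^{-1}\GB^2 \succeq \sum_k a_k^2 \hat\SigmaB^{-1}\GB_k^2$, and taking the trace against $\SigmaB$ gives $\variance(\hat\wB_g)\ge \sum_k a_k^2\variance(\hat\wB_{g_k})$ conditionally on $\XB$.

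\emph{Bias.} This is the step that requires the sign-symmetry and leave-one-out structure. Write $\IB - \GB = (1-\sum_k a_k)\IB + \sum_k a_k(\IB-\GB_k)$. Then
\[
\BB = (\IB-\GB)\SigmaB(\IB-\GB)
\]
expands into three kinds of terms. The diagonal terms $a_k^2(\IB-\GB_k)\SigmaB(\IB-\GB_k)$ are exactly what we want. The remaining terms are the cross terms $a_ka_l(\IB-\GB_k)\SigmaB(\IB-\GB_l)$ and the terms involving $(1-\sum_k a_k)$.

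By sign symmetry, $\Ebb\bias = \sum_i \wB_i^{*2}\Ebb\BB_{ii}$. So it suffices to show that, for each $i$, the expected cross quantities are nonnegative:
\[
\Ebb\big[\eB_i^\top(\IB-\GB_k)\SigmaB(\IB-\GB_l)\eB_i\big]\ge0, \qquad \Ebb\big[\eB_i^\top\SigmaB(\IB-\GB_l)\eB_i\big]\ge 0.
\]

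Expanding in coordinates, as in the formula for $\BB_{ii}$, the first quantity becomes
\[
\lam_i(1-\GB_{k,ii})(1-\GB_{l,ii}) + \lam_i\sum_{j\ne i}\lam_j^2(\zB_j^\top\PsiB_k\zB_i)(\zB_j^\top\PsiB_l\zB_i).
\]
The first piece is nonnegative. I will argue $\GB_{k,ii}\le 1$ via $g_k\le 1$ and a leave-one-out bound of the form $1-\lam_i\zB_i^\top\PsiB_k\zB_i\ge0$. This bound should follow from the layer-cake argument in the proof of \Cref{lem:loo}, which gives $\lam_i\zB_i^\top\PsiB_k\zB_i\le\lam_i\zB_i^\top\AB^{-1}\zB_i\le1$.

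For the second piece, write $\VB_{-i} = \sum_{j\ne i}\lam_j^2\zB_j\zB_j^\top$; the piece then equals $\zB_i^\top\PsiB_k\VB_{-i}\PsiB_l\zB_i$. This is where \Cref{lem:loo} enters. Expand $\PsiB_k\zB_i$ in the leave-one-out eigenbasis $(\wB_j)$ of $\AB_{-i}$. This gives $\PsiB_k\zB_i = n^{-1}\sum_j h_{g_k,j}(\zB_i)(\zB_i^\top\wB_j)\wB_j$ plus a correction; the precise form needs care, and I would verify that $\zB_i^\top\PsiB_k\wB_j = n^{-1}h_{g_k,j}(\zB_i)\,\zB_i^\top\wB_j$ by definition. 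The resulting expression involves $\wB_j^\top\VB_{-i}\wB_{j'}$ weighted by $h_{g_k,j}h_{g_l,j'}(\zB_i^\top\wB_j)(\zB_i^\top\wB_{j'})$.

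Conditioning on $\{\zB_j : j\ne i\}$ and averaging over sign flips of $\zB_i$ in the basis $(\wB_j)$ kills the off-diagonal $j\ne j'$ terms. This uses the invariance of $h$ under sign changes from \Cref{lem:loo}, together with the symmetry of Gaussian $\zB_i$. What remains is a sum of products of nonnegative quantities, namely $h_{g_k,j}h_{g_l,j}(\zB_i^\top\wB_j)^2\,\wB_j^\top\VB_{-i}\wB_j\ge0$. The terms with coefficient $(1-\sum_k a_k)$ are handled the same way, with $\PsiB_{\mathrm{id}}=0$ playing the role of one factor, or more simply by bounding $\lam_i(1-\GB_{l,ii})\ge0$.

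Summing over $i$ with weights $\wB_i^{*2}$ gives $\Ebb\bias(\hat\wB_g)\ge\sum_k a_k^2\Ebb\bias(\hat\wB_{g_k})$. Adding the variance inequality yields the claim.

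\textbf{Main obstacle.} The obstacle is the sign-averaging step. The sign flip must be taken in the eigenbasis of $\AB_{-i}$, which requires $\zB_i$ to be rotation-invariant given the others; Gaussianity provides this. One must also check that this flip leaves the joint law of $\PsiB_k,\PsiB_l$ consistent, in the sense that $\SB\AB\SB$ with $\AB_{-i}$ fixed since $\SB$ is diagonal in its eigenbasis. I would first try to make this rigorous by writing everything as a function of the coordinates $\zB_i^\top\wB_j$ and invoking the invariance statement at the end of \Cref{lem:loo}.
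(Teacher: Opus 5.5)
Your proposal is correct and follows essentially the same route as the paper: the variance inequality comes from nonnegative commuting cross terms, and the bias is reduced by sign symmetry to diagonal entries, whose cross terms are nonnegative because $0\preceq \GB_k\preceq \IB$ and because sign-averaging $\zB_i$ in the eigenbasis of $\AB_{-i}$, together with $h_{g,j}\ge 0$ from \Cref{lem:loo}, kills the off-diagonal terms (the paper does this entrywise as $\Ebb\GB_{k,ij}\GB_{\ell,ij}\ge 0$ and absorbs $\sum_k a_k<1$ via a dummy filter $g_{m+1}=0$, both equivalent to your grouping into $\VB_{-i}$ and explicit $(1-\sum_k a_k)$ terms). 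Two small simplifications: no ``correction'' arises in your expansion, since $\PsiB_k\zB_i = n^{-1}\sum_j h_{g_k,j}(\zB_i)(\zB_i^\top\wB_j)\wB_j$ holds exactly almost surely, and $\GB_{k,ii}\le 1$ is immediate from $g_k\le 1$ without any leave-one-out argument.
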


\begin{proof}[Proof of \Cref{lem:glue}]
Denote $\psi_k(z)=g_k(z)/z$, $\PsiB_k:=\frac1n\psi_k(\AB/n)$ and $\GB_k := g_k(\hat \SigmaB)$, so that
\begin{align*}
\GB_{k,ij} = \eB_i^\top\XB^\top \PsiB_k \XB\eB_j = \sqrt{\lam_i\lam_j}\zB_i^\top \PsiB_k \zB_j.
\end{align*}
We may additionally suppose $\sum_{k=1}^m a_k=1$ as otherwise we can add a dummy filter $g_{m+1}=0$.

We claim that
\begin{align}\label{eq:ege}
\Ebb \GB_{k,ij}\GB_{\ell,ij} \ge 0, \quad  i,j\ge 1, \quad k,\ell\le m.
\end{align}
We may suppose $i\ne j$ and $\lam_i,\lam_j>0$ and condition on $\{\zB_p:p\ne i\}$. From \Cref{lem:loo}, it holds that
\begin{align*}
h_{k,p}(\zB_i):= h_{g_k,p}(\zB_i):= n\frac{\zB_i^\top\PsiB_k\wB_p}{\zB_i^\top\wB_p} \mathbf{1}\{\zB_i^\top\wB_p\ne0\} \ge 0,
\end{align*}
and similarly $h_{\ell,p}(\zB_i)\ge0$. Moreover, let $\SB$ be an arbitrary diagonal sign matrix in the eigenbasis $(\wB_j)$, then both functions are invariant under the transformation $\zB_i\mapsto \SB\zB_i$. Averaging over all choices of $\SB$ gives (noting that $|\zB_i^\top \PsiB_k\zB_j| \le (\lam_i\lam_j)^{-1/2}$ is conditionally integrable)
\begin{align*}
\Ebb_{\zB_i} \GB_{k,ij}\GB_{\ell,ij}
&= \frac{\lam_i\lam_j}{n^2} \frac{1}{2^n} \sum_{\SB} \Ebb_{\zB_i}
(\SB\zB_i)^\top\psi_k(\SB\AB\SB/n)\zB_j
(\SB\zB_i)^\top\psi_\ell(\SB\AB\SB/n)\zB_j\\
&= \frac{\lam_i\lam_j}{n^2} \sum_p \Ebb_{\zB_i}
(\wB_p^\top\zB_i)^2 (\wB_p^\top\zB_j)^2
h_{k,p}(\zB_i)h_{\ell,p}(\zB_i) \ge0.
\end{align*}
Taking expectations over the remaining columns proves~\eqref{eq:ege}.

From \Cref{eq:ege}, it now follows that for all $i\ge 1$ and $k\ne\ell$,
\begin{align*}
\Ebb \eB_i^\top (\IB-\GB_k)\SigmaB(\IB-\GB_\ell) \eB_i &= \lam_i \Ebb (\eB_i^\top(\IB-\GB_k)\eB_i)(\eB_i^\top(\IB-\GB_\ell)\eB_i) + \sum_{j\ne i} \lam_j \Ebb (\eB_i^\top\GB_k\eB_j)(\eB_i^\top\GB_\ell\eB_j) \ge 0.
\end{align*}
Since $\IB-\GB = \sum_{k=1}^m a_k(\IB-\GB_k)$, we thus have
\begin{align*}
\Ebb\bias(\hat\wB_g) &= \Ebb \|(\IB-\GB)\wB^*\|_{\SigmaB}^2 \\
&= \sum_i \sum_{k,\ell=1}^m a_k a_\ell \wB_i^{*2} \Ebb \eB_i^\top (\IB-\GB_k) \SigmaB (\IB-\GB_\ell) \eB_i \\
&\ge \sum_i \sum_{k=1}^m a_k^2 \wB_i^{*2} \Ebb \eB_i^\top (\IB-\GB_k) \SigmaB (\IB-\GB_k) \eB_i\\
&= \sum_{k=1}^m a_k^2 \Ebb \|(\IB-\GB_k)\wB^*\|_{\SigmaB}^2 \\
&= \sum_{k=1}^m a_k^2 \Ebb\bias(\hat\wB_{g_k}).
\end{align*}
Moreover the expected variance satisfies
\begin{align*}
\Ebb\variance(\hat\wB_g) &= \frac{\sigma^2}{n} \Ebb \tr(\SigmaB \hat\SigmaB^{-1} \GB^2) \\
&= \frac{\sigma^2}{n} \sum_{k,\ell=1}^m a_ka_\ell \Ebb \tr(\SigmaB \hat\SigmaB^{-1} \GB_k \GB_\ell) \\
&\ge \frac{\sigma^2}{n} \sum_{k=1}^m a_k^2 \Ebb \tr(\SigmaB \hat\SigmaB^{-1} \GB_k^2) \\
&= \sum_{k=1}^m a_k^2 \Ebb\variance(\hat\wB_{g_k}).
\end{align*}
Here, we have used that the factors of $\hat\SigmaB^{-1}\GB_k\GB_\ell$ are commuting nonnegative functions of $\hat\SigmaB$, hence it is positive semidefinite. The proof is complete.
\end{proof}

We also show the following entrywise lower bound for the off-diagonals of~$\GB$, which is based on an application of the Cram\'{e}r--Rao inequality as discussed at the end of the proof.

\begin{lemma}\label{lem:general-filter-coordinates}
The sequence $(p_i)_{i\ge 1}$ is nonincreasing, with $p_i=p_j$ when $\lam_i=\lam_j$. If $\lam_i\ne\lam_j$, then
\begin{align}\label{eq:general-filter-rotation}
\Ebb \GB_{ij}^2\ge\frac{\lam_i\lam_j}{n(\lam_i-\lam_j)^2}(p_i-p_j)^2.
\end{align}
\end{lemma}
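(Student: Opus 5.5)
The plan is to use the Gaussian design and the rotation equivariance of spectral filters, taking $\SigmaB$ diagonal as in the appendix. Fix $i\ne j$ with $\lam_i\ne\lam_j$, let $\RB_\theta$ be the rotation by angle $\theta$ in the $(\eB_i,\eB_j)$-plane, and consider the family of models in which the rows of $\XB$ are i.i.d.\ $\Ncal(0,\SigmaB_\theta)$, where $\SigmaB_\theta:=\RB_\theta\SigmaB\RB_\theta^\top$. The empirical covariance under $\theta$ is equal in law to $\RB_\theta\hat\SigmaB\RB_\theta^\top$. Since $g(\RB\hat\SigmaB\RB^\top)=\RB g(\hat\SigmaB)\RB^\top$, we get $\Ebb_\theta\GB=\RB_\theta(\Ebb_0\GB)\RB_\theta^\top$.

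The first step is to show that $\Ebb_0\GB=\diag(p_k)$. Flipping the sign of a single coordinate $\zB_k$ preserves the law and maps $\GB$ to $\SB\GB\SB$, so the off-diagonal means vanish. The equality $p_i=p_j$ when $\lam_i=\lam_j$ follows from exchangeability: swapping coordinates $i$ and $j$ then preserves the law, and $\GB$ is equivariant under this permutation. For the statistic $T:=\GB_{ij}$ this gives the explicit mean
\begin{align*}
f(\theta):=\Ebb_\theta T=(p_i-p_j)\sin\theta\cos\theta \quad\text{(up to a sign convention)},\qquad f'(0)=\pm(p_i-p_j).
\end{align*}

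Next I would compute the score at $\theta=0$. Since $\dot\SigmaB_0=(\lam_j-\lam_i)(\eB_i\eB_j^\top+\eB_j\eB_i^\top)$ up to sign, the log-likelihood derivative is
\begin{align*}
s=\frac n2\tr\big(\SigmaB^{-1}\dot\SigmaB_0\SigmaB^{-1}\hat\SigmaB\big)=\pm n\,\frac{\lam_j-\lam_i}{\lam_i\lam_j}\,\hat\SigmaB_{ij}.
\end{align*}
Its variance, the Fisher information, is $\frac n2\tr\big((\SigmaB^{-1}\dot\SigmaB_0)^2\big)=n(\lam_i-\lam_j)^2/(\lam_i\lam_j)$. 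Because $0\le g\le 1$, we have $|T|\le 1$, so differentiation under the integral in the smooth Gaussian family is justified, and $f'(0)=\Ebb_0[Ts]$. Cauchy--Schwarz (i.e.\ Cram\'er--Rao) then gives
\begin{align*}
\Ebb\GB_{ij}^2\ \ge\ \frac{f'(0)^2}{\Ebb s^2}=\frac{\lam_i\lam_j}{n(\lam_i-\lam_j)^2}(p_i-p_j)^2,
\end{align*}
which is \eqref{eq:general-filter-rotation}.

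For monotonicity of $(p_i)$, I would reuse the covariance identity rather than only its Cauchy--Schwarz consequence. Keeping track of signs, the identity reads
\begin{align*}
p_i-p_j=c\,n\,\frac{\lam_i-\lam_j}{\lam_i\lam_j}\,\Ebb\big[\GB_{ij}\hat\SigmaB_{ij}\big]
\end{align*}
for an absolute constant $c>0$, so it suffices to show $\Ebb[\GB_{ij}\hat\SigmaB_{ij}]\ge 0$. The idea is to copy the argument behind \eqref{eq:ege} in \Cref{lem:glue}, with the second filter taken to be the identity $z\mapsto z$. This gives $\hat\SigmaB_{ij}=\sqrt{\lam_i\lam_j}\,\zB_i^\top\zB_j/n$, i.e.\ $\psi\equiv 1$, whose corresponding $h$-function is identically $1\ge 0$. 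Now condition on $\{\zB_p:p\ne i\}$ and average over sign flips of $\zB_i$ in the eigenbasis $(\wB_p)$ of $\AB_{-i}$. \Cref{lem:loo} shows that $h_{g,p}\ge0$ and that it is sign-invariant (that proof uses only that $g$ is nondecreasing and nonnegative). The same computation as in \eqref{eq:ege} then writes $\Ebb[\GB_{ij}\hat\SigmaB_{ij}]$ as an expectation of a sum of nonnegative terms of the form $(\wB_p^\top\zB_i)^2(\wB_p^\top\zB_j)^2h_{g,p}$. Hence $p_i\ge p_j$ whenever $\lam_i>\lam_j$.

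I expect the main obstacle to be the bookkeeping of signs and constants in the score identity, namely verifying that the rotation derivative of $\Ebb_\theta\GB_{ij}$ is exactly $p_i-p_j$ with the same orientation as the score. A second point needing care is that the identity filter is unbounded, so \Cref{lem:glue} cannot be invoked verbatim. The extension should still go through, because integrability of $|\zB_i^\top\zB_j|$ and boundedness of $\GB$ are all that the sign-averaging argument needs.
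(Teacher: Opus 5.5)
Your proposal is correct. The Cram\'er--Rao part is the paper's own argument: rotate in the $(\eB_i,\eB_j)$-plane, use equivariance together with $\Ebb\GB=\diag(p)$, compute the Gaussian score, and apply Cauchy--Schwarz. The sign bookkeeping you were worried about works out with $c=1$, namely
\begin{align*}
p_i-p_j=\frac{n(\lam_i-\lam_j)}{\lam_i\lam_j}\,\Ebb\big[\GB_{ij}\hat\SigmaB_{ij}\big].
\end{align*}

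Where you differ is the monotonicity of $(p_i)$. The paper proves it separately. It shows that $F=\Ebb\tr\Phi(\AB_\theta/n)$ is convex in the log-scale parameters, where $\AB_\theta=\sum_i e^{\theta_i}\zB_i\zB_i^\top$ and $\Phi'=\psi$. This requires smoothing $g$ by monotone approximation, a second-derivative computation in the eigenbasis of $\AB_\theta$, and the operator inequality $\ddot\AB_\theta\succeq\dot\AB_\theta\AB_\theta^{-1}\dot\AB_\theta$. It then reads off $p_i$ as $\partial F/\partial\theta_i$ and combines convexity with permutation invariance to get $(\log\lam_i-\log\lam_j)(p_i-p_j)\ge0$.

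You instead keep the exact covariance identity above, which the paper only uses through Cauchy--Schwarz. You then get $\Ebb[\GB_{ij}\hat\SigmaB_{ij}]\ge0$ from the sign-averaging mechanism of \Cref{lem:loo} and \eqref{eq:ege}, pairing $g$ with the trivial kernel $h\equiv1$. The product is integrable because $|\GB_{ij}|\le1$. Averaging it over sign flips of $\zB_i$ in the eigenbasis $(\wB_p)$ of $\AB_{-i}$ gives, pointwise, $\frac1n\sum_p h_{g,p}(\zB_i)(\wB_p^\top\zB_i)^2(\wB_p^\top\zB_j)^2\ge0$, so no exchange-of-sum issue arises.

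Your route is more economical within the paper. A single identity serves both claims, no smoothing or second-order perturbation theory is needed, and you get an explicit formula for $p_i-p_j$. The cost is that it uses Gaussianity twice: once in the score identity, and once in the sign-flip invariance of $\zB_i$ in a data-dependent basis, which needs rotational invariance. The paper's convexity argument for monotonicity uses only exchangeability of the i.i.d.\ columns $\zB_i$. You should also dispose of the degenerate case $\lam_j=0$ separately, where $p_j=0\le p_i$ trivially.
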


\begin{proof}[Proof of \Cref{lem:general-filter-coordinates}]
By monotone approximation, it suffices to prove the lemma when $g$ is smooth. Let $\Phi:\Rbb_{>0}\to\Rbb$ be an antiderivative of $\psi$ and define
\begin{align*}
F(\theta):=\Ebb\tr\big(\Phi(\AB_\theta/n)\big), \quad \text{where}\quad \AB_\theta:=\sum_i e^{\theta_i}\zB_i\zB_i^\top.
\end{align*}
Here, the sum runs over indices $i$ such that $\lam_i>0$, so that $\AB_\theta=\AB$ when $\theta_i=\log\lam_i$.

We will show that $F$ is convex. Letting $\theta$ vary along an affine direction parametrized by $t$ with $\dot\theta=\frac{\dif\theta}{\dif t}$ constant,
\begin{align*}
\dot\AB_\theta=\sum_i \dot\theta_i e^{\theta_i}\zB_i\zB_i^\top,\quad \ddot\AB_\theta=\sum_i \dot\theta_i^2e^{\theta_i}\zB_i\zB_i^\top.
\end{align*}
Since $\ran\dot\AB_\theta \subseteq \ran\AB_\theta$, it holds for every $\vB\in\Rbb^n$,
\begin{align*}
\vB^\top \dot\AB_\theta \AB_\theta^{-1} \dot\AB_\theta \vB &= \sup_{u\in\Rbb^n} 2\uB^\top \dot\AB_\theta \vB - \uB^\top \AB_\theta \uB \\
&=\sup_{\uB\in\Rbb^n} \sum_i e^{\theta_i}\big(2\dot\theta_i(\zB_i^\top\vB)(\zB_i^\top\uB)-(\zB_i^\top\uB)^2 \big)\\
&\le \sum_i \dot\theta_i ^2e^{\theta_i}(\zB_i^\top\vB)^2=\vB^\top \ddot\AB_\theta \vB,
\end{align*}
where the inequality follows from $2ab-b^2\le a^2$. Hence
\begin{align}\label{eq:general-filter-log-convex}
\ddot\AB_\theta \succeq \dot\AB_\theta \AB_\theta^{-1} \dot\AB_\theta.
\end{align}
At a fixed $t$, let $a_k>0$ be the positive eigenvalues of $\AB_\theta/n$, and write matrix entries in the corresponding eigenbasis. Differentiating gives
\begin{align*}
\frac{\dif}{\dif t}\tr\left(\Phi(\AB_\theta/n)\right) &= \frac1n \tr\left(\psi(\AB_\theta/n) \dot\AB_\theta\right), \\
\frac{\dif^2}{\dif t^2}\tr\left(\Phi(\AB_\theta/n)\right)
&= \frac1n \sum_k\psi(a_k) \ddot\AB_{\theta,kk}+ \frac{1}{n^2} \sum_k\psi'(a_k) \dot\AB_{\theta,kk}^2 + \frac{2}{n^2} \sum_{k<\ell}\frac{\psi(a_k)-\psi(a_\ell)}{a_k-a_\ell} \dot\AB_{\theta,k\ell}^2,
\end{align*}
where the divided difference is interpreted as $\psi'(a_k)$ when $a_k=a_\ell$. From \Cref{eq:general-filter-log-convex}, we can bound
\begin{align*}
\sum_k\psi(a_k) \ddot\AB_{\theta,kk}
&=\tr\left(\psi(\AB_\theta/n)\ddot\AB_\theta\right)\\
&\ge \tr\left(\psi(\AB_\theta/n)\dot\AB_\theta \AB_\theta^{-1} \dot\AB_\theta\right)\\
&= \frac1n \sum_k\frac{\psi(a_k)}{a_k} \dot\AB_{\theta,kk}^2 + \frac1n \sum_{k<\ell}\left(\frac{\psi(a_k)}{a_\ell}+\frac{\psi(a_\ell)}{a_k}\right) \dot\AB_{\theta,k\ell}^2.
\end{align*}
Combining the preceding displays and using that $\psi(z)=g(z)/z$,
\begin{align*}
&\frac{\dif^2}{\dif t^2}\tr \left(\Phi(\AB_\theta/n)\right)\\
&\ge \frac{1}{n^2} \sum_k \left(\psi'(a_k)+\frac{\psi(a_k)}{a_k}\right) \dot\AB_{\theta,kk}^2 + \frac{1}{n^2}\sum_{k<\ell}\left(2\frac{\psi(a_k)-\psi(a_\ell)}{a_k-a_\ell}+\frac{\psi(a_k)}{a_\ell}+\frac{\psi(a_\ell)}{a_k}\right) \dot\AB_{\theta,k\ell}^2\\
&= \frac{1}{n^2}\sum_k\frac{g'(a_k)}{a_k} \dot\AB_{\theta,kk}^2 + \frac{1}{n^2} \sum_{k<\ell} \frac{a_k+a_\ell}{a_ka_\ell}\frac{g(a_k)-g(a_\ell)}{a_k-a_\ell} \dot\AB_{\theta,k\ell}^2 \ge0,
\end{align*}
due to monotonicity of $g$. Thus we have shown that $F$ is convex.

Moreover, $F$ is invariant under permutation since $\zB_i$ are i.i.d. Let $\theta'$ be equal to $\theta$ with coordinates $i,j$ permuted. Since
\begin{align*}
\left.\frac{\partial F}{\partial\theta_i}\right|_{\theta_i=\log\lam_i}=\Ebb\frac{\lam_i}{n}\zB_i^\top\psi(\AB/n)\zB_i=\Ebb\eB_i^\top\GB\eB_i=p_i,
\end{align*}
convexity of $F$ implies
\begin{align*}
0&\le\la\nabla F(\theta)-\nabla F(\theta'),\theta-\theta'\ra\\
&=2(\log\lam_i-\log\lam_j)(p_i-p_j).
\end{align*}
Hence $\lam_i>\lam_j$ implies $p_i\ge p_j$, and $\lam_i=\lam_j$ implies $p_i=p_j$ by symmetry. Therefore $(p_i)_{i\ge 1}$ is nonincreasing.

For the second claim, let $\QB_s$ be the rotation of angle $s$ on the span of $\eB_i,\eB_j$ and the identity on its orthogonal complement. Denote by $\Ebb_s$ the expectation under the transformation $\XB\mapsto \XB\QB_s^\top$, equivalently $\SigmaB\mapsto\SigmaB_s := \QB_s \SigmaB\QB_s^\top$. By orthogonal equivariance,
\begin{align*}
\left.\frac{\dif}{\dif s} \eB_i^\top \QB_s (\Ebb\GB)\QB_s^\top\eB_j \right|_{s=0} &= \left.\frac{\dif}{\dif s} \Ebb_s\GB_{ij}\right|_{s=0}\\
&= \left.\frac{\dif}{\dif s} \left(\sin(s)\cos(s) (\Ebb\GB_{ii} - \Ebb\GB_{jj}) + (\cos^2(s)-\sin^2(s)) \Ebb\GB_{ij} \right)\right|_{s=0} \\
&= p_i-p_j.
\end{align*}
On the other hand, the score identity gives
\begin{align*}
\left.\frac{\dif}{\dif s}\Ebb_s\GB_{ij}\right|_{s=0}
&=
\Ebb\left[\GB_{ij}\left(-\frac n2\tr(\SigmaB^{-1}\dot\SigmaB_0)+\frac12\tr\left(\XB\SigmaB^{-1}\dot\SigmaB_0\SigmaB^{-1}\XB^\top\right)\right)\right]\\
&=
\frac12\Ebb\left[\GB_{ij}\tr\left(\XB\SigmaB^{-1}(\lam_i-\lam_j)(\eB_i\eB_j^\top+\eB_j\eB_i^\top)\SigmaB^{-1}\XB^\top\right)\right]\\
&=
\frac{\lam_i-\lam_j}{\lam_i\lam_j}\Ebb\left[\GB_{ij}(\XB\eB_i)^\top(\XB\eB_j)\right]\\
&=
\frac{\lam_i-\lam_j}{\sqrt{\lam_i\lam_j}}\Ebb \GB_{ij}\zB_i^\top\zB_j.
\end{align*}
By Cauchy--Schwarz, we thus have
\begin{align*}
(p_i-p_j)^2 \le\frac{(\lam_i-\lam_j)^2}{\lam_i\lam_j}\Ebb\GB_{ij}^2\,\Ebb(\zB_i^\top\zB_j)^2
=\frac{n(\lam_i-\lam_j)^2}{\lam_i\lam_j} \Ebb\GB_{ij}^2.
\end{align*}
We remark that this argument is essentially an application of the Cram\'{e}r--Rao bound to the unbiased estimator $(p_i-p_j)^{-1}\GB_{ij}$ for $(\lam_i-\lam_j)^{-1} (\SigmaB_s)_{ij}$, when~$s$ is unknown.
\end{proof}

\begin{lemma}\label{lem:general-filter-diffuse}
There exist constants $c_0,c_1,c_3,c_5> 1$ such that if $n\ge c_0$, $k\le n/c_3$, and
\begin{align}\label{eq:s1s2-comp}
\frac{\sum_{j>k}\lam_j^2}{n} \le \frac{1}{c_5}\left(\frac{\sum_{j>k}\lam_j}{n}\right)^2,
\end{align}
then for all $i\le k$,
\begin{align}\label{eq:general-filter-diffuse}
c_1\Ebb \BB_{ii}\ge \lam_i\min\left\{1,\frac{\sum_{j>k}\lam_j}{n\lam_i}\right\}^2.
\end{align}
\end{lemma}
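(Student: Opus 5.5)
The plan is to lower bound $\Ebb\BB_{ii}$ by its diagonal part only, $\Ebb\BB_{ii}\ge\lam_i\Ebb(1-\GB_{ii})^2$, using the expansion of $\BB_{ii}$ given in the notation paragraph. I then reduce to the OLS-type argument of \Cref{lemma:gd:lb:effbias:ols}.

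The off-diagonal contribution $\sum_{j\ne i}\lam_j\Ebb\GB_{ij}^2$ is not useful here. Bounding it through \Cref{lem:general-filter-coordinates} only produces terms of order $\sum_{j>k}\lam_j^2/(n\lam_i)$, and under \eqref{eq:s1s2-comp} these are smaller than the target $\lam_i\min\{1,s/\lam_i\}^2$, where
\begin{align*}
s:=\frac1n\sum_{j>k}\lam_j .
\end{align*}
The whole bound will therefore come from the diagonal term.

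\textbf{Step 1 (comparison with OLS).} Since $0\le g\le 1$, we have $\psi(z)\le 1/z$ for $z>0$. Hence $0\preceq\PsiB\preceq\AB^{-1}$ on $\ran\AB$, with pseudoinverse convention. Because $\lam_i>0$, the vector $\zB_i$ lies in $\ran\AB$, and so
\begin{align*}
\GB_{ii}=\lam_i\zB_i^\top\PsiB\zB_i\le\lam_i\zB_i^\top\AB^{-1}\zB_i .
\end{align*}
Writing $\AB=\AB_{-i}+\lam_i\zB_i\zB_i^\top$, the Sherman--Morrison formula gives
\begin{align*}
1-\lam_i\zB_i^\top\AB^{-1}\zB_i=\frac{1}{1+\lam_i\zB_i^\top\AB_{-i}^{-1}\zB_i}.
\end{align*}
Here $\AB_{-i}$ is invertible on the event of Step 2. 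Consequently $1-\GB_{ii}\ge (1+\lam_i\zB_i^\top\AB_{-i}^{-1}\zB_i)^{-1}\ge0$.

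\textbf{Step 2 (tail regularization from the diffuse condition).} For $i\le k$, the matrix $\AB_{-i}$ contains the whole tail, so $\AB_{-i}\succeq\AB_{>k}$. I will use the same small-ball concentration bound invoked in the proof of \Cref{lemma:gd:lb:effbias:ols}. With probability at least $1-\exp(-n/c_0)$,
\begin{align*}
\AB_{>k}\succeq\Big(\sum_{j>k}\lam_j-\tfrac{c_5}{2}\sqrt{n\textstyle\sum_{j>k}\lam_j^2}\Big)\IB .
\end{align*}
Condition \eqref{eq:s1s2-comp}, with $c_5$ large, makes the subtracted term at most half the first, so $\AB_{-i}\succeq\tfrac{ns}{2}\IB$. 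On the further event $\|\zB_i\|^2\le 2n$, which has probability $1-\exp(-n/c_0)$ by Hoeffding, we get $\zB_i^\top\AB_{-i}^{-1}\zB_i\le 4/s$. Therefore
\begin{align*}
1-\GB_{ii}\ge\frac{1}{1+4\lam_i/s}\ge\frac15\min\{1,s/\lam_i\}.
\end{align*}

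\textbf{Step 3 (expectation).} On the intersection of these events, whose probability is at least $1/2$ once $n\ge c_0$, we have
\begin{align*}
\lam_i(1-\GB_{ii})^2\ge\frac{\lam_i}{25}\min\{1,s/\lam_i\}^2 .
\end{align*}
Since $\BB_{ii}\ge\lam_i(1-\GB_{ii})^2\ge0$ pointwise, taking expectations yields \eqref{eq:general-filter-diffuse} after rescaling $c_1$. The requirement $k\le n/c_3$ is only needed to place us in the regime where the small-ball bound and the constants from \Cref{lemma:basic:concentration} apply.

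The main obstacle I expect is Step 2: obtaining a uniform lower bound on $\lambda_{\min}(\AB_{>k})$ of order $\sum_{j>k}\lam_j$ from the diffuse condition alone. I will take it, as the paper does in \Cref{lemma:gd:lb:effbias:ols}, from the standard small-ball estimate, for example a net argument on $\sum_{j>k}\lam_j(\zB_j^\top\uB)^2$ with Bernstein-type concentration at scale $\sqrt{n\sum_{j>k}\lam_j^2}+\lam_{k+1}n$. The extra $\lam_{k+1}n$ term is also absorbed by \eqref{eq:s1s2-comp}, since $\lam_{k+1}^2\le\sum_{j>k}\lam_j^2\le ns^2/c_5$.
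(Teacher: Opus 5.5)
Your Steps 1--3 are the paper's proof. Like the paper, you keep only the diagonal term of $\BB_{ii}$, dominate $\GB$ by the OLS projection $\mathbf 1\{\hat\SigmaB>0\}$, use Sherman--Morrison to get $(1+\lam_i\zB_i^\top\AB_{-i}^{-1}\zB_i)^{-1}$, and bound $\zB_i^\top\AB_{-i}^{-1}\zB_i\le 4/s$ from a small-ball lower bound on $\AB_{>k}$ together with $\|\zB_i\|^2\le 2n$. The case $s=0$, where $\AB_{-i}$ need not be invertible, is trivial because the right-hand side vanishes.

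Your closing justification of the small-ball step does not hold. Condition \eqref{eq:s1s2-comp} only gives $\lam_{k+1}\le s\sqrt{n/c_5}$, so $n\lam_{k+1}$ can exceed $\sum_{j>k}\lam_j=ns$ by a factor of $\sqrt{n/c_5}$ rather than being absorbed. For example, take $\lam_{k+1}=M/\sqrt{c_5 n}$ followed by a very diffuse remainder of total mass $M$. Then \eqref{eq:s1s2-comp} holds, but $\sum_{j>k}\lam_j-c\bigl(\sqrt{n\sum_{j>k}\lam_j^2}+n\lam_{k+1}\bigr)<0$ for large $n$, so the two-sided net/Bernstein bound is vacuous.

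What the paper invokes is a one-sided lower-tail estimate for the PSD sum $\sum_{j>k}\lam_j\zB_j\zB_j^\top$. It has no $n\lam_{k+1}$ term because it never needs to control $\|\AB_{>k}\|$. If you want to stay with the two-sided bound, repair it as follows:
\begin{itemize}
\item Discard the PSD contribution of the indices $\{j>k:\lam_j>s/C\}$, which is allowed in a lower bound. These indices carry total weight at most $\frac{C}{s}\sum_{j>k}\lam_j^2\le Cns/c_5$.
\item On the remaining indices, the weights sum to at least $(1-C/c_5)ns$, each is at most $s/C$, and their squares sum to at most $ns^2/c_5$.
\item The two-sided bound now gives $\lambda_{\min}(\AB_{>k})\ge ns/2$ after choosing $C$ large and then $c_5$ large.
\end{itemize}
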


\begin{proof}[Proof of \Cref{lem:general-filter-diffuse}]
If $\sum_{j>k}\lam_j=0$, the statement is trivial. Otherwise, by the small-ball concentration inequality, with probability at least $1-\exp(-n/c_0)$,
\begin{align*}
\sum_{j>k}\lambda_j \zB_j\zB_j^\top \succeq \bigg(\sum_{j>k}\lambda_j - c_1 \sqrt{n\sum_{j>k}\lambda_j^2} \bigg)\IB \succeq \frac12 \sum_{j>k}\lambda_j \IB.
\end{align*}
Under this event and $\|\zB_i\|^2\le 2n$, for all $i\le k$,
\begin{align*}
\zB_i^\top \AB_{-i}^{-1} \zB_i \le \frac{2\|\zB_i\|^2}{\sum_{j>k}\lambda_j} \le \frac{4n}{\sum_{j>k}\lambda_j}.
\end{align*}
Since $\GB\preceq\mathbf 1\{\hat\SigmaB>0\}$ and $\SigmaB\succeq\lam_i\uB_i\uB_i^\top$,
\begin{align*}
\BB_{ii} &\ge\lam_i(\eB_i^\top(\IB-\GB)\eB_i)^2 \\
&\ge\lam_i\left(1-\eB_i^\top\mathbf 1\{\hat\SigmaB>0\}\eB_i\right)^2 \\
&= \lam_i \left(1+\lam_i\zB_i^\top\AB_{-i}^{-1}\zB_i\right)^{-2},
\end{align*}
where the last equality uses the Woodbury identity. Combining the two bounds gives, when $n\ge c_0$,
\begin{align*}
c_1\Ebb \BB_{ii}\ge \lam_i\left(1+\frac{n\lam_i}{\sum_{j>k}\lam_j}\right)^{-2}\ge \frac{\lam_i}{4}\min\left\{1,\frac{\sum_{j>k}\lam_j}{n\lam_i}\right\}^2,
\end{align*}
as was to be shown.
\end{proof}

\subsection{Proof for ramp filters}

We define a monotone shrinkage filter $g$ to be `ramp-like' if there exist $0<\rho_0<\rho_1<\infty$ such that $g(\rho_0)=0$ and $g(\rho_1)=1$. In this subsection, we show a partial result for ramp-like filters.

\begin{lemma}\label{lem:ramp-filter-comparison}
There exist constants $c_0,c_1,c_2,c_3$ such that the following holds. Let $g$ be a ramp-like filter with $g(\rho_0)=0$ and $g(\rho_1)=1$. If $n\ge c_0$ and $k$ satisfies
\begin{align*}
k\le\frac{n}{c_3},\quad \lam_{k+1}+\frac1n\sum_{j>k}\lam_j\le\frac{\rho_0}{c_2},
\end{align*}
then there exists a threshold $\rho\in[\rho_0,\rho_1]$ such that
\begin{align*}
\text{with probability at least $0.99$}, \quad \Ebb[\excessRisk_{\mu}(\hat \wB^{\pcr}_{\rho}) | \XB ] \le c_1 \Ebb  \excessRisk_{\mu}(\hat \wB_g).
\end{align*}
\end{lemma}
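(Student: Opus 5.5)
We compare the two estimators term by term, fixing $k$ as in the statement and choosing the PCR threshold $\rho\in[\rho_0,\rho_1]$ inside the transition region of $g$. The key observation is that $g$ vanishes on $[0,\rho_0]$, and the tail scale $\lam_{k+1}+n^{-1}\sum_{j>k}\lam_j$ is at most $\rho_0/c_2$. Hence, on the typical events of \Cref{lemma:basic:concentration}, all empirical eigenvalues of $\hat\SigmaB$ that $g$ can see lie above $\rho_0$. These eigenvalues come from the head block $\hat\SigmaB_{\le k}$, perturbed by a tail of size much smaller than $\rho_0$. Both $g$ and $\mathbf 1\{z>\rho\}$ therefore act only on a well-separated, essentially $k$-dimensional part of the spectrum, where concentration is good.

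\textbf{Choice of threshold.} First locate $\rho$ via the population head spectrum, mimicking the fixed-design argument of \Cref{lem:fixed-design}. Since $(p_i)$ is nonincreasing by \Cref{lem:general-filter-coordinates}, let $\rho$ be a value in $[\rho_0,\rho_1]$ separating the indices with $p_i>1/2$ from those with $p_i\le 1/2$, roughly $\rho\approx\lam_{m}$ for $m:=\max\{i:p_i>1/2\}$. Then $\mathbf 1\{i\le m\}\le 2p_i$ and $\mathbf 1\{i>m\}\le 2(1-p_i)$.

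\textbf{Upper bound for PCR.} Apply \Cref{thm:pcr-risk} (or its slack version with $\eps$) at this $\rho$. Its critical index is at most $k$ because $\lam_{k+1}\le\rho_0/c_2\le\rho$. This gives
\begin{align*}
\text{head term }\lesssim \tilde\alpha\,\|\wB^*\|^2_{\SigmaB_{0:k'}^{-1}},\qquad \text{tail term } \|\wB^*\|^2_{\SigmaB_{k':\infty}},\qquad \text{variance }\lesssim \sigma^2 \hat k/n .
\end{align*}

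\textbf{Lower bound for $g$.} We match each PCR term against a lower bound for $g$:
\begin{itemize}
\item the tail signal for $i>m$ against $\Ebb\BB_{ii}\ge\lam_i(1-p_i)^2$ (Jensen on the diagonal term of $\BB_{ii}$);
\item the variance $\hat k\approx m$ against $\sigma^2 n^{-1}\sum_i p_i^2\gtrsim \sigma^2 m/n$, using $\tr(\SigmaB\hat\SigmaB^{-1}\GB^2)\ge\sum_i \GB_{ii}^2$ approximately, since $\hat\SigmaB\approx\SigmaB$ on the head;
\item the head regularization term $\alpha\|\wB^*\|^2_{\SigmaB^{-1}_{0:k}}$ against \Cref{lem:general-filter-diffuse}, or against the off-diagonal bound \Cref{eq:general-filter-rotation}. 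The latter gives $\Ebb\BB_{ii}\ge\lam_i\sum_{j>k}\lam_j\Ebb\GB_{ij}^2/\lam_j$, and since $p_j\approx0$ for tail $j$, this yields $\gtrsim \lam_i\,n^{-1}\sum_{j>k}\lam_j^2 p_i^2/\lam_i^2$.
\end{itemize}

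\textbf{Main obstacle: the near-threshold head indices.} These are the indices with $\lam_i$ close to $\rho$, where PCR's $k^*$ and $r^*$ mismatch (\Cref{lem:two-index-example}). PCR may drop or keep components that $g$ treats fractionally, and the conversion between population and empirical indices costs the factor $1.1$. To handle them, we exploit the ramp: on the event $\hat\SigmaB_{\le k}\approx\SigmaB_{\le k}$ up to $1\pm\eps$, with $\eps$ a small constant, the counts $\#\{i:\hat\lam_i>\rho\}$ and $\#\{i:\lam_i>\rho\}$ agree up to indices with $\lam_i\in[(1-\eps)\rho,(1+\eps)\rho]$. For those indices, $p_i$ is bounded away from $0$ and $1$ unless $g$ jumps there. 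If needed, we shift $\rho$ within $[\rho_0,\rho_1]$ using a pigeonhole or averaging argument over a geometric grid of thresholds, so that few eigenvalues lie near $\rho$. The remaining loss is then charged to $g$'s own variance or bias on those indices, which is $\Omega(1)$ per index.
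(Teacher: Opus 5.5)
There is a genuine gap, and it is in the step you flag as the main obstacle.

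\textbf{Why the route through \Cref{thm:pcr-risk} fails.} You bound PCR via \Cref{thm:pcr-risk} (or its slack version) and match its terms against lower bounds for $g$. The paper notes that this comparison is loose near the threshold even when $g$ is PCR itself (\Cref{lem:two-index-example}), and your patch does not repair it. The lemma must hold for ramps of arbitrarily small width, i.e.\ filters that are essentially a step at some $\rho'$.
\begin{itemize}
\item For such filters, near-threshold indices have $p_i\approx 1$ or $p_i\approx 0$. There is then no ``$\Omega(1)$ per index'' loss to charge.
\item Moving $\rho$ away from $g$'s transition makes PCR disagree with $g$ on every index in between, and that loss is not chargeable either.
\end{itemize}
The obstruction is not the number of eigenvalues near $\rho$. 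It is the multiplicative slack built into \Cref{thm:pcr-risk}: a factor $1.1$, or $1+\eps$ at the price of $\eps^{-2}$ on the whole head term.

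\textbf{A concrete counterexample.} Take $\SigmaB=\diag(M\rho',(1+\gamma)\rho',(1-\gamma)\rho'\IB_M)$ with $M=n^{1/3}$ and $\gamma=n^{-1/4}$. Set $\lam_1\wB_1^{*2}=\sigma^2/2$ and $\lam_2\wB_2^{*2}=\gamma^2\sigma^2/M$. Let $g$ be a linear ramp on $[\rho'(1-\gamma/10),\rho'(1+\gamma/10)]$, an interval containing no eigenvalue. The hypotheses hold with $k=M+2$ and zero tail.
\begin{itemize}
\item With high probability, $g(\hat\SigmaB)$ is the projection onto the top two empirical eigenvectors, so $\Ebb\excessRisk(\hat\wB_g)=O(\sigma^2/n)$.
\item For every $\rho$ and every slack $\eps$, the bound of \Cref{thm:pcr-risk} is $\gtrsim n^{1/6}\sigma^2/n$:
\begin{itemize}
\item If $k^*\le 1$, the tail term contains $\lam_2\wB_2^{*2}=n^{-5/6}\sigma^2$.
\item If $k^*\ge 3$, then $k^*=M+2$ and the variance term is $\gtrsim\sigma^2M/n$.
\item If $k^*=2$ and $\rho$ lies below most of the block's empirical eigenvalues, then $\min\{D,\hat k\}\gtrsim M$.
\item If $k^*=2$ otherwise, then $(1+\eps)\rho<(1+\gamma)\rho'$ forces $\eps\lesssim\gamma$. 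The head term is then $\gtrsim\eps^{-2}\alpha_2\wB_1^{*2}/\lam_1\gtrsim\sigma^2/(\gamma^2Mn)=n^{1/6}\sigma^2/n$.
\end{itemize}
\end{itemize}
So no grid or pigeonhole choice of $\rho$ rescues a comparison routed through \Cref{thm:pcr-risk}.

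\textbf{The missing idea: compare pathwise, with no PCR risk bound at all.}
\begin{itemize}
\item Take $\rho:=\inf\{z:g(z)>1/2\}\in[\rho_0,\rho_1]$. The PCR projection is then almost surely $\PB=\mathbf 1\{\GB>\IB/2\}$ with $\GB=g(\hat\SigmaB)$, a function of the same random matrix.
\item The scalar inequalities $(\mathbf 1\{z>1/2\}-z)^2\le(1-z)^2$ and $\mathbf 1\{z>1/2\}\le 4z^2$ give $\PB\preceq 4\GB^2$. So PCR's conditional variance is at most $4$ times that of $g$.
\item The same inequalities give $\|(\PB-\GB)\eB_i\|_{\hat\SigmaB}\le\|(\IB-\GB)\eB_i\|_{\hat\SigmaB}$. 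This is \Cref{lem:fixed-design} carried out in the empirical eigenbasis.
\end{itemize}
Random design enters only through a change of norms.
\begin{itemize}
\item $\PB-\GB$ lives on $\ran\mathbf 1\{\hat\SigmaB>\rho_0\IB\}$, and $\hat\SigmaB+\rho_0\IB\succeq\frac12(\SigmaB+\rho_0\IB)$; this is where the tail hypothesis is used. Hence $\SigmaB$-norms there are at most a constant times $\hat\SigmaB$-norms.
\item One then shows $\Ebb[\mathbf 1_{\mathcal E}\|(\IB-\GB)\eB_i\|^2_{\hat\SigmaB}]\lesssim\Ebb\BB_{ii}$. This uses:
\begin{itemize}
\item the leave-one-out identity $(\IB-g(\AB/n))\zB_i=(1-\GB_{ii})\zB_i-\AB_{-i}\PsiB\zB_i$;
\item the bound $\AB_{-i}^2\preceq c(n\VB_{-i}+n^2\alpha_k\IB)$;
\item the lower bound $\Ebb\BB_{ii}\gtrsim\min\{\lam_i,\alpha_k/\lam_i\}$.
\end{itemize}
\end{itemize}
Your third bullet, via \Cref{lem:general-filter-coordinates} and \Cref{lem:general-filter-diffuse}, is exactly this last ingredient. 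In the paper, however, it dominates $g$'s empirical-norm residual rather than matching a PCR upper bound. Sign symmetry and Markov's inequality then finish the proof, and nothing depends on where eigenvalues sit relative to $\rho$.
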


We first require the following concentration bounds.

\begin{lemma}\label{lem:ramp-concentration}
In the setting of \Cref{lem:ramp-filter-comparison}, it holds with probability at least $1-\exp(-n/c_0)$ that
\begin{align}\label{eq:ramp-filter-covariance}
\frac12(\SigmaB+\rho_0\IB)\preceq\hat\SigmaB+\rho_0\IB\preceq\frac32(\SigmaB+\rho_0\IB),
\end{align}
and simultaneously for every $i$,
\begin{align}\label{eq:ramp-filter-square}
\AB_{-i}^2\preceq c(n\VB_{-i}+n^2\alpha_k\IB).
\end{align}
\end{lemma}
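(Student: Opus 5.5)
The two claims are of different nature, and I would prove them separately before taking a union bound. Throughout, the assumption $\lam_{k+1}+\frac1n\sum_{j>k}\lam_j\le\rho_0/c_2$ with $c_2$ large means that $\rho_0$ dominates the whole tail, both its operator-norm scale and its trace-per-sample scale.

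For \eqref{eq:ramp-filter-covariance}, I would follow the argument in the proof of \Cref{lem:pcr-variance-lb}. Consider the whitened vector $(\SigmaB+\rho_0\IB)^{-1/2}\xB$. Its covariance $\SigmaB(\SigmaB+\rho_0\IB)^{-1}$ has operator norm at most $1$, and its trace satisfies
\begin{align*}
\tr\big(\SigmaB(\SigmaB+\rho_0\IB)^{-1}\big)\le k+\frac{1}{\rho_0}\sum_{j>k}\lam_j\le k+\frac{n}{c_2}\le \frac{2n}{\min\{c_2,c_3\}}.
\end{align*}
So the effective rank is a small fraction of $n$. The \citet{koltchinskii2017concentration} bound with deviation parameter $t=n/c_0$ then gives
\begin{align*}
\big\|(\SigmaB+\rho_0\IB)^{-1/2}(\hat\SigmaB-\SigmaB)(\SigmaB+\rho_0\IB)^{-1/2}\big\|\le c\big(\sqrt{r/n}+r/n+\sqrt{t/n}+t/n\big)\le \tfrac12
\end{align*}
with probability $1-\exp(-n/c_0)$, provided $c_0,c_2,c_3$ are large. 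This is exactly \eqref{eq:ramp-filter-covariance} after adding $\rho_0\IB$ to both sides.

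For \eqref{eq:ramp-filter-square}, I would split $\AB_{-i}=\HB_{-i}+\TB_{-i}$ into the head part (indices $j\le k$, $j\ne i$) and the tail part (indices $j>k$, $j\ne i$), and use $\AB_{-i}^2\preceq 2\HB_{-i}^2+2\TB_{-i}^2$.

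\emph{Head part.} Write $\HB_{-i}=\ZB\LambdaB\ZB^\top$, where $\ZB$ has columns $\zB_j$ for $j\le k$. Then $\HB_{-i}^2=\ZB\LambdaB(\ZB^\top\ZB)\LambdaB\ZB^\top$. Since $k\le n/c_3$, the event of \Cref{lemma:basic:concentration} gives $\ZB^\top\ZB\preceq 2n\IB_k$, hence $\HB_{-i}^2\preceq 2n\sum_{j\le k,j\ne i}\lam_j^2\zB_j\zB_j^\top\preceq 2n\VB_{-i}$. This is a single event, not depending on $i$, so it is simultaneous in $i$ for free.

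\emph{Tail part.} A crude bound $\|\TB_{-i}\|\lesssim\sum_{j>k}\lam_j+n\lam_{k+1}$ is not enough, because $n^2\lam_{k+1}^2$ is not controlled by $n^2\alpha_k$. The $n\lam_{k+1}$ contribution must instead be charged to $n\VB_{-i}$. Concretely, for a unit $\uB$ put $\aB:=(\zB_j^\top\uB)_{j>k}$ and $\DB:=\diag(\lam_j)_{j>k}$, so that
\begin{align*}
\|\TB\uB\|^2=n\|\DB\aB\|^2+\aB^\top\DB(\WB^\top\WB-n\IB)\DB\aB .
\end{align*}
Here $\WB$ has columns $\zB_j$ for $j>k$, and $n\|\DB\aB\|^2=n\,\uB^\top\VB_{>k}\uB$. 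It then remains to show that the fluctuation term satisfies
\begin{align*}
\big|\aB^\top\DB(\WB^\top\WB-n\IB)\DB\aB\big|\le c\big(n^2\alpha_k+n\,\uB^\top\VB_{>k}\uB\big)
\end{align*}
uniformly over unit $\uB$. I expect this to be the main obstacle, since $\aB$ depends on $\WB$. My first attempt would be to bound the fluctuation by $\|\DB^{1/2}(\WB^\top\WB-n\IB)\DB^{1/2}\|\cdot\|\DB^{1/2}\aB\|^2$. The factor $\|\DB^{1/2}\aB\|^2=\uB^\top\TB\uB$ is at most $c(\sum_{j>k}\lam_j+n\lam_{k+1})$, which is $\lesssim\rho_0 n$. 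The factor $\DB^{1/2}(\WB^\top\WB-n\IB)\DB^{1/2}$ is a decoupled Gram deviation, whose norm I would bound by $c\big(\sqrt{n\sum_{j>k}\lam_j^2}+\sum_{j>k}\lam_j+\lam_{k+1}\sqrt{n}\,\big)$ via Hanson--Wright together with the $L^p$ bound \eqref{eq:lp} at $p\asymp n$. Finally I would use AM--GM to split the product into $n^2\alpha_k$ plus $n\lam_{k+1}\cdot\uB^\top\TB\uB$, and absorb the latter into $n\VB_{>k}$ together with the tail regularization $\TB\succeq$ const from \Cref{lemma:basic:tail-regularization}.

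The passage from $\TB$ to $\TB_{-i}$ only removes a PSD rank-one term, so I would handle it via $\TB_{-i}^2\preceq 2\TB^2+2\lam_i^2\|\zB_i\|^2\zB_i\zB_i^\top$ for $i>k$. The extra term is $\lesssim n\lam_{k+1}^2\zB_i\zB_i^\top$, whose norm is $\lesssim n^2\lam_{k+1}^2$ and which I would again charge to $n\VB_{-i}$ through a neighbouring index $j\ne i$ with $\lam_j\ge\lam_i$, or else to $n\sum_{j>k}\lam_j^2\le n^2\alpha_k$. Uniformity over infinitely many $i$ is obtained by applying these bounds to the single matrix $\AB$ and using $\AB_{-i}\preceq\AB$ where possible. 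If this last step does not go through, I would restrict to finitely many distinct $\lam_i$-levels by monotonicity.
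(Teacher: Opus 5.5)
Your proof of \eqref{eq:ramp-filter-covariance} and your head block ($j\le k$) match the paper. The tail block, however, has a genuine gap. The ``decoupled Gram deviation'' you want to bound is just $n(\hat\SigmaB_{>k}-\SigmaB_{>k})$ written in tail coordinates. Its norm is \emph{not} $\lesssim\sqrt{n\sum_{j>k}\lam_j^2}+\sum_{j>k}\lam_j+\lam_{k+1}\sqrt n$. Already the row indexed by $k+1$ has squared norm $\sum_{j>k+1}\lam_{k+1}\lam_j(\zB_{k+1}^\top\zB_j)^2\approx n\lam_{k+1}\sum_{j>k}\lam_j$, so the true order is $\sqrt{n\lam_{k+1}\sum_{j>k}\lam_j}+\sum_{j>k}\lam_j$ \citep{koltchinskii2017concentration}.

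With the correct norm, the product ``norm $\times\,\uB^\top\TB\uB$'' is too lossy. Moreover, the absorption of $n\lam_{k+1}\uB^\top\TB\uB$ into $c(n\uB^\top\VB_{>k}\uB+n^2\alpha_k)$ that you need after AM--GM is false. A small tail eigenvalue $\lam_j$ enters $\lam_{k+1}\TB$ with weight $\lam_{k+1}\lam_j$, but enters $\VB_{>k}$ only with weight $\lam_j^2$.

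Concretely, take $\lam_{k+1}=1$ followed by $N\gg n$ eigenvalues equal to $\sqrt n/N$, with $\rho_0$ a large constant so that the hypotheses of \Cref{lem:ramp-filter-comparison} hold. Then $n^2\alpha_k\asymp n$, and the bulk Gram matrix is $\approx\sqrt n\,\IB$. For a unit $\uB\perp\zB_{k+1}$ you get $\uB^\top\TB\uB\asymp\sqrt n$ and $\uB^\top\VB_{>k}\uB\asymp n/N\ll 1$. Your chain therefore bounds the fluctuation only by $\gtrsim n^{3/4}\cdot n^{1/2}$, while the target is $\asymp n$. The true fluctuation is itself $\asymp n$, so the loss is in the decoupling, not in the statement.

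Your passage from $\TB$ to $\TB_{-i}$ fails in the same example for $i=k+1$. The terms along $\zB_{k+1}\zB_{k+1}^\top$, coming both from $n\VB_{>k}$ and from $2\lam_i^2\|\zB_i\|^2\zB_i\zB_i^\top$, have size $\asymp n^2$. That direction is absent from $\VB_{-i}$, it cannot be charged to $\zB_j\zB_j^\top$ for another $j$, and it exceeds $n^2\alpha_k\asymp n$. Nor does $\AB_{-i}\preceq\AB$ give $\AB_{-i}^2\preceq\AB^2$.

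The missing idea, which is what the paper does, is to avoid fluctuation analysis altogether by moving the large tail eigenvalues into the head.
\begin{itemize}
\item Let $H:=\{j\le k\}\cup\{j>k:\lam_j>\sqrt{\alpha_k}\}$. Since $\sqrt{\alpha_k}\ge\frac1n\sum_{j>k}\lam_j$, at most $n$ tail indices qualify. Hence $|H|\le n/c_3+n$ and $\|[\zB_h]_{h\in H}\|^2\le cn$ with probability $1-\exp(-n/c_0)$.
\item Your head argument then gives $\big(\sum_{j\in H,j\ne i}\lam_j\zB_j\zB_j^\top\big)^2\preceq cn\VB_{-i}$.
\item The remaining indices all have $\lam_j\le\sqrt{\alpha_k}$, so the crude Gram bound now suffices: $\big\|\sum_{j\notin H}\lam_j\zB_j\zB_j^\top\big\|\lesssim\sum_{j>k}\lam_j+n\sqrt{\alpha_k}\le 2n\sqrt{\alpha_k}$, whose square is $\lesssim n^2\alpha_k$.
\item Both bounds are monotone under deleting the index $i$: a column submatrix has smaller norm, and dropping a PSD summand decreases a PSD matrix. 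So a single event covers all $i$, and $\AB_{-i}^2\preceq 2(\text{head part})^2+2(\text{tail part})^2$ finishes.
\end{itemize}
In the example above, $k+1\in H$ and the bulk lies outside $H$.
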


\begin{proof}[Proof of \Cref{lem:ramp-concentration}]
Define the operators
\begin{align*}
\TB &:= (\SigmaB+\rho_0\IB)^{-1/2} \SigmaB (\SigmaB+\rho_0\IB)^{-1/2}, \\
\hat\TB &:=(\SigmaB+\rho_0\IB)^{-1/2} \hat\SigmaB (\SigmaB+\rho_0\IB)^{-1/2} = \frac1n \sum_{i=1}^n \yB_i\yB_i^\top, \quad \yB_i := (\SigmaB+\rho_0\IB)^{-1/2} \xB_i \sim N(0,\TB).
\end{align*}
It holds that $\|\TB\| \le 1$ and
\begin{align*}
\tr(\TB) &=\sum_i\frac{\lam_i}{\lam_i+\rho_0}\le k+ \sum_{j>k} \frac{\lam_j}{\rho_0} \le\frac{n}{c_3} + \frac{n}{c_2}.
\end{align*}
Then $\|\hat\TB-\TB\|\le 1/2$ by standard concentration, and
\begin{align*}
(\SigmaB+\rho_0\IB)^{-1/2} (\hat\SigmaB+\rho_0\IB) (\SigmaB+\rho_0\IB)^{-1/2} = \hat\TB + \rho_0(\SigmaB+\rho_0\IB)^{-1} = \IB + \hat\TB-\TB.
\end{align*}
The first claim follows. For the second, define the sets of indices
\begin{align*}
H &:= \{j:j\le k\} \cup \{j:j>k, \; \lam_j>\sqrt{\alpha_k}\},\\
T &:= \{j: \lam_j>0, \; j\notin H\}.
\end{align*}
We have $|H| \le n/c_3 + n$, so $\|[\zB_h]_{h\in H}\|^2 \le cn$ with probability at least $1-\exp(-n/c_0)$ for constants $c,c_0$. Then
\begin{align*}
\left(\sum_{j\in H, j\ne i}\lam_j\zB_j\zB_j^\top\right)^2\preceq \|[\zB_h]_{h\in H}\|^2 \sum_{j\in H, j\ne i}\lam_j^2\zB_j\zB_j^\top \preceq cn \sum_{j\in H, j\ne i}\lam_j^2\zB_j\zB_j^\top.
\end{align*}
For the remaining indices, the weighted covariance bound gives
\begin{align*}
\left\|\sum_{j\in T, j\ne i}\lam_j\zB_j\zB_j^\top\right\|\le \left\|\sum_{j\in T}\lam_j\zB_j\zB_j^\top\right\|\le c\left(\sum_{j>k}\lam_j + n\sqrt{\alpha_k}\right)\le 2cn\sqrt{\alpha_k},
\end{align*}
where the second inequality again holds with probability at least $1-\exp(-n/c_0)$. Combining the two bounds yields \Cref{eq:ramp-filter-square}, as desired.
\end{proof}

\begin{proof}[Proof of \Cref{lem:ramp-filter-comparison}]
We first show that for all indices $i$,
\begin{align}\label{eq:ebii}
\Ebb \BB_{ii}\ge c\min\left\{\lam_i,\frac{\alpha_k}{\lam_i}\right\}.
\end{align}
Here, the right-hand side is understood as zero if $\lam_i=0$. Note the decomposition
\begin{align*}
\BB_{ii} = \lam_i (1-\GB_{ii})^2 + \sum_{j\ne i} \lam_j \GB_{ij}^2.
\end{align*}
Since $g$ vanishes on $[0,\rho_0]$ and $0\le g\le1$, one has $g(z)\le z/\rho_0$, thus
\begin{align*}
p_i\le\frac1{\rho_0}\Ebb\eB_i^\top\hat\SigmaB\eB_i=\frac{\lam_i}{\rho_0}.
\end{align*}
If $p_i\le1/2$ (in particular if $i>k$), then $\Ebb \BB_{ii}\ge \lam_i(1-p_i)^2\ge \lam_i/4$ is immediate. Otherwise $i\le k$ and $\lam_i>\rho_0/2$, while for $j>k$ one has $p_j\le1/c_2$ and $\lam_j\le \lam_{k+1} \le \rho_0/c_2$. Choosing $c_2$ to be large, by \Cref{lem:general-filter-coordinates},
\begin{align*}
\Ebb \BB_{ii}\ge\sum_{j>k}\lam_j\Ebb \GB_{ij}^2\ge\frac{c}{n\lam_i}\sum_{j>k}\lam_j^2.
\end{align*}
If \Cref{eq:s1s2-comp} holds, $\Ebb \BB_{ii}\ge c\alpha_k/\lam_i$ then follows from \Cref{lem:general-filter-diffuse}; otherwise, this follows directly. Thus \Cref{eq:ebii} is established.

Next, let $\mathcal E$ be the intersection of the event in \Cref{lem:ramp-concentration} and $\max_{i\le k}\|z_i\|^2 \le 2n$, so $\Pr\mathcal E \ge 1-\exp(-n/c_0)$. Under~$\mathcal E$, for $i\le k$,
\begin{align*}
\|(\IB-\GB)\eB_i\|_{\hat\SigmaB}^2 &= \frac{\lam_i}{n}\|(\IB-g(\AB/n))\zB_i\|^2 \\
&= \frac{\lam_i}{n}\|(1-\GB_{ii})\zB_i-\AB_{-i}\PsiB\zB_i\|^2 \\
&\le2\lam_i(1-\GB_{ii})^2\frac{\|\zB_i\|^2}{n}+\frac{2\lam_i}{n}\zB_i^\top\PsiB\AB_{-i}^2\PsiB\zB_i \\
&\le 2\lam_i(1-\GB_{ii})^2\frac{\|\zB_i\|^2}{n} + 2c\left(\sum_{j\ne i}\lam_j\GB_{ji}^2 + n\alpha_k \lam_i\zB_i^\top\PsiB^2\zB_i\right). && \explain{by \Cref{eq:ramp-filter-square}}
\end{align*}
Here, we have used
\begin{align*}
\lam_i\zB_i^\top\PsiB\VB_{-i}\PsiB\zB_i
&=\sum_{j\ne i}\lam_i\lam_j^2(\zB_j^\top\PsiB\zB_i)^2
=\sum_{j\ne i}\lam_j\GB_{ji}^2.
\end{align*}
Because $g$ vanishes on $[0,\rho_0]$,
\begin{align*}
\lam_i\zB_i^\top\PsiB^2\zB_i
=\frac1n\eB_i^\top\hat\SigmaB^{-1}\GB^2\eB_i &\le\frac1n\eB_i^\top\hat\SigmaB^{-1}\mathbf 1\{\hat\SigmaB>\rho_0\IB\}\eB_i\\
&\le\frac2n\eB_i^\top(\hat\SigmaB+\rho_0\IB)^{-1}\eB_i\\
&\le\frac4n\eB_i^\top(\SigmaB+\rho_0\IB)^{-1}\eB_i
\le \frac{4}{n\lam_i}. && \explain{by \Cref{eq:ramp-filter-covariance}}
\end{align*}
Also, $\alpha_k\le2\rho_0^2/c_2^2$ by definition of~$k$ and $\|\PsiB\|\le1/(n\rho_0)$, so conditional on $\max_{i\le k}\|z_i\|^2\le 2n$,
\begin{align*}
\lam_i\zB_i^\top\PsiB^2\zB_i\le \lam_i\|\PsiB\|^2\|\zB_i\|^2 \le \frac{4\lam_i}{c_2^2 n\alpha_k}.
\end{align*}
Hence we have shown that with probability at least $1-\exp(-n/c_0)$, for some constant $c'$,
\begin{align*}
n\alpha_k\lam_i\zB_i^\top\PsiB^2\zB_i \le c\min\left\{\lam_i,\frac{\alpha_k}{\lam_i}\right\} \le c'\Ebb\BB_{ii}.
\end{align*}
Moreover on the same event,
\begin{align*}
2\lam_i(1-\GB_{ii})^2\frac{\|\zB_i\|^2}{n}
+2c\sum_{j\ne i}\lam_j\GB_{ji}^2
&\le c\left(\lam_i(1-\GB_{ii})^2+\sum_{j\ne i}\lam_j\GB_{ji}^2\right) =c\BB_{ii}.
\end{align*}
Finally for $i>k$, the assumptions give $p_i\le \lam_i/\rho_0\le 1/c_2$, and we can directly bound
\begin{align*}
\Ebb\|(\IB-\GB)\eB_i\|_{\hat\SigmaB}^2
\le\Ebb\eB_i^\top\hat\SigmaB\eB_i
=\lam_i \le c\lam_i(1-p_i)^2
\le c'\Ebb\BB_{ii}.
\end{align*}
Therefore, we have
\begin{align}\label{eq:kind}
\Ebb\left[\mathbf 1_{\mathcal E}\sum_i\wB_i^{*2}\|(\IB-\GB)\eB_i\|_{\hat\SigmaB}^2\right]\le c\sum_i\wB_i^{*2}\Ebb\BB_{ii}=c\Ebb\bias(\hat\wB_g).
\end{align}
Now define
\begin{align*}
\rho:=\inf\{z\ge0:g(z)>1/2\},
\quad
\PB:=\mathbf 1\{\GB>\IB/2\}.
\end{align*}
By monotonicity, $\rho\in[\rho_0,\rho_1]$ and
$\mathbf 1\{g(z)>1/2\}=\mathbf 1\{z>\rho\}$ for every $z\ne\rho$.
Since $\rho>0$ is deterministic, under Gaussian design, it is almost surely not an empirical
eigenvalue. Thus~$\PB$ is equal to the PCR projection at
threshold~$\rho$ almost surely. 

For every $z\in[0,1]$, we have
\begin{align*}
(\mathbf 1\{z>1/2\}-z)^2\le(1-z)^2 &\quad\implies\quad (\PB-\GB)^2\preceq(\IB-\GB)^2, \\
\mathbf 1\{z>1/2\}\le4z^2 &\quad\implies\quad \PB\preceq4\GB^2.
\end{align*}
Since $\PB,\GB,\hat\SigmaB$ commute and
$\ran(\PB-\GB)\subseteq\ran\mathbf 1\{\hat\SigmaB>\rho_0\IB\}$,
it follows that on $\mathcal E$,
\begin{align*}
\|(\PB-\GB)\eB_i\|_{\SigmaB}^2
&\le
2\|(\PB-\GB)\eB_i\|_{\hat\SigmaB}^2
+\rho_0\|(\PB-\GB)\eB_i\|^2
&& \explain{by \Cref{eq:ramp-filter-covariance}}\\
&\le3\|(\PB-\GB)\eB_i\|_{\hat\SigmaB}^2\\
&\le3\|(\IB-\GB)\eB_i\|_{\hat\SigmaB}^2.
\end{align*}
The event $\mathcal E$ is invariant under coordinate sign changes,
so by symmetry and \Cref{eq:kind},
\begin{align*}
\Ebb\left[\mathbf 1_{\mathcal E}
\|(\PB-\GB)\wB^*\|_{\SigmaB}^2\right]
&=
\sum_i\wB_i^{*2}\Ebb\left[\mathbf 1_{\mathcal E}
\|(\PB-\GB)\eB_i\|_{\SigmaB}^2\right]\\
&\le
3\Ebb\left[\mathbf 1_{\mathcal E}
\sum_i\wB_i^{*2}\|(\IB-\GB)\eB_i\|_{\hat\SigmaB}^2\right]\\
&\le c\Ebb\bias(\hat\wB_g).
\end{align*}
Therefore,
\begin{align*}
\Ebb\left[\mathbf 1_{\mathcal E}
\|(\IB-\PB)\wB^*\|_{\SigmaB}^2\right]
&\le
2\Ebb\left[\mathbf 1_{\mathcal E}
\|(\IB-\GB)\wB^*\|_{\SigmaB}^2\right]
+2\Ebb\left[\mathbf 1_{\mathcal E}
\|(\PB-\GB)\wB^*\|_{\SigmaB}^2\right] \le c\Ebb\bias(\hat\wB_g).
\end{align*}
Moreover, $\PB\preceq4\GB^2$ gives, conditional on $\XB$,
\begin{align*}
\variance(\hat\wB_\rho^{\pcr})
&=
\frac{\sigma^2}{n}\tr(\SigmaB\hat\SigmaB^{-1}\PB)\\
&\le
4\frac{\sigma^2}{n}\tr(\SigmaB\hat\SigmaB^{-1}\GB^2)
=4\variance(\hat\wB_g).
\end{align*}
Combining the bias and variance bounds, we have shown that
\begin{align*}
\Ebb\left[\mathbf 1_{\mathcal E}
\Ebb[\excessRisk_\mu(\hat\wB_\rho^{\pcr})\mid\XB]\right]
\le c\Ebb\excessRisk_\mu(\hat\wB_g).
\end{align*}
We may assume $\Pr(\mathcal E^c)\le0.005$.
If $\Ebb\excessRisk_\mu(\hat\wB_g)=0$, the preceding inequality implies
that PCR risk vanishes almost surely on $\mathcal E$, proving
the claim. Otherwise, taking $c_1\ge200c$, Markov's inequality gives
\begin{align*}
\Pr\left(
\Ebb[\excessRisk_\mu(\hat\wB_\rho^{\pcr})\mid\XB]
>c_1\Ebb\excessRisk_\mu(\hat\wB_g)
\right)
\le\Pr(\mathcal E^c)+\frac{c}{c_1}\le0.01.
\end{align*}
This concludes the proof.
\end{proof}

\subsection{Proof of Theorem \ref{thm:general-comparison}}

\begin{proof}[Proof of \Cref{thm:general-comparison}]
Define the $v$-quantile of~$g$ for $v\in[0,1]$ as
\begin{align*}
q_v := \inf\{z\ge 0: g(z) \ge v\} \in [0,\infty].
\end{align*}
Suppose $q_{1/3}=0$. Since $g(z)\ge 1/3$ for all $z>0$, we may decompose $g=(1/3)g_1 + (2/3)g_2$ where
\begin{align*}
g_1(z)=1\{z>0\}, \quad g_2(z) = \frac{3g(z)-1\{z>0\}}{2}.
\end{align*}
Then $g_1,g_2$ are monotone shrinkage and in particular $g_1$ is OLS, thus equivalent to PCR with threshold~$0$. Thus by Markov's inequality and \Cref{lem:glue}, with probability at least $0.99$,
\begin{align*}
\Ebb\excessRisk_{\mu}(\hat \wB_g) \ge \frac19 \Ebb\excessRisk_{\mu}(\hat \wB_{g_1}) \quad \text{and}\quad \inf_{\rho\ge 0} \Ebb[\excessRisk_{\mu}(\hat \wB^{\pcr}_{\rho})|\XB] \le \Ebb[ \excessRisk_{\mu}(\hat \wB_{g_1})|\XB] \le 100 \Ebb\excessRisk_{\mu}(\hat \wB_{g_1}),
\end{align*}
so the theorem is proved. Next, if $q_{2/3}=\infty$, it holds that $\GB\preceq \frac23\IB$ so that
\begin{align*}
\Ebb\bias(\hat\wB_g) &= \sum_i \wB_i^{*2} \Ebb \eB_i^\top (\IB-\GB) \SigmaB (\IB-\GB) \eB_i \\
&\ge \sum_i \lam_i\wB_i^{*2} \Ebb(\eB_i^\top(\IB-\GB)\eB_i)^2 \ge \frac19 \|\wB^*\|_{\SigmaB}^2.
\end{align*}
Since the zero estimator (obtained by taking $\rho\to\infty$) always achieves constant risk $\|\wB^*\|_{\SigmaB}^2$, the theorem is proved.

We may thus suppose $0<q_{1/3}\le q_{2/3}<\infty$. Assume $n\ge c_0$ and set
\begin{align*}
k:=\frac{n}{c_3},\quad \rho:=c_3\left(\lam_{k+1}+\frac1n\sum_{i>k}\lam_i\right).
\end{align*}
If $\rho\le q_{1/3}$, decompose $g=(g_1+g_2+g_3)/3$ where
\begin{align*}
g_1(z)&= \min\{3g(z),1\}, \quad g_2(z)= \min\{\max\{3g(z)-1,0\},1\}, \quad g_3(z)= \max\{3g(z)-2,0\}.
\end{align*}
It must hold that $g_2(0.5q_{1/3})=0$ and $g_2(2q_{2/3})=1$, so $g_2$ is a ramp filter and
\begin{align*}
\Ebb\excessRisk_{\mu}(\hat \wB_g) \ge \frac19 \Ebb\excessRisk_{\mu}(\hat \wB_{g_2})
\end{align*}
by \Cref{lem:glue}. The theorem then follows from \Cref{lem:ramp-filter-comparison}.

It remains to consider the case $\rho>q_{1/3}$. We directly compare with the PCR upper bound in \Cref{thm:pcr-risk} with threshold $\rho$. For the variance, since $g(z)\ge 1/3$ for every $z>\rho$, we have conditional on $\XB$,
\begin{align*}
\variance (\hat \wB^{\pcr}_\rho) &= \frac{\sigma^2}{n}\tr(\SigmaB \hat\SigmaB_{>\rho}^{-1}) \le9\frac{\sigma^2}{n}\tr(\SigmaB\hat\SigmaB^{-1}\GB^2).
\end{align*}
Hence by Markov's inequality,
\begin{align*}
\text{with probability at least $0.995$}, \quad \variance (\hat \wB^{\pcr}_\rho) \le c\Ebb\variance (\hat \wB_g).
\end{align*}

To compare the bias, let $k^*$ be the critical index in \Cref{thm:pcr-risk}, so that $k^*\le k$, $\lam_{k^*}>1.1\rho$ and
\begin{align*}
\frac1n\sum_{i>k^*}\lam_i&\le\frac{k-k^*}{n}\lam_{k^*+1}+\frac1n\sum_{i>k}\lam_i \le\frac1{c_3}\left(1.1\rho+\frac1{c_2n}\sum_{i>k^*}\lam_i\right)+\frac{\rho}{c_3}.
\end{align*}
Choosing $c_2,c_3$ sufficiently large, we ensure that
\begin{align*}
\frac1n\sum_{i>k^*}\lam_i\le\frac{3\rho}{c_3},\quad \lam_{k^*+1}\le 2\rho,\quad \frac1n\sum_{i>k^*}\lam_i^2\le\frac{6\rho^2}{c_3}.
\end{align*}
Then with probability at least $0.995$, the bias bound in \Cref{thm:pcr-risk} gives
\begin{align*}
  \frac{1}{c_1} \bias(\hat\wB^{\pcr}_\rho) &\le \Bigg( \frac{\sum_{i>k^*}\lambda_i}{n} + \sqrt{\frac{\sum_{i>k^*}\lambda_i^2}{n}} \Bigg)^2 \|\wB^*\|^2_{\SigmaB_{0:k^*}^{-1}} + \|\wB^*\|^2_{\SigmaB_{k^*:\infty}} \\
  &\le c\rho^2 \|\wB^*\|^2_{\SigmaB_{0:k^*}^{-1}} + \|\wB^*\|^2_{\SigmaB_{k^*:\infty}}.
\end{align*}
We proceed to show a matching bias lower bound for $\hat\wB_g$ in this regime. Define the set of `intermediate' indices
\begin{align*}
J:=\{j:k^*<j\le k,\ p_j\le 1/4\}.
\end{align*}
By a similar concentration argument as in \Cref{lem:ramp-concentration}, with probability at least $1-\exp(-n/c_0)$,
\begin{align*}
\hat\SigmaB+\rho\IB\preceq\frac32(\SigmaB+\rho\IB), \quad \hat\SigmaB_{\le k^*}\succeq 0.99\SigmaB_{\le k^*}\succeq \rho\IB.
\end{align*}
It follows that $\SigmaB\succeq \frac13 \hat\SigmaB$ on the range of $\mathbf 1\{\hat\SigmaB>\rho\IB\}$. Since $\rho>q_{1/3}$, we also have $\GB^2\succeq \frac19 \mathbf 1\{\hat\SigmaB>\rho\IB\}$, so the variance of $\hat\wB_g$ is lower bounded as
\begin{align*}
\variance(\hat\wB_g) &\ge\frac{\sigma^2}{9n}\tr\left(\SigmaB\hat\SigmaB^{-1}\mathbf 1\{\hat\SigmaB>\rho\IB\}\right)\\
&\ge\frac{\sigma^2}{27n}\tr\left(\mathbf 1\{\hat\SigmaB>\rho\IB\}\right)\\
&\ge ck^*\frac{\sigma^2}{n}.
\end{align*}
Moreover by Cauchy--Schwarz,
\begin{align*}
p_i^2 = \left(\Ebb \la\hat\SigmaB^{1/2}\eB_i,\hat\SigmaB^{-1/2}\GB\eB_i\ra\right)^2 \le\Ebb\eB_i^\top\hat\SigmaB\eB_i\,\Ebb\eB_i^\top\hat\SigmaB^{-1}\GB^2\eB_i=\lam_i\Ebb\eB_i^\top\hat\SigmaB^{-1}\GB^2\eB_i.
\end{align*}
Summing over $i$ gives
\begin{align*}
\Ebb \variance(\hat\wB_g) \ge \frac{\sigma^2}{n}\sum_i p_i^2.
\end{align*}
Therefore from the two bounds,
\begin{align*}
\Ebb\excessRisk_\mu(\hat\wB_g) \ge ck^*\frac{\sigma^2}{n} \quad\text{and}\quad \Ebb\excessRisk_\mu(\hat\wB_g) \ge \frac{\sigma^2}{16n} \#\{i:p_i>1/4\}.
\end{align*}
On the other hand, we may assume $\Ebb\excessRisk_\mu(\hat\wB_g) \le \sigma^2/c$ for any constant $c$, as otherwise the zero estimator achieves risk $\|\wB^*\|_{\SigmaB}^2 \le b\sigma^2$ and the theorem is proved. Choosing $c$ sufficiently large, this and $k=n/c_3$ imply
\begin{align*}
k^* \le k/8, \quad \#\{i:p_i>1/4\} \le k/8 \quad\implies\quad |J| \ge k/2.
\end{align*}
In particular, since $p_i$ is nonincreasing by \Cref{lem:general-filter-coordinates}, it follows that $p_j\le1/4$ for every $j>k$.

Next, arguing as in \Cref{eq:ebii}, we show that for all~$i$,
\begin{align}\label{eq:ebii-2}
\Ebb\BB_{ii} \ge c\min\left\{\lam_i, \frac{\rho^2}{\lam_i}\right\}.
\end{align}
Indeed, if $p_i\le 1/2$ (in particular if $i>k$), then $\Ebb \BB_{ii}\ge c\lam_i$. Suppose $p_i>1/2$. Then for every $j\in J$ or $j>k$, it holds that $i<j$ and $p_j\le1/4$, so \Cref{lem:general-filter-coordinates} gives
\begin{align}\label{eq:general-comparison-pair}
\lam_j\Ebb \GB_{ji}^2\ge \frac{\lam_j^2}{16n\lam_i}.
\end{align}
If $\lam_{k+1}\ge\rho/(2c_3)$, then summing \Cref{eq:general-comparison-pair} over $j\in J$ gives
\begin{align*}
\Ebb\BB_{ii} \ge \frac{|J|}{16n} \frac{ \lam_{k+1}^2}{\lam_i} \ge c\frac{\rho^2}{\lam_i}.
\end{align*}
Otherwise $n^{-1}\sum_{j>k}\lam_j\ge\rho/(2c_3)$, and $\Ebb\BB_{ii}\ge c\rho^2/\lam_i$ follows as before from \Cref{lem:general-filter-diffuse} if \Cref{eq:s1s2-comp} holds, and summing \Cref{eq:general-comparison-pair} over $j>k$ otherwise.

Now from \Cref{eq:ebii-2}, for indices $i\le k^*$, $\lam_i>\rho$ implies $\Ebb\BB_{ii}\ge c\rho^2/\lam_i$; for indices $i>k^*$, $\lam_i \le \lam_{k^*+1}\le c\rho$ implies $\Ebb\BB_{ii}\ge c\lam_i$. It follows that
\begin{align*}
\rho^2 \|\wB^*\|^2_{\SigmaB_{0:k^*}^{-1}} + \|\wB^*\|_{\SigmaB_{k^*:\infty}}^2 &= \sum_{i\le k^*} \frac{\rho^2}{\lam_i} \wB_i^{*2} + \sum_{i>k^*} \lam_i \wB_i^{*2} \\
&\le c\sum_i \wB_i^{*2}\Ebb\BB_{ii} = c\Ebb\bias(\hat\wB_g),
\end{align*}
Hence with probability at least 0.99, the PCR risk is at most a constant multiple of the expected risk of~$g$, as was to be shown.
\end{proof}

\subsection{Proof of Theorem \ref{thm:pcr-sd}}

\begin{proof}[Proof of \Cref{thm:pcr-sd}]
We construct a series of instances $(\SigmaB_n, \wB_n^*)$, dependent only on $\eps$, such that for each instance, there exists a PCR filter that is polynomially better (w.r.t. $n$) than any $g\in\mathcal G_{\eps,\delta}$. We may suppose $\eps\le 1$, as otherwise $G_{\eps,\delta} \subset G_{1,\delta}$. Choose $\alpha>0$ such that $(1+\alpha)^4 = 1+\eps$. For every sample size $n$, let
\begin{align*}
    \SigmaB_n
    :=\diag\left(\frac12,(1+\alpha)^2s_n,s_n\IB_{m_n}\right),\quad\wB_n^*
    :=\sqrt{2\left(1-\frac{1}{m_n}\right)}\eB_1
      +\frac{1}{1+\alpha}\sqrt{\frac{1}{s_n m_n}}\eB_2, \quad\sigma_n:=1,
\end{align*}
where
\begin{align*}
m_n:=\lfloor n^{1/2}\rfloor, \quad s_n := \frac{1}{2((1+\alpha)^2+m_n)}.
\end{align*}
It holds that $\tr(\SigmaB_n)=1$ and $\|\wB_n^*\|_{\SigmaB_n}^2=1$ (if $b<1$, $\sigma_n$ can be scaled accordingly). We claim that
\begin{itemize}
\item with $\rho_n:=(1+\alpha)s_n$, with probability at least $0.99$, it holds that $\Ebb[\excessRisk(\hat\wB_{\rho_n}^{\pcr})|\XB]=\bigO(n^{-1})$;
\item for any $g\in\mathcal G_{\eps,\delta}$, it holds uniformly in~$g$ that $\Ebb\excessRisk(\hat\wB_g)=\Omega(n^{-1/2})$.
\end{itemize}

First, we upper bound the risk of PCR at threshold~$\rho_n$. We apply the strengthened version of \Cref{thm:pcr-risk} given in \Cref{sec:pcr-proof} with slack~$\alpha/2$. Denote $S_1(k):=\sum_{i>k}\lam_i$. The selected index is
\begin{align*}
    k^*
    &:=\min\left\{k:(1+\alpha/2)\rho_n+\frac{S_1(k)}{c_2 n}
    \ge {\lambda_{k+1}}\right\}.
\end{align*}
We claim that $k^*=2$. Indeed, for sufficiently large $n$,
\begin{align*}
&(1+\alpha/2)\rho_n+\frac{S_1(0)}{c_2 n} <\frac12 = \lam_1,\\
&(1+\alpha/2)\rho_n+\frac{S_1(1)}{c_2 n} <(1+\alpha)^2s_n=\lambda_2,
\end{align*}
whereas
\begin{align*}
    (1+\alpha/2)\rho_n+\frac{S_1(2)}{c_2 n} > \rho_n >s_n = \lambda_3.
\end{align*}
Moreover since $m_n=\lfloor n^{1/2}\rfloor$, by standard matrix concentration, with probability at least $1-\exp(-\alpha^2 n/c_0)$,
\begin{align}
    \frac{1}{1+\alpha/2}\SigmaB_n
    \preceq
    \hat\SigmaB_n
    \preceq
    \left(1+\frac\alpha2\right)\SigmaB_n.    \label{eq:nonstep-relative-covariance}
\end{align}
In particular, for every $j\ge 3$,
\begin{align*}
    \frac{s_n}{1+\alpha/2}
    \le
    \hat\lambda_j
    \le
    \left(1+\frac\alpha2\right)s_n<\rho_n,
\end{align*}
while for $j\le 2$,
\begin{align*}
    \hat\lambda_j\ge \frac{\lambda_2}{1+\alpha/2}=\frac{(1+\alpha)^2s_n}{1+\alpha/2} > \rho_n,
\end{align*}
so exactly two empirical eigenvalues exceed $\rho_n$, i.e., $\hat{k}=2$. By \Cref{thm:pcr-risk}, with probability at least $0.99$,
\begin{align*}
    \Ebb[\excessRisk_{\mu_n} (\hat\wB_{\rho_n}^{\pcr})|\XB]
    &\le c_1
    \frac{m_ns_n^2}{\alpha^2 n}
    \left(
        4\left(1-\frac{1}{m_n}\right)
        +\frac{1}{(1+\alpha)^4s_n^2 m_n}
    \right)
    +\frac2n \le
    \frac{c}{\alpha^2 n}.
\end{align*}
Hence the PCR risk is $\bigO(1/n)$.

Now fix any $g\in\mathcal G_{\eps,\delta}$ and define
\begin{align*}
    a_g:=\sup\{s:g(s)\le \delta\},
    \qquad
    b_g:=\inf\{s:g(s)\ge 1-\delta\}.
\end{align*}
By definition, $b_g/a_g \ge 1+\eps$. We show that the risk of $g$ is uniformly lower bounded by $\Omega(n^{-1/2})$, where constants with polynomial dependence on $\eps,\delta$ are hidden. On the event in \Cref{eq:nonstep-relative-covariance}, we have
\begin{align*}
    \variance(\hat\wB_g)
    &=\frac1n\sum_{j=1}^{m_n+2}
    \frac{\hat u_j^\top\SigmaB_n\hat u_j}{\hat\lambda_j}
    g(\hat\lam_j)^2
    \ge
    \frac{m_n}{(1+\alpha/2)n}
    g\left(
        \frac{s_n}{1+\alpha/2}
    \right)^2.
\end{align*}
For the bias, \Cref{lem:effbias} with $\tau=1+\alpha/2$ yields
\begin{align*}
\Ebb \bias(\hat\wB_g)
    &\ge c\alpha^2\sum_{i}\lambda_i (1-g((1+\alpha/2)\lambda_i))^2 \wB_i^{*2} \\
    &\ge \frac{c\alpha^2}{m_n}
    \left(1-g((1+\alpha/2)(1+\alpha)^2s_n)
    \right)^2.
\end{align*}
If $a_g=0$ or $s_n> (1+\alpha/2)a_g$, then
\begin{align*}
g\left(\frac{s_n}{1+\alpha/2}\right)
    \ge \delta,
\end{align*}
and $\variance(\hat\wB_g) = \Omega(n^{-1/2})$. Otherwise,
\begin{align*}
(1+\alpha/2)(1+\alpha)^2s_n \le (1+\alpha/2)^2(1+\alpha)^2 a_g < (1+\eps) a_g \le b_g,
\end{align*}
and
\begin{align*}
\Ebb \bias(\hat\wB_g) > \frac{c\alpha^2\delta^2}{m_n} = \Omega(n^{-1/2}).
\end{align*}
The proof is complete.
\end{proof}

\begin{proof}[Proof of \Cref{lem:non-step-examples}]
Since all considered filters are continuous, it suffices to show that
\begin{align*}
\frac{\inf\{z: g(z)\ge 1-\delta\}}{\sup\{z: g(z)\le \delta\}} = \frac{g^{-1}(1-\delta)}{g^{-1}(\delta)} \ge 2
\end{align*}
for each family and stated value of~$\delta$, which we verify below.

\textit{Gradient descent.}~
The following ratio is increasing in~$t$, hence taking $t=1$ yields the lower bound
\begin{align*}
\frac{g^{-1}(2/3)}{g^{-1}(1/3)} = \frac{1-(1/3)^{1/t}}{1-(2/3)^{1/t}} \ge 2.
\end{align*}

\textit{Iterated Tikhonov.}~
The following ratio is decreasing in~$p$, hence taking $p\to\infty$,
\begin{align*}
\frac{g^{-1}(2/3)}{g^{-1}(1/3)} = \frac{3^{1/p}-1}{(3/2)^{1/p}-1} \ge \frac{\log 3}{\log (3/2)} \approx 2.71.
\end{align*}

\textit{Ridge PCR.}~
The ratio is always equal to~$2$.

\textit{Power-exponential filter.}~
For $\delta = 1/(2^p+1)$, we have
\begin{align*}
\frac{g^{-1}(1-\delta)}{g^{-1}(\delta)} = \left(\frac{\log(2^p+1)}{\log(2^{-p}+1)}\right)^{1/p} \ge \left(\frac{1}{2^{-p}}\right)^{1/p} =2.
\end{align*}
The proof is complete.
\end{proof}

\section{General Risk Upper Bounds}\label{sec:master-ub}

\subsection{Background on Schur multipliers}

\paragraph{Matrix operator.}
A matrix operator is a linear map between matrices, i.e., a fourth-order tensor.
We use $\circ$ to denote composition of matrix operators, or a matrix operator applied to a matrix, and $\otimes$ for the Kronecker product. 
For matrices $\AB, \BB$, and $\XB$ of appropriate shape, we follow the convention that
\begin{align*}
(\AB\otimes\BB)\circ \XB := \BB\XB\AB^\top. 
\end{align*}
For matrices $\AB, \BB, \CB$, and $\DB$ of appropriate shape, we have 
\begin{align*}
(\AB\otimes \BB)\circ (\CB\otimes \DB) = (\AB\CB)\otimes (\BB\DB).
\end{align*}
We use $\diag$ to convert a vector to a diagonal matrix, and a matrix to a vector of its diagonal entries, i.e., for a vector $\xB$ and a squared matrix $\AB$,
\begin{align*}
    \diag(\xB) := \begin{bmatrix}
        \ddots & & \\
        & \xB_i & \\
        & & \ddots
    \end{bmatrix},\quad 
    \diag(\AB) := \begin{bmatrix}
    \vdots \\ 
        \AB_{ii} \\
        \vdots 
    \end{bmatrix}.
\end{align*}
We use $\odot$ for the Schur or Hadamard product, i.e., entry-wise multiplication; for matrices $\AB, \BB$ and vector~$\xB$ of appropriate shape, we have
\begin{align*}
(\AB \odot \BB) \xB = \diag\big( \AB \diag(\xB) \BB^\top \big).
\end{align*}
A matrix operator $\Tcal$ is \emph{self-adjoint} if $\la \Tcal\circ \XB, \YB\ra = \la \XB, \Tcal\circ\YB\ra$ for all $\XB$ and $\YB$ of appropriate shape.

\begin{remark}[Schur multiplier as a spectral operator]
For symmetric matrices $\AB,\BB$, the left and right matrix multiplication operators, 
\(
\IB\otimes \AB
\),
and \(\BB^\top \otimes \IB\),
are both self-adjoint; additionally, they commute, that is,
\begin{align*}
\IB\otimes \AB \circ \BB^\top \otimes \IB = \BB^\top \otimes \IB \circ \IB\otimes \AB = \BB^\top \otimes \AB.
\end{align*}
Hence they can be simultaneously diagonalized. Specifically, we have
\begin{align*}
\IB\otimes \AB = \sum_{i,j}\lambda_i \big(\vB_j\vB_j^\top \big)\otimes \big(\uB_i\uB_i^\top \big),\quad 
\BB^\top \otimes \IB = \sum_{i,j}\gamma_{j} \big(\vB_j\vB_j^\top \big)\otimes \big(\uB_i\uB_i^\top \big).
\end{align*}
So the Schur multiplier $k(\AB, \BB)$ can be understood as applying a scalar kernel function $k:\Rbb\otimes \Rbb \to \Rbb$ to left-right matrix multiplication operators $(\IB\otimes \AB, \BB^\top \otimes \IB)$ with operator eigenvalues $(\lambda_i, \gamma_j)_{i,j}$.
Therefore, the Schur multiplier $k(\AB, \BB)$ is sometimes known as a \emph{double operator integral} in the literature.
\end{remark}

The Schur multiplier has the following equivalent forms:
\begin{align*}
    k(\AB, \BB) \circ \XB 
    &=  \sum_{i,j}k(\lambda_i, \gamma_j) \uB_i\uB_i^\top  \XB \vB_j\vB_j^\top\\ 
\iff \quad \UB^\top \big(k(\AB,\BB)\circ \XB\big) \VB & =   \KB  \odot (\UB^\top \XB \VB) ,\ \ \text{where}\ \ \KB := [k(\lambda_i,\gamma_j)]_{i,j} \in\Rbb^{n\times m}.
\end{align*}
So the canonical Schur multiplier $k(\AB, \BB)$ in a matrix space is \emph{equivariant} to the standard Schur multiplier associated with the spectral kernel matrix $\KB$ in the \emph{rotated matrix space}, i.e., 
\begin{equation*}
\begin{aligned}
k(\AB,\BB): \Rbb^{n\times m} &\to \Rbb^{n\times m} \\
 \XB  &\mapsto  k(\AB, \BB)\circ \XB
\end{aligned}
\quad  \iff\quad 
\begin{aligned}
\Scal_{\KB}:   \UB^\top   \big( \Rbb^{n\times m}\big)  \VB  &\to \UB^\top \big( \Rbb^{n\times m} \big) \VB \\ 
    \ZB &\mapsto \KB\odot \ZB.
\end{aligned}
\end{equation*}

\begin{remark}[Schur multiplier as a diagonal operator]
Viewing the canonical Schur multiplier as a standard Schur multiplier in the rotated space and vectorizing, we have
\begin{align*}
  \vec (\KB\odot \ZB) = \diag(\vec \KB) \vec \ZB.
\end{align*}
So the Schur multiplier can also be understood as a diagonal operator under the canonical vectorization. 
\end{remark}

\begin{proof}[Proof of \Cref{lemma:divided-difference}]
Following the introduced notation for the eigendecomposition of $\AB,\BB$,
we have
\begin{align*}
\uB_i\uB_i^\top  \AB \vB_j\vB_j^\top = \lambda_i \uB_i\uB_i^\top \vB_j \vB_j^\top,\quad \uB_i\uB_i^\top  \BB \vB_j\vB_j^\top = \gamma_j \uB_i\uB_i^\top \vB_j \vB_j^\top,
\end{align*}
which implies 
\begin{align*}
    f^{[1]}(\AB, \BB)\circ (\AB - \BB) 
    &= \sum_{i, j} f^{[1]}(\lambda_i,\gamma_j) \uB_i\uB_i^\top (\AB - \BB) \vB_j\vB_j^\top \\
    &= \sum_{i, j} f^{[1]}(\lambda_i,\gamma_j) (\lambda_i - \gamma_j) \uB_i\uB_i^\top \vB_j \vB_j^\top\\
    &= \sum_{i, j} \big(f(\lambda_i) - f(\gamma_j)\big) \uB_i\uB_i^\top \vB_j \vB_j^\top\\
    &= f(\AB) - f(\BB).
\end{align*}
Thus $f^{[1]}(\AB, \BB)$ is a matrix divided difference.
\end{proof}
A similar calculation leads to the following identities.
\begin{corollary}[chain rule]\label{lemma:schur:left-right-commute}
For all symmetric matrices $\AB,\BB$, and any compatible matrix $\CB$, we have
\begin{align*}
    f(\AB) \CB - \CB f(\BB) = f^{[1]}(\AB, \BB)\circ (\AB \CB - \CB \BB).
\end{align*}
In particular,
we have 
\begin{align*}
    \big( f(\AB) - f(\BB) \big) \BB &= r(\AB, \BB) \circ (\AB - \BB) \quad \text{for the kernel}\ \ r: (x,y) \mapsto y f^{[1]}(x,y); \\ 
    \AB \big( f(\AB) - f(\BB) \big) \BB &= s(\AB, \BB) \circ (\AB - \BB) \quad \text{for the kernel}\ \ s: (x,y) \mapsto x y f^{[1]}(x,y).
\end{align*}
\end{corollary}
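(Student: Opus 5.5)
We will verify the identity by direct computation in the eigenbases, exactly as in the proof of \Cref{lemma:divided-difference}. Write $\AB=\sum_i\lambda_i\uB_i\uB_i^\top$ and $\BB=\sum_j\gamma_j\vB_j\vB_j^\top$. The key observation is that left multiplication by $\AB$ and right multiplication by $\BB$ act diagonally on the rank-one pieces $\uB_i\uB_i^\top(\cdot)\vB_j\vB_j^\top$. Concretely, for any compatible $\CB$ we have
\begin{align*}
\uB_i\uB_i^\top\AB\CB\vB_j\vB_j^\top=\lambda_i\,\uB_i\uB_i^\top\CB\vB_j\vB_j^\top,\qquad \uB_i\uB_i^\top\CB\BB\vB_j\vB_j^\top=\gamma_j\,\uB_i\uB_i^\top\CB\vB_j\vB_j^\top.
\end{align*}
Applying $f^{[1]}(\AB,\BB)$ to $\AB\CB-\CB\BB$ therefore gives
\begin{align*}
\sum_{i,j}f^{[1]}(\lambda_i,\gamma_j)(\lambda_i-\gamma_j)\,\uB_i\uB_i^\top\CB\vB_j\vB_j^\top=\sum_{i,j}\big(f(\lambda_i)-f(\gamma_j)\big)\uB_i\uB_i^\top\CB\vB_j\vB_j^\top.
\end{align*}
The right-hand side equals $f(\AB)\CB-\CB f(\BB)$, using the resolutions of identity $\sum_i\uB_i\uB_i^\top=\IB$ and $\sum_j\vB_j\vB_j^\top=\IB$. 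Diagonal pairs with $\lambda_i=\gamma_j$ contribute zero on both sides, whatever value is chosen for $f^{[1]}$ on the diagonal.

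For the two special cases we take $\CB=\IB$, with $\AB$ and $\BB$ of the same size. The same expansion applied to $(f(\AB)-f(\BB))\BB$ gives $\sum_{i,j}(f(\lambda_i)-f(\gamma_j))\gamma_j\,\uB_i\uB_i^\top\vB_j\vB_j^\top$. Writing $f(\lambda_i)-f(\gamma_j)=f^{[1]}(\lambda_i,\gamma_j)(\lambda_i-\gamma_j)$, this is exactly $r(\AB,\BB)\circ(\AB-\BB)$ with $r(x,y)=yf^{[1]}(x,y)$.

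The case $\AB(f(\AB)-f(\BB))\BB$ is handled in the same way. Left multiplication by $\AB$ contributes an additional factor $\lambda_i$, which yields the kernel $s(x,y)=xyf^{[1]}(x,y)$.

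No step here is a genuine obstacle. The only point needing care is bookkeeping of the convention $(\AB\otimes\BB)\circ\XB=\BB\XB\AB^\top$, so that the factor $\lambda_i$ attaches to the left index and $\gamma_j$ to the right index, consistent with the definition of $k(\AB,\BB)$. One should also check that the diagonal convention for $f^{[1]}$ is harmless, which holds because the factor $(\lambda_i-\gamma_j)$ vanishes there.
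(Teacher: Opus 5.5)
Your proof is correct and is the same argument as the paper's, which simply says the eigenbasis calculation from the proof of \Cref{lemma:divided-difference} carries over: left multiplication by $\AB$ and right multiplication by $\BB$ contribute the scalar factors $\lambda_i$ and $\gamma_j$ on each piece $\uB_i\uB_i^\top(\cdot)\vB_j\vB_j^\top$. Your handling of the diagonal convention for $f^{[1]}$ and of the index convention for $\circ$ is also correct.
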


The following result is extremely useful for bounding Schur norms of various kernels.
\begin{proposition}[factorization rule]\label{lemma:schur:factorization-bound}
If kernel $k'$ admits a factorization $k'(x, y) := a(x) k(x,y)b(y)$, then
    \begin{align*} 
\|k'(I, J)\|_{\mathfrak{S}}\le \|a(I)\|_{\infty} \|b(J)\|_{\infty} \|k(I, J)\|_{\mathfrak{S}}.
\end{align*}
\end{proposition}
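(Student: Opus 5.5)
The plan is to reduce the statement to the variational (dual) characterization of the Schur norm in \Cref{prop:kernel-schur}, i.e.\ the $\gamma_2$-factorization norm. Fix finite $I'\subset I$, $J'\subset J$ and an arbitrary Schur factorization $k(x,y)=\la \fB(x),\gB(y)\ra$ of $k(I,J)$. The new kernel $k'(x,y)=a(x)k(x,y)b(y)$ then admits the factorization $k'(x,y)=\la a(x)\fB(x), b(y)\gB(y)\ra$ into the same Hilbert space $\Hbb$, with feature maps $\fB'(x):=a(x)\fB(x)$ and $\gB'(y):=b(y)\gB(y)$.

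The first step is to bound the factorization cost of this product. It gives
\begin{align*}
\sup_{x\in I}\|\fB'(x)\|\cdot\sup_{y\in J}\|\gB'(y)\| \le \|a(I)\|_\infty\|b(J)\|_\infty \sup_{x\in I}\|\fB(x)\|\cdot\sup_{y\in J}\|\gB(y)\|.
\end{align*}
The second step takes the infimum over all factorizations of $k$. By \Cref{prop:kernel-schur} applied to both $k'$ and $k$, this yields the claim.

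An elementary alternative avoids \Cref{prop:kernel-schur} entirely and works directly on finite submatrices. Write $k'(I',J')=\DB_a\,k(I',J')\,\DB_b$ with $\DB_a=\diag(a(x))_{x\in I'}$ and $\DB_b=\diag(b(y))_{y\in J'}$. For any $\ZB$ one has
\begin{align*}
k'(I',J')\odot\ZB=\DB_a\big(k(I',J')\odot\ZB\big)\DB_b,
\end{align*}
since diagonal scalings commute with the Hadamard product entrywise. The operator norm is submultiplicative and $\|\DB_a\|\le\|a(I)\|_\infty$, so $\|k'(I',J')\odot\ZB\|\le \|a(I)\|_\infty\|b(J)\|_\infty\|k(I',J')\odot\ZB\|$. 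Taking the supremum over $\|\ZB\|\le1$ and then over finite $I',J'$ gives the result directly from the definition of $\|\cdot\|_{\mathfrak S}$.

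I would present this second argument as the main proof, since it is self-contained. There is essentially no obstacle. The only point to check is the edge cases where a supremum is infinite, or where $a$ or $b$ is unbounded; in those cases the inequality holds trivially under the convention $0\cdot\infty$ that is appropriate there.
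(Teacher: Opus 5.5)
Your main (second) argument is the paper's own proof. On finite submatrices you write $k'(I',J')$ as $k(I',J')$ multiplied on the left and right by the diagonal matrices of $a$- and $b$-values, pull these diagonal scalings outside the Hadamard product, and take operator norms, then suprema over $\|Z\|\le 1$ and over finite $I',J'$. Your first route through the $\gamma_2$-factorization characterization in \Cref{prop:kernel-schur} is also correct, but the paper does not need or use it.
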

\begin{proof}[Proof of \Cref{lemma:schur:factorization-bound}]
Taking any finite subsets $I,J$, we have
\begin{align*}
    k'(I, J)\odot \ZB = \big( \diag(a(I))k(I,J) \diag(b(J)) \big) \odot \ZB = \diag(a(I)) \big(k(I,J)\odot \ZB \big) \diag(b(J)),
\end{align*}
so the observation follows by taking the operator norm and supremum over subsets $I,J$.
\end{proof}

\subsection{Useful lemmas}

\begin{proof}[Proof of \Cref{lemma:schur:examples}]
For ridge, we have $r(x,y)=1$, so the Schur norm is~$1$.

\textbf{GD.}~
We only give the case when~$t$ is even for simplicity. For GD, the kernel is 
\begin{align*}
    r(x,y) &:= \frac{g(x) -g(y)}{g^*(x) - g^*(y)}  \\ 
    &= \frac{(1-\eta y )^t - (1-\eta x)^t}{x/\big(x+1/(\eta t)\big)-y/\big(y+1/(\eta t)\big)}  \\
    &= t \frac{(1-\eta y )^t - (1-\eta x)^t}{\eta x- \eta y}  \bigg(\eta x+\frac{1}{ t} \bigg) \bigg(\eta y+\frac{1}{t}\bigg), && 0\le x, y\le \frac{1}{\eta}.
\end{align*}
Without loss of generality, assume $\eta = 1$; otherwise, replace $\eta x$ and $\eta y$ by $x$ and $y$ respectively.
Observe that 
\begin{align*}
    r(x,y) &:= t \frac{(1-  y )^t - (1-  x)^t}{x-y}  \bigg( x+\frac{1}{ t} \bigg) \bigg( y+\frac{1}{t}\bigg)  \\
    &= t \frac{\sum_{s=0}^{t-1}(1-y)^{s}(x-y)(1-x)^{t-1-s}}{x-y}  \bigg( x+\frac{1}{ t} \bigg) \bigg( y+\frac{1}{t}\bigg)  \\ 
    &= t \sum_{s=0}^{t-1} \bigg( x+\frac{1}{ t} \bigg)  (1-x)^{t-1-s}  (1-y)^{s} \bigg( y+\frac{1}{t}\bigg), && 0\le x,y\le 1.
\end{align*}
By symmetry, we have 
\begin{align*}
    r(x ,y) = r^*(x, y) + r^*(y,x),\quad r^*(x, y) :=  t \sum_{s=0}^{t/2-1} \bigg( x+\frac{1}{ t} \bigg)  (1-x)^{t-1-s}  (1-y)^{s} \bigg( y+\frac{1}{t}\bigg).
\end{align*}
So $\|r(I, I)\|_{\Sfrak}\le 2\|r^*(I,I)\|_{\Sfrak} $, and it suffices to bound the right-hand side. Write
\begin{align*}
    r^*(x, y) &:=  t \sum_{s=0}^{t/2-1} \bigg( x+\frac{1}{ t} \bigg)  (1-x)^{t-1-s}  (1-y)^{s} \bigg( y+\frac{1}{t}\bigg) \\
    &= \underbrace{t \sum_{s=0}^{t/2-1} \bigg( x+\frac{1}{ t} \bigg)  (1-x)^{t-1-s}  (1-y)^{s}y}_{r^*_1} + \underbrace{\sum_{s=0}^{t/2-1} \bigg( x+\frac{1}{ t} \bigg)  (1-x)^{t-1-s}  (1-y)^{s}}_{r^*_2}.
\end{align*}
For the second component, we have 
\begin{align*}
    \|r^*_2(I,I)\|_{\Sfrak} & \le  \sum_{s=0}^{t/2-1} \sup_{0\le x\le 1}\bigg( x+\frac{1}{ t} \bigg)  (1-x)^{t-1-s} \sup_{0\le y\le 1}  (1-y)^{s} && \explain{\Cref{lemma:schur:factorization-bound}} \\
    &\le  \frac{t}{2} \sup_{0\le x\le 1}\bigg( x+\frac{1}{ t} \bigg)  (1-x)^{t/2} &&\explain{$s\le t/2-1$} \\
    &\le \frac{t}{2} \bigg( \frac{2}{t} + \frac{1}{t} \bigg) = \frac{3}{2}. && \explain{$(1-x)^k \le \min\bigg\{1, \frac{1}{kx} \bigg\}$} 
\end{align*}
For the first component, by Abel summation, we have 
\begin{align*}
    r^*_1(x,y) &:= t \bigg( x+\frac{1}{ t} \bigg)\sum_{s=0}^{t/2-1}   (1-x)^{t-1-s}  (1-y)^{s}y \\
    &= t \bigg( x+\frac{1}{ t} \bigg) \sum_{s=0}^{t/2-1}  (1-x)^{t-1-s} \big( (1-y)^s - (1-y)^{s+1} \big) \\ 
    &= t \bigg( x+\frac{1}{ t} \bigg) \Bigg(\sum_{s=0}^{t/2-1}   \big( (1-x)^{t-1-s}  - (1-x)^{t-s} \big) (1-y)^s +   (1-x)^{t} - (1-x)^{t/2}(1-y)^{t/2} \Bigg)\\
    &= t \sum_{s=0}^{t/2-1} \bigg( x+\frac{1}{ t} \bigg)  x (1-x)^{t-1-s}  (1-y)^s +  t\bigg( x+\frac{1}{ t} \bigg)  (1-x)^{t} - t\bigg( x+\frac{1}{ t} \bigg)(1-x)^{t/2}(1-y)^{t/2}. 
\end{align*}
Then by \Cref{lemma:schur:factorization-bound} we have
\begin{align*}
    &\ \|r^*_1(I,I)\|_{\Sfrak}  \\
    & \le t \sum_{s=0}^{t/2-1} \sup_{ x} \bigg( x+\frac{1}{ t} \bigg)  x (1-x)^{t-1-s} \sup_{y} (1-y)^s \\
    &\qquad +  t \sup_{ x } \bigg( x+\frac{1}{ t} \bigg)  (1-x)^{t} + t \sup_x \bigg( x+\frac{1}{ t} \bigg)(1-x)^{t/2} \sup_y (1-y)^{t/2} && \explain{\Cref{lemma:schur:factorization-bound}} \\
    &\le \frac{t^2}{2} \sup_{ x} \bigg( x+\frac{1}{ t} \bigg)  x (1-x)^{t/2} +  2 t \sup_x \bigg( x+\frac{1}{ t} \bigg)(1-x)^{t/2} && \explain{$s\le t/2-1$} \\
    &\le \frac{t^2}{2} \bigg(\frac{16}{t^2} + \frac{2}{t^2}  \bigg) + 2t \bigg( \frac{2}{t} + \frac{1}{t} \bigg)
    =15. && \explain{$(1-x)^k \le \min\bigg\{1, \frac{1}{kx}, \bigg(\frac{2}{kx}\bigg)^2 \bigg\}$} 
\end{align*}
Putting things together, we have 
\begin{align*}
    \|r(I, I)\|_{\Sfrak} \le 2 \|r^*(I, I)\|_{\Sfrak} \le 2\|r^*_1(I, I)\|_{\Sfrak} +2\|r^*_2(I, I)\|_{\Sfrak} \le 33.
\end{align*}

\textbf{PCR.}~
We prove a slightly more general result. For $\eps\in (0,1]$, setting $I = \Rbb_{\ge 0}$ and $H = [(1+\eps)\rho, \infty)$, we claim that
    \begin{align*}
        \|r(I,H )\|_{\Sfrak}\le 2+\frac{4}{\eps}, \quad \|r(H, H)\|_{\Sfrak} =0,\quad \|r(I,\{0\})\|_{\Sfrak}\le 2 .
    \end{align*}
Let $I_0:=(\rho,\infty)$, $I_1:=[0,\rho]$ so that $I=I_0\cup I_1$. Choose the diagonal entries of $r$ to be zero. For $x\in I$ and $y\in H$, we have
\begin{align*}
r(x,y)
:=\frac{g(x)-g(y)}{g^*(x)-g^*(y)}
=
\begin{dcases}
\frac{1}{g^*(y)-g^*(x)} & x\in I_1,\\
0 & x\in I_0.
\end{dcases}
\end{align*}
In particular, $r(H,H)=0$. Moreover,
\begin{align*}
r(x,0)
=
\begin{dcases}
\frac{x+\rho}{x} & x>\rho,\\
0 & 0\le x\le\rho,
\end{dcases}
\quad\implies\quad
\|r(I,\{0\})\|_{\Sfrak}
=\sup_{x>\rho}\frac{x+\rho}{x}\le2.
\end{align*}
It remains to bound $\|r(I,H)\|_{\Sfrak}$. Since the rows indexed
by $I_0$ vanish, $\|r(I,H)\|_{\Sfrak}=\|r(I_1,H)\|_{\Sfrak}$. For $x\in I_1$ and $y\in H$, we have
\begin{align*}
g^*(x)\le\frac12,\quad
g^*(y)\ge\frac{1+\eps}{2+\eps}>\frac12,
\end{align*}
and therefore we have the absolutely convergent expansion
\begin{align*}
r(x,y)
=\frac{1}{g^*(y)-g^*(x)}
=\sum_{\ell=0}^{\infty}g^*(x)^\ell g^*(y)^{-(\ell+1)}.
\end{align*}
Applying \Cref{lemma:schur:factorization-bound} gives
\begin{align*}
\|r(I_1,H)\|_{\Sfrak}
&\le
\sum_{\ell=0}^{\infty}
\big\|g^*(I_1)^\ell\big\|_\infty
\big\|g^*(H)^{-(\ell+1)}\big\|_\infty\\
&\le
\sum_{\ell=0}^{\infty}
\left(\frac12\right)^\ell
\left(\frac{2+\eps}{1+\eps}\right)^{\ell+1} = 2+\frac4\eps,
\end{align*}
as claimed.
\end{proof}

We define the auxiliary kernels
\begin{align*}
h: (x, y) \mapsto \frac{xy}{\lambda} g^{[1]}(x,y),\quad 
s: (x, y) \mapsto (x+\lambda) y \psi^{[1]}(x,y).
\end{align*}

\begin{lemma}\label{lemma:UB:kernel-comparisons}
For any two index sets $I, J\subset \Rbb_{\ge 0}$,
we have 
\begin{align*}
    \|h(I, J)\|_{\mathfrak{S}} \le \|r(I, J)\|_{\mathfrak{S}},\quad 
    \|s(I,J)\|_{\mathfrak{S}} \le \|r(I,J)\|_{\mathfrak{S}} + \|r(I,\{0\})\|_{\mathfrak{S}}.
\end{align*}
In particular, we have $\|s(I,J)\|_{\mathfrak{S}} \le 2\|r(I,J)\|_{\mathfrak{S}}$ if $0\in J$.
\end{lemma}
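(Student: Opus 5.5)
The plan is to express $h$ and $s$ algebraically in terms of $r$, multiplied by bounded scalar factors, and then apply the factorization rule (\Cref{lemma:schur:factorization-bound}) together with the triangle inequality for the Schur norm. The starting point is the divided difference of the reference filter: for $g^*(z)=z/(z+\lambda)$,
\begin{align*}
g^*(x)-g^*(y)=\frac{\lambda(x-y)}{(x+\lambda)(y+\lambda)}, \quad\text{so}\quad r(x,y)=\frac{(x+\lambda)(y+\lambda)}{\lambda}\,g^{[1]}(x,y), \quad x\ne y.
\end{align*}

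\emph{Bound for $h$.} Substituting this expression gives
\begin{align*}
h(x,y)=\frac{xy}{\lambda}g^{[1]}(x,y)=\frac{x}{x+\lambda}\,r(x,y)\,\frac{y}{y+\lambda}.
\end{align*}
Both prefactors lie in $[0,1]$ on $\Rbb_{\ge0}$. \Cref{lemma:schur:factorization-bound} therefore yields $\|h(I,J)\|_{\Sfrak}\le\|r(I,J)\|_{\Sfrak}$.

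\emph{Bound for $s$.} With $\psi(z)=g(z)/z$ and $x,y>0$, $x\ne y$, we have $\psi^{[1]}(x,y)=\frac{yg(x)-xg(y)}{xy(x-y)}$. Writing $yg(x)-xg(y)=x(g(x)-g(y))-(x-y)g(x)$ gives
\begin{align*}
s(x,y)=(x+\lambda)\,y\,\psi^{[1]}(x,y)=(x+\lambda)g^{[1]}(x,y)-\frac{x+\lambda}{x}g(x)=\frac{\lambda}{y+\lambda}\,r(x,y)-r(x,0).
\end{align*}
The last step uses $r(x,0)=g(x)/g^*(x)=(x+\lambda)g(x)/x$, which follows from $g(0)=0$. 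The first term has Schur norm at most $\|r(I,J)\|_{\Sfrak}$ by the factorization rule, since $\lambda/(y+\lambda)\le1$.

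The second term is the kernel $(x,y)\mapsto r(x,0)\cdot 1$, whose every finite submatrix has the rank-one form $\aB\mathbf 1^\top$. Through the factorization $\fB(x)=r(x,0)$, $\gB(y)=1$ in \Cref{prop:kernel-schur}, its Schur norm is at most $\sup_{x\in I}|r(x,0)|$. This equals $\|r(I,\{0\})\|_{\Sfrak}$, because $r(I,\{0\})$ is a single column, and the Schur norm of a column equals its sup norm. The triangle inequality then gives $\|s(I,J)\|_{\Sfrak}\le\|r(I,J)\|_{\Sfrak}+\|r(I,\{0\})\|_{\Sfrak}$.

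\emph{The case $0\in J$.} Here $r(I,\{0\})$ is a submatrix of $r(I,J)$, so its Schur norm is at most $\|r(I,J)\|_{\Sfrak}$, and the stated bound $\|s(I,J)\|_{\Sfrak}\le2\|r(I,J)\|_{\Sfrak}$ follows.

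\emph{Main obstacle: degenerate points.} The identities above are only verified off the diagonal and for $x,y>0$. I would handle the remaining points as follows.
\begin{itemize}
\item \emph{Diagonal $x=y$.} Define the diagonal values of $h$ and $s$ by the same formulas in terms of the chosen diagonal values of $r$. The kernels are used only as divided differences, where diagonal values are at our discretion, so this is consistent.
\item \emph{$x=0$.} Use the conventions $\psi(0)=0$ and $r(0,0)=0$. In particular $s(0,y)$ should be checked to match $\frac{\lambda}{y+\lambda}r(0,y)-r(0,0)$.
\item \emph{$y=0$.} The factor $y$ makes $s(x,0)=0$, and the right-hand side gives $r(x,0)-r(x,0)=0$, so this is consistent.
\end{itemize}
Checking these boundary cases is the only delicate part; the rest is direct algebra plus \Cref{lemma:schur:factorization-bound}.
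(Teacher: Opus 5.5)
Your proof is correct and is essentially the paper's argument. The paper also writes $h(x,y)=\frac{x}{x+\lambda}r(x,y)\frac{y}{y+\lambda}$ and $s(x,y)=\frac{\lambda}{y+\lambda}r(x,y)-r(x,0)$, using $r(x,0)=(x+\lambda)\psi(x)$, and concludes by the factorization rule; your handling of the rank-one term, the submatrix argument for $0\in J$, and the checks at $x=0$, $y=0$ and on the diagonal are correct elaborations of what the paper calls ``straightforward to verify.''
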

\begin{proof}[Proof of \Cref{lemma:UB:kernel-comparisons}]
Note that
\begin{align*}
r(x,0) = \frac{g(x)}{g^*(x)} = (x+\lambda) \frac{g(x)}{x} = (x+\lambda) \psi(x).
\end{align*}
The following identities are straightforward to verify:
\begin{align*}
h(x,y) = \frac{x}{x+\lambda} r(x,y) \frac{y}{y+\lambda},\quad 
s(x,y) = r(x,y)\frac{\lambda}{y+\lambda} - r(x, 0).
\end{align*} 
The claims follow from \Cref{lemma:schur:factorization-bound}.
\end{proof}

For the next lemma, we recall the definition of $\tilde\alpha_k$ from~\eqref{eq:tilde-alpha}.

\begin{lemma}\label{lemma:UB:schur-tail}
Let $\uB^*$ be a vector measurable w.r.t. $\XB_{\le k}$, and let $k: I \otimes J \to \Rbb$ be a fixed kernel matrix. Then with probability at least $1-\delta-\exp(-n/c_0)$, it holds that if $\sigma(\bar\AB)\subseteq I$ and $\sigma(\bar\AB_{\le k}) \subseteq J$, then
\begin{align*}
    \big\|\big( k(\bar\AB, \bar\AB_{\le k}) \circ \bar\AB_{>k}  \big)\uB^* \big\|\le  \|k(I, J)\|_{\mathfrak{S}} \cdot c_1 \sqrt{\tilde\alpha_k} \|\uB^*\|.
\end{align*}
\end{lemma}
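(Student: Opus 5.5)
Write $\bar\AB := \AB/n$, $\bar\AB_{\le k} := \AB_{\le k}/n$ and $\bar\AB_{>k} := \AB_{>k}/n$ (the normalized Gram matrices). The plan is to make the roadmap in \Cref{sec:general-ub} concrete using a Schur factorization that is fixed in advance and does not depend on the data. By \Cref{prop:kernel-schur}, for any $\epsilon>0$ we can choose a Hilbert space $\Hbb'$ and feature maps $\fB:I\to\Hbb'$, $\gB:J\to\Hbb'$ with $k(x,y)=\la\fB(x),\gB(y)\ra$ and $\sup_I\|\fB\|\cdot\sup_J\|\gB\|\le(1+\epsilon)\|k(I,J)\|_{\Sfrak}$. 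By rescaling we may take $\sup_I\|\fB\|\le 1$ and $\sup_J\|\gB\|\le(1+\epsilon)\|k(I,J)\|_{\Sfrak}$.

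Let $\bar\AB=\sum_a\beta_a\vB_a\vB_a^\top$ and $\bar\AB_{\le k}=\sum_b\gamma_b\wB_b\wB_b^\top$. On the event $\sigma(\bar\AB)\subseteq I$, $\sigma(\bar\AB_{\le k})\subseteq J$, the Schur multiplier expands as
\begin{align*}
k(\bar\AB,\bar\AB_{\le k})\circ\bar\AB_{>k}=\sum_{a,b}\la\fB(\beta_a),\gB(\gamma_b)\ra\,\vB_a\vB_a^\top\bar\AB_{>k}\wB_b\wB_b^\top .
\end{align*}
Choose an orthonormal basis $(\eB_m)$ of $\Hbb'$ and define the operator-valued functions $F_m:=\sum_a\la\fB(\beta_a),\eB_m\ra\vB_a\vB_a^\top$ and $G_m:=\sum_b\la\gB(\gamma_b),\eB_m\ra\wB_b\wB_b^\top$. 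These are spectral functions of $\bar\AB$ and $\bar\AB_{\le k}$ respectively, and the expansion becomes
\begin{align*}
\big(k(\bar\AB,\bar\AB_{\le k})\circ\bar\AB_{>k}\big)\uB^*=\sum_m F_m\,\bar\AB_{>k}\,G_m\uB^* .
\end{align*}
The standard row/column Cauchy--Schwarz argument for $\gamma_2$-factorizations applies here. First, $\sum_m F_m F_m^\top=\sum_a\|\fB(\beta_a)\|^2\vB_a\vB_a^\top\preceq\IB$. Second, $\sum_m\|G_m\uB^*\|^2=\sum_b\|\gB(\gamma_b)\|^2(\wB_b^\top\uB^*)^2\le\sup_J\|\gB\|^2\|\uB^*\|^2$. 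Consequently
\begin{align*}
\Big\|\sum_m F_m\bar\AB_{>k}G_m\uB^*\Big\|\le\Big(\sum_m\|\bar\AB_{>k}G_m\uB^*\|^2\Big)^{1/2}.
\end{align*}
It remains to bound $\|\bar\AB_{>k}\vB\|$ for vectors $\vB=G_m\uB^*$. These vectors are measurable with respect to $\XB_{\le k}$, which is exactly the decoupling step. Since $\XB_{>k}$ is independent of $\XB_{\le k}$, we may condition on $\XB_{\le k}$. The remaining task is to show that, for a family of vectors fixed by $\XB_{\le k}$,
\begin{align*}
\sum_m\|\bar\AB_{>k}\vB_m\|^2\lesssim\tilde\alpha_k\sum_m\|\vB_m\|^2
\end{align*}
holds with probability $1-\delta-\exp(-n/c_0)$.

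\textbf{Main obstacle.} We cannot union bound over infinitely many $m$. Instead, note that all $\vB_m$ lie in the span of the eigenvectors $\wB_b$. Moreover $\sum_m\vB_m\vB_m^\top=\sum_b\|\gB(\gamma_b)\|^2(\wB_b^\top\uB^*)^2\wB_b\wB_b^\top=:\MB$, which is PSD, $\XB_{\le k}$-measurable, and has trace at most $\sup_J\|\gB\|^2\|\uB^*\|^2$. Hence the left side equals $\tr(\bar\AB_{>k}^2\MB)$. We control $\bar\AB_{>k}^2$ through $\bar\AB_{>k}^2\preceq 2(\bar\AB_{>k}-\Ebb\bar\AB_{>k})^2+2(\Ebb\bar\AB_{>k})^2$, where $\Ebb\bar\AB_{>k}=(\sum_{i>k}\lam_i/n)\IB$ contributes the first part of $\tilde\alpha_k$. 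The fluctuation term is bounded along each eigendirection $\wB_b$ by the first item of \Cref{lemma:basic:tail}, which gives $\|\XB_{>k}^\top\wB_b\|_{\SigmaB_{>k}}^2\lesssim\lam_{k+1}^2\log(1/\delta)+\sum_{i>k}\lam_i^2$ and therefore $\|\bar\AB_{>k}\wB_b\|^2\lesssim\tilde\alpha_k$ up to the mean. The difficulty is making this simultaneous over the $\le n$ directions $\wB_b$ weighted by $\MB$. A union bound over $b$ would cost a $\log n$ factor, so I would avoid it: the $\MB$-weighted sum is a single quadratic form $\sum_{i,i'>k}\lam_i\lam_{i'}(\ldots)$, and I would bound its expectation, $\Ebb\tr(\bar\AB_{>k}^2\MB)\lesssim(\alpha_k)\tr\MB$, via the second-moment computation behind \eqref{eq:lp}. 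The $\log(1/\delta)$ tail would then follow from Hanson--Wright/Bernstein applied to the $\MB$-weighted chaos, since $\tr(\bar\AB_{>k}^2\MB)=\|\XB_{>k}^\top\MB^{1/2}\|_{F,\SigmaB_{>k}}^2/n^2$-type Gaussian-chaos quantities concentrate with tail parameter $\lam_{k+1}^2\|\MB\|$. Finally, letting $\epsilon\to0$ yields the constant $\|k(I,J)\|_{\Sfrak}\,c_1\sqrt{\tilde\alpha_k}$.
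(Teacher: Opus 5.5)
Your reduction is correct and matches the paper's. With $\vB_m:=G_m\uB^*$, your quantity $\sum_m\|\bar\AB_{>k}\vB_m\|^2$ is exactly the paper's $\|\bar\AB_{>k}\VB\diag(\vB^*)\GB\|_F^2$, and your row/column Cauchy--Schwarz is the paper's Cauchy--Schwarz against $\sup_i\|\fB_i\|$. There is one minor slip. In the basis $(\wB_b)$, $\mathbf M=\sum_m\vB_m\vB_m^\top$ has off-diagonal entries $\la\gB(\gamma_b),\gB(\gamma_{b'})\ra(\wB_b^\top\uB^*)(\wB_{b'}^\top\uB^*)$, so it is not diagonal there. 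This is harmless: you only need $\mathbf M\succeq 0$, its $\XB_{\le k}$-measurability, and $\tr\mathbf M\le\sup_J\|\gB\|^2\|\uB^*\|^2$, and those are right.

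The gap is the final probabilistic step, which is where the content of the lemma lies.
\begin{itemize}
\item An expectation bound on $\tr(\bar\AB_{>k}^2\mathbf M)$ only gives constant probability.
\item The proposed Hanson--Wright/Bernstein upgrade rests on a misidentification. The quantity $\tr(\bar\AB_{>k}^2\mathbf M)=n^{-2}\sum_{i,i'>k}\lam_i\lam_{i'}(\zB_i^\top\zB_{i'})(\zB_{i'}^\top\mathbf M\zB_i)$ is a fourth-order chaos in the tail data. It is not the quadratic form $n^{-2}\tr(\mathbf M\XB_{>k}\SigmaB_{>k}\XB_{>k}^\top)$.
\item For the same reason, the first item of \Cref{lemma:basic:tail} controls $\|\SigmaB_{>k}^{1/2}\XB_{>k}^\top\uB\|$, not $\|\bar\AB_{>k}\uB\|$.
\item The splitting $\bar\AB_{>k}^2\preceq2(\bar\AB_{>k}-\Ebb\bar\AB_{>k})^2+2(\Ebb\bar\AB_{>k})^2$ still leaves a quartic fluctuation term.
\end{itemize}
As written, you therefore have no $1-\delta$ statement with the $\lam_{k+1}^2\log(1/\delta)/n$ dependence.

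The missing idea is to work with moments instead of tails.
\begin{enumerate}
\item Put $p:=\max\{4,\log(1/\delta)\}$, assume $p\le n/c_0$, and condition on $\XB_{\le k}$, so that every $\vB_m$ is fixed.
\item Apply the moment bound \eqref{eq:lp} at order $2p$, together with $p\lam_{k+1}/n\le\sqrt{p\lam_{k+1}^2/n}$. For each $m$ this gives $\big\|\,\|\bar\AB_{>k}\vB_m\|^2\big\|_{L^p}=\|\bar\AB_{>k}\vB_m\|_{L^{2p}}^2\le c\,\tilde\alpha_k\|\vB_m\|^2$.
\item Since $\|\cdot\|_{L^p}$ is a norm, Minkowski's inequality sums this over all $m$, even countably many, at no cost:
\begin{align*}
\Big\|\sum_m\|\bar\AB_{>k}\vB_m\|^2\Big\|_{L^p}\le c\,\tilde\alpha_k\sum_m\|\vB_m\|^2\le c\,\tilde\alpha_k\sup_J\|\gB\|^2\|\uB^*\|^2.
\end{align*}
\item Markov's inequality at order $p$, $\Pr(X>e\|X\|_{L^p})\le e^{-p}\le\delta$, turns this into the claimed high-probability bound.
\end{enumerate}
To keep the event independent of $\epsilon$, either fix $\epsilon=1$ and absorb the factor into $c_1$, or let $\epsilon\to0$ at the level of $L^p$ norms before applying Markov. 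This is exactly how the paper finishes. It needs neither a union bound over directions nor any chaos concentration, which removes the obstacle you identified.
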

\begin{proof}[Proof of \Cref{lemma:UB:schur-tail}]
Let the eigendecompositions of $\bar\AB$ and $\bar\AB_{\le k}$ be 
\begin{align*}
\bar\AB = \UB \hat \LambdaB \UB^\top = \sum_{i} \hat\lambda_i \uB_i\uB_i^\top , \quad \bar\AB_{\le k} = \VB \hat \GammaB \VB^\top = \sum_j \hat\gamma_j \vB_j \vB_j^\top.
\end{align*}
By \Cref{prop:kernel-schur}, for any $\epsilon>0$ there exists a Hilbert space $\Hbb$ and Schur factorization 
\begin{align*}
\fB: I \to \Hbb, \  \gB: J \to \Hbb,\quad \text{with}\ \ 
a:= \sup_{x\in I }\|\fB(x)\|<\infty,\ b:= \sup_{y \in J}\|\gB(y)\| < \infty,
\end{align*}
such that 
\begin{align*}
\text{for all $x\in I$ and $y\in J$},\ \ k(x, y) = \la \fB(x), \gB(y)\ra,\ \  \text{and}\ \  ab \le \|k(I, J)\|_{\mathfrak{S}} +\epsilon.
\end{align*}
We may extend $k$ by zero outside $I\times J$, and $f,g$ by zero outside $I,J$, respectively, so that the factorization is preserved. Define the following submatrices
\begin{align*}
    \KB:= \begin{bmatrix}
        k(\hat\lambda_i, \hat\gamma_j)
    \end{bmatrix}_{ij},\ \ 
    \FB := \begin{bmatrix}
    \vdots \\ 
        \fB_i^\top \\
        \vdots 
    \end{bmatrix},\ \  
    \GB:= \begin{bmatrix}
    \vdots \\
        \gB_j^\top  \\
        \vdots
    \end{bmatrix},\ \  \text{where}\ \ 
    \fB_i := \fB(\hat\lambda_i),
    \gB_j := \gB(\hat\gamma_j) \in \Hbb.
\end{align*}
Then we have 
\begin{align*}
\KB = \FB \GB^\top,\quad \sup_i \|\fB_i\| \le a,\quad \sup_j \|\gB_j\|\le b.
\end{align*}
Furthermore, let 
\begin{align*}
\vB^*:= \VB^\top \uB^*.
\end{align*}
Under this setup, we have
\begin{align*}
\big(  k(\bar\AB, \bar\AB_{\le k}) \circ \bar\AB_{>k} \big) \uB^*
&= \UB \big(\KB \odot( \UB^\top \bar\AB_{>k} \VB) \big) \VB^\top \uB^* && \explain{Schur multiplier} \\ 
&= \UB \big(\KB \odot( \UB^\top \bar\AB_{>k} \VB) \big) \vB^* && \explain{$\vB^*:= \VB^\top \uB^*$} \\ 
&= \UB \diag\big( \KB \diag(\vB^*) \VB^\top \bar\AB_{>k}\UB \big), && \explain{Schur product identity}  \\
&= \UB \diag\big( \FB\GB^\top  \diag(\vB^*) \VB^\top \bar\AB_{>k}\UB \big), && \explain{Schur decomposition}  \\
\implies \quad 
\big\|\big(  k(\bar\AB, \bar\AB_{\le k}) \circ \bar\AB_{>k} \big) \uB^*\big\|^2
&= \big\| \diag\big( \FB\GB^\top \diag(\vB^*) \VB^\top \bar\AB_{>k}\UB \big)\big\|^2. && \explain{$\UB^\top \UB = \IB$} \\
&= \sum_i \big( \eB_i^\top \FB\GB^\top \diag(\vB^*) \VB^\top \bar\AB_{>k} \UB  \eB_i \big)^2 \\
&= \sum_i \big( \fB_i^\top \GB^\top \diag(\vB^*) \VB^\top \bar\AB_{>k} \UB  \eB_i \big)^2 && \explain{$\eB_i^\top \FB = \fB_i^\top$} \\
&\le \sum_i \|\fB_i\|^2 \big\| \GB^\top \diag(\vB^*) \VB^\top \bar\AB_{>k} \UB   \eB_i  \big\|^2 && \explain{Cauchy--Schwarz} \\
&\le a^2 \sum_i  \big\|\GB^\top \diag(\vB^*) \VB^\top \bar\AB_{>k} \UB   \eB_i  \big\|^2 && \explain{$\sup_i\|\fB_i\|\le a$} \\
&= a^2   \big\|\GB^\top \diag(\vB^*) \VB^\top \bar\AB_{>k}     \big\|^2_{F} && \explain{$\sum_i  \eB_i \eB_i^\top, \UB\UB^\top = \IB$} \\
     &= a^2 \|\bar\AB_{>k}\VB \diag(\vB^*) \GB\|^2_F.
\end{align*}
In what follows, we treat $\XB_{\le k}$ as fixed and only consider the randomness of $\XB_{>k}$.
We control the $L^p(\XB_{>k})$-norm of the random variable on the right-hand side.
Recall that $\VB$, $\vB^*$, and $\GB$ depend only on $\XB_{\le k}$ and are independent of $\XB_{>k}$. Define
\begin{align*}
\tilde\alpha:=\alpha_k+ \frac{p}{n}\lambda_{k+1}^2, \quad p:=\max\{4,\log(1/\delta)\}\le n/c_0.
\end{align*}
Applying~\eqref{eq:lp} at order~$2p$, followed by Minkowski's inequality, we have
\begin{align*}
   \big\|  \|\bar\AB_{>k}\VB \diag(\vB^*) \GB\|^2_F \big\|_{L^p(\XB_{>k})} 
   &\le c_1^2\tilde\alpha \| \VB \diag(\vB^*) \GB\|_F^2 && \explain{\Cref{lemma:basic:tail}} \\
   &= c_1^2\tilde\alpha \| \diag(\vB^*) \GB\|_F^2 && \explain{$\VB^\top \VB = \IB$} \\
   &= c_1^2\tilde\alpha\sum_{j} \vB_j^{*2} \|\gB_j\|^2  \\
   &\le c_1^2\tilde\alpha b^2 \|\vB^*\|^2 && \explain{$\sup_j\|\gB_j\|\le b$} \\
   &= c_1^2\tilde\alpha b^2 \|\uB^*\|^2. && \explain{$\vB^* = \VB^\top \uB^*$}
\end{align*}
Putting them together, we have 
\begin{align*}
   \big\|  \big(k(\bar\AB, \bar\AB_{\le k}) \circ \bar\AB_{>k} \big) \uB^* \big\|_{L^p(\XB_{>k})} 
   &\le a \big\|  \|\bar\AB_{>k}\VB \diag(\vB^*) \GB\|_F \big\|_{L^p(\XB_{>k})}  \\
   &\le a b c_1\sqrt{\tilde\alpha} \|\uB^*\| \\
   &\le \big(\|k(I, J)\|_{\mathfrak{S}} + \epsilon\big) c_1 \sqrt{\tilde\alpha} \|\uB^*\|. && \explain{$ab\le \|k(I, J)\|_{\mathfrak{S}} + \epsilon$}
\end{align*}
Taking $\epsilon\to 0$, we have 
\begin{align*}
    \big\|  \big(k(\bar\AB, \bar\AB_{\le k}) \circ \bar\AB_{>k} \big) \uB^* \big\|_{L^p(\XB_{>k})}  \le \|k(I, J)\|_{\mathfrak{S}} c_1\sqrt{\tilde\alpha}  \|\uB^*\|.
\end{align*}
Since $\tilde\alpha\le c\tilde\alpha_k$, applying Markov's inequality and rescaling constants gives the claim.
\end{proof}

\begin{lemma}\label{lemma:UB:change-of-geometry}
Let $\lambda$ and $k$ be such that  
\begin{align*}
k\le n/c_3,\quad 
    \tilde\lambda:= \lambda+\frac{\sum_{i>k}\lambda_i}{n} \ge c_2 \lambda_{k+1}.
\end{align*}
Under \Cref{lemma:basic:concentration,lemma:basic:tail-regularization}, we have
\begin{align*}
    \bigg\| \frac{1}{\sqrt{n}}\SigmaB^{1/2}\XB^\top (\bar \AB + \lambda\IB)^{-1}\bigg\|\le 9.
\end{align*}
\end{lemma}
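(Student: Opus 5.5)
Write $\bar\AB := \AB/n$ and $\bar\AB_{\le k}, \bar\AB_{>k}$ for the normalized head and tail Gram matrices. The plan is to bound the squared operator norm directly. Since
\[
\big\|\tfrac{1}{\sqrt n}\SigmaB^{1/2}\XB^\top(\bar\AB+\lambda\IB)^{-1}\big\|^2 = \big\|(\bar\AB+\lambda\IB)^{-1}\tfrac1n\XB\SigmaB\XB^\top(\bar\AB+\lambda\IB)^{-1}\big\|,
\]
it suffices to compare $\tfrac1n\XB\SigmaB\XB^\top = \tfrac1n\XB_{\le k}\SigmaB_{\le k}\XB_{\le k}^\top + \tfrac1n\XB_{>k}\SigmaB_{>k}\XB_{>k}^\top$ with $(\bar\AB+\lambda\IB)^2$ in Loewner order. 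We treat the head and tail blocks separately and add the bounds.

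\emph{Head block.} By \Cref{lemma:basic:concentration}, since $k\le n/c_3$ we have $\hat\SigmaB_{\le k}\succeq\frac12\SigmaB_{\le k}$, which gives $\SigmaB_{\le k}\preceq 2\hat\SigmaB_{\le k}$. Hence
\[
\tfrac1n\XB_{\le k}\SigmaB_{\le k}\XB_{\le k}^\top \preceq \tfrac2n\XB_{\le k}\hat\SigmaB_{\le k}\XB_{\le k}^\top = 2\bar\AB_{\le k}^2.
\]
We then need $\|(\bar\AB+\lambda)^{-1}\bar\AB_{\le k}^2(\bar\AB+\lambda)^{-1}\| = \|\bar\AB_{\le k}(\bar\AB+\lambda)^{-1}\|^2$. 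Write $\bar\AB_{\le k} = \bar\AB - \bar\AB_{>k}$. The first piece is trivially bounded by $1$ in norm. For the second piece, $\|\bar\AB_{>k}(\bar\AB+\lambda)^{-1}\|$ is controlled as follows. By \Cref{lemma:basic:tail-regularization}, $\bar\AB_{>k}+\lambda\IB\succeq\frac12\tilde\lambda\IB$ and $\bar\AB_{>k}\preceq 2\tilde\lambda\IB$. Since $\bar\AB+\lambda\succeq\bar\AB_{>k}+\lambda\succeq\tilde\lambda/2$, we get $\|\bar\AB_{>k}(\bar\AB+\lambda)^{-1}\|\le 2\tilde\lambda\cdot 2/\tilde\lambda=4$. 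Hence $\|\bar\AB_{\le k}(\bar\AB+\lambda)^{-1}\|\le 5$, and the head contributes at most $2\cdot 25 = 50$.

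\emph{Tail block.} Here $\tfrac1n\XB_{>k}\SigmaB_{>k}\XB_{>k}^\top\preceq\lambda_{k+1}\bar\AB_{>k}\preceq 2\lambda_{k+1}\tilde\lambda\IB$. Combined with $(\bar\AB+\lambda)^{-2}\preceq 4\tilde\lambda^{-2}\IB$, this gives a contribution of at most $8\lambda_{k+1}/\tilde\lambda\le 8/c_2$, which is at most $1$ for $c_2\ge 8$.

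\emph{Main obstacle.} The head step is the only place requiring care, because $\bar\AB_{\le k}$ and $\bar\AB$ do not commute. The decomposition $\bar\AB_{\le k}=\bar\AB-\bar\AB_{>k}$ sidesteps this, since it only needs operator-norm bounds rather than commutation. Summing the two blocks gives a bound of about $51$, which yields a norm bound near $\sqrt{51}\approx 7.2\le 9$. If tighter constants are needed, we would use $(a+b)^2\le 2a^2+2b^2$ more carefully, or note that $\|\bar\AB_{\le k}(\bar\AB+\lambda)^{-1}\|\le 1+4$ can be sharpened. Since the total is already under $81$, the stated constant $9$ is achieved.
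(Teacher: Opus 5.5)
Your proof is correct and follows the paper's argument. Both arguments use $\SigmaB_{\le k}\preceq 2\hat\SigmaB_{\le k}$ to reduce the head block to $\bar\AB_{\le k}(\bar\AB+\lambda\IB)^{-1}$. Both bound that by $5$ via the decomposition $\bar\AB_{\le k}=\bar\AB-\bar\AB_{>k}$ together with \Cref{lemma:basic:tail-regularization}, and both bound the tail block by $O(\sqrt{\lambda_{k+1}/\tilde\lambda})$. The only difference is cosmetic. You add the head and tail contributions as PSD matrices inside the squared norm, getting $\sqrt{50+8/c_2}$. The paper applies the triangle inequality to the unsquared norm, getting $5\sqrt2+\sqrt{2/c_2}$.
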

\begin{proof}[Proof of \Cref{lemma:UB:change-of-geometry}]
    We have 
\begin{align*}
     \bigg\| \frac{1}{\sqrt{n}}\SigmaB^{1/2}\XB^\top (\bar \AB + \lambda\IB)^{-1}\bigg\|
     &\le  \bigg\| \frac{1}{\sqrt{n}}\SigmaB^{1/2}_{\le k}\XB^\top_{\le k} (\bar \AB + \lambda\IB)^{-1}\bigg\| +  \bigg\| \frac{1}{\sqrt{n}}\SigmaB^{1/2}_{>k}\XB^\top_{>k} (\bar \AB + \lambda\IB)^{-1}\bigg\| \\
     &\le \sqrt{2}\big\|\bar\AB_{\le k} (\bar \AB + \lambda\IB)^{-1}\big\| + \sqrt{ \lambda_{k+1}} \big\| \bar\AB_{>k}^{1/2} ( \bar\AB + \lambda\IB)^{-1}\big\|. && \explain{\Cref{lemma:basic:concentration}}
\end{align*}
By \Cref{lemma:basic:tail-regularization}, we have 
\begin{align*}
    \frac{1}{2}\tilde\lambda\IB\preceq \bar\AB_{>k} + \lambda\IB \preceq 2 \tilde\lambda\IB \quad\implies\quad 
    \bar \AB + \lambda\IB \succeq  \frac{1}{2}\tilde\lambda\IB.
\end{align*}
So the first term is bounded by 
\begin{align*}
    \big\|\bar\AB_{\le k} (\bar \AB + \lambda\IB)^{-1}\big\|\le 1 + \big\|(\bar\AB_{> k} + \lambda \IB) (\bar \AB + \lambda\IB)^{-1}\big\| \le 5,
\end{align*}
and the second term is bounded by 
\begin{align*}
   \sqrt{ \lambda_{k+1}} \big\| \bar\AB_{>k}^{1/2} ( \bar\AB + \lambda\IB)^{-1}\big\|
   \le \sqrt{ \lambda_{k+1}} \big\|( \bar\AB + \lambda\IB)^{-1/2}\big\|  
   \le \sqrt{2\lambda_{k+1}/\tilde\lambda} \le \sqrt{2/c_2}.
\end{align*}
This completes the proof.
\end{proof}

\subsection{Proof of Theorem \ref{thm:master-ub}}
We introduce some additional notation. It is more convenient to work with the normalized Gram matrices
\begin{align*}
\bar\AB:= \frac{1}{n}\XB\XB^\top ,\quad 
\bar\AB_{\le k}:= \frac{1}{n} \XB_{\le k} \XB_{\le k}^\top,\quad 
\bar\AB_{> k} := \frac{1}{n} \XB_{> k} \XB_{> k}^\top.
\end{align*}
\begin{proof}[Proof of \Cref{thm:master-ub}]
The bias iterate is 
\begin{align*}
     \bigg(\IB-\frac{1}{n}\XB^\top \psi(\bar\AB)\XB \bigg)\wB^*
     &= \begin{bmatrix}
         \big(\IB- \frac{1}{n}\XB_{\le k}^\top \psi(\bar\AB) \XB_{\le k} \big) \wB^*_{\le k} \\ 
         -\frac{1}{n}\XB_{>k}^\top \psi(\bar \AB) \XB_{\le k}\wB^*_{\le k}
     \end{bmatrix}
     + \begin{bmatrix}
         -\frac{1}{n}\XB_{\le k}^\top \psi(\bar\AB) \XB_{> k}  \wB^*_{> k} \\ 
         \big( \IB - \frac{1}{n}\XB_{>k}^\top \psi(\bar \AB) \XB_{> k}\big) \wB^*_{> k}
     \end{bmatrix}
\end{align*}
Then the bias error is
\begin{align*}
    \bias = \bigg\| \bigg(\IB-\frac{1}{n}\XB^\top \psi(\bar\AB)\XB \bigg)\wB^* \bigg\|_{\SigmaB}
    \le \underbrace{\bigg\| \bigg(\IB-\frac{1}{n}\XB^\top \psi(\bar\AB)\XB \bigg)\begin{bmatrix}
        \wB^*_{\le k} \\
        0
    \end{bmatrix} \bigg\|_{\SigmaB}}_{\bias_1} +  \underbrace{\bigg\| \bigg(\IB-\frac{1}{n}\XB^\top \psi(\bar\AB)\XB \bigg)\begin{bmatrix}
        0\\
        \wB^*_{> k}
    \end{bmatrix} \bigg\|_{\SigmaB}}_{\bias_2}.
\end{align*}
Here, we use the unsquared bias. The first term is further decomposed as
\begin{align*}
 \bias_1  & := \bigg\|\SigmaB^{1/2} \bigg(\IB-\frac{1}{n}\XB^\top \psi(\bar\AB)\XB \bigg)\begin{bmatrix}
        \wB^*_{\le k} \\
        0
    \end{bmatrix} \bigg\| \\ 
 &\le \bigg\| \SigmaB^{1/2}\bigg(\IB-\frac{1}{n}\XB^\top \psi(\bar\AB_{\le k})\XB \bigg)\begin{bmatrix}
        \wB^*_{\le k} \\
        0
    \end{bmatrix} \bigg\|
    + \bigg\|\SigmaB^{1/2} \frac{1}{n}\XB^\top \big( \psi(\bar\AB)- \psi(\bar\AB_{\le k}) \big) \XB \begin{bmatrix}
        \wB^*_{\le k} \\
        0
    \end{bmatrix} \bigg\| \\
&=  \bigg\| \begin{bmatrix} \SigmaB^{1/2}_{\le k}\big(\IB-\frac{1}{n}\XB^\top_{\le k} \psi(\bar\AB_{\le k})\XB_{\le k} \big)
    \wB^*_{\le k} \\
    \SigmaB^{1/2}_{>k}\frac{1}{n}\XB_{> k}^\top \psi(\bar\AB_{\le k})\XB_{\le k} \wB^*_{\le k}
\end{bmatrix} \bigg\|
+ \bigg\| \SigmaB^{1/2}\frac{1}{n}\XB^\top \big( \psi(\bar\AB)- \psi(\bar\AB_{\le k}) \big) \XB_{\le k} 
    \wB^*_{\le k} \bigg\| \\
 &\le \underbrace{\bigg\|\SigmaB_{\le k}^{1/2} \bigg( \IB-\frac{1}{n}\XB_{\le k}^\top \psi(\bar\AB_{\le k})\XB_{\le k} \bigg)\wB^*_{\le k} \bigg\|}_{\text{head bias}} + \underbrace{\bigg\|\SigmaB_{> k}^{1/2} \frac{1}{n}\XB_{> k}^\top \psi(\bar\AB_{\le k})\XB_{\le k} \wB^*_{\le k} \bigg\|}_{\text{spillover}} \\
 &\qquad +   \underbrace{\bigg\|\SigmaB^{1/2} \frac{1}{n}\XB^\top \big( \psi(\bar\AB)- \psi(\bar\AB_{\le k})\big)\XB_{\le k} \wB^*_{\le k} \bigg\|}_{\text{LTO}}.
\end{align*}

\paragraph{Leave-tail-out error.}
For the LTO error, 
let 
\begin{align*}
\uB^*_{\le k}:= \sqrt{n} \XB_{\le k} (\XB^\top_{\le k} \XB_{\le k})^{-1}\wB^*_{\le k} \quad \implies\quad \|\uB^*_{\le k}\|\le \sqrt{2}\|\wB^*_{\le k}\|_{\SigmaB_{\le k}^{-1}},
\end{align*}
which is independent of $\bar\AB_{>k}$,
then we have 
\begin{align*}
 &\ \text{LTO} 
:=  \frac{1}{\sqrt{n}}\big\|\SigmaB^{1/2} \XB^\top\big( \psi(\bar\AB) -   \psi(\bar\AB_{\le k})\big)\bar\AB_{\le k} \uB^*_{\le k} \big\|  \\
&\le 9 \big\|(\bar\AB+\lambda\IB) \big( \psi(\bar\AB) -   \psi(\bar\AB_{\le k})\big)\bar\AB_{\le k} \uB^*_{\le k} \big\| && \explain{\Cref{lemma:UB:change-of-geometry}} \\
&= 9 \big\| \big( s(\bar \AB, \bar \AB_{\le k})\circ \bar \AB_{> k}\big) \uB^*_{\le k} \big\| &&  \explain{\Cref{lemma:schur:left-right-commute} and $s(x,y) := (x+\lambda)y \psi^{[1]}(x,y)$}  \\
&\le 9c_1 \sqrt{\tilde\alpha_k}\|s(I, H)\|_{\mathfrak{S}}\cdot  \|\uB_{\le k}^*\|  && \explain{\Cref{lemma:UB:schur-tail}} \\
&\le 9\sqrt{2} c_1 \sqrt{\tilde\alpha_k}\|s(I, H)\|_{\mathfrak{S}}\cdot \|\wB^*_{\le k}\|_{\SigmaB_{\le k}^{-1}}. && \explain{$\|\uB^*_{\le k}\|\le \sqrt{2}\|\wB^*_{\le k}\|_{\SigmaB_{\le k}^{-1}}$}
\end{align*}

\paragraph{Head bias error.}
For the bias error in the $k$-dimensional head space, we have 
\begin{align*}
     \text{head bias} 
    &= \big\|\SigmaB_{\le k}^{1/2} \big( \IB- \hat\SigmaB_{\le k}\psi (\hat\SigmaB_{\le k}) \big)\wB^*_{\le k} \big\|  \\
    &= \big\|\SigmaB_{\le k}^{1/2} \big( \IB-g(\hat\SigmaB_{\le k}) \big)\wB^*_{\le k} \big\| && \explain{$g(z) = z\psi(z)$} \\
    &\le \big\|\SigmaB_{\le k}^{1/2} \big( \IB-g(\SigmaB_{\le k}) \big)\wB^*_{\le k} \big\| + \big\|\SigmaB_{\le k}^{1/2} \big( g(\SigmaB_{\le k})- g(\hat\SigmaB_{\le k}) \big)\wB^*_{\le k} \big\|  \\
    &\le \big\|\SigmaB_{\le k}^{1/2} \big( \IB-g(\SigmaB_{\le k}) \big)\wB^*_{\le k} \big\| + \Big\|\SigmaB_{\le k}^{1/2} \big( g(\SigmaB_{\le k})- g(\hat\SigmaB_{\le k}) \big)\hat\SigmaB_{\le k}^{1/2}\Big\|\cdot \big\|\hat\SigmaB_{\le k}^{-1/2}\wB^*_{\le k} \big\|  \\
    &\le \big\|\SigmaB_{\le k}^{1/2} \big( \IB-g(\SigmaB_{\le k}) \big)\wB^*_{\le k} \big\| + \Big\|\SigmaB_{\le k}^{1/2} \big( g(\SigmaB_{\le k})- g(\hat\SigmaB_{\le k}) \big)\hat\SigmaB_{\le k}^{1/2}\Big\| \cdot \sqrt{2}\big\|\SigmaB_{\le k}^{-1/2}\wB^*_{\le k} \big\|, && \explain{$\SigmaB_{\le k}\preceq 2 \hat\SigmaB_{\le k}$}
\end{align*}
in which the concentration error can be bounded via the Schur norm:
\begin{align*}
   &\ \Big\|  \SigmaB_{\le k}^{1/2} \big( g(\SigmaB_{\le k})- g(\hat\SigmaB_{\le k}) \big) \hat\SigmaB_{\le k}^{1/2} \Big\| \\ 
    &= \Big\| \SigmaB_{\le k}^{1/2} \Big(  g^{[1]}( \SigmaB_{\le k} , \hat\SigmaB_{\le k} ) \circ (\SigmaB_{\le k} -\hat \SigmaB_{\le k} ) \Big) \hat \SigmaB_{\le k}^{1/2}  \Big\| && \explain{Schur multiplier} \\
    &= \Big\| \lambda h( \SigmaB_{\le k} , \hat\SigmaB_{\le k} ) \circ \Big(\SigmaB_{\le k}^{-1/2} ( \SigmaB_{\le k} -\hat \SigmaB_{\le k} ) \hat \SigmaB_{\le k}^{-1/2} \Big) \Big\| && \explain{\Cref{lemma:schur:left-right-commute} and definition of $h$}  \\
    &\le \lambda  \|h(H\cup\{0\}, H\cup\{0\})\|_{\mathfrak{S}} \cdot \Big\| \SigmaB_{\le k}^{-1/2} ( \SigmaB_{\le k} -\hat \SigmaB_{\le k} ) \hat \SigmaB_{\le k}^{-1/2} \Big\| &&\explain{Schur norm} \\
    &\le \lambda  \|h(H\cup\{0\}, H\cup\{0\})\|_{\mathfrak{S}} \cdot c_1 \sqrt{\frac{k+\log(1/\delta)}{n}}. && \explain{\Cref{lemma:basic:concentration}}
\end{align*}
Since $h(0,\cdot)=h(\cdot,0)=0$, it holds that
\begin{align*}
\|h(H\cup\{0\}, H\cup\{0\})\|_{\mathfrak{S}} = \|h(H,H)\|_{\mathfrak{S}} \le \|r(H,H)\|_{\mathfrak{S}},
\end{align*}
therefore the head bias error is bounded as 
\begin{align*}
    \text{head bias} 
    \le \big\| \big( \IB-g(\SigmaB_{\le k}) \big)\wB^*_{\le k} \big\|_{\SigmaB_{\le k}} +   \sqrt{2} c_1  \|h(H, H)\|_{\mathfrak{S}} \cdot\lambda \sqrt{\frac{k+\log(1/\delta)}{n} }  \|\wB^*_{\le k} \|_{\SigmaB_{\le k}^{-1}}.
\end{align*}

\paragraph{Spillover error.}
The error caused by head signal spillover to the tail subspace  can be controlled by simple concentration:
\begin{align*}
\text{spillover}
&:=    \bigg\|\SigmaB_{> k}^{1/2} \frac{1}{n}\XB_{> k}^\top \psi(\bar\AB_{\le k})\XB_{\le k} \wB^*_{\le k} \bigg\| \\
&=  \frac{1}{\sqrt{n} } \big\|\SigmaB_{> k}^{1/2} \XB_{> k}^\top \psi(\bar\AB_{\le k})\bar \AB_{\le k} \uB^*_{\le k} \big\| && \explain{$\uB^* := \sqrt{n}\XB_{\le k} (\XB_{\le k}^\top \XB_{\le k})^{-1}\wB^*_{\le k}$} \\
&=\frac{1}{\sqrt{n} } \big\|\SigmaB_{> k}^{1/2} \XB_{> k}^\top g(\bar\AB_{\le k}) \uB^*_{\le k} \big\| && \explain{$g(z) := z \psi(z)$} \\
&\le c_1 \sqrt{\tilde\alpha_k} \big\|g(\bar\AB_{\le k}) \uB^*_{\le k}\big\| && \explain{\Cref{lemma:basic:tail}} \\
&\le c_1 \sqrt{\tilde\alpha_k} \big\|g(\bar\AB_{\le k})\big\|\cdot \| \uB^*_{\le k}\|  \\
&\le  c_1 \sqrt{2\tilde\alpha_k} \left(\| r(I,H) \|_{\mathfrak{S}} + \| r(I,\{0\}) \|_{\mathfrak{S}} \right) \cdot\|\wB^*_{\le k}\|_{\SigmaB_{\le k}^{-1}}. && \explain{$\|\uB^*_{\le k}\|\le \sqrt{2}  \|\wB^*_{\le k}\|_{\SigmaB_{\le k}^{-1}}$}
\end{align*}
For the last inequality, we have also used that for any $x\in I$,
\begin{align*}
|g(y)| \le |g(x)| + |r(x,y)||g^*(y)-g^*(x)| \le |r(x,0)| + |r(x,y)|.
\end{align*}

\paragraph{Tail error.}
The tail error is bounded by
\begin{align*}
\bias_{2} &:= 
\bigg\| \SigmaB^{1/2} \bigg(\IB-\frac{1}{n}\XB^\top \psi(\bar\AB)\XB \bigg)\begin{bmatrix}
        0\\
        \wB^*_{> k}
    \end{bmatrix} \bigg\| \\
    &\le \bigg\| \SigmaB^{1/2} \frac{1}{n}\XB^\top \psi(\bar\AB)\XB \begin{bmatrix}
        0\\
        \wB^*_{> k}
    \end{bmatrix} \bigg\|  + \bigg\| \SigmaB^{1/2} \begin{bmatrix}
        0\\
        \wB^*_{> k}
    \end{bmatrix} \bigg\| \\
&= \bigg\| \SigmaB^{1/2} \frac{1}{n}\XB^\top \psi(\bar\AB)\XB_{>k}
        \wB^*_{> k}\bigg\|  + \big\|\SigmaB_{>k}^{1/2} \wB^*_{>k}\big\|  \\
 &\le 9 \bigg\| (\bar \AB +\lambda\IB) \psi(\bar\AB) \frac{1}{\sqrt{n}}\XB_{>k}
        \wB^*_{> k}\bigg\|  + \big\|\SigmaB_{>k}^{1/2} \wB^*_{>k}\big\| && \explain{\Cref{lemma:UB:change-of-geometry}}  \\
&= 9 \bigg\| r(\bar\AB, 0)\frac{1}{\sqrt{n}}\XB_{>k}
        \wB^*_{> k}\bigg\|  + \big\|\SigmaB_{>k}^{1/2} \wB^*_{>k}\big\| && \explain{$r(x,0)= g(x)/g^*(x) = (x+\lambda) \psi(x) $} \\
    &\le 9 \| r(I, \{0\}) \|_{\mathfrak{S}}\cdot c_1 \big\|\SigmaB_{>k}^{1/2} \wB^*_{>k}\big\| + \big\|\SigmaB_{>k}^{1/2} \wB^*_{>k}\big\| && \explain{\Cref{lemma:basic:concentration}} \\
    &= \Big(1+ 9c_1 \| r(I, \{0\}) \|_{\mathfrak{S}} \Big) \| \wB^*_{>k}\|_{\SigmaB_{>k}}.
\end{align*}

\paragraph{Variance error.}
Since $r(x,0) = g(x)/g^*(x)$, it holds that $|g| \le \|r(I,\{0\})\|_{\mathfrak{S}} \cdot |g^*|$, so the first part follows from the monotonicity of trace. On the other hand, by \Cref{lemma:UB:change-of-geometry}, we have 
\begin{align*}
    \bigg\| \frac{1}{n} (\bar \AB +\lambda\IB)^{-1}\XB \SigmaB \XB^\top (\bar \AB +\lambda\IB)^{-1}  \bigg\|\le 81,
\end{align*}
so that
\begin{align*}
    \variance(\hat\wB_g) &= \frac{\sigma^2}{n}  \tr\bigg(\frac{1}{n} \XB \SigmaB \XB^\top (\bar \AB +\lambda\IB)^{-2} r(\bar \AB, 0)^2\bigg) \\
    &\le \frac{81\sigma^2}{n} \tr\left(r(\bar \AB, 0)^2\right) \\
    &\le \frac{81\sigma^2}{n}\big\|r(\bar\AB, 0) \big\|^2 \rank(r(\bar \AB, 0)) \\
    &\le \frac{81\sigma^2}{n} \|r(I, \{0\})  \|^2_{\Sfrak} \rank(g(\bar\AB)).
\end{align*}
This completes the proof.
\end{proof}

\subsection{Proof of Corollary \ref{thm:holder}}

We require the following smoothness bound for the Schur norm.

\begin{proposition}\label{thm:besov}
Let $I\subseteq\Rbb_{\ge 0}$ be an interval and $\beta\in(0,1]$. There exists a universal constant $c>0$ such that for any function $f\in C^{1,\beta}(I)$,
\begin{align*}
\|f^{[1]}\|_{\mathfrak S(I,I)}
\le \frac{c}{\beta}\|f\|_{C^{1,\beta}(I)}.
\end{align*}
\end{proposition}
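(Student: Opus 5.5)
The plan is to build an explicit Schur factorization of $f^{[1]}$ through a dyadic (Littlewood--Paley type) decomposition in the difference variable $x-y$, and then apply \Cref{prop:kernel-schur} and \Cref{lemma:schur:factorization-bound}. By translation and scaling invariance of the Schur norm we may assume $I=[0,L]$; extending $f$ to a compactly supported $C^{1,\beta}$ function on $\Rbb$ costs only a constant factor in $\|f\|_{C^{1,\beta}}$. It then suffices to bound $\|f^{[1]}\|_{\mathfrak S(\Rbb,\Rbb)}$ restricted to $I\times I$.

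First split the kernel into near-diagonal and off-diagonal parts. Near the diagonal, $|x-y|\le 1$ in normalized units, write
\begin{align*}
f^{[1]}(x,y)=\int_0^1 f'(y+t(x-y))\,\dif t .
\end{align*}
Expand $f'$ by a Fourier or wavelet (Littlewood--Paley) decomposition $f'=\sum_{j\ge0}\Delta_j f'$. Each dyadic piece satisfies $\|\Delta_j f'\|_\infty\lesssim 2^{-j\beta}\|f\|_{C^{1,\beta}}$, since $C^\beta=B^\beta_{\infty,\infty}$.

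For each piece, $\int_0^1 e^{i\xi(y+t(x-y))}\dif t$ with $|\xi|\sim 2^j$ is the divided difference of an exponential. Its Schur norm on a bounded set is $O(1)$, because $(e^{i\xi x}-e^{i\xi y})/(i\xi(x-y))$ is a function of $x-y$ alone times unimodular factors in $x$ and $y$, and that function is the Fourier transform of a finite measure of total variation $1$. The cleaner route is the standard Peller bound: for $\varphi$ with Fourier support in $\{|\xi|\le 2^{j+1}\}$, one has $\|\varphi^{[1]}\|_{\mathfrak S}\le c\|\varphi'\|_\infty$. This is obtained by writing $\varphi^{[1]}(x,y)=\int_0^1\varphi'(y+t(x-y))\dif t$, approximating $\varphi'$ by a sampled sinc series, and factorizing. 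Summing over $j$ gives
\begin{align*}
\|f^{[1]}\|_{\mathfrak S}\le c\sum_{j\ge0}2^{-j\beta}\|f\|_{C^{1,\beta}}\le \frac{c}{\beta}\|f\|_{C^{1,\beta}},
\end{align*}
since $\sum_j 2^{-j\beta}\le 1+1/(\beta\log 2)$. This geometric sum is exactly where the $1/\beta$ comes from.

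The main obstacle is the per-band estimate $\|\varphi^{[1]}\|_{\mathfrak S}\le c\|\varphi'\|_\infty$ for band-limited $\varphi$ with a constant uniform in $j$. Here I would invoke the Aleksandrov--Peller results already cited (\citet{aleksandrov2016operator}, their Besov $B^1_{\infty,1}$ criterion for operator Lipschitz functions), which give $\|f^{[1]}\|_{\mathfrak S}\le c\|f\|_{B^1_{\infty,1}}$. Under this route the proof reduces to the embedding $\|f\|_{B^1_{\infty,1}}\le (c/\beta)\|f\|_{C^{1,\beta}}$. That embedding follows from the dyadic bounds $2^j\|\Delta_j f\|_\infty\lesssim 2^{-j\beta}\|f'\|_{C^\beta}$ for $j\ge1$, together with a low-frequency term bounded by $\|f\|_{C^1}$; summing again yields the $1/\beta$. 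A remaining technical point is the extension from the interval to $\Rbb$ without losing more than a constant, for which I would use a reflection/Whitney extension preserving $C^{1,\beta}$ norms.
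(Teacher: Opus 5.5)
Your final route is essentially the paper's. Like the paper, you cite the Aleksandrov--Peller criterion $\|f^{[1]}\|_{\mathfrak S}\le c\|f\|_{B^1_{\infty,1}}$ and reduce to the embedding $\|f\|_{B^1_{\infty,1}}\lesssim \|f\|_{C^{1,\beta}}/\beta$; you prove the embedding with Littlewood--Paley blocks, whereas the paper integrates the modulus bound $\omega_2(f,h)_\infty\le\|f\|_{C^{1,\beta}}h^{1+\beta}$ against $h^{-2}$, and both get the $1/\beta$ from the same sum. One small fix: for unbounded $I$ such as $\Rbb_{\ge 0}$ (the case used in \Cref{thm:holder}) no compactly supported extension exists, but a bounded one such as $F(x):=2f(0)-f(-x)$ for $x<0$ keeps the $C^{1,\beta}$ norm up to a constant and suffices, since the inhomogeneous $B^1_{\infty,1}$ norm only needs $\|F\|_\infty<\infty$.
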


\begin{proof}[Proof of \Cref{thm:besov}]
In fact, the following more general statement is true: for any function $f\in B_{\infty,1}^1(I)\cap C^1(I)$, where $B_{\infty,1}^1(I)$ denotes the Besov space restricted to $I$ \citep{triebel1983theory},
\begin{align*}
\|f^{[1]}\|_{\mathfrak S(I,I)}
\le c\|f\|_{B_{\infty,1}^1(I)}.
\end{align*}
Indeed, this is a consequence of Theorems~1.6.1,~3.1.10, and~3.3.6 of \citet{aleksandrov2016operator}. The claim then follows from the continuous embedding $C^{1,\beta}(\Rbb)\subset B_{\infty,1}^1(\Rbb)$ and the estimates
\begin{align*}
\omega_2(f,h)_\infty &= \sup_{x, 0\le u\le h} \left| \int_0^u f'(x+u+v)-f'(x+v)\dif v\right| \le \|f\|_{C^{1,\beta}} h^{1+\beta}, \\
\|f\|_{B_{\infty,1}^1} &\lesssim \|f\|_\infty + \int_0^1 \frac{\omega_2(f,h)_\infty}{h^2}\dif h \le \|f\|_\infty + \frac{\|f\|_{C^{1,\beta}}}{\beta},
\end{align*}
see \citet[Theorem~2.5.12]{triebel1983theory}.
\end{proof}

\begin{proof}[Proof of \Cref{thm:holder}]
We apply \Cref{thm:master-ub} with $k=k^*$ and $I=H=\Rbb_{\ge 0}$. Denote 
\begin{align*}
q_j(z):=z^j(1-g(\lam z)), \quad j=0,1,2.
\end{align*}
Defining diagonal values by continuous extension, the relative kernel~$r$ can be shown after some algebra to satisfy
\begin{align*}
r(\lam x,\lam y)
&=-q_0^{[1]}(x,y)-2q_1^{[1]}(x,y)-q_2^{[1]}(x,y) +q_0(x)+q_0(y)+q_1(x)+q_1(y).
\end{align*}
Then by \Cref{thm:besov} and the definitions of $C_\beta,C_\psi$,
\begin{align*}
\|r(I,I)\|_{\mathfrak S}
&\lesssim \max_{j\in\{0,1,2\}}
\left(\|q_j^{[1]}\|_{\mathfrak S}+\|q_j\|_\infty\right)
\lesssim C_\beta,\quad
\|r(I,\{0\})\|_{\mathfrak S}
\le C_\psi.
\end{align*}
Also, the condition \eqref{eq:spec-cond} holds with probability at least $1-\exp(-n/c_0)$ due to
\Cref{lemma:basic:concentration}. Finally, the $\lam_{k+1}^2$ term can be absorbed by assuming $\log(1/\delta) \le n/c_0$ without loss of generality and noting that
\begin{align*}
\alpha_k+\frac{\lam_{k+1}^2 \log(1/\delta)}{n}
&\le \alpha_k+
\frac{\log(1/\delta)}{c_2^2 n}\left(\lam+\frac{\sum_{i>k}\lam_i}{n}\right)^2
\le c\left( \alpha_k+\frac{\lam^2 \log(1/\delta)}{n} \right).
\end{align*}
This completes the proof.
\end{proof}

\section{General Risk Lower Bounds}\label{sec:master-lb}

In this section, we prove the following bias lower bound for monotone shrinkage filters.

\begin{theorem}\label{thm:master-lb}
Under \Cref{assum:lbb} with Gaussian design $\xB\sim\Ncal(0,\SigmaB)$, there exist constants $c_0,\ldots,c_3>1$ such that the following holds. Let $g:\R_{\ge 0}\to[0,1]$ be a monotone shrinkage filter, let sample size $n\ge c_0$ and let $k$ be any index satisfying $k\le n/c_3$. For $i\le k$ and $j\ne i$, denote
\begin{align*}
\Lambda_i:=\lam_i+\lam_{k+1}+\frac{\sum_{j>k}\lam_j}{n},
\quad
\Gamma_{ij}:=(g(0.9\lam_i)-g(1.1\lam_j))_+.
\end{align*}
Then
\begin{align*}
c_1 \Ebb\bias(\hat\wB_g)
&\ge \sum_{i\le k}\frac{\lam_i}{\Lambda_i^2} \left(\frac1n\sum_{j\ne i}\lam_j^2\Gamma_{ij}^2\right) \wB_i^{*2} + \big\|(\IB-g(1.1\SigmaB))\wB^*\big\|_{\SigmaB}^2.
\end{align*}
If $g$ is concave, the second term may be replaced by $\big\|(\IB-g(\SigmaB))\wB^*\big\|_{\SigmaB}^2$. 

Moreover, we have the conditional lower bounds:
\begin{align*}
c_2\lam_{k+1} \le \frac{\sum_{i>k}\lam_i}{n} &\quad\implies\quad {c_1}\Ebb\bias(\hat\wB_g)
\ge \|\wB^*\|_{\SigmaB_{k:\infty}}^2,\\
c_2\frac{\sum_{i>k}\lam_i^2}{n}
\le \left(\frac{\sum_{i>k}\lam_i}{n}\right)^2 &\quad\implies\quad {c_1}\Ebb\bias(\hat\wB_g)
\ge \sum_{i\le k}\lam_i
\min\left\{1,\frac{\sum_{j>k}\lam_j}{n\lam_i}\right\}^2\wB_i^{*2}.
\end{align*}
\end{theorem}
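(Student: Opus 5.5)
We plan to work from the coordinatewise bias representation already used for the dominance proofs. By sign symmetry (Gaussian design), $\Ebb\bias(\hat\wB_g)=\sum_i\wB_i^{*2}\Ebb\BB_{ii}$, with
\begin{align*}
\BB_{ii}=\lam_i(1-\GB_{ii})^2+\sum_{j\ne i}\lam_j\GB_{ij}^2 .
\end{align*}
It therefore suffices to lower bound $\Ebb\BB_{ii}$ for each $i$ separately. The two pieces of the main bound come from the two summands of $\BB_{ii}$. The conditional bounds are then short consequences of these pieces, together with \Cref{lem:general-filter-diffuse}.

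\emph{Residual term.} By Jensen, $\Ebb\BB_{ii}\ge\lam_i(1-p_i)^2$, so we need $1-p_i\gtrsim 1-g(1.1\lam_i)$. We would imitate the expectation argument of \Cref{lemma:gd:lb:effbias:exp}. Write $\GB_{ii}=\frac{\lam_i}{n}\zB_i^\top\psi(\AB/n)\zB_i$. Using the layer-cake representation and the interlacing identity from the proof of \Cref{lem:loo}, the eigenvectors of $\AB/n$ that carry $\zB_i$ and lie above $1.1\lam_i$ should contribute at most a constant fraction of $\|\zB_i\|^2\lam_i/n\approx\lam_i$. On events where $\|\zB_i\|^2\le 1.05n$, we then expect $1-\GB_{ii}\ge c(1-g(1.1\lam_i))$ with constant probability.
\begin{itemize}
\item For concave $g$, Jensen applied to the concave map $z\mapsto g(z)$ along the quadratic form should replace $1.1\lam_i$ by $\Ebb\,\eB_i^\top\hat\SigmaB\eB_i=\lam_i$.
\item Alternatively, convexity of $F$ from \Cref{lem:general-filter-coordinates} could compare $p_i$ with $g(\lam_i)$.
\end{itemize}

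\emph{Off-diagonal term.} For $i\le k$ and $j\ne i$, \Cref{lem:general-filter-coordinates} gives
\begin{align*}
\Ebb\GB_{ij}^2\ge\frac{\lam_i\lam_j(p_i-p_j)^2}{n(\lam_i-\lam_j)^2}.
\end{align*}
We would establish the quantile comparisons $p_i\gtrsim g(0.9\lam_i)$ and $p_j\lesssim g(1.1\lam_j)$ up to additive slack. These would follow from concentration of $\eB_i^\top\hat\SigmaB\eB_i$ combined with monotonicity of $g$, possibly via the lower bound on $h_{g,j}$ in \Cref{lem:loo}. This yields
\begin{align*}
\lam_j\Ebb\GB_{ij}^2\gtrsim \frac{\lam_i\lam_j^2\Gamma_{ij}^2}{n(\lam_i-\lam_j)^2}.
\end{align*}
The denominator is mismatched with the target: we get $(\lam_i-\lam_j)^2$ but need $\Lambda_i^2$. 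When $\lam_j\le\lam_i/2$ we have $(\lam_i-\lam_j)^2\le\lam_i^2\le\Lambda_i^2$, so the bound suffices. When $\Gamma_{ij}>0$ the monotonicity of $g$ forces $0.9\lam_i>1.1\lam_j$, so $\lam_j<0.82\lam_i$ and the ratio is bounded by a constant. The extra $\lam_{k+1}+n^{-1}\sum_{j>k}\lam_j$ in $\Lambda_i$ only weakens the claim, so it is harmless.

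\emph{Main obstacle: the quantile comparisons.} The step we expect to be hardest is relating $p_i$ to $g(0.9\lam_i)$, because heavy tails inflate the empirical eigenvalues. The plan is a leave-one-out argument: condition on $\AB_{-i}$ and use the secular equation to show that $\zB_i$ puts mass $\gtrsim 1$ on eigenvalues at least $0.9\lam_i$. This requires $\lam_i\|\zB_i\|^2/n$ to dominate the tail spread. In the regime $\lam_i\lesssim \Lambda_i-\lam_i$ the claimed term is already small, and we would handle it separately using \Cref{lem:general-filter-diffuse}.

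\emph{Conditional bounds.}
\begin{itemize}
\item If $c_2\lam_{k+1}\le n^{-1}\sum_{i>k}\lam_i$, then for $i>k$ the leave-one-out spectrum makes $\lam_i\zB_i^\top\AB^{-1}\zB_i\le1/2$, as in \Cref{lemma:gd:lb:effbias:tail}. Since $\GB\preceq\mathbf 1\{\hat\SigmaB>0\}$, this gives $\Ebb\BB_{ii}\gtrsim\lam_i$.
\item The second conditional bound is exactly \Cref{lem:general-filter-diffuse}.
\end{itemize}
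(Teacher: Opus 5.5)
\emph{Verdict: there is a genuine gap, in the off-diagonal term.} Your residual term is sound: Markov's inequality bounds the spectral weight of $\eB_i$ above $1.1\lam_i$ on $\{\|\zB_i\|^2\le 1.05n\}$, which is a high-probability version of \Cref{lem:effbias}. The concave case and both conditional bounds are also fine.

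For the off-diagonal term, \Cref{lem:general-filter-coordinates} forces you to prove $(p_i-p_j)_+\gtrsim\Gamma_{ij}$, and one-sided quantile comparisons cannot deliver this.
\begin{itemize}
\item The comparison $p_j\lesssim g(1.1\lam_j)$ is false as stated. Take $g=\mathbf 1\{z>\rho\}$ with $\rho\ge 1.1\lam_j$ lying below the bottom of the nonzero empirical spectrum, as happens under a heavy tail. Then $g(1.1\lam_j)=0$, but $p_j=\Ebb\,\lam_j\zB_j^\top\AB^{-1}\zB_j>0$.
\item With multiplicative constants or additive slack you only get $p_i-p_j\ge c\,g(0.9\lam_i)-C\,g(1.1\lam_j)-(\text{slack})$. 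This is vacuous whenever $g(0.9\lam_i)$ and $g(1.1\lam_j)$ are both of order one.
\end{itemize}
Concretely, write $\mu_\ell(B):=\Ebb\,\eB_\ell^\top\mathbf 1\{\hat\SigmaB\in B\}\eB_\ell$. Every step filter $\mathbf 1\{z>\rho\}$ with $\rho\in[1.1\lam_j,0.9\lam_i)$ has $\Gamma_{ij}=1$. So your route needs $\mu_i((\rho,\infty))-\mu_j((\rho,\infty))\ge c$ uniformly over such $\rho$.

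The natural general bound on $\mu_j((\rho,\infty))$ is Markov's, $\lam_j/\rho\le 1/1.1$. A one-sided argument would therefore need $\eB_i$ to carry more than a $1/1.1$ fraction of its expected spectral mass at or above $0.9\lam_i$. Your secular-equation step only gives mass $\gtrsim 1$, which does not reach this.

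The case split does not remove the difficulty. If \eqref{eq:s1s2-comp} fails, then $\lam_{k+1}>\frac{1}{c_5}\cdot\frac1n\sum_{j>k}\lam_j$, so one can have $\lam_i\asymp\lam_{k+1}\asymp\frac1n\sum_{j>k}\lam_j$. In that regime:
\begin{itemize}
\item $\Lambda_i\asymp\lam_i$ and the target coefficient is not small;
\item \Cref{lem:general-filter-diffuse} is unavailable;
\item the tail spreads the spectral measure of $\eB_i$ by an order-one factor, with part of it in $\ker\hat\SigmaB$, so the near-full-mass requirement fails.
\end{itemize}
You would need a genuinely two-sided comparison of $\mu_i$ and $\mu_j$, and the proposal does not supply one.

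\emph{How the paper avoids this.} It never compares $p_i$ with $p_j$. It leaves out $\zB_j$, the smaller direction, rather than $\zB_i$, and applies \Cref{lem:loo} with $\lam=1.1\lam_j$. On $\{\|\zB_j\|^2\le1.05n\}$, every eigenvalue $\gamma_\ell>0.9\lam_i$ of $\AB_{-j}/n$ then satisfies $h_{g,\ell}(\zB_j)\ge c\,\Gamma_{ij}/\gamma_\ell$, because monotonicity gives $g(\gamma_\ell)-g(1.1\lam_j)\ge\Gamma_{ij}$. Sign-averaging in the eigenbasis of $\AB_{-j}$ yields \Cref{lem:general-filter-loo}:
\begin{align*}
\Ebb_{\zB_j}(\zB_j^\top\PsiB\zB_i)^2\ge c\,\Gamma_{ij}^2\|\AB_{-j}^{-1}\PB\zB_i\|^2,
\quad
\PB=\mathbf 1\{\AB_{-j}>0.9n\lam_i\IB\}.
\end{align*}
\Cref{lem:general-filter-window} then shows that, after projecting out $\zB_\ell$ for $\ell<i$, $\zB_i$ keeps mass $\gtrsim n$ on eigendirections of $\AB_{-j}$ with eigenvalues in $(0.9n\lam_i,c_6n\Lambda_i]$. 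Hence $\Ebb\|\AB_{-j}^{-1}\PB\zB_i\|^2\gtrsim 1/(n\Lambda_i^2)$. This is exactly where $\Lambda_i$ enters. The argument is uniform over all $i\le k$ and needs neither \Cref{lem:general-filter-coordinates} nor a case split on \eqref{eq:s1s2-comp}.
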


\subsection{Proof of Theorem \ref{thm:master-lb}}

Fix $i\le k$ and $j\ne i$ and define the leave-one-out (LOO) projection
\begin{align*}
\PB:=\mathbf 1\{\AB_{-j}>0.9n\lam_i\IB\}.
\end{align*}
Also recall the notation $\PsiB := n^{-1}\psi(\AB/n)$. We first state some intermediate lemmas.

\begin{lemma}\label{lem:general-filter-loo}
For every $i\le k$ with $\lam_i>0$ and $j\ne i$, conditional on $\{\zB_\ell:\ell\ne j\}$, it holds for every $\uB\in\Rbb^n$ that
\begin{align*}
\Ebb_{\zB_j}\big(\zB_j^\top\PsiB\uB\big)^2
\ge c\Gamma_{ij}^2 \big\|\AB_{-j}^{-1}\PB\uB \big\|^2.
\end{align*}
\end{lemma}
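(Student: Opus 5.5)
We follow the template of \Cref{lemma:gd:lb:leave-one-out}, replacing the GD-specific telescoping with the spectral leave-one-out structure of \Cref{lem:loo}. Condition on $\{\zB_\ell:\ell\ne j\}$ and fix $\uB$. By the same Cauchy--Schwarz trick as there, for any event $\mathcal F$ measurable in $\zB_j$,
\begin{align*}
\Ebb_{\zB_j}(\zB_j^\top\PsiB\uB)^2\ge \big\|\Ebb_{\zB_j}\mathbf 1_{\mathcal F}\zB_j\zB_j^\top\PsiB\uB\big\|^2 .
\end{align*}
We take $\mathcal F:=\{\lam_j\|\zB_j\|^2\le 0.1\, n\lam_i\}$ when $\lam_j$ is small relative to $\lam_i$. (If $\lam_j$ is large, $\Gamma_{ij}=0$ unless $\lam_j<\lam_i\cdot 0.9/1.1$, so this is the only relevant regime.) It then suffices to show that the vector $\mB:=\Ebb\mathbf 1_{\mathcal F}\zB_j\zB_j^\top\PsiB\uB$ satisfies $\|\mB\|\ge c\Gamma_{ij}\|\AB_{-j}^{-1}\PB\uB\|$.

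\textbf{Step 1: compute $\mB$ in the leave-one-out eigenbasis.} Work in the eigenbasis $(\gamma_p,\wB_p)$ of $\AB_{-j}/n$. By \Cref{lem:loo} with the roles of $i$ and $j$ swapped, $\zB_j^\top\PsiB\wB_p=n^{-1}h_p(\zB_j)\,\zB_j^\top\wB_p$ with $h_p\ge0$ invariant under sign flips of the coordinates $\wB_p^\top\zB_j$. Since $\zB_j$ is Gaussian (rotation invariant), its coordinates in this basis are i.i.d.\ symmetric, so averaging over signs as in \Cref{lem:glue} makes $\Ebb\mathbf 1_{\mathcal F}\zB_j\zB_j^\top\PsiB$ diagonal in $(\wB_p)$. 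Its $p$-th entry is
\begin{align*}
n^{-1}\Ebb\big[\mathbf 1_{\mathcal F}(\wB_p^\top\zB_j)^2 h_p(\zB_j)\big]\ \ge 0 .
\end{align*}
Hence $\|\mB\|^2=\sum_p d_p^2(\wB_p^\top\uB)^2$, where $d_p$ denotes this entry.

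\textbf{Step 2: lower bound $d_p$ on the range of $\PB$.} For $p$ with $\gamma_p>0.9\lam_i$, apply the bound of \Cref{lem:loo} with threshold $\lam:=1.1\lam_j$:
\begin{align*}
h_p\ge\frac{(g(\gamma_p)-g(1.1\lam_j))_+}{\gamma_p}\Big(1-\frac{\lam_j\|\zB_j\|^2}{1.1\lam_j n}\Big).
\end{align*}
The factor $\lam_j$ cancels, so the last bracket is a constant only on the event $\|\zB_j\|^2\le c' n$, which holds with probability $1-e^{-n/c_0}$. We therefore refine $\mathcal F$ to this event. By monotonicity, $g(\gamma_p)\ge g(0.9\lam_i)$, so $h_p\ge c\Gamma_{ij}/\gamma_p$ on $\mathcal F$. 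Then
\begin{align*}
d_p\ge \frac{c\Gamma_{ij}}{n\gamma_p}\Ebb\big[\mathbf 1_{\mathcal F}(\wB_p^\top\zB_j)^2\big]\ge \frac{c\Gamma_{ij}}{n\gamma_p},
\end{align*}
since $\Ebb(\wB_p^\top\zB_j)^2=1$ and the contribution of $\mathcal F^c$ is at most $\sqrt{\Pr(\mathcal F^c)\cdot 3}$, which is small for $n\ge c_0$. Because $n\gamma_p$ are exactly the eigenvalues of $\AB_{-j}$, summing over $p$ in $\ran\PB$ gives $\|\mB\|^2\ge c^2\Gamma_{ij}^2\|\AB_{-j}^{-1}\PB\uB\|^2$. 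Terms outside $\ran\PB$ have $d_p\ge0$ and can simply be dropped, so they only help.

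\textbf{Main obstacle.} The delicate point is the factor $(1-\lam_j\|\zB_j\|^2/(\lam n))$ in \Cref{lem:loo}. The threshold $\lam=1.1\lam_j$ must be chosen to make this factor positive with high probability uniformly over $j$, yet still leave the gap $g(0.9\lam_i)-g(1.1\lam_j)$. We also need to be careful that truncating to $\mathcal F$ preserves both the sign-symmetry argument, since $\mathcal F$ depends only on $\|\zB_j\|$, and the nonnegativity of $h_p$. The case $\gamma_p=0$ is excluded by the definition of $\PB$.
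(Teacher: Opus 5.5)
Your argument is in substance the paper's: \Cref{lem:loo} applied at the left-out index $j$ with threshold $\lam=1.1\lam_j$, restriction to $\{\|\zB_j\|^2\le 1.05n\}$, and sign-averaging in the eigenbasis of $\AB_{-j}$. The only real difference is in how the second moment is handled. The paper first reduces to $\uB=\PB\uB$ via the reflection $2\PB-\IB$ and then expands $\Ebb_{\zB_j}(\zB_j^\top\PsiB\uB)^2$ directly as $n^{-2}\sum_\ell u_\ell^2\,\Ebb[(\zB_j^\top\wB_\ell)^2h_{g,\ell}(\zB_j)^2]$. Your first-moment Cauchy--Schwarz step instead yields $\sum_p d_p^2u_p^2$, which is termwise smaller but still sufficient. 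Two steps, however, need repair.

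First, the condition $\lam_j\|\zB_j\|^2\le 0.1n\lam_i$ in $\mathcal F$ breaks the argument. Your justification only shows $\Gamma_{ij}=0$ when $\lam_j\ge\frac{9}{11}\lam_i$. For $0.1\lam_i<\lam_j<\frac{9}{11}\lam_i$, the coefficient $\Gamma_{ij}$ can be of order one. Yet your condition then forces $\|\zB_j\|^2\le(0.1\lam_i/\lam_j)\,n$ with $0.1\lam_i/\lam_j<1$, an event whose probability is not bounded below (it is exponentially small once $\lam_j\ge 0.2\lam_i$). So the claim $\Ebb[\mathbf 1_{\mathcal F}(\wB_p^\top\zB_j)^2]\ge c$ fails in a regime that matters. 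The condition is never actually used: it is a leftover from \Cref{lemma:gd:lb:leave-one-out}, where it controlled $\gamma\|\AB\|$. Simply take $\mathcal F=\{\|\zB_j\|^2\le 1.05n\}$.

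Second, the case $\lam_j=0$ is not covered. \Cref{lem:loo} requires the left-out eigenvalue and the threshold to be positive, so $\lam=1.1\lam_j=0$ is inadmissible and ``the factor $\lam_j$ cancels'' has no meaning. This case is easy but must be handled separately, as the paper does. Then $\AB=\AB_{-j}$, so $\PsiB$ is independent of $\zB_j$ and $\Ebb_{\zB_j}(\zB_j^\top\PsiB\uB)^2=\|\PsiB\uB\|^2\ge g(0.9\lam_i)^2\|\AB_{-j}^{-1}\PB\uB\|^2$. Moreover $\Gamma_{ij}=g(0.9\lam_i)$ because $g(0)=0$, which gives the claim.
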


\begin{proof}[Proof of \Cref{lem:general-filter-loo}]
If $\lam_j=0$, then
\begin{align*}
\Ebb_{\zB_j}\big(\zB_j^\top\PsiB\uB\big)^2
= \frac{1}{n^2} \|\psi(\AB/n)\uB\|^2
\ge g(0.9\lam_i)^2 \big\|\AB_{-j}^{-1}\PB\uB \big\|^2.
\end{align*}
Assume $\lam_j>0$ and let $\UB:=2\PB-\IB$. Note that
\begin{align*}
\UB\AB_{-j}\UB=\AB_{-j},
\quad
\UB\PB=\PB,
\quad
\UB(\IB-\PB)=-(\IB-\PB).
\end{align*}
Then the cross-correlation term
\begin{align*}
\big(\zB_j^\top\PsiB\PB\uB\big)
\big(\zB_j^\top\PsiB(\IB-\PB)\uB\big)
\end{align*}
is distributionally symmetric under the transformation $\zB_j\mapsto\UB\zB_j$, thus has mean zero. Hence
\begin{align*}
\Ebb_{\zB_j}\big(\zB_j^\top\PsiB\uB\big)^2
\ge
\Ebb_{\zB_j}\big(\zB_j^\top\PsiB\PB\uB\big)^2,
\end{align*}
and it suffices to prove the claim when $\uB=\PB\uB$.

Choose an orthonormal eigenbasis such that
\begin{align*}
\frac{\AB_{-j}}n=\sum_{\ell=1}^n\gamma_\ell\wB_\ell\wB_\ell^\top,
\quad
\uB=\sum_{\ell:\gamma_\ell>0.9\lam_i}u_\ell\wB_\ell.
\end{align*}
Applying \Cref{lem:loo} with $\lam=1.1\lam_j$, on the event $\|\zB_j\|^2\le 1.05n$, for every $\gamma_\ell>0.9\lam_i$ we have
\begin{align*}
h_{g,\ell}(\zB_j)
&\ge
\frac{\Gamma_{ij}}{\gamma_\ell}
\left(1-\frac{\|\zB_j\|^2}{1.1n}\right) \ge
\frac{c\Gamma_{ij}}{\gamma_\ell}.
\end{align*}
Since $h_{g,\ell}$ is invariant under sign change of the coordinates of $\zB_j$ in the basis $(\wB_\ell)$, averaging over signs gives
\begin{align*}
\Ebb_{\zB_j}\big(\zB_j^\top\PsiB\uB\big)^2
&\ge
\frac{1}{n^2}\sum_{\ell:\gamma_\ell>0.9\lam_i}
u_\ell^2
\Ebb_{\zB_j}\left[
(\zB_j^\top\wB_\ell)^2h_{g,\ell}(\zB_j)^2
\mathbf 1\{\|\zB_j\|^2\le 1.05n\}
\right]\\
&\ge
\frac{c}{n^2}\Gamma_{ij}^2 \sum_{\ell:\gamma_\ell>0.9\lam_i}\frac{u_\ell^2}{\gamma_\ell^2}\\
&=
c\Gamma_{ij}^2 \big\|\AB_{-j}^{-1}\PB\uB\big\|^2,
\end{align*}
where we have used that
\begin{align*}
\Ebb_{\zB_j}\left[
(\zB_j^\top\wB_\ell)^2
\mathbf 1\{\|\zB_j\|^2\le 1.05n\}
\right]
\end{align*}
is bounded below by a constant for sufficiently large~$n$. This proves the claim.
\end{proof}

\begin{lemma}\label{lem:general-filter-window}
For every $i\le k$ with $\lam_i>0$ and $j\ne i$,
\begin{align*}
\Ebb\big\|\AB_{-j}^{-1}\PB\zB_i\big\|^2
\ge\frac{c}{n\Lambda_i^2}.
\end{align*}
\end{lemma}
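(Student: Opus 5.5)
Since $\zB_i$ is independent of $\AB_{-j}$ when $j\ne i$, I will condition on $\{\zB_\ell:\ell\ne i,j\}$ and on $\zB_j$. Write $\AB_{-j}=\lam_i\zB_i\zB_i^\top+\AB_{-ij}$ with $\AB_{-ij}:=\sum_{\ell\ne i,j}\lam_\ell\zB_\ell\zB_\ell^\top$. Because $\zB_i$ enters $\AB_{-j}$ as a rank-one term, the direct route of averaging over $\zB_i$ does not decouple. Instead I will extract the component of $\zB_i$ that $\AB_{-j}$ is forced to amplify.

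\textbf{Step 1 (isolate the $\zB_i$ direction).} On the event $\|\zB_i\|^2\ge 0.9n$, the Rayleigh quotient gives
\begin{align*}
\frac{\zB_i^\top\AB_{-j}\zB_i}{\|\zB_i\|^2}\ge\lam_i\|\zB_i\|^2 .
\end{align*}
So the direction $\zB_i$ has $\AB_{-j}$-energy at least $0.9n\lam_i$, but I need a spectral statement for $\PB$. Write $\zB_i=\PB\zB_i+(\IB-\PB)\zB_i$. Let $\BB:=\AB_{-j}^{-1}\PB$, so that $\|\BB\zB_i\|^2=\sum_{\gamma_\ell>0.9\lam_i}(\wB_\ell^\top\zB_i)^2/(n\gamma_\ell)^2$.

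\textbf{Step 2 (lower bound via Woodbury / Cauchy--Schwarz).} By Cauchy--Schwarz,
\begin{align*}
\|\BB\zB_i\|^2\ge \frac{(\zB_i^\top\BB\zB_i)^2}{\|\zB_i\|^2}\ge \frac{(\zB_i^\top\BB\zB_i)^2}{2n}
\end{align*}
on $\|\zB_i\|^2\le 2n$, so it suffices to show $\zB_i^\top\AB_{-j}^{-1}\PB\zB_i\ge c/\Lambda_i$ with constant probability. Split
\begin{align*}
\zB_i^\top\AB_{-j}^{-1}\zB_i=\zB_i^\top\AB_{-j}^{-1}\PB\zB_i+\zB_i^\top\AB_{-j}^{-1}(\IB-\PB)\zB_i .
\end{align*}
By Sherman--Morrison,
\begin{align*}
\lam_i\zB_i^\top\AB_{-j}^{-1}\zB_i=\frac{\lam_i\zB_i^\top\AB_{-ij}^{-1}\zB_i}{1+\lam_i\zB_i^\top\AB_{-ij}^{-1}\zB_i},
\end{align*}
and more usefully I will work with the regularized version $(\AB_{-j}+n\tau\IB)^{-1}$, where $\tau:=\lam_{k+1}+n^{-1}\sum_{\ell>k}\lam_\ell$. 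The tail bound \Cref{lemma:basic:concentration} gives $\AB_{-ij}\preceq\AB_{\le k,-ij}+cn\tau\IB$, with the head part having rank at most $k\le n/c_3$.

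\textbf{Step 3 (main obstacle: controlling the $(\IB-\PB)$ part).} The contribution on eigenvalues $\le0.9n\lam_i$ could a priori dominate. My plan is to project $\zB_i$ off the span of the head columns $\{\zB_\ell:\ell\le k,\ell\ne i,j\}$; call the result $\tilde\zB_i$. It satisfies $\|\tilde\zB_i\|^2\ge n/2$ with high probability since $k\le n/c_3$. On $\ran(\IB-\PB)$ we have the bound
\begin{align*}
\tilde\zB_i^\top\AB_{-j}\tilde\zB_i\le \lam_i\|\tilde\zB_i\|^4+cn\tau\|\tilde\zB_i\|^2 .
\end{align*}
Combining this with the operator inequality $\AB_{-j}\succeq\lam_i\zB_i\zB_i^\top$, the mass of $\zB_i$ on eigenvalues below $0.9n\lam_i$ is at most a constant fraction $\le 0.9+o(1)$. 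I expect this to be the delicate point; the constants $0.9$ versus $1$ leave exactly the needed gap, after using $\|\zB_i\|^2\approx n$ up to $1\pm0.05$. Given that a fixed fraction of $\|\zB_i\|^2$ lies in $\ran\PB$ on eigenvalues $\gamma_\ell\le(\lam_i+c\tau)\cdot O(1)\lesssim\Lambda_i$ (the top eigenvalue direction coming from $\zB_i$ itself is $\approx n\lam_i+O(n\tau)$), I get
\begin{align*}
\|\AB_{-j}^{-1}\PB\zB_i\|^2\ge \frac{c\,n}{(n\Lambda_i)^2}=\frac{c}{n\Lambda_i^2}.
\end{align*}
This holds with constant probability, and taking expectations finishes the proof.
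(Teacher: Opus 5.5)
\textbf{There is a genuine gap in Step 3, at the near-orthogonality of $\zB_i$ to the large eigenspaces of $\AB_{-j}$.}

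The proof needs two mass estimates for $\zB_i$ in the eigenbasis of $\AB_{-j}$. (a) $\|(\IB-\PB)\zB_i\|^2\le 0.9n$. (b) $\|\QB\zB_i\|^2\le\epsilon n$ with $\epsilon$ small, where $\QB:=\mathbf 1\{\AB_{-j}>Cn\Lambda_i\IB\}$. Together with $\|\zB_i\|^2\ge 0.99n$, these leave mass $\ge cn$ on eigenvalues in $(0.9n\lam_i,Cn\Lambda_i]$. Then $\|\AB_{-j}^{-1}(\PB-\QB)\zB_i\|^2\ge cn/(Cn\Lambda_i)^2$ finishes directly. The Cauchy--Schwarz reduction and the Sherman--Morrison/regularized resolvent in Step 2 are never needed. (Your opening claim that $\zB_i$ is independent of $\AB_{-j}$ is false, as you note a line later.)

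For (a) you name the right ingredient, $\AB_{-j}\succeq\lam_i\zB_i\zB_i^\top$, but attach it to the wrong object. Apply it to $\vB:=(\IB-\PB)\zB_i$ to get $\lam_i\|\vB\|^4\le\vB^\top\AB_{-j}\vB\le 0.9n\lam_i\|\vB\|^2$. Your displayed bound on $\tilde\zB_i^\top\AB_{-j}\tilde\zB_i$ holds globally, not ``on $\ran(\IB-\PB)$''. It says nothing about (a); its real use is for (b).

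Estimate (b) is the actual obstacle, and you only assert it (``given that a fixed fraction of $\|\zB_i\|^2$ lies in $\ran\PB$ on eigenvalues $\lesssim\Lambda_i$''). Your justification, the size of the eigenvalue coming from $\zB_i$ itself, is beside the point. $\AB_{-j}$ has eigenvalues up to order $n\lam_1\gg n\Lambda_i$ generated by the columns $\zB_\ell$, $\ell<i$. What must be shown is that $\zB_i$ is nearly orthogonal to those eigenspaces.

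The one quantitative fact you record about your projection, $\|\tilde\zB_i\|^2\ge n/2$, is too weak for this. It allows $\|\zB_i-\tilde\zB_i\|^2\approx n/2$, possibly all on the top eigenspace. The two upper bounds $0.9n$ and $\approx n/2$ then sum to more than $\|\zB_i\|^2$, so they force no mass into the middle window.

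The fix uses pieces you already have:
\begin{itemize}
\item The projection $\PiB$ onto $\spn\{\zB_\ell:\ell\le k,\ \ell\ne i,j\}$ is independent of $\zB_i$ and has rank $\le k\le n/c_3$. Hence $\|\PiB\zB_i\|^2\le 0.01n$ with high probability, which is far stronger than $\|\tilde\zB_i\|^2\ge n/2$.
\item Your identity-based bound gives $\tilde\zB_i^\top\AB_{-j}\tilde\zB_i\le c'n\Lambda_i\|\tilde\zB_i\|^2$. Therefore $\|\QB\tilde\zB_i\|^2\le(c'/C)\|\tilde\zB_i\|^2$.
\item Combining, $\|\QB\zB_i\|\le\|\PiB\zB_i\|+\|\QB\tilde\zB_i\|\le 0.2\sqrt n$ for $C$ large.
\end{itemize}
This is essentially the paper's proof. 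The paper projects off only $\zB_\ell$ with $\ell<i$. It then bounds $\|\QB(\IB-\PiB_i)\|$ in operator norm by the norm of the remaining head-plus-tail part of $\AB_{-j}$ divided by $c_6n\Lambda_i$, which is at most $1/20$.
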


\begin{proof}
Decompose
\begin{align*}
\AB_{-j}
&=
\underbrace{\sum_{\ell<i,\ell\ne j}\lam_\ell\zB_\ell\zB_\ell^\top}_{(i)}
+
\underbrace{\sum_{i\le\ell\le k,\ell\ne j}\lam_\ell\zB_\ell\zB_\ell^\top}_{(ii)}
+
\underbrace{\sum_{\ell>k,\ell\ne j}\lam_\ell\zB_\ell\zB_\ell^\top}_{(iii)}.
\end{align*}
By standard concentration and \Cref{lemma:basic:concentration}, with probability at least $1-\exp(-n/c_0)$,
\begin{align*}
\|(ii)\| &\le
\lam_i\big\|[\zB_\ell]_{i\le\ell\le k}\big\|^2
\le cn\lam_i, \\
\|(iii)\| &\le \|\AB_{>k}\| \le
cn\left(\lam_{k+1}+\frac1n\sum_{\ell>k}\lam_\ell\right),
\end{align*}
so that $\|(ii)\|+\|(iii)\|\le cn\Lambda_i$.

Denote the projection to $\spn\{\zB_\ell:\ell<i, \ell\ne j\}$ by $\PiB_i$. Since $\rank\PiB_i\le k\le n/c_3$, it holds with probability at least $1-\exp(-n/c_0)$ that $\|\PiB_i\zB_i \|^2\le 0.01n$ and $0.99n\le\|\zB_i\|^2\le2n$. Also define the projection
\begin{align*}
\QB:=\mathbf 1\{\AB_{-j}>c_6n\Lambda_i\IB\}.
\end{align*}
Taking $c_6>1$, it holds that $\QB\preceq\PB$. Since the range of $(i)$ is contained in $\ran\PiB_i$,
\begin{align*}
(\IB-\PiB_i)\QB
&=
(\IB-\PiB_i)\AB_{-j}
\big(\AB_{-j}|_{\ran\QB}\big)^{-1}\QB\\
&=
(\IB-\PiB_i)\big((ii)+(iii)\big)
\big(\AB_{-j}|_{\ran\QB}\big)^{-1}\QB,
\end{align*}
so we can ensure, for sufficiently large~$c_6$,
\begin{align*}
\|\QB(\IB-\PiB_i)\|\le \frac{\|(ii)\|+\|(iii)\|}{c_6n\Lambda_i} \le\frac1{20}.
\end{align*}
It follows that
\begin{align*}
\|\QB\zB_i\|
\le \|\PiB_i\zB_i\|+ \|\QB(\IB-\PiB_i)\|\|\zB_i\| \le \frac{\sqrt{n}}{5}.
\end{align*}
Also, since $\AB_{-j}\preceq0.9n\lam_i\IB$ on the range of $\IB-\PB$, we have the series of inequalities
\begin{align*}
\frac{\lam_i}{n}\|(\IB-\PB)\zB_i\|^4
&\le
\frac1n\zB_i^\top(\IB-\PB)\AB_{-j}(\IB-\PB)\zB_i \le
0.9\lam_i\|(\IB-\PB)\zB_i\|^2.
\end{align*}
It follows that $\|(\IB-\PB)\zB_i\|^2\le0.9n$ and hence
\begin{align*}
\|(\PB-\QB)\zB_i\|^2
&=
\|\zB_i\|^2-\|\QB\zB_i\|^2-\|(\IB-\PB)\zB_i\|^2 \ge0.05n.
\end{align*}
Therefore, using that $\PB,\QB$ commute with $\AB_{-j}$,
\begin{align*}
\big\|\AB_{-j}^{-1}\PB\zB_i\big\|^2
&\ge
\big\|\AB_{-j}^{-1}(\PB-\QB)\zB_i\big\|^2 \\
&\ge
(c_6n\Lambda_i)^{-2}\|(\PB-\QB)\zB_i\|^2
\ge
\frac{c}{n\Lambda_i^2}.
\end{align*}
Taking expectations proves the claim.
\end{proof}

We also show a lower bound for the effective bias.

\begin{lemma}\label{lem:effbias}
For any monotone shrinkage filter $g\in\mathcal G$, we have
\begin{align*}
    \Ebb \effBias(\hat\wB_g) \ge \bigg(1-\frac{1}{\tau}\bigg)^2 \big\|(\IB - g(\tau\SigmaB))\wB^* \big\|^2_{\SigmaB},\quad \tau\ge 1.
\end{align*}
\end{lemma}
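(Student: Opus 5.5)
Since $\effBias(\hat\wB_g)=\sum_i\lam_i\wB_i^{*2}(1-\GB_{ii})^2$, it suffices to show coordinatewise that $\Ebb(1-\GB_{ii})^2\ge(1-1/\tau)^2(1-g(\tau\lam_i))^2$ for every $i$ with $\lam_i>0$. By Jensen, $\Ebb(1-\GB_{ii})^2\ge(1-p_i)^2$, where $p_i=\Ebb\GB_{ii}$. This reduces the lemma to a scalar claim about the mean diagonal:
\begin{align*}
1-p_i\ \ge\ \Big(1-\frac1\tau\Big)\big(1-g(\tau\lam_i)\big).
\end{align*}
Since $g\ge0$ and $1-g(\tau\lam_i)\le 1$, we may assume $g(\tau\lam_i)<1$. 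Otherwise the right-hand side vanishes and there is nothing to prove.

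To bound $p_i$, I split $g$ at the level $\theta:=g(\tau\lam_i)$, writing $g\le \theta+(g-\theta)_+$. Because $g$ is monotone and bounded by $1$, we have $(g(z)-\theta)_+=0$ for $z\le\tau\lam_i$ and $(g(z)-\theta)_+\le 1-\theta$ for all $z$. Hence
\begin{align*}
(g(z)-\theta)_+\ \le\ (1-\theta)\min\Big\{1,\frac{z}{\tau\lam_i}\Big\}\mathbf 1\{z>\tau\lam_i\}\ \le\ (1-\theta)\frac{z}{\tau\lam_i}.
\end{align*}
Applying this as an operator inequality (all quantities are functions of $\hat\SigmaB$, so they commute) gives
\begin{align*}
\GB\ \preceq\ \theta\,\mathbf 1\{\hat\SigmaB>0\}+\frac{1-\theta}{\tau\lam_i}\hat\SigmaB.
\end{align*}
Now take the $(i,i)$ entry and its expectation. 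Using $\eB_i^\top\mathbf 1\{\hat\SigmaB>0\}\eB_i\le1$ and $\Ebb\eB_i^\top\hat\SigmaB\eB_i=\lam_i$, we get
\begin{align*}
p_i\ \le\ \theta+\frac{1-\theta}{\tau}.
\end{align*}
Therefore $1-p_i\ge(1-\theta)(1-1/\tau)$, which is exactly the scalar claim. Summing over $i$ with weights $\lam_i\wB_i^{*2}$ then yields the lemma.

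The only point needing care is the operator inequality for the positive part. The bound $(g-\theta)_+\le(1-\theta)z/(\tau\lam_i)$ holds pointwise on the whole spectrum, so it transfers to $\hat\SigmaB$ by functional calculus. Nothing else uses Gaussianity or the sign symmetry, so the lemma holds for all designs. If I worried about the $z=0$ convention, I would note that $g(0)=0$, so the first bound is consistent there.
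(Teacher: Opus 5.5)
Your proof is correct and follows essentially the same route as the paper. The paper uses the scalar bound $1-g(x)\ge(1-x/y)(1-g(y))$ at $y=\tau\lam_i$, plugs in the diagonal entry, and applies Jensen; this is exactly your affine operator inequality $\GB\preceq\theta\,\mathbf 1\{\hat\SigmaB>0\}+\frac{1-\theta}{\tau\lam_i}\hat\SigmaB$ followed by Jensen. Your functional-calculus formulation also makes the paper's terse ``plug in $x=\hat\SigmaB_{ii}$'' step precise: $\GB_{ii}\neq g(\hat\SigmaB_{ii})$ in general, and it is the affine form of the bound that justifies passing to diagonal entries.
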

\begin{proof}
For $x\ge 0$, $y>0$, it holds that
\begin{align*}
1-g(x) \ge \left(1-\frac{x}{y}\right)(1-g( y )).
\end{align*}
Plugging in $x=\hat\SigmaB_{ii}$ and $y=\tau\lam_i$ and applying Jensen's inequality,
\begin{align*}
\Ebb(1-\GB_{ii})^2 \ge \big(\Ebb(1-\GB_{ii})\big)^2 \ge \left(1-\frac{\Ebb\hat\SigmaB_{ii}}{\tau\lam_i}\right)^2 (1-g(\tau\lam_i))^2 = \bigg(1-\frac{1}{\tau}\bigg)^2 (1-g(\tau\lam_i))^2.
\end{align*}
Multiplying by $\lam_i\wB_i^{*2}$ and summing over~$i$ gives the statement.
\end{proof}

\begin{proof}[Proof of \Cref{thm:master-lb}]
Combining the preceding lemmas, we obtain
\begin{align*}
\Ebb(\zB_j^\top\PsiB\zB_i)^2
\ge c\Gamma_{ij}^2 \Ebb\big\|\AB_{-j}^{-1}\PB\zB_i\big\|^2
\ge\frac{c\Gamma_{ij}^2}{n\Lambda_i^2}, \quad i\le k, \ j\ne i.
\end{align*}
It follows that
\begin{align*}
\Ebb\bias(\hat\wB_g) &\ge \sum_{i\le k} \lam_i\wB_i^{*2} \sum_{j\ne i}\lam_j^2 \Ebb(\zB_j^\top\PsiB\zB_i)^2 \\
&\ge c\sum_{i\le k}\frac{\lam_i}{\Lambda_i^2} \left(\frac1n\sum_{j\ne i}\lam_j^2\Gamma_{ij}^2\right) \wB_i^{*2}.
\end{align*}
The residual lower bound follows from \Cref{lem:effbias} with $\tau=1.1$. If $g$ is concave, Jensen's inequality instead directly gives
\begin{align*}
\eB_i^\top(\IB-\GB)\eB_i
\ge 1-g(\eB_i^\top\hat\SigmaB\eB_i)
=1-g\left(\frac{\lam_i\|\zB_i\|^2}{n}\right)
\end{align*}
and hence
\begin{align*}
\Ebb\BB_{ii}
&\ge\lam_i\Ebb\left(1-g\left(\frac{\lam_i\|\zB_i\|^2}{n}\right)\right)^2
\ge\lam_i(1-g(\lam_i))^2.
\end{align*}
For the first conditional lower bound, from \Cref{lemma:basic:concentration}, with probability at least $1-\exp(-n/c_0)$,
\begin{align*}
\lambda_{\min}(\AB_{>k}) \ge \frac{2}{3}\sum_{i>k}\lambda_i  - c_1 n\lambda_{k+1} \ge \frac12\sum_{i>k}\lambda_i
\end{align*}
by choosing $c_2$ sufficiently large. Since~$g$ is a shrinkage filter, it follows that
\begin{align*}
\PsiB\preceq \AB^{-1}\preceq \AB_{>k}^{-1} \preceq \frac{2}{\sum_{i>k}\lam_i} \IB.
\end{align*}
Therefore for $i>k$, on $\|\zB_i\|^2\le 2n$,
\begin{align*}
\GB_{ii} =\lam_i\zB_i^\top\PsiB\zB_i \le \frac{4n\lam_i}{\sum_{j>k}\lam_j} \le \frac{4}{c_2} \le \frac12 \quad\implies\quad \BB_{ii} \ge \frac{\lam_i}{4},
\end{align*}
which proves the claim. The final bound follows from \Cref{lem:general-filter-diffuse}.
\end{proof}

\subsection{Proof of Corollary \ref{cor:same-index}}

\begin{proof}[Proof of \Cref{cor:same-index}]
We derive the head and tail lower bounds separately. The residual term follows directly from \Cref{thm:master-lb}.

\emph{The head term.}~
We apply \Cref{thm:master-lb} with $k=k^*$. By minimality of $k^*$,
\begin{align*}
\frac{\sum_{j>k^*}\lam_j}{n} \le \frac{\sum_{j>k^*-1}\lam_j}{n} \le c_2\lam_{k^*}.
\end{align*}
Hence for every $i\le k^*$, we have $\Lambda_i \le (c_2+2)\lam_i$ and
\begin{align*}
\alpha_{k^*} \le \left(\frac{\sum_{j>k^*}\lam_j}n\right)^2 + \lam_{k^*+1} \left(\frac{\sum_{j>k^*}\lam_j}n\right) \le c\lam_i^2.
\end{align*}
If
\begin{align*}
\frac{\sum_{j>k^*}\lam_j^2}{n}
\le \frac{1}{c_5}\left(\frac{\sum_{j>k^*}\lam_j}{n}\right)^2
\end{align*}
for sufficiently large~$c_5$, we immediately obtain from \Cref{thm:master-lb},
\begin{align*}
c_1\Ebb\bias(\hat\wB_g)
\ge \sum_{i\le k^*}\lam_i
\left(\frac{\sum_{j>k^*}\lam_j}{n\lam_i}\right)^2\wB_i^{*2} \ge
c\alpha_{k^*}\|\wB^*\|_{\SigmaB_{0:k^*}^{-1}}^2.
\end{align*}
Otherwise,
\begin{align*}
\lam_{k^*+1}\left(\frac{\sum_{j>k^*}\lam_j}{n}\right) \ge \frac{\sum_{j>k^*}\lam_j^2}{n}
> \frac{1}{c_5}\left(\frac{\sum_{j>k^*}\lam_j}{n}\right)^2
\end{align*}
implies that
\begin{align*}
\frac{\sum_{j>k^*}\lam_j}{n} < c_5\lam_{k^*+1},
\end{align*}
and so choosing $c_2$ sufficiently large, we have
\begin{align*}
\lam \ge c_2\lam_{k^*+1} - \frac{\sum_{i>k^*}\lam_i}{n} \ge 1.1\lam_{k^*+1}.
\end{align*}
Now fix $i\le k^*$ and consider the coefficient of $\wB_i^{*2}$ in \Cref{thm:master-lb}. By the definition of $C_g$, it holds that either $1-g(1.1\lam_i)\ge C_g$ or $g(0.9\lam_i)-g(\lam)\ge C_g$. In the first case, the residual term gives
\begin{align*}
\lam_i\left(1-g(1.1\lam_i)\right)^2
\ge
C_g^2\lam_i
\ge
cC_g^2\frac{\alpha_{k^*}}{\lam_i}.
\end{align*}
In the second case, for every $j>k^*$,
\begin{align*}
\Gamma_{ij}
&=(g(0.9\lam_i)-g(1.1\lam_j))_+ \ge
g(0.9\lam_i)-g(\lam)
\ge C_g,
\end{align*}
which further implies
\begin{align*}
\frac{\lam_i}{\Lambda_i^2} \Bigg(\frac1n\sum_{j>k^*}\lam_j^2\Gamma_{ij}^2\Bigg)
&\ge cC_g^2 \frac{\sum_{j>k^*}\lam_j^2}{n\lam_i} \ge cC_g^2\frac{\alpha_{k^*}}{\lam_i}.
\end{align*}
In both cases, the bias is lower bounded by $C_g^2 \alpha_{k^*}\|\wB^*\|_{\SigmaB_{0:k^*}^{-1}}^2$.

\emph{The tail term.}~
Note that
\begin{align*}
\lam_{k^*+1} \le \frac{1}{c_2} \left(\lam + \frac{\sum_{i>k^*}\lambda_i}{n}\right) \le \max\left\{\frac{2\lam}{c_2}, \frac{2}{c_2}\frac{\sum_{i>k^*}\lambda_i}{n}\right\}.
\end{align*}
If $\lam_{k^*+1} \le 2\lam/c_2$, then for all $i>k^*$, $1-g(1.1\lam_i) \ge 1-g(\lam)$ by taking $c_2>2.2$. Then
\begin{align*}
\big\|(\IB-g(1.1\SigmaB))\wB^* \big\|_{\SigmaB}^2 \ge (1-g(\lam))^2 \|\wB^*\|_{\SigmaB_{k^*:\infty}}^2.
\end{align*}
Otherwise, \Cref{thm:master-lb} immediately gives the tail term after increasing~$c_2$. The proof is complete.
\end{proof}

\subsection{Proof of Corollary \ref{thm:tikhonov}}

\begin{proof}[Proof of \Cref{thm:tikhonov}]
Set the reference filter as $g^*(z)=z/(z+\lam/p)$. From Bernoulli's inequality,
\begin{align*}
g^*(z)\le g_{\lam,p}(z)\le\min\{1,pz/\lam\}\le2g^*(z).
\end{align*}
The variance is then equal to $\sigma^2D/n$ up to a constant factor \citep[Proposition~2.1]{wu2026risk}. The min thresholding can be removed as in the proof of \Cref{thm:gd:finite-snr}. We proceed to bound the bias.

\emph{The upper bound.}~
Substituting $u=\lam/(x+\lam), v=\lam/(y+\lam) \in (0,1]$, we may write
\begin{align*}
r(x,y) = r_p(u,v)
:=
\frac{(p-(p-1)u)(p-(p-1)v)}{p}
\frac{u^p-v^p}{u-v}, \quad u\ne v.
\end{align*}
For integer $p$, this is equal to the GD kernel in
\Cref{lemma:schur:examples}, evaluated at $(1-u,1-v)$
with $t=p$, multiplied by the row and column factors
\begin{align*}
\frac{p-(p-1)u}{p+1-pu},\ \frac{p-(p-1)v}{p+1-pv}\le1.
\end{align*}
Hence $\|r_p\|_{\Sfrak} = \bigO(1)$ for integer $p$ (all Schur norms in $u,v$ are taken over $[0,1]^2$).

For noninteger $p$, write $p=mq$, where
$m=\lfloor p\rfloor$ and $1<q<2$. It is straightforward to check that
\begin{align*}
r_p(u,v) = \underbrace{\frac{p-(p-1)u}{q(m-(m-1)u^q)}}_{\le 1} \underbrace{\frac{p-(p-1)v}{q(m-(m-1)v^q)}}_{\le 1} \frac{q(u^q-v^q)}{u-v} r_m(u^q,v^q).
\end{align*}
Moreover, the integral representation
\begin{align*}
\frac{u^q-v^q}{u-v}
&=\frac{\sin(\pi(q-1))}{\pi(u-v)}
\int_0^\infty t^{q-2}
\left(
\frac{u^2}{u+t} - \frac{v^2}{v+t}
\right)\dif t \\
&= \frac{\sin(\pi(q-1))}{\pi}
\int_0^\infty t^{q-2}
\left(
\frac{u}{u+t}
+\frac{t}{u+t}\frac{v}{v+t}
\right)\dif t
\end{align*}
and \Cref{lemma:schur:factorization-bound} imply
\begin{align*}
\|(z\mapsto z^q)^{[1]}\|_{\Sfrak}
\le
\frac{2\sin(\pi(q-1))}{\pi}
\int_0^\infty\frac{t^{q-2}}{1+t}\dif t
=2.
\end{align*}
Thus
\begin{align*}
\|r_p\|_{\Sfrak}
\le q\|r_m\|_{\Sfrak}
\|(z\mapsto z^q)^{[1]}\|_{\Sfrak}
\le4\|r_m\|_{\Sfrak} =\bigO(1),
\end{align*}
and $\|r(\Rbb_{\ge 0},\Rbb_{\ge 0})\|_{\Sfrak} = \bigO(1)$ uniformly in $p$. The bias upper bound now follows by applying \Cref{thm:master-ub} with~$\lam/p$ in place of~$\lam$ and $I=H=\Rbb_{\ge 0},\delta=\exp(-k^*/c_0)$. The extra bias terms can be absorbed into the variance since $\lam/p\le\tilde\lam$,
$c_2\lam_{k+1}\le\tilde\lam$ and
\begin{align*}
\left(\frac{\sum_{i>k^*}\lam_i^2}{n} + \frac{k^*\lam_{k^*+1}^2}{n} +\frac{k^*}{n}\left(\frac{\lam}{p}\right)^2
\right)\|\wB^*\|_{\SigmaB_{0:k}^{-1}}^2 &\le
c\tilde\lam^2\|\wB^*\|_{\SigmaB_{0:k}^{-1}}^2\frac{D}{n}
\le c\|\wB^*\|_{\SigmaB_{0:k}}^2\frac{D}{n}
\le cb\sigma^2\frac{D}{n}.
\end{align*}

\emph{The lower bound.}~
We apply \Cref{cor:same-index} with~$\lam/p$ in place of~$\lam$. Note that the filter $g_{\lam,p}$ is concave and $1-g_{\lam,p}(\lam/p)=(1+1/p)^{-p}\ge e^{-1}$. Also, for $z\le3\lam/p$,
\begin{align*}
1-g_{\lam,p}(1.1z)\ge \left(1+\frac{3.3}{p}\right)^{-p}\ge e^{-3.3},
\end{align*}
whereas for $z\ge3\lam/p$,
\begin{align*}
g_{\lam,p}(0.9z)-g_{\lam,p}(\lam/p)
&\ge
\left(1+\frac1p\right)^{-p}
-
\left(1+\frac{2.7}{p}\right)^{-p} \ge
e^{-1}-\frac1{3.7}>0.
\end{align*}
Thus $C_g$ is bounded below uniformly in~$p$. The claimed bound follows.
\end{proof}

\end{document}